\documentclass{article}

\usepackage{microtype}
\usepackage{graphicx}
\usepackage{subcaption}
\usepackage{booktabs} % for professional tables
\usepackage{multirow}
\usepackage{arydshln}

\usepackage{hyperref}

\usepackage[accepted]{icml2026}

\usepackage{amsmath}
\usepackage{amssymb}
\usepackage{mathtools}
\usepackage{amsthm}

\usepackage[capitalize,noabbrev]{cleveref}

\newcommand{\I}{{\mathcal I }}

\newcommand{\lt}{ {\mathcal L_2}}

\theoremstyle{plain}
\newtheorem{theorem}{Theorem}[section]
\newtheorem{proposition}[theorem]{Proposition}
\newtheorem{lemma}[theorem]{Lemma}
\newtheorem{corollary}[theorem]{Corollary}
\theoremstyle{definition}
\newtheorem{definition}[theorem]{Definition}
\newtheorem{assumption}[theorem]{Assumption}
\theoremstyle{remark}
\newtheorem{remark}[theorem]{Remark}

\usepackage[textsize=tiny]{todonotes}

\icmltitlerunning{Transfer Learning in Nonparametric Regression with Deep ReLU Networks}

\begin{document}

\twocolumn[
  \icmltitle{Transfer Learning in Nonparametric Regression with
Deep ReLU Networks}

  % It is OKAY to include author information, even for blind submissions: the
  % style file will automatically remove it for you unless you've provided
  % the [accepted] option to the icml2026 package.

  % List of affiliations: The first argument should be a (short) identifier you
  % will use later to specify author affiliations Academic affiliations
  % should list Department, University, City, Region, Country Industry
  % affiliations should list Company, City, Region, Country

  % You can specify symbols, otherwise they are numbered in order. Ideally, you
  % should not use this facility. Affiliations will be numbered in order of
  % appearance and this is the preferred way.
  \icmlsetsymbol{equal}{*}

\begin{icmlauthorlist}
  \icmlauthor{Junpeng Ren}{ucla}
  \icmlauthor{Carlos Misael Madrid Padilla}{washu}
  \icmlauthor{Yanzhen Chen}{hkust}
  \icmlauthor{Oscar Hernan Madrid Padilla}{ucla}
\end{icmlauthorlist}

\icmlaffiliation{ucla}{Department of Statistics and Data Science,
University of California, Los Angeles, California, USA}

\icmlaffiliation{washu}{Department of Statistics and Data Science,
Washington University in St. Louis, Missouri, USA}

\icmlaffiliation{hkust}{Department of Information Systems, Business Statistics and Operations Management,
Hong Kong University of Science and Technology, Hong Kong, China}

\icmlcorrespondingauthor{Oscar Hernan Madrid Padilla}{oscar.madrid@stat.ucla.edu}

  % You may provide any keywords that you find helpful for describing your
  % paper; these are used to populate the "keywords" metadata in the PDF but
  % will not be shown in the document
  \icmlkeywords{Nonparametric regression, transfer learning, deep learning}

  \vskip 0.3in
]

% this must go after the closing bracket ] following \twocolumn[ ...

% This command actually creates the footnote in the first column listing the
% affiliations and the copyright notice. The command takes one argument, which
% is text to display at the start of the footnote. The \icmlEqualContribution
% command is standard text for equal contribution. Remove it (just {}) if you
% do not need this facility.

% Use ONE of the following lines. DO NOT remove the command.
% If you have no special notice, KEEP empty braces:
\printAffiliationsAndNotice{}  % no special notice (required even if empty)
% Or, if applicable, use the standard equal contribution text:
% \printAffiliationsAndNotice{\icmlEqualContribution}

\begin{abstract}
	This paper develops a general transfer learning framework for nonparametric regression with data consisting of multiple groups. Under the assumption that groups share a common structure along with group-specific deviations in additive form, the proposed method employs a two-stage offset learning procedure: the first stage pools data from all groups to estimate an overall mean function, and the second stage estimates offsets for each group, yielding final group-level estimators through additive combination. Upper bounds on the $\mathcal L_2$ error are established for the proposed framework, covering a broad class of nonparametric estimators under mild complexity and noise conditions. When instantiated with deep ReLU networks, explicit convergence rates are derived under hierarchical composition models, demonstrating the ability to overcome the curse of dimensionality. Conditions that enable positive transfer with faster rates are considered, including learning with simpler functions and data augmentation through pooling samples across groups. Various simulations and real-data experiments further validate the effectiveness of the proposed method. 
\end{abstract}

\section{Introduction}

Nonparametric regression is a flexible modeling paradigm that captures complex data relationships without assuming specific functional forms. With the increasing diversity of modern data, transfer learning provides an effective framework for leveraging information from related sources to improve performance on a target, achieving remarkable empirical success in fields such as natural language processing \cite{daume2007frustratingly, howard2018universal}, computer vision \cite{gong2012geodesic, tzeng2017adversarial}, bioinformatics \cite{schweikert2008empirical}, transportation \cite{lu2019transfer}, and epidemiology \cite{apostolopoulos2020covid}. Consequently, enhancing nonparametric regression through transfer learning becomes a meaningful problem. Meanwhile, the growing complexity of modern data is also reflected in higher dimensionality, as exemplified by text and image data. Deep neural networks (DNNs), as a cornerstone class of nonparametric estimators, have demonstrated outstanding empirical performance on such tasks \cite{krizhevsky2012imagenet, vaswani2017attention, radford2018improving} and have been theoretically shown to possess strong advantages in modeling such high-dimensional nonlinear structures \cite{schmidt2020nonparametric, kohler2019rate, chen2022nonparametric}. Therefore, studying how transfer learning can further improve neural network estimators provides a practically meaningful instantiation of this problem. Motivated by these developments, this work investigates nonparametric regression under the transfer learning framework, focusing on deep neural networks as the primary estimator.

We formalize nonparametric regression with data composed of distinct groups that share structural similarities. Given $n$ independent copies of $(X,Y,Z)$, denoted by $\{(x_i,y_i,z_i)\}_{i=1}^n$, where $x_i \in \mathcal{X} \subset \mathbb{R}^p$ represents covariates, $y_i \in \mathbb{R}$ is the response, and $z_i \in \{1,\ldots,L\}$ indexes the group membership, we assume the data are generated from the model
\begin{equation}
\label{eq:main_model}
y_i = f_0(x_i) + f_{0,z_i}(x_i) + \epsilon_i,
\end{equation}
where $\epsilon_i$ are independent errors with $\mathbb{E}(\epsilon_i \mid x_i, z_i) = 0$. Here, $f_0$ represents a shared function common to all groups, while $f_{0,\ell}$ denotes the group-specific deviation for group $\ell \in \{1, \ldots, L\}$. Our goal is to estimate the group-specific conditional mean function for each group:
\begin{equation}
\label{eq:g_l}
g_{0,\ell}(x) := \mathbb{E}(Y \mid X = x, Z = \ell) = f_0(x) + f_{0,\ell}(x).
\end{equation}
To tackle this problem, we propose a two-stage offset transfer learning framework inspired by pretraining. In the first stage, we pool data from all $L$ groups to estimate the overall conditional mean function, defined as the average of the conditional mean functions across all groups. In the second stage, for each target group $\ell$, we use its group-specific data to estimate the offset between the overall mean function and the group-specific conditional mean. The final estimator combines the overall mean function with the estimated group-specific offset in an additive form.

Our main contributions are summarized as follows:

\textbf{General Theoretical Framework.} We study a general transfer learning framework for nonparametric regression. In Theorem~\ref{main:theo2}, we derive an $\mathcal{L}_2$ error upper bound for a broad class of nonparametric estimators under mild complexity and noise conditions that accommodate sub-exponential noise. We additionally study the variant where the two stages are estimated on independent data via sample splitting, and provide a complementary general bound (Theorem~\ref{thm4}) that yields a tighter rate. Notably, our framework accommodates the regime in which the number of groups $L$ grows with the total sample size $n$, capturing the essence of modern machine learning on data of growing size and diversity, in contrast to existing transfer learning results that focus on a fixed number of groups. As a direct consequence of the generality of our framework, we yield the first convergence guarantees for trend filtering~\cite{tibshirani2014adaptive} in a transfer learning setting (Appendix~\ref{app:tf}), and recover the existing convergence rates established by~\cite{wang2016nonparametric} for orthogonal series regression using Sobolev sieves up to logarithmic factors (Appendix~\ref{app:krr}).

\textbf{Transfer Learning with Dense ReLU Networks.} Building on the general theory, in Theorem~\ref{main:theo3} and Corollary~\ref{cor1} we derive explicit upper bounds for transfer learning with dense ReLU neural networks under hierarchical composition models~\cite{kohler2019rate}, showing that the ability to overcome the curse of dimensionality is preserved within the transfer learning framework. As emphasized in~\cite{cagnetta2024deep, danhofer2025position}, such property remains critical for explaining the empirical success of DNNs, even as dataset sizes have scaled dramatically in modern architectures such as ImageNet~\cite{deng2009imagenet}, ResNet~\cite{he2016deep}, and GPT~\cite{brown2020language}. We further identify regimes in which transfer learning with DNNs achieves strictly faster convergence rates than single-group estimation, providing theoretical justification for the empirical success of pretrained models. Through extensive simulations across diverse scenarios and two real-data applications, we demonstrate that the proposed estimators consistently outperform a range of competitors.

\subsection{Related Literature}
\textbf{Transfer Learning.} There are vast works of transfer learning having been discussed, including but not limited to transfer learning in models like linear regression \cite{chen2015data}, high dimensional linear regression \cite{gross2016data, bastani2021predicting, li2022transfer, lai2024bayesian}, high dimensional generalized linear model \cite{tian2023transfer, li2024estimation}, functional linear regression \cite{lin2022transfer}, and graphical models \cite{li2023transfer}. Various regularization schemes have also been explored to facilitate transfer, including $\ell_1$ \cite{craig2026pretraining}, $\ell_2$ \cite{duan2023adaptive}, and graph-based penalties \cite{dinh2022new}.

In the nonparametric regime, \cite{cai2021transfer} investigates transfer learning for classification, while a series of works \cite{wang2015generalization, wang2016nonparametric, du2017hypothesis, lin2024smoothness, cai2024transfer} focus on nonparametric regression with kernel-based estimators. Specifically, \cite{wang2015generalization} introduces a two-stage kernel ridge regression (KRR) framework that models the target function as an additive combination of a source function and an offset, demonstrating that transfer learning can improve estimation when the offset is smoother than the target. \cite{wang2016nonparametric} further formalizes this framework under Sobolev smoothness assumptions. Subsequent work \cite{du2017hypothesis} generalizes it via a nonlinear transformation linking the source and target functions, and \cite{lin2024smoothness} develops adaptive KRR algorithms that automatically adjust to unknown smoothness levels. Beyond KRR, \cite{cai2024transfer} considers transfer learning using local polynomial estimators. In contrast to these studies, our theory establishes convergence rates for general nonparametric estimators and achieves faster rates that overcome the curse of dimensionality for hierarchically composed function classes by leveraging deep neural networks.

Another important line of research studies transfer learning through shared representations, where source and target tasks are assumed to depend on a common low-dimensional structure along with task-specific prediction functions \cite{maurer2016benefit, du2020few, tripuraneni2020theory, tripuraneni2021provable, xu2021representation, tian2023learning}. However, most previous studies in this line discuss neural networks under parametric formulations, including the analyses in \cite{tripuraneni2020theory, xu2021representation}. A recent work \cite{jiao2024deep} studies nonparametric transfer with deep ReLU networks under a representation learning framework. Our work differs by adopting an additive formulation and offering relaxed conditions on noise and network constraints, establishing rates that overcome the curse of dimensionality.

{Transfer learning with shared representations is also closely tied to multi-task learning, which jointly learns multiple related tasks in parallel to improve performance, often by assuming shared structures across tasks, with a rich body of early work~\cite{caruana1997multitask, baxter2000model, evgeniou2004regularized, argyriou2008convex}. The multi-task learning setting typically treats tasks symmetrically and aims to estimate functions for all tasks jointly without distinguishing source and target, in contrast to the transfer learning formulation of leveraging a data-rich source for a data-scarce target. However, the distinction between transfer learning and multi-task learning is not sharply drawn in the literature; for instance, the works cited above~\cite{tripuraneni2020theory, jiao2024deep} use transfer learning terminology while adopting symmetric multi-task learning setups, or analyze transfer to a data-scarce target within a multi-task framework. We retain the terminology of transfer learning throughout this work. Nevertheless, our framework accommodates both regimes within a unified theory: the target group proportion over the whole dataset is allowed to vanish as the total sample size $n$ grows (transfer learning with data-scarce target), while our algorithm simultaneously produces estimators for all groups on equal footing (multi-task learning).}

\textbf{Deep Neural Networks as Nonparametric Estimators.}
A fundamental challenge in nonparametric estimation is the curse of dimensionality \cite{donoho2000high}.
Recent theoretical work shows that deep ReLU networks can mitigate this issue under structured function classes, such as hierarchical composition and low-dimensional manifold assumptions \cite{schmidt2020nonparametric, kohler2019rate, chen2022nonparametric,padilla2022quantile, jiao2023deep,padilla2024dense,padilla2024confidence}. In particular, hierarchical composition has been identified as a natural and practically relevant assumption for explaining the empirical success of deep networks
\cite{cagnetta2024deep}. Building on this line of research, we study transfer learning with deep ReLU networks under hierarchical composition models and establish explicit convergence rates within our two-stage framework.

\subsection{Notation}
The following notation is used throughout the article: $\mathbb{Z}^+$ denotes the set of positive integers, and $\mathbb{N}$ denotes the set of natural numbers. For two positive sequences $a_n$ and $b_n$, let $a_n=O(b_n)$ or $a_n \lesssim b_n$ when $a_n \leq Cb_n$ for some constant $C>0$ which is independent of $n$, and $a_n = \Theta(b_n)$ or $a_n \asymp b_n$ when $a_n=O(b_n)$ and $b_n=O(a_n)$. Besides, we write $a_n = O(\text{poly}(\log n))$ if there exists a polynomial $h$ such that $a_n = O(h( \log n))$. For a sequence of random variables $X_n$ and a positive sequence $a_n$, we write $X_n = O_{\mathbb{P}}(a_n)$ if for any $\varepsilon > 0$, there exist constants $M > 0$ and $N > 0$ such that $\mathbb{P}(|X_n| > M a_n) < \varepsilon$ for all $n > N$. The  $\mathcal{L}_{\infty}$-norm of a function $f(\cdot): \mathbb{R}^d \to \mathbb{R}$ is defined by $\|f\|_{\infty}=\sup_{\mathbf{x}\in \mathbb{R}^d}{|f(\mathbf{x})|}$. Given the probability distribution $\mathbb{P}_{\mathbf{X}}$ of the random vector $\mathbf{X}$ over $\mathcal{X}$, define the $\mathcal{L}_2(\mathbb{P}_{\mathbf{X}})$-norm as $\| f \|_{\mathcal{L}_2(\mathbb{P}_{\mathbf{X}})} \coloneqq \left( \int_{\mathcal{X}} f^2(\mathbf{x}) \mathbb{P}_{\mathbf{X}}(d\mathbf{x}) \right)^{1/2}=\mathbb{E}\left[f^2(\mathbf{X})\right]^{1/2}$. Besides, given $n$ random samples $\mathbf{x}_1^n \coloneqq \{\mathbf{x}_i\}_{i=1}^n$ independently and identically distributed according to $\mathbb{P}_{\mathbf{X}}$, the corresponding empirical probability measure is $\mathbb{P}_n(\mathbf{x}_1^n) \coloneqq \frac{1}{n}\sum_{i=1}^n\delta_{\mathbf{x}_i}(\mathbf{x})$. Define the empirical $\mathcal{L}_2$-norm as $\|f\|_{\mathcal{L}_2(\mathbb{P}_n)} \coloneqq \left(\frac{1}{n}\sum_{i=1}^nf^2(\mathbf{x}_i)\right)^{1/2} = \left( \int_{\mathcal{X}} f^2(\mathbf{x}) \mathbb{P}_{n}(d\mathbf{x}) \right)^{1/2}$. For a metric space $(\mathcal{X}, d)$, let $\mathcal{K} \subseteq \mathcal{X}$ and $r > 0$. A subset $C \subset \mathcal{X}$ is an $r$-external cover of $\mathcal{K}$ if $\mathcal{K} \subseteq \bigcup_{x \in C} B_r(x, d)$, where $B_r(x, d)$ denotes the ball of radius $r$ centered at $x$. The external covering number $N(r, \mathcal{K}, d)$ is the minimal cardinality among all $r$-external covers of $\mathcal{K}$. For any index set $\mathcal{I} \subseteq \{1,\dots,n\}$, we use $\|f\|_{\mathcal{I}}$ to denote the empirical $\mathcal{L}_2$-norm computed over the subset of data points $\{\mathbf{x}_i : i \in \mathcal{I}\}$. For a function $f: \mathbb{R}^d \to \mathbb{R}$ and $\mathcal{A}_n>0$, let $f_{\mathcal{A}_n}$ denote the truncation of $f$ at level $\mathcal{A}_n$, defined as $f_{\mathcal{A}_n}(x) = \max\{-\mathcal{A}_n, \min\{f(x), \mathcal{A}_n\}\}.$
\subsection{Outline}
The remainder of this paper is organized as follows. Section~\ref{sec:methodology} introduces the general methodology of the proposed transfer learning framework. Section~\ref{sec:the-gen} establishes general theoretical upper bounds for our estimator, covering a broad class of nonparametric estimators. Section~\ref{sec:the-nn} provides concrete upper bounds specialized to deep ReLU networks, with discussion on conditions for positive transfer. Section~\ref{sec:independ} further provides theoretical results for the independent two-stage transfer learning estimator based on data splitting. Section~\ref{sec:experiment} presents simulation studies and real-data experiments, comparing the proposed transfer learning approach with alternative training strategies and estimators in both low- and high-dimensional settings. Finally, Section~\ref{sec:conclusion} concludes the paper and discusses potential extensions.

\section{Methodology}
\label{sec:methodology}
We formalize our transfer learning framework based on a two-stage offset learning procedure, which shares similarities with the approaches in \cite{wang2015generalization, wang2016nonparametric, du2017hypothesis, lin2024smoothness}. Under model~\eqref{eq:main_model}, we define the overall mean function $\bar f(x) \coloneqq \mathbb{E}(Y \mid X = x)$, which can be written as
\[
\bar f(x)
= f_0(x)
+ \sum_{\ell = 1}^{L}
\mathbb{P}(Z = \ell \mid X = x)\, f_{0,\ell}(x).
\]
In the first stage, we use a nonparametric estimator to estimate $\bar{f}$ based on data pooled from all groups. Let $\mathcal{F}$ denote the function class of the nonparametric estimator. We construct the estimator as
\begin{equation}
\label{eqn:step1}
   \hat{f}\coloneqq \underset{f \in \mathcal{F} }{\arg \min }\left\{   \sum_{i=1}^{n}(y_i - f(x_i))^2 \right\},
\end{equation}
and consider its clipped version $\hat{f}_{\mathcal{A}_n}$ with $\mathcal{A}_n > 0$. Next, in the second stage, for $\ell = 1, \ldots, L$, we construct group-specific estimators for the offset between the overall mean $\mathbb{E}(Y | X = x)$ and the group-specific mean function $g_{0,\ell}$ as
\begin{equation}
	\label{eqn:step2}
	   \hat{f}_{\ell}\coloneqq\underset{f \in  \mathcal{F}_{\ell}   }{\arg \min }\left\{   \sum_{i \,:\, z_i = \ell  }(y_i - \hat{f}_{\mathcal{A}_n}(x_i)   -   f(x_i) )^2  \right\},
\end{equation}
where $\mathcal{F}_{\ell}$ denotes the function class for the group-specific nonparametric estimator. The group data used in this stage may be either identical or independent of the data used in the first stage. As in the first stage, we consider the truncated version of the second-stage estimator, denoted by $\hat{f}_{\ell,\mathcal{B}_n}$. The final estimator, which is our main object of theoretical interest, is then constructed as the sum of the two estimators:
\begin{equation}
	\label{eqn:fnestimator}
	   \hat{g}_{\ell}(x) \coloneqq  \hat{f}_{\mathcal{A}_n}(x)   +  \hat{f}_{\ell,\mathcal{B}_n}(x).
\end{equation}
\subsection{Background of Deep ReLU Neural Networks as Nonparametric Estimators}

We now introduce the basic notation for deep ReLU neural networks, which serve as the primary class of nonparametric estimators in this paper. Let $\tau(x)=\max\{0,x\}$ denote the ReLU activation function, applied componentwise to vectors. A fully connected feedforward neural network with $M$ hidden layers defines a function
\[
f = \Phi_M \circ \tau \circ \Phi_{M-1} \circ \cdots \circ \tau \circ \Phi_1,
\]
where each $\Phi_s:\mathbb{R}^{w_{s-1}}\to\mathbb{R}^{w_s}$ is an affine map of the form
$\Phi_s(x)=W_s x+b_s$. Here $w_0=d$, $w_M=1$, $W_s\in\mathbb{R}^{w_s\times w_{s-1}}$, and $b_s\in\mathbb{R}^{w_s}$. 

Following \cite{kohler2019rate, padilla2024confidence}, we restrict attention to dense architectures in which all hidden layers have the same width $\nu$. We denote by $\mathcal{F}(M,\nu)$ the corresponding class of deep ReLU networks. Under the proposed transfer learning framework, the first-stage estimator $\hat f$ is obtained by empirical risk minimization over $\mathcal{F}(M,\nu)$, while the second-stage group-specific offset estimator $\hat f_\ell$ is obtained from an analogous class $\mathcal{F}(M_\ell,\nu_\ell)$.

\section{Theory}
In this section, we provide theoretical guarantees for the two-stage offset learning framework introduced in Section~\ref{sec:methodology}. Suppose the covariate space is $\mathcal{X} = [0,1]^d$, which is standard in nonparametric theory \cite{gyorfi2002distribution}. We begin with a mild overlap assumption requiring the group membership probabilities to be bounded away from 0 and 1 given the covariates, similar to the overlap condition commonly assumed in causal inference \cite{rosenbaum1983central}.
\begin{assumption}
	\label{as1}
For every $\ell \in \{1,\ldots,L\}$ there exists $\underline{\pi}_{\ell}, \overline{\pi}_{\ell}$ such that  $\underline{\pi}_{\ell}  \asymp \overline{\pi}_{\ell}$ and  $\forall x\in \mathcal{X}$,
	\[
0<	 \underline{\pi}_{\ell} <   \mathbb{P}(Z = \ell |  X= x  )    <\overline{\pi}_{\ell}<1.
	\]
\end{assumption}

\subsection{General Result}
\label{sec:the-gen}
Our general theoretical analysis proceeds by first establishing a bound for the first-stage estimator $\hat{f}_{\mathcal{A}_n}$ and then leveraging it to provide an upper bound for the final estimator $\hat{g}_{\ell}$ in (\ref{eqn:fnestimator}). Let $\mathcal{F}$ denote a generic function class from which $\hat{f}$ is obtained. Leveraging Theorem~1 of \cite{padilla2024confidence}, under mild complexity conditions on $\mathcal F$, we obtain
\[
\| \bar{f}-  \hat{f}_{ \mathcal{A}_n } \|_{\mathcal L_2}^2 = O_{\mathbb{P}}(r_n),
\]
where the rate $r_n$ consists of an approximation error term $\phi_n$, defined via the existence of $\tilde f \in \mathcal F$ satisfying $\|\bar f - \tilde f\|_\infty \le \sqrt{\phi_n}$, and an estimation error term determined by the complexity of $\mathcal F$ (see Theorem~\ref{thm1_v2} in the Appendix for a formal statement). 

In the second stage, for each group $\ell \in \{1,\ldots,L\},$ we use a separate group-specific nonparametric function class $\mathcal{F}_{\ell}$ to estimate the offset between overall mean and group mean, which is
    \begin{equation}
        \label{eqn:function_gl}
        G_{\ell}(x)\,:=\,    f_{0,\ell}(x) \,-\, \sum_{k=1}^L f_{0,k}(x)\,\mathbb{P}(Z=k|X=x).
    \end{equation}
This results in the estimator $\hat{f}_{\ell,\mathcal{B}_n}$ in  (\ref{eqn:step2}). Building upon the first-stage bound, we now establish bounds for the final group-specific estimator $\hat{g}_{\ell}$.

\begin{theorem}
    \label{main:theo2}
  Under the conditions of Theorem~\ref{thm1_v2},
    %with data independent to that of (\ref{eqn:step2}), for instance with data splitting. 
    and where $\hat{f}$ has been constructed as in (\ref{eqn:step1}).  Suppose for $\ell \in \{ 1,\ldots,L\}$ there exists $\widetilde{G}_{\ell} \in \mathcal{F}_{\ell}$  such that 
    \[
    \|G_{\ell}-\widetilde G_{\ell}\|_\infty \le \sqrt{\phi_{\ell,n}},
    \]
    where $\phi_{\ell,n}$ denotes the approximation error. Let $\eta_{\ell,n} \,:\,  \mathbb{R}_+ \rightarrow \mathbb{R}_+$  for $\ell=1,\ldots, L$ be functions such that $\forall \,\delta \in (0,1)$,
  \begin{equation}
  	\label{eqn:entropy_cond2}
  	 \underset{     \frac{n \underline{\pi}_{\ell}}{2}  \,\leq\,   n_{\ell} \,\leq \,  2 n \overline{\pi}_{\ell}  }{\max} \,\,\,\underset{\{x_i \}_{i\in  \mathcal{I}_{\ell} }}{\sup}\,  \log N( \delta  , \mathcal{F}_{\ell, \mathcal{B}_n},  \|\cdot\|_{ \mathcal{I}_{\ell} }  ) \,  \leq \, \eta_{\ell,n}(\delta),
  \end{equation}
    where $n_{\ell} = \vert \mathcal{I}_{\ell}\vert$, where   $\mathcal{F}_{\ell,\mathcal{B}_n}   \,:=\, \{ f_{\mathcal{B}_n}/(2\mathcal{B}_n)\,:\,  f \in \mathcal{F}_{\ell}\}$, and $\mathcal B_n$ is chosen to be sufficiently large.
    
 Suppose the function classes $\mathcal{F}_{\ell,\mathcal{B}_n}$ and $\mathcal{F}_{\mathcal{A}_n}$ satisfy appropriate complexity conditions (see Theorem~\ref{theo2} for a formal statement), and $ \mathbb{P}(\|\epsilon\|_\infty > \mathcal U_n) \to 0$  as $n \to \infty$, for some $\mathcal U_n>0$. Then, for $\delta \in (0,1)$,
    \begin{equation}
        \label{main:eqn:claim2}
         \|     g_{0,\ell} - \hat{g}_{\ell} \|_{\lt}^2    \,=\, O_{\mathbb{P}}\left(   \phi_{\ell,n}\,+\,   \mathcal{B}_n^2 \delta^2  \,+\, \frac{\mathcal{U}_n^2\eta_{\ell,n}( \delta   ) }{  n \underline{\pi}_{\ell}    }\,+\, r_n \right) 
    \end{equation}
	%	\| G_{\ell} -  \hat{f}_{\ell,\mathcal{B}_n  }\|_{\lt}^2   
    provided that  $\delta^2 n \underline{\pi}_{\ell}\rightarrow \infty$.
    
\end{theorem}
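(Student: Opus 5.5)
The argument starts from the decomposition
\[
g_{0,\ell}-\hat g_\ell=(\bar f-\hat f_{\mathcal A_n})+(G_\ell-\hat f_{\ell,\mathcal B_n}),
\]
which holds because $g_{0,\ell}=\bar f+G_\ell$ by the definitions of $\bar f$ and of $G_\ell$ in (\ref{eqn:function_gl}). By $(a+b)^2\le 2a^2+2b^2$, the first piece contributes $O_{\mathbb P}(r_n)$ directly from Theorem~\ref{thm1_v2}, so all the work is in bounding $\|G_\ell-\hat f_{\ell,\mathcal B_n}\|_{\lt}^2$.

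\textbf{Step 1: conditioning on the group sizes.} Conditioning on $\{z_i\}$, $n_\ell=|\mathcal I_\ell|$ is binomial with mean between $n\underline\pi_\ell$ and $n\overline\pi_\ell$ by Assumption~\ref{as1}. A Chernoff bound then gives $n\underline\pi_\ell/2\le n_\ell\le 2n\overline\pi_\ell$ with probability tending to one, since $n\underline\pi_\ell\to\infty$ is implied by $\delta^2 n\underline\pi_\ell\to\infty$. On this event the entropy condition (\ref{eqn:entropy_cond2}) is in force, and given $z_i=\ell$ the $x_i$ are i.i.d.\ from $\mathbb P_{X\mid Z=\ell}$. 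Since $\mathbb P_{X\mid Z=\ell}$ has density $\mathbb P(Z=\ell\mid X)/\mathbb P(Z=\ell)$ with respect to $\mathbb P_X$, and this ratio lies in $[\underline\pi_\ell/\overline\pi_\ell,\overline\pi_\ell/\underline\pi_\ell]$, a bounded interval because $\underline\pi_\ell\asymp\overline\pi_\ell$, the $\mathcal L_2(\mathbb P_X)$ and $\mathcal L_2(\mathbb P_{X\mid Z=\ell})$ norms are equivalent. It therefore suffices to work in the group-conditional norm.

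\textbf{Step 2: the second stage as a regression with a perturbed response.} Within group $\ell$, write
\[
y_i-\hat f_{\mathcal A_n}(x_i)=G_\ell(x_i)+\epsilon_i+\bigl(\bar f(x_i)-\hat f_{\mathcal A_n}(x_i)\bigr).
\]
This is a nonparametric regression with target $G_\ell$, noise $\epsilon_i$, and a misspecification term $h:=\bar f-\hat f_{\mathcal A_n}$. Comparing $\hat f_\ell$ with $\widetilde G_\ell$ through the basic inequality gives
\[
\|\hat f_\ell-G_\ell\|_{\mathcal I_\ell}^2\lesssim \phi_{\ell,n}+\|h\|_{\mathcal I_\ell}^2+\frac{2}{n_\ell}\sum_{i\in\mathcal I_\ell}\epsilon_i\bigl(\hat f_\ell-\widetilde G_\ell\bigr)(x_i),
\]
after absorbing the cross terms in $h$ by Cauchy--Schwarz. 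I then pass to the truncated $\hat f_{\ell,\mathcal B_n}$; for $\mathcal B_n\ge\|G_\ell\|_\infty$, truncation does not increase the distance to $G_\ell$. The noise term is handled as in the proof of Theorem~1 of \cite{padilla2024confidence}. On the event $\|\epsilon\|_\infty\le\mathcal U_n$, I cover $\mathcal F_{\ell,\mathcal B_n}$ at scale $\delta$ in $\|\cdot\|_{\mathcal I_\ell}$ and apply Hoeffding with a union bound. Rescaling by $2\mathcal B_n$ then yields a term of order
\[
\mathcal U_n\,\|\hat f_{\ell,\mathcal B_n}-\widetilde G_\ell\|_{\mathcal I_\ell}\sqrt{\eta_{\ell,n}(\delta)/n_\ell}+\mathcal U_n\mathcal B_n\delta .
\]
Applying AM--GM produces the terms $\mathcal U_n^2\eta_{\ell,n}(\delta)/(n\underline\pi_\ell)$ and $\mathcal B_n^2\delta^2$, the latter up to the cross term, which is absorbed in the same way.

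\textbf{Step 3: empirical to population norm, and the main obstacle.} The remaining difficulty is that $h$ depends on the same data, since $\hat f$ was fit on the full sample. This rules out a conditional-independence argument for $\|h\|_{\mathcal I_\ell}$, and it also means that passing from $\|\cdot\|_{\mathcal I_\ell}$ to $\|\cdot\|_{\lt}$ must hold uniformly. My plan for this step is as follows.
\begin{itemize}
\item[(i)] Bound $\|h\|_{\mathcal I_\ell}^2$ by $\sup_{f\in\mathcal F_{\mathcal A_n}}\bigl|\|\bar f-f\|_{\mathcal I_\ell}^2-\|\bar f-f\|_{\lt(\mathbb P_{X\mid Z=\ell})}^2\bigr|+O(\|h\|^2_{\lt})$. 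The supremum is controlled by a uniform deviation (symmetrization plus the complexity conditions of Theorem~\ref{theo2} on $\mathcal F_{\mathcal A_n}$), giving $O_{\mathbb P}(r_n)$.
\item[(ii)] Apply the same uniform law to the class $\{(f-\widetilde G_\ell)^2: f\in\mathcal F_{\ell,\mathcal B_n}\}$. Its entropy is controlled by $\eta_{\ell,n}$, so the norm conversion costs only $\mathcal B_n^2\delta^2+\mathcal B_n^2\eta_{\ell,n}(\delta)/n_\ell$-type terms, which the stated rate absorbs given $\delta^2 n\underline\pi_\ell\to\infty$.
\end{itemize}
If (i) turns out to be too delicate, I would instead bound $\|h\|_{\mathcal I_\ell}^2\le (n/n_\ell)\|h\|_{\mathcal L_2(\mathbb P_n)}^2$. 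The empirical first-stage rate from Theorem~\ref{thm1_v2}'s proof controls $\|h\|_{\mathcal L_2(\mathbb P_n)}^2$, but this route loses a factor $1/\underline\pi_\ell$. Hence (i) is the route I would pursue to get the rate as stated.

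Combining Steps 1--3 with the first-stage bound gives (\ref{main:eqn:claim2}).
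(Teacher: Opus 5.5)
Your architecture is the paper's: the split into the first-stage error plus $G_\ell-\hat f_{\ell,\mathcal B_n}$, conditioning on $n\underline\pi_\ell/2\le n_\ell\le 2n\overline\pi_\ell$, the density-ratio norm equivalence, and the basic inequality for the perturbed response. The genuine gap is the noise step in Step~2, which does not deliver the stated rate. Covering at the single scale $\delta$ leaves the discretization remainder $\mathcal U_n\mathcal B_n\delta$, which is \emph{linear} in $\delta$. AM--GM only turns it into $\lambda\mathcal B_n^2\delta^2+\mathcal U_n^2/\lambda$; optimizing over $\lambda$ returns a term of order $\mathcal U_n\mathcal B_n\delta$, which dominates $\mathcal B_n^2\delta^2$ whenever $\mathcal B_n\delta\ll\mathcal U_n$. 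Covering at scale $\delta^2$ makes the remainder $O(\mathcal B_n^2\delta^2)$, but it replaces $\eta_{\ell,n}(\delta)$ by $\eta_{\ell,n}(\delta^2)$. That is strictly worse for polynomial entropies such as $\delta^{-w}$ (trend filtering). The paper instead uses the multiscale bound of Lemma~4 in \cite{padilla2024confidence}, which is what \eqref{eqn:part3} and \eqref{eqn:cond_sum_v4} are for; your argument never uses them. It gives
\[
\frac{1}{n_\ell}\sum_{i\in\mathcal{I}_\ell}\epsilon_i\bigl(\hat f_{\ell,\mathcal B_n}-\widetilde G_\ell\bigr)(x_i)\lesssim \mathcal U_n\,\|\hat f_{\ell,\mathcal B_n}-\widetilde G_\ell\|_{\mathcal{I}_\ell}\sqrt{\eta_{\ell,n}\bigl(\|\hat f_{\ell,\mathcal B_n}-\widetilde G_\ell\|_{\mathcal{I}_\ell}/(4\mathcal B_n)\bigr)/n_\ell},
\]
with the entropy evaluated at the random radius. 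A case split then finishes. If that normalized radius is at most $\delta$, then $\|\hat f_{\ell,\mathcal B_n}-G_\ell\|_{\mathcal{I}_\ell}^2\le 32\mathcal B_n^2\delta^2+2\phi_{\ell,n}$ outright. Otherwise monotonicity of $\eta_{\ell,n}$ gives $\eta_{\ell,n}(\delta)$, and AM--GM yields $\mathcal U_n^2\eta_{\ell,n}(\delta)/n_\ell$.

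Also, write the basic inequality for $\hat f_{\ell,\mathcal B_n}$ itself, since $\eta_{\ell,n}$ only controls the truncated class. On $\{\|\epsilon\|_\infty\le\mathcal U_n\}$ one has $|\tilde y_i|\le\mathcal B_n$ (Lemma~\ref{lem2}), so clipping lowers the empirical loss, and $\widetilde G_\ell=\widetilde G_{\ell,\mathcal B_n}$ (Lemma~\ref{lem3}). This is why $\mathcal B_n$ must satisfy \eqref{eqn:bn_condition}, not merely $\mathcal B_n\ge\|G_\ell\|_\infty$.

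Step~3(i) is also wrong as stated. The supremum of $\bigl|\|\bar f-f\|_{\mathcal{I}_\ell}^2-\|\bar f-f\|_{\ell,\lt}^2\bigr|$ over all of $\mathcal F_{\mathcal A_n}$ is not localized, so it is a slow-rate quantity, not $O_{\mathbb P}(r_n)$. Even a localized law on $n_\ell\asymp n\underline\pi_\ell$ points gives an additive term governed by the critical radius of $\mathcal F_{\mathcal A_n}$ at sample size $n\underline\pi_\ell$, not $n$. The paper first inserts $\tilde f$: $\|\bar f-\hat f_{\mathcal A_n}\|_{\mathcal{I}_\ell}^2\le 2\|\tilde f_{\mathcal A_n}-\hat f_{\mathcal A_n}\|_{\mathcal{I}_\ell}^2+2\phi_n$. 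It then applies Lemma~\ref{lem21_new} to the difference class $\{f-g: f,g\in\mathcal F_{\mathcal A_n}\}$, with $\delta$ obeying \eqref{eqn:part2} for $m\in[n\underline\pi_\ell/2,2n\overline\pi_\ell]$. Using $\mathcal B_n\ge\mathcal A_n$, this gives $\|\tilde f_{\mathcal A_n}-\hat f_{\mathcal A_n}\|_{\mathcal{I}_\ell}^2\lesssim\|\tilde f_{\mathcal A_n}-\hat f_{\mathcal A_n}\|_{\lt}^2+\mathcal A_n^2\delta^2\lesssim\phi_n+r_n+\mathcal B_n^2\delta^2$. So the cost of reusing the data sits in $\mathcal B_n^2\delta^2$, not in $r_n$. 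This is exactly where the $1/\underline\pi_\ell$ in Theorem~\ref{main:theo3} comes from, so your route (i) does not escape that loss. Likewise, in (ii) the localized law costs only an additive $\mathcal B_n^2\delta^2$, with $\delta$ obeying \eqref{eqn:part1}. An extra $\mathcal B_n^2\eta_{\ell,n}(\delta)/n_\ell$ term is not in the stated bound. The condition $\delta^2n\underline\pi_\ell\to\infty$ does not absorb it; it only makes the norm-equivalence events hold with probability tending to one.
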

Theorem~\ref{main:theo2} establishes a general upper bound for two-stage transfer learning, where the convergence rate in~\eqref{main:eqn:claim2} decomposes into the first-stage error $r_n$ and the other terms as errors from the second-stage estimation of $G_\ell$, with $\phi_{\ell,n}$ and $\mathcal{B}_n^2\delta^2+\mathcal{U}_n^2\eta_{\ell,n}(\delta)/(n\underline{\pi}_{\ell})$ denoting the approximation and estimation errors, respectively. The result holds under $ \delta^2 n \underline{\pi}_{\ell}\rightarrow \infty$, a mild condition which implies that the expected number of observations in group $\ell$ grows to $\infty$. In addition to complexity conditions on $\mathcal{F}_\ell$ similar in form to those in Theorem~\ref{thm1_v2}, the two-stage estimation procedure requires two additional localized complexity constraints, namely conditions~(\ref{eqn:part1}) and~(\ref{eqn:part2}) in the detailed version of Theorem~\ref{main:theo2}, imposed on the function classes $\mathcal{F}$ and $\mathcal{F}_\ell$ with respect to group $\ell$. These complexity conditions are mild, and in Section~\ref{sec:the-nn} we demonstrate that they are satisfied by standard nonparametric function classes, such as deep ReLU neural networks. Moreover, in Section~\ref{sec:independ}, we show that sample splitting can further relax these constraints and potentially accelerate the convergence rate.

\subsection{Theory for Transfer Learning with Deep ReLU Networks}
\label{sec:the-nn}
Our proposed general theorems serve as powerful tools for studying the convergence rates of nonparametric estimators within a two-stage transfer learning framework. In this section, we focus on deep ReLU networks as nonparametric estimators in both stages to further investigate the properties of the proposed framework. Throughout, we assume that the overall mean and offset functions satisfy a hierarchical composited structure \cite{kohler2019rate}.
\begin{assumption}
\label{assum:nn_structure}
    Assume the overall mean $\bar{f}$ and offset $G_\ell$ for $\ell \in \{1,\ldots,L\}$ satisfy the hierarchical compositional structure (see Appendix~\ref{app:hier} for a formal statement), with $\bar{f} \in \mathcal{H}(l_0, \mathcal{P}_0)$ and $G_{\ell} \in \mathcal{H}(R_{\ell}, \mathcal{P}_{\ell})$. Furthermore, assume that each component function $m$ in the definition of $\bar{f}$ or $G_\ell$ can have different smoothness $p_m =  q_m +s_m$, for $q_m \in \mathbb{N} $, $s_m \in (0,1]$,  and of potentially different input dimension $K_m $, so that $(p_m, K_m ) \in \mathcal{P}_0 \cup (\cup_{\ell}\mathcal{P}_{\ell})$. Let $K_{\max}$ be the largest input dimension and $p_{\max}$ the largest smoothness of any of the functions $m$. Suppose that all the partial derivatives of order less than or equal to $q_m$ are uniformly bounded by constant $c_2$, and each function $m$ is Lipschitz continuous with Lipschitz constant $C_{\mathrm{Lip} } \geq1 $. Also, assume that $\max\{p_{\max},K_{\max} \} =O(1)$.
\end{assumption}

With Assumption~\ref{assum:nn_structure} in place, we are ready to present our results. Specifically, the convergence rate for the first-stage estimator $\hat{f}_{\mathcal{A}_n}$, which is estimated by deep ReLU networks class $\mathcal{F}(M, \nu)$ with appropriately chosen depth and width, is established by leveraging Theorem 2 of \cite{padilla2024confidence}. Consequently, it holds that $\| \bar{f}-\hat{f}_{\mathcal{A}_n}\|_{ \mathcal{L}_2}^2 = O_{\mathbb{P}}( r_n )$, where the rate $r_n$ depends only on 
\begin{equation}
\label{main:eqn:phi_n_rate}
\phi_{n} = \max_{(p, K) \in \mathcal P_0 } n^{-2p/(2p+K)},
\end{equation}
the approximation error of $\bar{f}$ by $\mathcal{F}(M,\nu)$, with $\mathcal{A}_n$ and $\mathcal{U}_n$ chosen appropriately. We refer readers to Theorem~\ref{thm5_v2} for a formal statement. 

Building upon this first-stage result, we present the upper bound for the final transfer-learning estimator $\hat{g}_{\ell}$ of the group-specific conditional mean, also constructed using deep neural networks.

\begin{theorem}
    \label{main:theo3}
    Suppose that the conditions of Theorem \ref{thm5_v2} hold. Let  
    \begin{equation}
        \label{eqn:phi_ln_rate}
        \phi_{\ell,n} = \max_{(p, K) \in \mathcal P_{\ell} } (n\underline{\pi}_{\ell} )^{\frac{-2p}{ (2p+K)}}. 
    \end{equation}
Then there exists sufficiently large positive constants $c_3$ and $c_4$ such that if $M_\ell$ and $\nu_\ell$ are appropriately chosen (see Theorem~\ref{theo3} for a formal statement), then     $\hat{g}_{\ell} $   as defined in (\ref{eqn:fnestimator}) with $\mathcal{F}_{\ell} := \mathcal{F}(M_{\ell},\nu_{\ell})$, satisfies, 
 \begin{equation}
\label{main:eqn:rate_pre_training}
\begin{aligned}
\| g_{0,\ell} - &\hat g_{\ell} \|_{\lt}^2 = O_{\mathbb{P}}\Big(
    \phi_{\ell,n} \max\{\mathcal U_n^2, \mathcal B_n^2\}
    \log^4\!\big(\mathcal B_n^2 n \overline{\pi}_{\ell}\big) \\
& + {
        \phi_n  \underline{\pi}^{-1}_{\ell} \max\{\mathcal A_n, \mathcal U_n^2\}
        \log^3(n)\,
        \log\!\big(\mathcal A_n^2 n \overline{\pi}_{\ell}\big)
      }
\Big).
\end{aligned}
\end{equation}
provided that  $  \log n \lesssim n \underline{\pi}_{\ell}$, and $\mathcal{B}_n$ is chosen to satisfy (\ref{eqn:bn_condition}).% and where .
\end{theorem}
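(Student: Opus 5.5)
The plan is to instantiate the general bound of Theorem~\ref{main:theo2} (in its detailed form, Theorem~\ref{theo2}) with $\mathcal{F}=\mathcal{F}(M,\nu)$ and $\mathcal{F}_\ell=\mathcal{F}(M_\ell,\nu_\ell)$. For this we must supply four ingredients: the first-stage rate $r_n$, the second-stage approximation error $\phi_{\ell,n}$, an entropy function $\eta_{\ell,n}(\delta)$, and a choice of $\delta$. We then verify the localized complexity conditions (\ref{eqn:part1}) and (\ref{eqn:part2}).

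\textbf{Stage 1.} The first-stage rate is taken directly from Theorem~\ref{thm5_v2}, which gives $r_n \lesssim \phi_n \max\{\mathcal A_n,\mathcal U_n^2\}\log^3 n \cdot \log(\mathcal A_n^2 n)$ up to the form stated there.

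\textbf{Stage 2, approximation.} For the approximation step we apply the hierarchical-composition approximation result of \cite{kohler2019rate} to $G_\ell\in\mathcal H(R_\ell,\mathcal P_\ell)$. Depth $M_\ell\asymp \log(n\underline\pi_\ell)$ and width $\nu_\ell\asymp \max_{(p,K)\in\mathcal P_\ell}(n\underline\pi_\ell)^{K/(2(2p+K))}$ yield $\widetilde G_\ell\in\mathcal F(M_\ell,\nu_\ell)$ with $\|G_\ell-\widetilde G_\ell\|_\infty^2\lesssim\phi_{\ell,n}$. This is the same bias--variance balance as in Theorem~\ref{thm5_v2}, but with effective sample size $n\underline\pi_\ell$ in place of $n$.

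\textbf{Stage 2, entropy.} For the entropy step we bound the empirical covering number of the truncated and rescaled class $\mathcal F_{\ell,\mathcal B_n}$ through its pseudo-dimension, using the bound of order $W_\ell M_\ell\log W_\ell$ with $W_\ell\asymp M_\ell\nu_\ell^2$ parameters. This gives
\[
\eta_{\ell,n}(\delta)\lesssim M_\ell^2\nu_\ell^2\log(M_\ell\nu_\ell)\log(1/\delta),
\]
uniformly over $n_\ell\in[n\underline\pi_\ell/2,\,2n\overline\pi_\ell]$. Choosing $\delta^2\asymp\phi_{\ell,n}$ makes $\mathcal B_n^2\delta^2\lesssim \mathcal B_n^2\phi_{\ell,n}$. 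Moreover, $\nu_\ell^2/(n\underline\pi_\ell)\asymp\phi_{\ell,n}$, so $\mathcal U_n^2\eta_{\ell,n}/(n\underline\pi_\ell)\lesssim \mathcal U_n^2\phi_{\ell,n}\log^4(\mathcal B_n^2n\overline\pi_\ell)$. The log factors come from $M_\ell^2$, from $\log(M_\ell\nu_\ell)$, and from $\log(1/\delta)$ with $1/\delta$ absorbed into $\mathcal B_n^2 n\overline\pi_\ell$. The requirement $\delta^2 n\underline\pi_\ell\to\infty$ holds because $\phi_{\ell,n}n\underline\pi_\ell=(n\underline\pi_\ell)^{K/(2p+K)}\to\infty$, using $\log n\lesssim n\underline\pi_\ell$.

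\textbf{Main obstacle: the localized conditions.} The hard step is (\ref{eqn:part1})--(\ref{eqn:part2}), which control the interaction between the data-dependent first-stage estimate $\hat f_{\mathcal A_n}$ and the second-stage fit on the same group-$\ell$ sample. These terms are what produce the $\underline\pi_\ell^{-1}$ factor on $\phi_n$. The first-stage error is controlled in $\mathcal L_2(\mathbb P_X)$, but the second stage needs it in the empirical norm $\|\cdot\|_{\mathcal I_\ell}$ over only about $n\underline\pi_\ell$ points. Since the density of $X\mid Z=\ell$ is at most $\overline\pi_\ell/\underline\pi_\ell$ times that of $X$, by Assumption~\ref{as1}, we have $\|\bar f-\hat f_{\mathcal A_n}\|^2_{\mathcal L_2(\mathbb P_{X|Z=\ell})}\lesssim \underline\pi_\ell^{-1}\cdot(\text{pooled error})$ once normalizations are tracked.

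To pass from population to empirical norm uniformly over $\mathcal F_{\mathcal A_n}$, we will use a uniform ratio-type concentration over the class $\{(f-\bar f)^2: f\in\mathcal F(M,\nu)\}$ restricted to group $\ell$. This uses the entropy of $\mathcal F(M,\nu)$ at scale $1/(\mathcal A_n^2 n\overline\pi_\ell)$, which explains the factor $\log(\mathcal A_n^2 n\overline\pi_\ell)$. The step I would check most carefully is that this deviation is also of order $\phi_n\log^3 n\,\underline\pi_\ell^{-1}$. That is true because the pooled class size $\nu^2M^2\asymp n\phi_n$ divided by $n\underline\pi_\ell$ gives $\phi_n/\underline\pi_\ell$.

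Combining all terms in (\ref{main:eqn:claim2}), with $\mathcal B_n$ satisfying (\ref{eqn:bn_condition}) so that truncation does not bias the estimate of $G_\ell$, yields (\ref{main:eqn:rate_pre_training}).
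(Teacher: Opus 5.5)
Your proposal is essentially the paper's proof: instantiate Theorem~\ref{theo2}, bound the entropy of $\mathcal{F}(M_\ell,\nu_\ell)$ via the VC bound of \cite{bartlett2019nearly} to get $\eta_{\ell,n}(\delta)=C_0\,n\underline{\pi}_{\ell}\,\phi_{\ell,n}\log^3(n\underline{\pi}_{\ell})\log(\mathcal{B}_n^2/\delta)$, and obtain the $\phi_n\underline{\pi}_{\ell}^{-1}$ term by verifying (\ref{eqn:part2}), where the first-stage entropy (of order $n\phi_n$ up to logs) is divided by only about $n\underline{\pi}_{\ell}$ points. Two bookkeeping fixes are needed: Theorem~\ref{theo2} uses a single $\delta$ for both (\ref{eqn:part1}) and (\ref{eqn:part2}), so your choice $\delta^2\asymp\phi_{\ell,n}$ must be enlarged to $\delta^2\asymp\phi_{\ell,n}\log^3(n\underline{\pi}_{\ell})\log(\mathcal{B}_n^2 n\overline{\pi}_{\ell})+\underline{\pi}_{\ell}^{-1}\phi_n\log^3(n)\log(\mathcal{A}_n^2 n\overline{\pi}_{\ell})+\log(n\overline{\pi}_{\ell})/(n\underline{\pi}_{\ell})$ (it is this $\delta^2$, not a separate deviation term, that carries $\underline{\pi}_{\ell}^{-1}$ into the bound); and the change of measure from $P_{X|Z=\ell}$ to $P_X$ costs only the constant $\overline{\pi}_{\ell}/\underline{\pi}_{\ell}$ by Lemma~\ref{lem22_new}, not a factor $\underline{\pi}_{\ell}^{-1}$.
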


\begin{remark}
    \label{rem2}
Suppose that the following quantity
\begin{equation}
    \label{eqn:log}
    \max\{\mathcal{U}_n ,  \mathcal{A}_n, \mathcal{B}_n, \|\bar{f}\|_{\infty}, \underset{\ell=1,\ldots,L}{ \max} \|f_{0,\ell}\|_{\infty},\|G_{\ell}\|_{\infty} \},
\end{equation}
grows at most in $O(\mathrm{poly}(\log n))$.
This is a general setting that includes the case where the error distribution can be sub-Gaussian or even sub-exponential. Let us now compare the result in Theorem~\ref{main:theo3} (based on pretraining) with the alternative approach of estimating each $g_{0,\ell}$ in~(\ref{eq:g_l}) separately:
\begin{equation}
    \label{eqn:separete_estimator}
    \hat{h}_{\ell} := \underset{g \in \mathcal{F}_{\ell}}{\arg\min} \sum_{i : z_i = \ell} \big(y_i - g(x_i)\big)^2.
\end{equation}
The convergence rate of $\hat{h}_{\ell}$ depends on the approximation error of the class $\mathcal{F}_{\ell}$ relative to $
g_{0,\ell}(x) = f_{0}(x) + f_{0,\ell}(x) $
as well as the estimation error (e.g., for ReLU neural networks) based on approximately $n \underline{\pi}_{\ell}$ observations, assuming $\underline{\pi}_{\ell} \asymp \overline{\pi}_{\ell}$. In contrast, under Theorem~\ref{main:theo3}—ignoring logarithmic factors and assuming $\underline{\pi}_{\ell} \asymp 1$—the rate for our estimator $\hat{g}_{\ell}$ becomes
\[
\phi_n + \phi_{\ell,n},
\]
where $\phi_n$, as defined in~(\ref{main:eqn:phi_n_rate}), is the rate for estimating $\bar{f}$ based on $n$ samples, and $\phi_{\ell,n}$ is the convergence rate for estimating $G_{\ell}(x)$ based on approximately $n \underline{\pi}_{\ell} \asymp n$ samples.

Thus, when the functions $\bar{f}$ and $G_{\ell}$ are less complex than $g_{0,\ell}$, pretraining can yield better performance than estimating each $g_{0,\ell}$ separately via $\hat{h}_{\ell}$. This scenario is reasonable: $\bar{f}$ involves a weighted average of the difference functions $f_{0,k}$, which can mitigate the influence of extreme values associated with a particular $f_{0,k}$, while $G_{\ell}$ may be small or close to zero  in regions where the functions $f_{0,k}$ take similar values.

As another example, consider the case where
\[
    \frac{\phi_{n}}{\phi_{\ell,n}} \,\lesssim\, \underline{\pi}_{\ell},
\]
for some fixed $\ell \in \{1,\ldots,L\}$, which can allow for $\overline{\pi}_{\ell} \rightarrow 0$.  Then the convergence rate of our estimator depends only on estimating the function $G_{\ell}(x)$ using a neural network class and a sample size of the same order as that required for estimating $g_{0,\ell}$ with a neural network class. In this setting, the comparison between our pretraining-based estimator $\hat{g}_{\ell}$ and the naive estimator $\hat{h}_{\ell}$ reduces to determining which function is easier to estimate with the same number of samples using neural networks. When $g_{0,\ell}$ is not substantially different from the functions $\{g_{0,k}\}$, the function $G_{\ell}$ is typically easier to estimate than $g_{0,\ell}$, as in problems with shared information across groups (see \ref{eqn:function_gl})—precisely the type of setting in which pretraining is most beneficial.

\begin{remark}
    Our upper bounds demonstrate two key points: (1) the convergence rate of transfer learning with deep ReLU networks can overcome the curse of dimensionality, and (2) the benefits of transfer learning. However, we do not claim optimality. We emphasize that this is not a limitation of our work. In fact, lower bounds for the hierarchical function class are only known in very specific cases (see Remark 2 in \cite{kohler2019rate}).
\end{remark}

\end{remark}

\subsection{Independent Two Stages Estimator}
\label{sec:independ}
In this subsection, we assume that the estimator $\hat{f}$ in~(\ref{eqn:step1}) is computed using data independent of that in~(\ref{eqn:step2}), e.g., via sample splitting.
\begin{theorem}
\label{thm4}
    Consider the conditions and notations from Theorem \ref{main:theo2}. However,  instead of assuming that (\ref{eqn:part2}) holds, suppose that $\hat{f}$ is computed with data independent of that used in (\ref{eqn:step2}).
    Then $\| g_{0,\ell} - \hat g_{\ell} \|_{\lt}^2$ is of order
    \[O_{\mathbb{P}}\left(   \phi_{\ell,n} \,+\, r_n\,+\ \frac{\mathcal{A}_n^2 \log n }{ n \underline{\pi}_{\ell} }  \,+\,  \frac{\mathcal{U}_n^2\eta_{\ell,n}( \delta   ) }{  n \underline{\pi}_{\ell}    }\,+\,   \mathcal{B}_n^2 \delta^2 \right).\]
\end{theorem}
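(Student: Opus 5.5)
The plan is to condition on the first-stage data and treat the second stage as an ordinary least squares problem with a fixed, bounded offset.

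\textbf{Step 1 (decomposition).} Write
\[
g_{0,\ell}-\hat g_\ell=(\bar f-\hat f_{\mathcal A_n})+(G_\ell-\hat f_{\ell,\mathcal B_n}),
\]
using $g_{0,\ell}=\bar f+G_\ell$. By $(a+b)^2\le 2a^2+2b^2$, the first term contributes $O_{\mathbb P}(r_n)$ through Theorem~\ref{thm1_v2}. It remains to bound $\|G_\ell-\hat f_{\ell,\mathcal B_n}\|_{\lt}^2$.

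\textbf{Step 2 (conditional regression).} Condition on the first-stage sample $\mathcal D_1$, so that $h:=\hat f_{\mathcal A_n}$ is a fixed function with $\|h\|_\infty\le\mathcal A_n$. On group $\ell$ the working responses are
\[
\tilde y_i=y_i-h(x_i)=G_\ell(x_i)+\Delta(x_i)+\epsilon_i,\qquad \Delta:=\bar f-h .
\]
This is a regression problem with target $G_\ell^\ast:=G_\ell+\Delta$ and noise $\epsilon_i$, which still satisfies $\mathbb E(\epsilon_i\mid x_i,z_i)=0$. Independence of $\mathcal D_1$ and the second-stage data is exactly what keeps the design i.i.d.\ given $h$. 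By a Chernoff bound, $n\underline\pi_\ell/2\le n_\ell\le 2n\overline\pi_\ell$ with high probability, so the entropy condition~(\ref{eqn:entropy_cond2}) applies.

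\textbf{Step 3 (ERM with misspecification).} I would use the basic inequality: compare $\hat f_\ell$ with the approximant $\widetilde G_\ell$, which gives
\[
\|\hat f_{\ell,\mathcal B_n}-G_\ell^\ast\|_{\mathcal I_\ell}^2\lesssim \|\widetilde G_\ell-G_\ell^\ast\|_{\mathcal I_\ell}^2+\text{(empirical process term)}.
\]
The approximation part is bounded by $\phi_{\ell,n}+\|\Delta\|_{\mathcal I_\ell}^2$. The empirical process term is handled with the same chaining and truncation-of-noise argument (at level $\mathcal U_n$) used in Theorem~\ref{main:theo2}. That argument yields $\mathcal B_n^2\delta^2+\mathcal U_n^2\eta_{\ell,n}(\delta)/(n\underline\pi_\ell)$ and needs only~(\ref{eqn:part1}); it no longer needs~(\ref{eqn:part2}), since $h$ is no longer data-dependent on group $\ell$.

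\textbf{Step 4 (empirical to population norm of $\Delta$; main obstacle).} I need
\[
\|\Delta\|_{\mathcal I_\ell}^2\lesssim\|\Delta\|_{\mathcal L_2(\mathbb P_{X\mid Z=\ell})}^2+\frac{\mathcal A_n^2\log n}{n\underline\pi_\ell}.
\]
Because $\Delta$ is fixed given $\mathcal D_1$ and bounded by $O(\mathcal A_n)$ (taking $\mathcal A_n\gtrsim\|\bar f\|_\infty$), Bernstein's inequality applied to $\Delta^2(x_i)$ gives this bound with high probability. The extra term $\mathcal A_n^2\log n/(n\underline\pi_\ell)$ in the statement comes from exactly here. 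By Assumption~\ref{as1}, the density of $X\mid Z=\ell$ relative to $\mathbb P_X$ is at most $\overline\pi_\ell/\mathbb P(Z=\ell)\lesssim 1$, because $\underline\pi_\ell\asymp\overline\pi_\ell$. Hence $\|\Delta\|^2_{\mathcal L_2(\mathbb P_{X\mid Z=\ell})}\lesssim r_n$.

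\textbf{Step 5 (return to the population $\mathcal L_2$ norm).} To pass from $\|\hat f_{\ell,\mathcal B_n}-G^\ast_\ell\|_{\mathcal I_\ell}$ to the population norm, I would reuse the uniform ratio-type concentration over $\mathcal F_{\ell,\mathcal B_n}$ from Theorem~\ref{main:theo2}. This costs another $\mathcal B_n^2\delta^2+\eta$-type term, which is valid since $\delta^2 n\underline\pi_\ell\to\infty$. I then convert from $\mathbb P_{X\mid Z=\ell}$ to $\mathbb P_X$ using the lower bound $\mathbb P(Z=\ell\mid X=x)>\underline\pi_\ell$.

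Finally, use $\|G_\ell-\hat f_{\ell,\mathcal B_n}\|\le\|G^\ast_\ell-\hat f_{\ell,\mathcal B_n}\|+\|\Delta\|$ and unconditioning by dominated convergence on the probabilities. Collecting the terms from Steps 1–5 gives the stated rate.
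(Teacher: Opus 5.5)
Your proposal is correct and follows essentially the paper's proof. You rerun the second-stage argument of Theorem~\ref{theo2} (truncation, basic inequality, noise chaining, and ratio concentration for the stage-2 class, which needs only (\ref{eqn:part1})). You then replace the uniform concentration over the first-stage class, which needed (\ref{eqn:part2}), by Bernstein's inequality applied conditionally on the independent fit $\hat f_{\mathcal A_n}$; this is exactly where the term $\mathcal A_n^2\log n/(n\underline{\pi}_{\ell})$ enters.

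The differences are cosmetic. You absorb $\bar f-\hat f_{\mathcal A_n}$ into a shifted target, whereas the paper bounds the cross term by Young's inequality and applies Bernstein to $\tilde f_{\mathcal A_n}-\hat f_{\mathcal A_n}$ after splitting off $\bar f-\tilde f$. Also, in Step~5 the ratio concentration should be applied to $\hat f_{\ell,\mathcal B_n}-\widetilde G_\ell$, a difference of class members, rather than to $\hat f_{\ell,\mathcal B_n}-G^\ast_\ell$, which is not; the triangle inequality then recovers the bound you want.
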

It is important to note that if $\hat{f}$ is computed using data independent of that used in~(\ref{eqn:step2}), the conclusion of Theorem~\ref{main:theo2} continues to hold under its original assumptions. The main difference between the upper bound in Theorem~\ref{thm4} and that in Theorem~\ref{main:theo2} is the relaxation of the complexity constraints (specifically, the replacement of condition~\eqref{eqn:part2}) with an additional term $\mathcal{A}_n^2 \log n/(n \underline{\pi}_{\ell})$. The potential advantage of Theorem~\ref{thm4} is that the additional term may yield a faster convergence rate than that in Theorem~\ref{main:theo2}. We illustrate this next.

\begin{corollary}
\label{cor1}
Consider the conditions of Theorems \ref{thm5_v2} and \ref{main:theo3}, with the only modification that  $\hat{f}$ is independent of the data used in (\ref{eqn:step2}). Then, the ReLU neural network pretraining estimator $\hat{g}_{\ell}$ satisfies 
    \[
\begin{aligned}
\| g_{0,\ell} - \hat g_{\ell} \|_{\lt}^2
&= O_{\mathbb P}\Big(
    \phi_{\ell,n}\,
    \max\{\mathcal U_n^2, \mathcal B_n^2, \mathcal A_n^2\}\,
    \log^4\!\big(\mathcal B_n^2 n \overline{\pi}_{\ell}\big) \\
& + \phi_n \log^3(n)\, \log(\mathcal A_n)\,
    \max\{\mathcal A_n, \mathcal U_n^2\}
\Big).
\end{aligned}
\]

\end{corollary}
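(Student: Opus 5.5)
The plan is to specialize Theorem~\ref{thm4} to deep ReLU networks, in the same way that Theorem~\ref{main:theo3} specializes Theorem~\ref{main:theo2}. Concretely, we bound each of the five terms
\[
\phi_{\ell,n},\qquad r_n,\qquad \frac{\mathcal A_n^2\log n}{n\underline{\pi}_\ell},\qquad \frac{\mathcal U_n^2\eta_{\ell,n}(\delta)}{n\underline{\pi}_\ell},\qquad \mathcal B_n^2\delta^2
\]
using the network-specific quantities already produced in the proofs of Theorems~\ref{thm5_v2} and~\ref{main:theo3}.

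\textbf{First stage.} Since $\hat f$ is built on data independent of the second stage, Theorem~\ref{thm5_v2} applies directly. It gives
\[
r_n \lesssim \phi_n \log^3(n)\,\log(\mathcal A_n)\,\max\{\mathcal A_n,\mathcal U_n^2\}
\]
for the chosen $M,\nu$. Note that no factor $\underline{\pi}_\ell^{-1}$ appears. In Theorem~\ref{main:theo3} that factor came from condition~(\ref{eqn:part2}), which localizes the first-stage error onto group $\ell$'s sample. Theorem~\ref{thm4} drops this condition, so the factor disappears.

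\textbf{Second stage, approximation and entropy.} Take the same $M_\ell,\nu_\ell$ as in Theorem~\ref{main:theo3}. The hierarchical approximation result of \cite{kohler2019rate} then yields $\widetilde G_\ell\in\mathcal F(M_\ell,\nu_\ell)$ with $\|G_\ell-\widetilde G_\ell\|_\infty^2\lesssim\phi_{\ell,n}$, where $\phi_{\ell,n}$ is as in~(\ref{eqn:phi_ln_rate}).

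For the entropy, we use the pseudo-dimension bound for ReLU networks (Bartlett et al.) on the truncated, rescaled class $\mathcal F_{\ell,\mathcal B_n}$:
\[
\eta_{\ell,n}(\delta)\lesssim W_\ell M_\ell \log(W_\ell)\,\log\!\big(n\overline{\pi}_\ell/\delta\big),
\]
where $W_\ell\asymp M_\ell\nu_\ell^2$ is the number of parameters. With the Theorem~\ref{main:theo3} choice of $(M_\ell,\nu_\ell)$, this gives $W_\ell M_\ell\log W_\ell\lesssim n\underline{\pi}_\ell\,\phi_{\ell,n}\log^3(n\overline{\pi}_\ell)$.

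Now set $\delta^2\asymp \phi_{\ell,n}$, possibly adjusted by a logarithmic factor. Then $\mathcal B_n^2\delta^2\lesssim\mathcal B_n^2\phi_{\ell,n}$, and
\[
\frac{\mathcal U_n^2\eta_{\ell,n}(\delta)}{n\underline{\pi}_\ell}\lesssim \mathcal U_n^2\phi_{\ell,n}\log^4(\mathcal B_n^2 n\overline{\pi}_\ell).
\]
The requirement $\delta^2 n\underline{\pi}_\ell\to\infty$ holds because $\phi_{\ell,n}\, n\underline{\pi}_\ell\to\infty$ polynomially.

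\textbf{The extra term.} It remains to absorb $\mathcal A_n^2\log n/(n\underline{\pi}_\ell)$. Since $2p/(2p+K)<1$, we have $(n\underline{\pi}_\ell)^{-1}\le\phi_{\ell,n}$, and therefore
\[
\frac{\mathcal A_n^2\log n}{n\underline{\pi}_\ell}\le \mathcal A_n^2\phi_{\ell,n}\log n.
\]
This is dominated by $\phi_{\ell,n}\mathcal A_n^2\log^4(\mathcal B_n^2 n\overline{\pi}_\ell)$, using $\log n\lesssim\log(\mathcal B_n^2 n\overline{\pi}_\ell)$ together with $\log n\lesssim n\underline{\pi}_\ell$. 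This is why $\mathcal A_n^2$ enters the maximum in the first term of the corollary. Summing the five bounds gives the stated rate.

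\textbf{Main obstacle.} The delicate step is verifying that the remaining complexity conditions of Theorem~\ref{theo2} (all except~(\ref{eqn:part2})) still hold for the network classes, and that the logarithmic powers line up exactly. Condition~(\ref{eqn:part1}) is one example. To handle this, I would reuse verbatim the verification in the proof of Theorem~\ref{main:theo3}. That verification did not depend on~(\ref{eqn:part2}), so it should transfer unchanged; only the tracking of the log factors needs to be redone.
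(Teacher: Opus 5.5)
Your proposal is correct and follows essentially the same route as the paper. It specializes Theorem~\ref{thm4} by reusing the network entropy and approximation bounds from the proof of Theorem~\ref{theo3}, and chooses $\delta$ to satisfy only (\ref{eqn:part1}); dropping (\ref{eqn:part2}) is exactly what removes the $\underline{\pi}_\ell^{-1}$ factor. It then absorbs the extra $\mathcal{A}_n^2\log n/(n\underline{\pi}_\ell)$ term into $\mathcal{A}_n^2\phi_{\ell,n}$ up to logarithmic factors.

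The paper does that last step by noting $\log n/(n\underline{\pi}_\ell)\ll\phi_{\ell,n}$, and your version via $(n\underline{\pi}_\ell)^{-1}\le\phi_{\ell,n}$ is the same idea. Like the paper's, it tacitly needs $n\underline{\pi}_\ell$ to grow fast enough (e.g., polynomially in $n$) for the leftover $\log n$ to be absorbed.
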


\begin{remark}
\label{rem3}

Corollary~\ref{cor1} shows that 
the ReLU pretraining estimator (ignoring logarithmic factors) attains the rate $\phi_n + \phi_{\ell,n}$.  This rate naturally decomposes into two components: the rate for estimating $\bar{f}$ based on $n$ samples, plus the rate for estimating $G_{\ell}$ based on $n\underline{\pi}_{\ell}$ samples.  The key difference from Theorem~\ref{main:theo3} is that here we are able to remove the factor $\underline{\pi}_{\ell}$ from the denominator on the right-hand side of~(\ref{main:eqn:rate_pre_training}). This distinction matters primarily when $\underline{\pi}_{\ell}\to 0$.  If the latter holds, the first stage estimation is done with significantly more samples than if the estimation was done only using the data from the $\ell$th group ($n$ vs $n \underline{\pi}_{\ell} $), while the second stage estimation is done with the same number of samples (roughly $n \underline{\pi}_{\ell}$) though potentially for estimating a simpler function. 

We conjecture that the appearance of $\underline{\pi}_{\ell}$ in Theorem~\ref{main:theo3} is an artifact of the proof. Indeed, in practice we observe that the estimator performs better without sample splitting—that is, in the setting considered in Theorem \ref{main:theo3}—than with sample splitting, as in Corollary~\ref{cor1}. That said, both results coincide and yield the same rate when $\underline{\pi}_{\ell} \asymp 1$. 
\end{remark}

\begin{remark}
\label{rem:tl_benefit_example}
We illustrate the benefit of transfer learning with DNNs in comparison with classical estimators whose rates depend on the ambient dimension and therefore suffer from the curse of dimensionality, such as transfer learning by kernel smoothing~\citep{du2017hypothesis}, through a simple example applying Corollary~\ref{cor1}. Let $X \in [0,1]^{10}$ and $\ell \in \{1, 2, 3\}$ index three groups. For simplicity, suppose that \(\bar{\pi}_\ell = \underline{\pi}_\ell = \pi_\ell\) for all \(\ell\). Consider $g_{0,\ell}(x) = f_0(x) + f_{0,\ell}(x)$ with $f_0(x) = |x_1+\cdots+x_{10}-1/2|$ and $f_{0,\ell}(x) = \ell \cdot |x_1 +x_2+x_3+x_4 - 1/2|^{3/2}$. Then $g_{0,\ell}$ and $\bar{f}$ has smoothness $p = 1$ on the full $K = 10$ ambient space, while the Stage-2 offset $G_1(x) = (1 - \pi_1 - 2\pi_2 - 3\pi_3)\,|x_1+x_2+x_3+x_4 - 1/2|^{3/2}$ for group~$1$ is much simpler, with smoothness $p = 3/2$ and intrinsic dimension $K = 4$.\footnote{We note that the same function may admit different hierarchical compositions, and the values $(p,K)$ we chosen here are for illustrative purposes. In fact, our upper bounds apply at the function-class level rather than to a specific function instance.} Table~\ref{tab:rate} provides the convergence rates of four estimators across regimes. As $\pi_1$ shrinks, classical TL estimator degrades to $(n\pi_1)^{-3/13}$, while NN with TL remains at $n^{-1/6}$ until $\pi_1 \ll n^{-11/18}$ and then transitions to $(n\pi_1)^{-3/7}$, strictly faster than classical TL. This example clearly illustrates how a small $\pi_\ell$ can drastically worsen the rate of transfer learning using classical estimators while NN with TL remains robust. We refer readers to Appendix~\ref{app:illustrative_examples} for a detailed explanation of this example and other concrete examples illustrating the rate $\phi_n + \phi_{\ell,n}$ using DNNs.

\begin{table}[h]
\centering
\caption{Convergence rates of four estimators under different regimes of $\pi_1$. CLS and NN stand for classical estimators and deep ReLU networks, respectively.}
\label{tab:rate}
\scriptsize
\renewcommand{\arraystretch}{1.1}
\setlength{\tabcolsep}{3pt}
\begin{tabular}{lcccc}
\toprule
Regime of $\pi_1$ & CLS-no-TL & NN-no-TL & CLS-TL & NN-TL \\
\midrule
$\pi_1 \asymp 1$                              & $n^{-\frac{1}{6}}$        & $n^{-\frac{1}{6}}$        & $n^{-\frac{1}{6}}$        & $n^{-\frac{1}{6}}$ \\
$n^{-\frac{5}{18}} \ll \pi_1 \ll 1$                    & $(n\pi_1)^{-\frac{1}{6}}$ & $(n\pi_1)^{-\frac{1}{6}}$ & $n^{-\frac{1}{6}}$        & $n^{-\frac{1}{6}}$ \\
$n^{-\frac{11}{18}} \lesssim \pi_1 \lesssim n^{-\frac{5}{18}}$  & $(n\pi_1)^{-\frac{1}{6}}$ & $(n\pi_1)^{-\frac{1}{6}}$ & $(n\pi_1)^{-\frac{3}{13}}$ & $n^{-\frac{1}{6}}$ \\
$n^{-1}\ll\pi_1 \ll n^{-\frac{11}{18}}$                         & $(n\pi_1)^{-\frac{1}{6}}$ & $(n\pi_1)^{-\frac{1}{6}}$ & $(n\pi_1)^{-\frac{3}{13}}$ & $(n\pi_1)^{-\frac{3}{7}}$ \\
\bottomrule
\end{tabular}
\end{table}

\end{remark}

\section{Experiments}
\label{sec:experiment}
\subsection{Numerical Experiments}

We conduct experiments in diverse simulation settings to evaluate our proposed transfer learning strategy (denoted as \textbf{2-Stage}). For deep ReLU networks (NN), we compare 2-Stage with several training strategies, including \textbf{Pooled}, \textbf{Separate}, \textbf{Pool-w-L}, and \textbf{Top-FT}. Specifically, the Pooled strategy trains a single model $\hat f$ on all data
while ignoring group labels, following~(\ref{eqn:step1}),
and also serves as the first stage of our procedure.
We further consider Separate, which trains independent models for each group;
Pool-w-L, which pools the data and appends group labels as additional inputs
to train a single model; and Top-FT, which fine-tunes only the output layer
of the pooled NN for each group. The method of~\cite{jiao2024deep}, conceptually similar to Top-FT, was also considered but omitted as it did not show advantages in most of our simulations. Besides NN, we compare with Random Forest (denoted as \textbf{RF}, \cite{breiman2001random}), implementing the same strategies except for Top-FT. The \textbf{ptLasso} \cite{craig2026pretraining} is also considered in the high-dimensional setting as a parametric competitor. 

We generate $n \in \{5000, 10000, 30000, 50000\}$ independent samples for each scenario. Gaussian noise is used in all scenarios. In the low-dimensional settings, we control the noise level via a signal-to-noise ratio (SNR), defined as the ratio between the empirical variance of the noiseless signal computed within each generated dataset and $\text{Var}(\epsilon)$, with $\text{SNR} \in \{2,5,10\}$. Each sample is assigned to one of $L = 5$ groups, where group sizes are unbalanced and proportional to the group index $z \in \{1, \ldots, 5\}$. In the high-dimensional scenarios, the noise variance is set to $\text{Var}(\epsilon) \in \{0.1^2, 1\}$. To mimic settings with a large number of different groups, each sample is assigned to one of $L = 30$ groups with equal group sizes. For each generated dataset, 15\% of the samples within each group are randomly held out as a test set. All experiments are repeated over 50 Monte Carlo replications, and performance is evaluated using the mean squared error (MSE) on the test set. Due to space limitations, we present two low-dimensional scenarios with $\text{SNR} = 5$ and two high-dimensional scenario with $\text{Var}(\epsilon) = 1$, and refer readers to Appendix~\ref{app:num_exp} for other simulation scenarios and detailed experimental configurations.

\paragraph{Scenario 1.} We consider a low-dimensional setting with an additive structure. Let the input vector be $q = (q_1, \dots, q_{10}) \stackrel{\text{i.i.d.}}{\sim} \text{Unif}([0,1]^{10})$, holding for Scenario 1 and 2. We define four intermediate quantities:
\[
\begin{aligned}
h_1(q) &= \sum_{i=1}^{10} q_i^2, 
& h_2(q) &= \sum_{i=1}^{10} |q_i|, \\
h_3(q; z) &= \sum_{i=1}^{5} q_i^2 - q_z^2, 
& h_4(q; z) &= \sum_{i=1}^{5} |q_i| - |q_z|.
\end{aligned}
\]
The shared and group-specific components are given by:
\[
\begin{aligned}
&f_0(q) = \log{\left(1+h_1(q) \cdot h_2(q)\right)},\\
&f_{0,z}(q) = \sqrt{1+|h_3(q; z) + h_4(q; z)|},
\end{aligned}
\]
and the response is generated according to the additive model $y=f_0(x)+f_{0,z}(x) + \epsilon$.
\paragraph{Scenario 2.}  We consider a low-dimensional setting with a nonlinear mixture of shared and group-specific components, without an explicit additive structure. Based on the four intermediate quantities defined in Scenario~1, we further define the following compositions:
\[
\begin{aligned}
&h_5(q; z) = \sqrt{h_1(q) \, h_2(q) + h_3(q; z) \, h_4(q; z)},\\
&h_6(q; z) = (h_1(q) + h_3(q; z))^2/(h_2(q) + h_4(q; z))^2.
\end{aligned}
\]
The group-specific conditional mean is then given by
\[
f_z(x) = h_5(q; z) \cdot \, h_6(q; z),
\]
and the response is generated with $y=f_z(x)+\epsilon$.

\paragraph{Scenario 3.} We consider a high-dimensional setting with shifted latent representations generated from a low-dimensional latent factor model.
Let the latent variable be $u=(u_1,\ldots,u_{10}) \sim \mathcal{N}(0,I_{10})$.
The response is generated directly from the latent variable as $
y = \sum_{j=1}^{10} u_j^2 + \epsilon$. However, the latent variable $u$ is unobserved; instead, for each group $z \in \{1,\ldots,L\}$, the observed covariates $q \in \mathbb{R}^{100}$ are obtained via a group-specific nonlinear transformation of $u$. Specifically, each group is randomly assigned parameters $w_z \in \{1/2, 1/3, \ldots, 1/10\}$, $\nu_z \in \{1/5, 1/4, 1/3, 1/2, 1, 2, 3, 4, 5\}$, and $\psi_z \in \{2\pi/k : k = 1,\ldots,6\}$. We define a shifted latent vector $\tilde u^{(z)}$:
\[
\tilde{u}_j^{(z)} =
\begin{cases}
u_j, & \text{if } j \in \{1, \dots, 5\}, \\
u_j + w_z \cdot \sin(\psi_z + \nu_z \cdot u_j), & \text{if } j \in \{6, \dots, 10\}.
\end{cases}
\]
The observed covariates are then generated from $\tilde{u}^{(z)}$ via a two-layer neural network mapping $T: \mathbb{R}^{10} \to \mathbb{R}^{100}$, defined as $q = T(\tilde{u}^{(z)}) = V \tanh(W \tilde{u}^{(z)} + b)$, where $W \in \mathbb{R}^{50 \times 10}$, $b \in \mathbb{R}^{50}$, and $V \in \mathbb{R}^{100 \times 50}$ are fixed across all groups and randomly initialized as $
W_{ij} \overset{\text{i.i.d.}}{\sim} \mathcal{N}\!\left(0, {10^{-1}}\right)$, $
b_j \overset{\text{i.i.d.}}{\sim} \mathcal{N}(0, 0.1^2)$, and $
V_{ij} \overset{\text{i.i.d.}}{\sim} \mathcal{N}\!\left(0, {50}^{-1}\right)$. Under this construction, the estimation target $f_z(q)=\mathbb{E}[y \mid q, z]$ is a highly nonlinear and implicit function of the observed $q$.
\paragraph{Scenario 4.}
This scenario features a shared low-dimensional latent structure with sparse group-specific signals. Let $u = (u_1, \dots, u_{10}) \sim \mathcal{N}(0, I_{10})$ be the shared latent variable. The observed covariates $q \in \mathbb{R}^{500}$ consist of two parts: the first 410 coordinates arise from a shared nonlinear transformation $q_{\text{share}} = V \tanh(W u + b)$, where $W \in \mathbb{R}^{50 \times 10}$, $b \in \mathbb{R}^{50}$, $V \in \mathbb{R}^{410 \times 50}$ are randomly initialized as $W_{ij} \overset{\text{i.i.d.}}{\sim} \mathcal{N}(0, 10^{-1})$, $b_j \overset{\text{i.i.d.}}{\sim} \mathcal{N}(0, 0.1^2)$, $V_{ij} \overset{\text{i.i.d.}}{\sim} \mathcal{N}(0, 50^{-1})$; the remaining 90 coordinates $q_{\mathrm{tail}} \sim \mathcal{N}(0, I_{90})$ form group-specific sparse signals. Each group $z$ is assigned three disjoint indices from $q_{\text{tail}}$, each associated with a randomly selected function from $\{\sin(\nu_z \cdot), \tanh(\nu_z \cdot), (1+\exp(\nu_z \cdot))^{-1}\}$ with a randomly chosen parameter $\nu_z \in \{1/5, 1/4, 1/3, 1/2, 1, 2, 3, 4, 5\}$. The response is $y = \sum_{j=1}^{10} u_j^2 + \sum_{k=1}^{3} \theta_{z,k}(q_{s_{z,k}}) + \epsilon$, where $\theta_{z,k}$ denotes the function for group $z$ at index $s_{z,k}$.

\begin{table}[h]
\centering
\caption{Average MSE for Scenario 1 and 2 with $\text{SNR}=5$ over 50 independent trials. For each sample size, the lowest MSE is highlighted in bold.}
\label{tab:overall-mse-snr2-scenario56}
\scriptsize
\renewcommand{\arraystretch}{1.2}
\begin{tabular}{l cccc}
\toprule
Model / Size & 5000 & 10000 & 30000 & 50000 \\
\midrule
\multicolumn{5}{c}{Scenario 1 (MSE $\times 10^{2}$)} \\
\midrule
Pooled (NN)      & 2.28 (0.2) & 2.13 (0.2) & 2.34 (0.7) & 1.93 (0.3) \\
2-Stage (NN)    & \textbf{0.93 (0.2)} & \textbf{0.64 (0.1)} & \textbf{0.34 (0.1)} & \textbf{0.27 (0.1)} \\
Separate (NN)   & 6.64 (1.2) & 0.97 (0.4) & 0.60 (0.1) & 0.49 (0.1) \\
Top-FT (NN)     & 2.12 (0.2) & 1.77 (0.2) & 1.14 (0.2) & 0.92 (0.2) \\
Pool-w-L (NN)   & 1.03 (0.2) & 0.74 (0.2) & 1.03 (1.0) & 0.48 (0.2) \\
\cdashline{1-5}
Pooled (RF)     & 7.08 (0.4) & 6.73 (0.3) & 6.55 (0.2) & 6.69 (0.1) \\
2-Stage (RF)    & 4.10 (0.3) & 3.37 (0.2) & 2.52 (0.1) & 2.36 (0.1) \\
Separate (RF)   & 7.75 (0.5) & 6.50 (0.3) & 5.32 (0.2) & 5.00 (0.1) \\
Pool-w-L (RF)   & 6.92 (0.4) & 6.56 (0.3) & 6.37 (0.2) & 6.52 (0.1) \\
\midrule
\multicolumn{5}{c}{Scenario 2 (MSE $\times 10^{2}$)} \\
\midrule
Pooled (NN)      & 3.97 (0.5) & 3.51 (0.3) & 3.67 (1.1) & 2.77 (0.2) \\
2-Stage (NN)    & \textbf{2.75 (0.3)} & \textbf{2.09 (0.2)} & \textbf{1.44 (0.1)} & \textbf{1.09 (0.1)} \\
Separate (NN)   & 8.84 (1.4) & 3.75 (0.8) & 2.00 (0.1) & 1.75 (0.2) \\
Top-FT (NN)     & 3.52 (0.3) & 2.92 (0.3) & 1.98 (0.2) & 1.64 (0.2) \\
Pool-w-L (NN)   & 3.52 (0.7) & 2.73 (0.5) & 2.53 (1.2) & 1.51 (0.2) \\
\cdashline{1-5}
Pooled (RF)     & 12.37 (1.0) & 11.43 (0.6) & 10.87 (0.3) & 11.03 (0.3) \\
2-Stage (RF)    & 7.96 (0.8) & 6.58 (0.5) & 5.12 (0.2) & 4.76 (0.2) \\
Separate (RF)   & 15.70 (1.3) & 13.12 (0.7) & 10.31 (0.3) & 9.57 (0.2) \\
Pool-w-L (RF)   & 12.26 (1.0) & 11.33 (0.6) & 10.78 (0.3) & 10.97 (0.3) \\

\bottomrule
\end{tabular}
\end{table}

\begin{table}[h]
\centering
\caption{Average MSE for Scenario 3 and 4 with $\text{Var}(\epsilon)=1$ over 50 independent trials. Random forest–based estimators are omitted due to high computational cost and poor performance. For each sample size, the lowest MSE is highlighted in bold.}
\label{tab:overall-mse-var1-scenario78}
\scriptsize
\renewcommand{\arraystretch}{1.2}
\begin{tabular}{l cccc}
\toprule
Model / Size & 5000 & 10000 & 30000 & 50000 \\
\midrule
\multicolumn{5}{c}{Scenario 3 (MSE)} \\
\midrule
Pooled (NN) & 4.08 (0.4) & 3.22 (0.4) & 2.63 (0.4) & 2.12 (0.3) \\
2-Stage (NN) & \textbf{3.95 (0.4)} & \textbf{2.97 (0.3)} & \textbf{2.12 (0.2)} & \textbf{1.77 (0.2)} \\
Separate (NN) & 18.52 (1.4) & 16.84 (1.1) & 8.01 (0.6) & 5.70 (0.4) \\
Top-FT (NN) & 3.97 (0.4) & 3.05 (0.3) & 2.26 (0.3) & 1.87 (0.2) \\
Pool-w-L (NN) & 4.37 (0.5) & 3.19 (0.3) & 2.34 (0.3) & 1.85 (0.2) \\
\cdashline{1-5}
ptLasso & 20.14 (1.4) & 19.14 (1.2) & 18.76 (0.8) & 18.51 (0.7)\\
\midrule
\multicolumn{5}{c}{Scenario 4 (MSE)} \\
\midrule
Pooled (NN) & 9.13 (0.9) & 6.46 (0.6) & 4.40 (0.5) & 3.56 (0.3) \\
2-Stage (NN) & \textbf{9.08 (0.9)} & \textbf{6.33 (0.6)} & \textbf{3.89 (0.3)} & \textbf{3.19 (0.2)} \\
Separate (NN) & 22.33 (1.5) & 21.59 (0.9) & 15.16 (0.6) & 13.50 (0.7) \\
Top-FT (NN) & 9.12 (0.9) & 6.40 (0.6) & 4.11 (0.3) & 3.44 (0.3) \\
Pool-w-L (NN) & 9.21 (0.9) & 6.47 (0.6) & 4.59 (0.7) & 3.51 (0.3) \\
\cdashline{1-5}
ptLasso & 21.76 (1.7) & 19.75 (1.0) & 19.31 (0.8) & 19.20 (0.8) \\
\bottomrule
\end{tabular}
\end{table}

Based on the numerical simulations, the deep neural network estimator generally outperforms the other considered estimators, demonstrating the advantages of NN as a nonparametric estimator in handling complex functional composite structures across various dimensions. Notably, the NN trained with the proposed two-stage transfer learning strategy consistently outperforms both alternative training strategies and other estimators, highlighting the advantage of the proposed strategy. Moreover, in low-dimensional settings within RF–based estimators, the two-stage transfer learning strategy achieves the lowest MSE, demonstrating the generalizability of our framework to other estimators.
%\textcolor{red}{where can we see that RF is the best? i thought the proposed method wins --updated}

\subsection{Real Data Experiments}
We conduct two real-data experiments: a low-dimensional study based on the Beijing PM2.5 dataset \cite{beijing_pm2.5_381} and a high-dimensional study based on the UTKFace dataset \cite{zhang2017age}. In this section, we present partial results for UTKFace and refer readers to Appendix~\ref{app:full_real_data} for the complete results across all experiments.

\textbf{Experiments on the UTKFace Dataset.} We conduct real-data experiments to image data for facial age estimation, where the goal is to estimate age conditioned on a facial image input. In this task, shared visual attributes such as skin texture and wrinkles provide universal cues for age estimation, while facial morphology varies across different ethnic groups, making it a natural setting for transfer learning.

Specifically, experiments are conducted on the UTKFace dataset \cite{zhang2017age}, which contains 23{,}705 facial images annotated with ages ranging from 0 to 116, along with ethnicity labels across five groups: White (43\%), Black (19\%), Asian (14\%), Indian (17\%), and Other (7\%). We treat ethnicity as the group label and perform neural network–based transfer learning across groups. Following \cite{baumann2021deep}, we adopt a more practical approach in which rather than learning directly from raw pixels, we first extract 512-dimensional latent features for each image using a pretrained FaceNet model \cite{schroff2015facenet}, with weights obtained from an open-source implementation trained on the VGGFace2 dataset \cite{cao2018vggface2}. These features then serve as input covariates for both our proposed framework and competing methods.

We evaluate the five learning strategies from the previous section, focusing exclusively on DNN estimators. For data partitioning, we randomly split the data while maintaining the ethnic group proportions across training (70\%), validation (15\%), and test (15\%) sets. Each neural network instance is a four-layer MLP with 64 hidden units per layer and ReLU activations, trained with early stopping on the validation set. Additional configurations are provided in Section~\ref{sec:cofig_face}. Table~\ref{tab:age-real-group-mse-transposed} reports the average MSE over 20 random trials on the test set, excluding samples labeled as “Other” ethnicity due to their ambiguous demographic attribution. 

\begin{table}
\centering
\caption{Overall and group-wise average MSE of age regression models over 20 independent trials. Random forest–based estimators are omitted due to poor performance. The best performance within each group is bolded.}
\label{tab:age-real-group-mse-transposed}
\scriptsize
\renewcommand{\arraystretch}{1.2}
\begin{tabular}{l ccccc}
\toprule
Model & Overall & White & Black & Asian & Indian \\
\midrule
Pooled & 61.8 (1.7) & 72.2 (2.7) & 60.1 (3.8) & 47.9 (7.8) & 49.2 (2.9) \\
2-Stage & \textbf{59.6 (1.9)} & \textbf{68.9 (3.2)} & \textbf{59.1 (4.1)} & \textbf{46.3 (8.4)} & 48.3 (2.7) \\
Separate & 63.1 (2.5) & 72.4 (3.8) & 65.4 (6.0) & 48.2 (8.6) & 49.6 (3.6) \\
Top-FT & 61.4 (1.7) & 71.8 (2.5) & 59.8 (3.7) & 47.8 (7.9) & 48.7 (3.0) \\
Pool-w-L & 60.7 (2.5) & 71.1 (3.9) & 59.5 (3.4) & 46.7 (8.4) & \textbf{47.5 (2.7)} \\
\bottomrule
\end{tabular}
\end{table}

From Table~\ref{tab:age-real-group-mse-transposed}, the three hybrid strategies combining common and group-specific information consistently outperform the simple Pooled and Separated baselines, demonstrating the value of both leveraging shared patterns across groups and accommodating differences among groups in age estimation. Among all strategies, the proposed two-stage transfer learning achieves the lowest overall MSE, attaining optimal performance on White, Black, and Asian subgroups while yielding substantial improvements on the Indian group relative to Pooled. These results validate the effectiveness of our proposed method. For additional results, including analyses that incorporate gender as an additional grouping factor and experiments using raw image inputs trained directly with MLPs, we refer readers to Appendix~\ref{sec:comp_face}.

\section{Conclusion}
\label{sec:conclusion}
This paper investigates transfer learning for nonparametric regression, using fully connected deep neural networks with ReLU activation as the estimator of interest. We assume that each group-specific conditional mean consists of a shared component common across all groups and a group-specific deviation. Our approach employs a two-stage offset learning framework: in the first stage, the overall conditional mean is estimated using pooled data from all groups to capture the shared structure; in the second stage, a group-specific offset is estimated to learn group-level characteristics, which are then combined to form the final estimator. We further discuss scenarios where transfer learning with deep neural networks can be beneficial, including cases where averaging similar groups yields a simpler overall function, where the offset function is simpler than the group-specific conditional mean, and data augmentation with all groups’ data improves the estimation of the overall conditional mean.

Several directions remain open for future work. First, our framework formally assumes an additive decomposition; while our simulations do not require this explicit form to hold, extending this to other coupling forms remains interesting. Second, negative transfer is theoretically possible in our setting when the offset $G_\ell$ is highly complex or when the first-stage aggregation produces a complicated $\bar{f}$. Such a case may arise when the groups are substantially different; in practice, however, groups in many real-world datasets exhibit strong cross-group similarity (e.g., our real-data experiments), and formal detection of negative transfer is left as future work. Third, our two-stage offset framework is conceptually related to the practically popular last-layer fine-tuning strategy, and developing analogous theory for fine-tuning is an open and important topic. Beyond these, aligning our theory with current practice motivates extensions to more advanced deep architectures, such as convolutional neural networks \cite{lecun2002gradient, krizhevsky2012imagenet} and Transformers \cite{vaswani2017attention}. Finally, while this work assumes discrete group differences, a natural extension would be to model continuous variation across smoothly related tasks.

\section*{Software and Data}
Implementation of this work is available at \url{https://github.com/RenJump/Twostage-Trans-DNN}.

\section*{Acknowledgements}

We greatly thank the anonymous reviewers for their helpful comments and suggestions, which have greatly helped us improve the quality of this work. In preparing the final revision of the paper, OHMP was partially supported by the NSF CAREER Award (No. DMS-2541747).

\section*{Impact Statement}

This paper studies transfer learning for nonparametric regression with data from multiple groups and proposes a general two-stage framework with theoretical guarantees. While the proposed methods may have potential applications in areas such as scientific data analysis and other domains involving grouped data, the primary contributions of this work are focuses on improving statistical efficiency and convergence properties. As such, the societal impact of this work is expected to be indirect, mainly by advancing the theoretical understanding of transfer learning and nonparametric regression with deep neural networks, and by informing future research in related areas.

\bibliography{example_paper}
\bibliographystyle{icml2026}

%%%%%%%%%%%%%%%%%%%%%%%%%%%%%%%%%%%%%%%%%%%%%%%%%%%%%%%%%%%%%%%%%%%%%%%%%%%%%%%
%%%%%%%%%%%%%%%%%%%%%%%%%%%%%%%%%%%%%%%%%%%%%%%%%%%%%%%%%%%%%%%%%%%%%%%%%%%%%%%
% APPENDIX
%%%%%%%%%%%%%%%%%%%%%%%%%%%%%%%%%%%%%%%%%%%%%%%%%%%%%%%%%%%%%%%%%%%%%%%%%%%%%%%
%%%%%%%%%%%%%%%%%%%%%%%%%%%%%%%%%%%%%%%%%%%%%%%%%%%%%%%%%%%%%%%%%%%%%%%%%%%%%%%
\newpage
\appendix
\onecolumn
\section{Formal Definitions of Hierarchical Composition Models}
\label{app:hier}
\begin{definition}[$(p, C)$-Smoothness]\label{definition: p,c smoothness}
\label{def:pcsmooth}
Let $p=q+s$ for some $q \in \mathbb{N}=\mathbb{Z}^{+}\cup\left\{0\right\}$ and $0<s \leq 1$. We say that a  function $g: \mathbb{R}^d \rightarrow \mathbb{R}$ is called $(p, C)$-smooth, if for every $\alpha=\left(\alpha_1, \ldots, \alpha_d\right) \in \mathbb{N}^d$,  with $d \in \mathbb{Z}^{+}$, where $\sum_{j=1}^d \alpha_j=q$ the partial derivative $\partial^q g /\left(\partial a_1^{\alpha_1} \ldots \partial a_d^{\alpha_d}\right)$ exists and satisfies
$$
\left|\frac{\partial^q g}{\partial a_1^{\alpha_1} \ldots \partial a_d^{\alpha_d}}\left(a\right)-\frac{\partial^q g}{\partial a_1^{\alpha_1} \ldots \partial a_d^{\alpha_d}}\left(b\right)\right| \leq C\| a- b\|^s
$$
for all $a, b \in \mathbb{R}^d$.% where $\|\cdot\|$ denotes the Euclidean norm.	
\end{definition}

Building on the definition of $(p, C)$-smoothness, we now introduce the class of hierarchical composition models. 

\begin{definition} [Space of Hierarchical Composition Models, \cite{kohler2019rate}]\label{definition: hierarchical composition model}
\label{def:hiercomp}
For $l=1$ and  smoothness constraint $\mathcal{P} \subseteq(0, \infty) \times \mathbb{N}$ the  space of hierarchical composition models is given as
$$
\begin{array}{lll}
         \mathcal{H}(1, \mathcal{P})  & := &\left\{h: \mathbb{R}^d \rightarrow \mathbb{R}: h(a )=m\left(a_{(\pi(1))}, \ldots, a_{(\pi(K))}\right),\right. \text { where }   m: \mathbb{R}^K \rightarrow \mathbb{R} \text { is }\\
    & & \,\,\,(p, C) \text {-smooth for some }(p, K) \in \mathcal{P} \text { and } \pi:\{1, \ldots, K\} \rightarrow\{1, \ldots, d\}\} .
    \end{array}
$$
For $l>1$, we recursively construct
$$
\begin{array}{lll}
 \mathcal{H}(l, \mathcal{P})    &:=&  \left\{h: \mathbb{R}^d \rightarrow \mathbb{R}: h(a)=m\left(f_1(a), \ldots, f_K(a)\right),\right. \text { where }  m: \mathbb{R}^K \rightarrow \mathbb{R} \text { is }\\
     && \,\,\,(p, C) \text {-smooth for some }(p, K) \in \mathcal{P}\text { and } \left.f_i \in \mathcal{H}(l-1, \mathcal{P})\right\}.
\end{array}$$	
\end{definition}
Figure~\ref{fig:hiertree} gives an illustration of a hierarchical composition model from the class
$\mathcal{H}(2,\mathcal{P})$, where a complex multivariate function is constructed
via a two-level composition of low-dimensional smooth functions acting on subsets
of the 10-dimensional input variables.
\begin{figure}[H]
    \centering
    \includegraphics[width=0.8\linewidth]{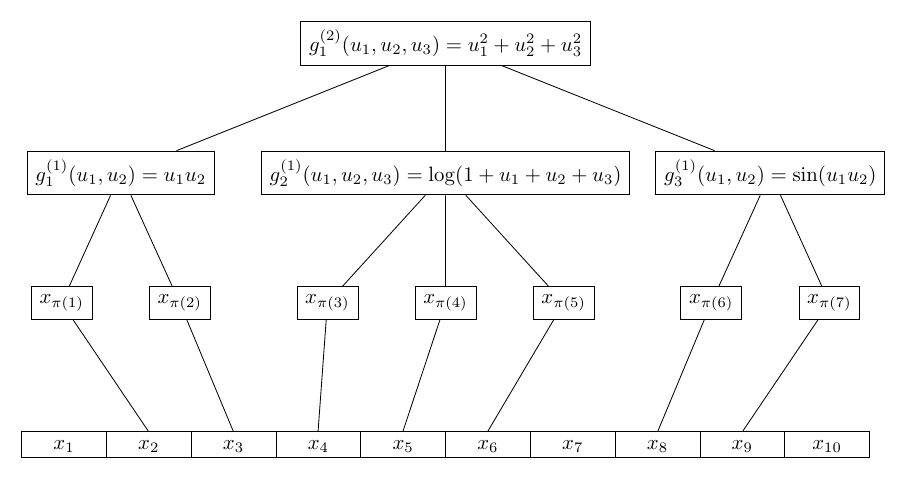}
    \caption{Illustration of a hierarchical composition model of the class $\mathcal{H}(2, \mathcal{P})$.}
    \label{fig:hiertree}
\end{figure}

\section{Formal Statements of Theorems}
\begin{theorem}
    \label{thm1_v2}
\textbf{[General Error Bound for First-Stage Estimation (Theorem 1 from \cite{padilla2024confidence}]).} Let $\mathcal{U}_n>0$ and suppose that   $\tilde{f} \in \mathcal{F}$ is such that 
 \[
  \| \bar{f} -\tilde{f}\|_{\infty} \,\leq\,\sqrt{\phi_n},
 \]
 so that $\phi_n$ is the approximating error. Suppose that $\mathcal{A}_n$ is chosen to satisfy
 \[
 \mathcal{A}_n \geq 8\max\{  \| \bar{f} \|_{\infty} +  \mathcal{U}_n , 8\|\bar{f}\|_{\infty} ,8\sqrt{\phi_n}\}.
 \]
 Moreover, let $\mathcal{F}_{\mathcal{A}_n}   \,:=\, \{ f_{\mathcal{A}_n}/(2\mathcal{A}_n)\,:\,  f \in \mathcal{F}\}$ and  assume that
 \[
   \underset{ x_1,\ldots,x_n \in [0,1]^d  }{\sup} \log N \left( \delta,  \mathcal{F}_{\mathcal{A}_n},  \left\| \cdot\right\|_n \right)\,\leq\, \eta_n(\delta)
 \]
 for some decreasing function $\eta_n \,:\, (0,1) \rightarrow \mathbb{R}_{\geq 0}$.  If 
\begin{equation}
    \label{eqn:entropy_0_v2}
    \underset{n
 \rightarrow \infty}{\lim} \,\left[  \sum_{k=0}^{\infty} \sum_{k^{\prime}=1 }^{\infty} \exp\left( -  C_1  \eta_n( 2^{-k-k^{\prime} -1}  )\right) \,+\, \sum_{k=0}^{\infty}    \exp\left(- C_2 \eta_n(2^{-k-1}) \right) \,+\,\mathbb{P}(\|\bar{\epsilon}\|_{\infty} > \mathcal{U}_n)    \right]\,=\,0,
\end{equation}
for some constants $C_1,C_2>0$ and
\begin{equation}
    \label{eqn:cond_sum_v2}
   \underset{k  \in \mathbb{N} }{\sup}  \sum_{k^{\prime}=1 }^{\infty} \frac{ \eta_n(2^{-k-k^{\prime}}   )  }{   2^{2 k^{\prime}   }  \eta_n( 2^{-k} )  } \,\leq\, 1,
\end{equation}
then
\begin{equation}
    \label{eqn:claim1_v2}
    \max\{\| \bar{f}-  \hat{f}_{ \mathcal{A}_n } \|_n^2  ,   \| \bar{f}-  \hat{f}_{ \mathcal{A}_n } \|_{\mathcal L_2}^2  \}  \,=\, O_{ \mathbb{P} }\left(   \phi_n\,+\, \frac{ \mathcal{U}_n^2 \eta_n( \delta_n )    }{n}  \,+\, \mathcal{A}_n \delta_n^2   \right), \\
\end{equation}
where $\delta_n$ is a critical radius of $\mathcal{F}_{\mathcal{A}_n}$, see Definition \ref{def1}.
\end{theorem}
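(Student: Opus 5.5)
The plan is to follow the standard localized analysis of least squares with truncation, done in three steps: a basic inequality, a peeled chaining bound for the multiplier process, and a transfer from the empirical norm to the population norm at the critical radius. Throughout we work on the event $\Omega_n=\{\|\bar\epsilon\|_\infty\le \mathcal U_n\}$, where $\bar\epsilon_i=y_i-\bar f(x_i)$. By (\ref{eqn:entropy_0_v2}) this event has probability tending to one, so it suffices to prove the bound on it.

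\textbf{Step 1: basic inequality.} On $\Omega_n$ we have $|y_i|\le \|\bar f\|_\infty+\mathcal U_n\le \mathcal A_n/8$. Clipping at level $\mathcal A_n$ can only move $f(x_i)$ towards $y_i$, so $\sum_i (y_i-\hat f_{\mathcal A_n}(x_i))^2\le \sum_i (y_i-\hat f(x_i))^2$. Also $\|\tilde f\|_\infty\le \|\bar f\|_\infty+\sqrt{\phi_n}\le\mathcal A_n$, hence $\tilde f_{\mathcal A_n}=\tilde f$. Combining these with the optimality of $\hat f$ over $\mathcal F$ and expanding the squares gives
\[
\|\bar f-\hat f_{\mathcal A_n}\|_n^2\le \|\bar f-\tilde f\|_n^2+\frac{2}{n}\sum_{i=1}^n \bar\epsilon_i\,(\hat f_{\mathcal A_n}-\tilde f)(x_i)\le \phi_n+\frac{2}{n}\sum_{i=1}^n \bar\epsilon_i\,(\hat f_{\mathcal A_n}-\tilde f)(x_i).
\]
After rescaling by $2\mathcal A_n$, the difference $\hat f_{\mathcal A_n}-\tilde f$ becomes a difference of elements of $\mathcal F_{\mathcal A_n}$ (up to the harmless inclusion of $\tilde f$). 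Its metric entropy is therefore controlled by $\eta_n$.

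\textbf{Step 2: multiplier process via peeling and chaining.} This is the main obstacle. Conditionally on $x_1,\dots,x_n$ the errors $\bar\epsilon_i$ are independent with mean zero, but restricting to $\Omega_n$ destroys the mean-zero property. I would first replace $\bar\epsilon_i$ by the truncated, recentred variables $\bar\epsilon_i\mathbf 1\{|\bar\epsilon_i|\le\mathcal U_n\}-\mathbb E[\cdot\mid x_i]$. These coincide with $\bar\epsilon_i$ up to a bias term on $\Omega_n$, and that bias is absorbed using $\mathbb P(\Omega_n^c)\to0$. The resulting variables are bounded by $2\mathcal U_n$ and hence sub-Gaussian by Hoeffding.

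Next, peel the class into shells $\{g: 2^{-k-1}<\|g-\tilde g\|_n\le 2^{-k}\}$. On each shell, chain along covers at scales $2^{-k-k'}$, $k'\ge1$, taking a union bound at each level. The level-$k'$ deviation costs $\mathcal U_n 2^{-k-k'}\sqrt{\eta_n(2^{-k-k'})/n}$ with failure probability $\exp(-C_1\eta_n(2^{-k-k'-1}))$. Condition (\ref{eqn:cond_sum_v2}) makes the chaining sum comparable to its top term, so on shell $k$ the supremum is at most $C\,\mathcal U_n 2^{-k}\sqrt{\eta_n(2^{-k})/n}$ (times $\mathcal A_n$ after undoing the scaling). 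The double sum in (\ref{eqn:entropy_0_v2}) shows that the total failure probability over all $k,k'$ vanishes.

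\textbf{Step 3: solving the inequality and transferring norms.} Insert the shell bound into Step 1 and solve the quadratic inequality in $\|\bar f-\hat f_{\mathcal A_n}\|_n$, using $2ab\le a^2/2+2b^2$. Below the critical radius $\delta_n$ we simply bound the error by $\mathcal A_n\delta_n^2$; above it the monotonicity of $\eta_n$ lets us replace $\eta_n(2^{-k})$ by $\eta_n(\delta_n)$. This yields the empirical-norm part of (\ref{eqn:claim1_v2}).

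For the $\mathcal L_2$ norm, apply a uniform ratio-type concentration inequality for the uniformly bounded class $\mathcal F_{\mathcal A_n}-\mathcal F_{\mathcal A_n}$ (Bousquet/Talagrand localization, as in Definition \ref{def1}). It says that with probability governed by the second series in (\ref{eqn:entropy_0_v2}), every $g$ in this class satisfies $\|g\|_{\mathcal L_2}^2\le 2\|g\|_n^2+c\,\delta_n^2$. Rescaling by $\mathcal A_n$ and adding $\|\bar f-\tilde f\|_{\mathcal L_2}^2\le\phi_n$ through the triangle inequality gives the population bound with the same rate.
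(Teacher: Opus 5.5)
Your architecture (clipped basic inequality, peeled chaining for the multiplier term with exactly the failure probabilities in \eqref{eqn:entropy_0_v2}, then a ratio-type bound to pass to $\mathcal L_2$) is the route behind this result. The paper imports the theorem from \cite{padilla2024confidence} without proof. Its own parallel argument for Theorem~\ref{theo2} uses the same three ingredients, taking that reference's Lemma~4 multiplier bound and Lemma~6 ratio bound as black boxes.

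The step that fails is your treatment of the truncation bias in Step~2. On $\Omega_n$ you have $\bar\epsilon_i=\tilde\epsilon_i+b(x_i)$, where $b(x)=\mathbb E[\bar\epsilon\mathbf 1\{|\bar\epsilon|\le\mathcal U_n\}\mid X=x]=-\mathbb E[\bar\epsilon\mathbf 1\{|\bar\epsilon|>\mathcal U_n\}\mid X=x]$. The term $\frac2n\sum_i b(x_i)(\hat f_{\mathcal A_n}-\tilde f)(x_i)$ is a drift, not a fluctuation. The most you can do is bound it by $\frac14\|\hat f_{\mathcal A_n}-\tilde f\|_n^2+4\|b\|_n^2$, so $\|b\|_n^2$ lands in the final bound. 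The hypothesis $\mathbb P(\Omega_n^c)\to0$ only says $n\,\mathbb P(|\bar\epsilon|>\mathcal U_n)\to0$. It says nothing about how far out that tail mass sits, hence nothing about $b$.

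In fact no argument can close this under the stated hypotheses:
\begin{itemize}
\item Take $\bar f=0$ and $\mathcal F$ the constants, so $\hat f$ is the sample mean. Then $\eta_n(\delta)=\log n\,\log(4/\delta)$ satisfies \eqref{eqn:entropy_0_v2}--\eqref{eqn:cond_sum_v2}, $\delta_n\asymp n^{-1/2}$, and the claimed rate is $\asymp\mathcal U_n^2\log^2 n/n$.
\item Use i.i.d.\ noise equal to $-c$ except for atoms at $M_j=\exp(e^j)$ of mass $w_j=1/(j^2M_j)$, with $c$ fixed by the mean-zero constraint.
\item Set $\mathcal U_n=M_{k-1}$ for $N_{k-1}\le n<N_k$, where $N_k:=1/(w_k\log(1/w_k))$. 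This gives $\mathbb P(\Omega_n^c)\le 2nw_k\le 2/\log(1/w_k)\to0$.
\item Yet on $\Omega_n$ the sample mean concentrates at $b\approx-\sum_{j\ge k}j^{-2}\approx-1/k$, while near the end of each block the claimed rate is doubly exponentially small in $k$.
\end{itemize}
So Step~2 needs an extra noise hypothesis that makes $\|b\|_\infty^2$ negligible. One option is conditional symmetry of $\bar\epsilon$, though this is unnatural here since $\bar\epsilon=G_Z(X)+\epsilon$. Another is a tail condition such as sub-exponential noise with $\mathcal U_n=C\log n$ for $C$ large, which gives $\|b\|_\infty\lesssim n^{-1}$. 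The statement as quoted holds only under some such additional assumption.

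There is also a bookkeeping gap in Step~3. Being below the critical radius means $\|(\hat f_{\mathcal A_n}-\tilde f)/(2\mathcal A_n)\|_n\le\delta_n$, i.e.\ $\|\hat f_{\mathcal A_n}-\tilde f\|_n^2\le 4\mathcal A_n^2\delta_n^2$, not $\mathcal A_n\delta_n^2$. Undoing the $2\mathcal A_n$ scaling in your ratio inequality likewise costs $\mathcal A_n^2\delta_n^2$ in $\mathcal L_2$. This is also what the paper's parallel computation produces: the small-radius case in the proof of Theorem~\ref{theo2} yields $16\mathcal B_n^2\delta^2$, and \eqref{eqn:claim2} carries $\mathcal B_n^2\delta_n^2$.

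So even with Step~2 repaired, your argument gives \eqref{eqn:claim1_v2} with $\mathcal A_n^2\delta_n^2$ in place of $\mathcal A_n\delta_n^2$. That is harmless when $\mathcal A_n=O(\mathrm{poly}(\log n))$, as in all of the paper's applications, but it is not the displayed bound. For the $\mathcal L_2$ part a single power of $\mathcal A_n$ is unattainable in general. Consider noiseless data with $X$ uniform, $\bar f=0$, $\mathcal F=\{c\mathbf 1_{[a,b]}(x_1)\}$, $\mathcal U_n=1$ and $\mathcal A_n=n^{1/4}$. A least-squares minimizer may place an arbitrarily tall bump on the largest data-free gap. Then $\|\hat f_{\mathcal A_n}\|_{\mathcal L_2}^2\asymp\mathcal A_n^2\log n/n$, whereas the claimed rate is of order $\mathcal A_n\log n/n+\log^2 n/n$.
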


\begin{theorem}
    \label{theo2}
     \textbf{[Detailed Statement of Theorem~\ref{main:theo2}].} With the notation and conditions  from Theorem \ref{thm1_v2}, let $r_n$ be the rate of convergence of $\hat{f}_{\mathcal{A}_n }$ towards $\bar{f}$, namely
    \[
    r_n \,:=\,   \phi_n\,+\, \frac{ \mathcal{U}_n^2 \eta_n( \delta_n )    }{n}  \,+\, \mathcal{A}_n \delta_n^2,
    \]
    %with data independent to that of (\ref{eqn:step2}), for instance with data splitting. 
    and where $\hat{f}$ has been constructed as in (\ref{eqn:step1}).    Suppose for $\ell \in \{ 1,\ldots,L\}$ there exists $\widetilde{G}_{\ell} \in \mathcal{F}_{\ell}$  such that 
    \[
    \|G_{\ell}-\widetilde G_{\ell}\|_\infty \le \sqrt{\phi_{\ell,n}},
    \]
    where $\phi_{\ell,n}$ denotes the approximation error. Let $\eta_{\ell, n} \,:\,  \mathbb{R}_+ \rightarrow \mathbb{R}_+$, be functions such that
  \begin{equation}
  	\label{eqn:entropy_cond2}
  	 \underset{     \frac{n \underline{\pi}_{\ell}}{2}  \,\leq\,   n_{\ell} \,\leq \,  2 n \overline{\pi}_{\ell}  }{\max} \,\,\,\underset{\{x_i \}_{i\in  \mathcal{I}_{\ell} }  \subset [0,1]^d }{\sup}\,  \log N( \delta  , \mathcal{F}_{\ell, \mathcal{B}_n},  \|\cdot\|_{ \mathcal{I}_{\ell} }  ) \,  \leq \, \eta_{\ell,n}(\delta) \,\,\,\forall \,\delta \in (0,1),
  \end{equation}
    where $n_{\ell} = \vert \I_{\ell}\vert$, where   $\mathcal{F}_{\ell,\mathcal{B}_n}   \,:=\, \{ f_{\mathcal{B}_n}/(2\mathcal{B}_n)\,:\,  f \in \mathcal{F}_{\ell}\}$, and $\mathcal{B}_n$ is chosen such that 
    \begin{equation}
        \label{eqn:bn_condition}
            \mathcal{B}_n  \,\geq\,  \max\{\mathcal{U}_n + \|\bar{f}\|_{\infty} + \mathcal{A}_n +  2\underset{\ell=1,\ldots,L}{ \max} \|f_{0,\ell}\|_{\infty},\|G_{\ell}\|_{\infty} + \sqrt{\phi_{\ell,n}}   \}.
    \end{equation}
    
  Suppose that  $\delta \in (0,1)$ satisfies 
    \begin{equation}
        \label{eqn:part1}
         \underset{m \,:\,  \frac{n \underline{\pi}_{\ell} }{2}\leq m \leq  2n\overline{\pi}_{\ell}   }{\max} \, \underset{ \{x_j\}_{j=1}^m  \subset  [0,1]^d   }{\sup}\, \left\{   \frac{1}{\sqrt{ m}}\int_{ \delta^2/48 }^{\delta}    \sqrt{ \log N \left(t/4, \mathcal{F}_{\ell, \mathcal{B}_n}  ,\left\| \cdot\right\|_{m} \right)   }    dt
    \,+\,   \frac{ \delta }{\sqrt{m }}\sqrt{ c_1 \log( 48/\delta^2 )}     \right \} \,\lesssim\, \delta^2,
    \end{equation}
    and
   \begin{equation}
       \label{eqn:part2}
          	\underset{m \,:\,  \frac{n \underline{\pi}_{\ell} }{2}\leq m \leq  2n\overline{\pi}_{\ell}  }{\max} \, \underset{  \{x_j\}_{j=1}^m   \subset  [0,1]^d   }{\sup}\, \left\{   \frac{1}{\sqrt{ m}}\int_{ \delta^2/48 }^{\delta}    \sqrt{ \log N \left(t/4, \mathcal{F}_{\mathcal{A}_n}  ,\left\| \cdot\right\|_{m} \right)   }    dt
   \,+\,   \frac{ \delta }{\sqrt{m }}\sqrt{ c_1 \log( 48/\delta^2 )}     \right \} \,\lesssim\, \delta^2,
   \end{equation}
    then 
    \begin{equation}
        \label{eqn:claim2}
         \|     g_{0,\ell} - \hat{g}_{\ell} \|_{\lt}^2    \,=\, O_{\mathbb{P}}\left(   \phi_{\ell,n} \,+\, r_n\,+\,  \frac{\mathcal{U}_n^2\eta_{\ell,n}( \delta_n   ) }{  n \underline{\pi}_{\ell}    }\,+\,   \mathcal{B}_n^2 \delta_n^2 \right) 
    \end{equation}
	%	\| G_{\ell} -  \hat{f}_{\ell,\mathcal{B}_n  }\|_{\lt}^2   
    provided that $    n \underline{\pi}_{\ell}^3 /   \overline{\pi}_{\ell}^2  \rightarrow \infty$, $\delta_n^2 n \underline{\pi}_{\ell}\rightarrow \infty$, and 
    \begin{equation}
        \label{eqn:part3}
        \underset{n \rightarrow \infty}{\lim}   \left[  \sum_{k=0}^{\infty} \sum_{k^{\prime}=1 }^{\infty} \exp\left( -  C_1  \eta_{\ell,n}( 2^{-k-k^{\prime}-1  }  )\right) \,+\, \sum_{k=0}^{\infty}    \exp\left(- C_2 \eta_{\ell,n}(2^{-k-1}) \right) \,+\,\mathbb{P}(\|\epsilon\|_{\infty} > \mathcal{U}_n )  \right]\,=\, 0
    \end{equation}
    for some constants $C_1, C_2>0$, and 
    \begin{equation}
    \label{eqn:cond_sum_v4}
   \underset{k  \in \mathbb{N} }{\sup}  \sum_{k^{\prime}=1 }^{\infty} \frac{ \eta_{\ell,n}(2^{-k-k^{\prime}}   )  }{   2^{2 k^{\prime}   }  \eta_{\ell,n}( 2^{-k} )  } \,\leq\, 1.
\end{equation}
    
\end{theorem}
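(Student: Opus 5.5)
The plan is to reduce the claim to a second-stage regression problem with a perturbed response, and then reuse the localized-complexity machinery behind Theorem~\ref{thm1_v2}.

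\textbf{Step 1 (decomposition).} Since $g_{0,\ell}=\bar f+G_\ell$ and $\hat g_\ell=\hat f_{\mathcal A_n}+\hat f_{\ell,\mathcal B_n}$,
\[
\|g_{0,\ell}-\hat g_\ell\|_{\lt}^2\le 2\|\bar f-\hat f_{\mathcal A_n}\|_{\lt}^2+2\|G_\ell-\hat f_{\ell,\mathcal B_n}\|_{\lt}^2 .
\]
Theorem~\ref{thm1_v2} gives $O_{\mathbb P}(r_n)$ for the first term. It therefore remains to bound the second term.

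\textbf{Step 2 (group sample size and working model).} Let $\mathcal I_\ell=\{i:z_i=\ell\}$. By Assumption~\ref{as1}, a Chernoff bound, and $n\underline\pi_\ell\to\infty$, we have $n\underline\pi_\ell/2\le n_\ell\le 2n\overline\pi_\ell$ with probability tending to one. This is why every supremum in (\ref{eqn:entropy_cond2}), (\ref{eqn:part1}) and (\ref{eqn:part2}) ranges over such $m$. Conditionally on $z_i=\ell$, the covariates $x_i$ are i.i.d.\ from $\mathbb P_{X\mid Z=\ell}$, whose density relative to $\mathbb P_X$ lies between $\underline\pi_\ell/\overline\pi_\ell$ and $\overline\pi_\ell/\underline\pi_\ell$. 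Hence $\mathcal L_2(\mathbb P_{X|Z=\ell})$ and $\lt$ norms agree up to constants, because $\underline\pi_\ell\asymp\overline\pi_\ell$.

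On $\mathcal I_\ell$ we write
\[
y_i-\hat f_{\mathcal A_n}(x_i)=G_\ell(x_i)+\big(\bar f-\hat f_{\mathcal A_n}\big)(x_i)+\epsilon_i .
\]
The second-stage ERM is therefore least squares for $G_\ell$, with noise $\epsilon_i$ (bounded by $\mathcal U_n$ on a high-probability event) plus a deterministic-given-data bias $b(x)=\bar f(x)-\hat f_{\mathcal A_n}(x)$. The choice (\ref{eqn:bn_condition}) ensures that truncation at $\mathcal B_n$ does not hurt the comparison with $\widetilde G_\ell$.

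\textbf{Step 3 (basic inequality and empirical-process control).} Comparing $\hat f_\ell$ with $\widetilde G_\ell$ gives
\[
\|G_\ell-\hat f_{\ell,\mathcal B_n}\|_{\mathcal I_\ell}^2\lesssim \phi_{\ell,n}+\frac{2}{n_\ell}\sum_{i\in\mathcal I_\ell}\epsilon_i(\hat f_{\ell,\mathcal B_n}-\widetilde G_\ell)(x_i)+\|b\|_{\mathcal I_\ell}^2 ,
\]
where the cross term with $b$ is absorbed by Cauchy--Schwarz and AM--GM. The noise term is handled exactly as in the proof of Theorem~\ref{thm1_v2}, using the peeling argument with the entropy $\eta_{\ell,n}$, conditions (\ref{eqn:part3}) and (\ref{eqn:cond_sum_v4}), and scaling by $2\mathcal B_n$. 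This yields $\mathcal U_n^2\eta_{\ell,n}(\delta_n)/(n\underline\pi_\ell)+\mathcal B_n^2\delta_n^2$.

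To pass from the empirical to the population norm for $\hat f_{\ell,\mathcal B_n}-G_\ell$, I would use a uniform localized ratio inequality over $\mathcal F_{\ell,\mathcal B_n}$, of the form $\|f\|_{\mathcal L_2}^2\lesssim \|f\|_{m}^2+\delta^2$ for $m$ points. This follows from a Dudley-integral/Talagrand bound, which is exactly the content of (\ref{eqn:part1}); it costs an additional $\mathcal B_n^2\delta_n^2$.

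\textbf{Step 4 (main obstacle: the bias term $\|b\|_{\mathcal I_\ell}^2$).} Theorem~\ref{thm1_v2} controls $\|b\|_{\lt}^2$, and the full-sample empirical norm, but not the empirical norm on the subsample $\mathcal I_\ell$. The difficulty is that $\hat f$ depends on these same points. I would bound it uniformly: $b/(2\mathcal A_n)$ lies in $\bar f/(2\mathcal A_n)-\mathcal F_{\mathcal A_n}$, so condition (\ref{eqn:part2}) gives the same localized uniform equivalence of $\|\cdot\|_{\mathcal I_\ell}$ and $\lt$ (under $\mathbb P_{X|Z=\ell}$) over $\mathcal F_{\mathcal A_n}$. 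This holds for any subsample of size $m$ in the range, conditioning on the $z$'s. We obtain $\|b\|_{\mathcal I_\ell}^2\lesssim \|b\|_{\lt}^2\,\overline\pi_\ell/\underline\pi_\ell+\mathcal A_n^2\delta_n^2=O_{\mathbb P}(r_n+\mathcal B_n^2\delta_n^2)$, using $\mathcal A_n\le\mathcal B_n$. The condition $n\underline\pi_\ell^3/\overline\pi_\ell^2\to\infty$ is used to make the density-ratio constants and the probability of the deviation events harmless.

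Combining Steps 1--4 yields (\ref{eqn:claim2}).
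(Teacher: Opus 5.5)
Your proposal is correct and follows the paper's proof essentially step for step: truncation plus the basic inequality on $\mathcal I_\ell$ (justified by the choice of $\mathcal B_n$), AM--GM on the cross term with $\bar f-\hat f_{\mathcal A_n}$, and the peeling bound for the noise with a case split at $\delta$. It then uses the ratio inequality from (\ref{eqn:part1}) together with the density-ratio equivalence of norms, and invokes (\ref{eqn:part2}) to control the first-stage error on the same subsample despite the dependence. The only difference is cosmetic: the paper splits $\bar f-\hat f_{\mathcal A_n}=(\bar f-\tilde f_{\mathcal A_n})+(\tilde f_{\mathcal A_n}-\hat f_{\mathcal A_n})$ and applies the ratio inequality to the difference class $\{f-g: f,g\in\mathcal F_{\mathcal A_n}\}$ (and similarly routes through $\widetilde G_\ell$ rather than $G_\ell$), whereas you use the shifted class $\bar f/(2\mathcal A_n)-\mathcal F_{\mathcal A_n}$, which has the same entropy.
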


\begin{theorem}
    \label{thm5_v2}
  \textbf{[Upper Bound for First-Stage Estimation with Deep ReLU Network (Theorem 2 in  \cite{padilla2024confidence})].}   Suppose that $\bar{f}  \in  \mathcal{H}(l_0, \mathcal{P}_0) $ for some $ l_0 \in \mathbb{N}$ and $\mathcal{P}_0\subset [1,\infty ) \times \mathbb{N}$. Furthermore, assume that each function $m$ in the definition of $\bar{f} $ can have different smoothness $p_m =  q_m +s_m$, for $q_m \in \mathbb{N} $, $s_m \in (0,1]$,  and of potentially different input dimension $K_m $, so that $(p_m, K_m ) \in \mathcal{P}_0$. Let $K_{\max}$ be the largest input dimension and $p_{\max}$ the largest smoothness of any of the functions $m$. Suppose that all the partial derivatives of order less than or equal to $q_m$ are uniformly bounded by constant $c_2$, and each function $m$ is Lipschitz continuous
with Lipschitz constant $C_{\mathrm{Lip} } \geq1 $. Also, assume that $\max\{p_{\max},K_{\max} \} =O(1)$.      Let  
    \begin{equation}
        \label{eqn:phi_n_rate}
            \phi_{n} = \max_{(p, K) \in \mathcal P_0 } n^{\frac{-2p}{ (2p+K)}}. 
    \end{equation}
Then there exists sufficiently large positive constants $c_3$ and $c_4$ such that if 
\begin{equation}
    \label{eqn:choice1}
       M  =\lceil c_3 \log n \rceil\,\,\,\,\,\,\text{and}\,\,\,\,\,\, \nu = \left\lceil c_4  \max_{(p, K) \in \mathcal P_0  } n^{\frac{K}{ 2(2p+K)}}\right\rceil
\end{equation}
or 
\begin{equation}
    \label{eqn:choice2}
    M  =\left\lceil c_3  \max_{(p, K) \in \mathcal P_0  } n^{\frac{K}{ 2(2p+K)}} \log n\right\rceil\,\,\,\,\,\,\text{and}\,\,\,\,\,\, \nu = \left\lceil c_4  \right\rceil,
\end{equation}
then $\hat{f}_{\mathcal{A}_n}$, with $\hat{f}$ as defined in (\ref{eqn:step1}) with $\mathcal{F} := \mathcal{F}(M,\nu)$, satisfies, 
\begin{equation}
   \label{eqn:den_nn1}
     \| \bar{f}-\hat{f}_{\mathcal{A}_n}\|_{ \lt}^2 \,=\, O_{\mathbb{P}}\left(   r_n \right)
\end{equation}
where
\begin{equation}
    \label{eqn:r_n_nn}
    r_n\,:=  \,\frac{ \max\{\mathcal{A}_n,\mathcal{U}_n^2 \} \log n  }{n}    \,+\,   \phi_n \log^3(n) \log(\mathcal{A}_n) \max\{\mathcal{A}_n,\mathcal{U}_n^2 \}  , 
\end{equation}
provided that
\[
\underset{n \rightarrow \infty }{\lim}\, \mathbb{P}(\|\epsilon\|_{\infty} > \mathcal{U}_n)   \,=\,0
\]
and 
\[
 \mathcal{A}_n \geq 8\max\{  \| \bar{f} \|_{\infty} +  \mathcal{U}_n , 8\|\bar{f}\|_{\infty} ,8\sqrt{\phi_n}\}
\]
hold.
\end{theorem}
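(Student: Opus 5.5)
The plan is to derive the statement as a direct instantiation of the general first-stage bound, Theorem~\ref{thm1_v2}, with $\mathcal{F}=\mathcal{F}(M,\nu)$. Three ingredients are needed: an approximation bound that fixes $\phi_n$, an entropy bound that fixes $\eta_n$, and a critical-radius computation that fixes $\delta_n$. The claimed $r_n$ then follows by substituting these into \eqref{eqn:claim1_v2}.

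\emph{Approximation.} I would invoke the approximation theorem of \cite{kohler2019rate} for hierarchical composition models $\mathcal{H}(l_0,\mathcal{P}_0)$. It says that, under the smoothness, derivative-bound and Lipschitz assumptions of the statement, a dense ReLU network with depth and width as in \eqref{eqn:choice1} (or the deep--narrow choice \eqref{eqn:choice2}) gives some $\tilde f\in\mathcal{F}(M,\nu)$ with
\[
\|\bar f-\tilde f\|_\infty\lesssim \max_{(p,K)\in\mathcal{P}_0} n^{-p/(2p+K)} .
\]
Taking $c_3,c_4$ large enough, this yields $\|\bar f-\tilde f\|_\infty\le\sqrt{\phi_n}$ with $\phi_n$ as in \eqref{eqn:phi_n_rate}.

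\emph{Entropy.} The network has $W\asymp M\nu^2$ parameters, so its pseudo-dimension is $V\lesssim WM\log W$ by Bartlett et al. Truncation at $\mathcal{A}_n$ and rescaling by $1/(2\mathcal{A}_n)$ do not increase the pseudo-dimension, so Haussler's bound gives
\[
\sup_{x_1,\dots,x_n}\log N(\delta,\mathcal{F}_{\mathcal{A}_n},\|\cdot\|_n)\le c\,V\log(2/\delta)=:\eta_n(\delta).
\]
For both architecture choices, $V\lesssim n^{K/(2p+K)}\log^3 n$ at the worst pair $(p,K)$, that is, $V\lesssim n\phi_n\log^3 n$.

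\emph{Checking the conditions of Theorem~\ref{thm1_v2}.} For \eqref{eqn:cond_sum_v2}, the ratio $\eta_n(2^{-k-k'})/\eta_n(2^{-k})=(k+k'+1)/(k+1)$ is at most $1+k'$, and $\sum_{k'\ge1}(1+k')4^{-k'}=7/9\le1$. The ``$+1$'' inside $\log(2/\delta)$ is there precisely so that the case $k=0$ causes no trouble. For \eqref{eqn:entropy_0_v2}, each double sum is a geometric series of the form $\sum e^{-C V(k+k')}$, which tends to $0$ because $V\to\infty$. The noise term tends to $0$ by hypothesis, and the lower bound on $\mathcal{A}_n$ in Theorem~\ref{thm1_v2} is assumed directly in the statement.

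\emph{Critical radius and assembly.} Finally I would bound the critical radius (Definition~\ref{def1}) using Dudley's integral,
\[
\frac{1}{\sqrt n}\int_{\delta^2}^{\delta}\sqrt{V\log(1/t)}\,dt\lesssim\delta\sqrt{V\log(1/\delta)/n}.
\]
Solving this against $\delta^2$ gives $\delta_n^2\asymp V\log n/n$. Substituting into \eqref{eqn:claim1_v2}, the terms $\mathcal{U}_n^2\eta_n(\delta_n)/n+\mathcal{A}_n\delta_n^2$ are at most
\[
\max\{\mathcal{A}_n,\mathcal{U}_n^2\}\,\frac{V\log n}{n}\lesssim \phi_n\log^3 n\cdot\log(\cdot)\,\max\{\mathcal{A}_n,\mathcal{U}_n^2\} .
\]
A residual term $\max\{\mathcal{A}_n,\mathcal{U}_n^2\}\log n/n$ covers the case where the entropy is $O(1)$ or where $\delta_n$ is bounded below.

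\emph{Main obstacle.} The step I expect to be delicate is bookkeeping the logarithms so that the result is exactly $\log^3(n)\log(\mathcal{A}_n)$. I would first try to obtain the $\log(\mathcal{A}_n)$ factor from the covering bound of the truncated class at scale $\delta$, which effectively works at resolution $\delta\mathcal{A}_n$. If that is not enough, I would reallocate one $\log n$ factor between $V$ and $\log(1/\delta_n)$, since $\log(1/\delta_n)\lesssim\log n$.
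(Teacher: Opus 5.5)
Up to the last display, your plan is the same argument the paper relies on. The paper imports this result from \cite{padilla2024confidence} without proof. Its parallel derivation for the second stage (proof of Theorem~\ref{theo3}) uses the same ingredients you do:
\begin{itemize}
\item the Kohler--Langer approximation;
\item the Bartlett et al.\ bound $V\lesssim M^2\nu^2\log(M\nu)\asymp n\phi_n\log^3 n$, valid under either architecture;
\item a Gy\"orfi/Haussler covering bound;
\item geometric-series checks of \eqref{eqn:entropy_0_v2} and \eqref{eqn:cond_sum_v2};
\item a critical-radius computation.
\end{itemize}
Those steps are fine as you wrote them. Your $\log(2/\delta)$ form even handles $k=0$ in \eqref{eqn:cond_sum_v2} more cleanly than the paper's $(k+k')/k$.

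\textbf{The gap is the final logarithmic bookkeeping, which your placeholder $\log(\cdot)$ hides.} Because $p\ge 1$ and $K=O(1)$, you have $\phi_n\le n^{-c}$ for some $c>0$. Together with $n^{-1}\lesssim\delta_n^2\lesssim\phi_n\log^4 n$, this gives $\log(1/\delta_n)\asymp\log n$. Hence $\mathcal{A}_n\delta_n^2$ and $\mathcal{U}_n^2\eta_n(\delta_n)/n$ are both of order
\[
\max\{\mathcal{A}_n,\mathcal{U}_n^2\}\,\frac{V\log n}{n}\asymp \phi_n\log^4(n)\max\{\mathcal{A}_n,\mathcal{U}_n^2\}.
\]
That is what your argument actually proves. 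For the typical polylogarithmic $\mathcal{A}_n$, it exceeds the stated $\phi_n\log^3(n)\log(\mathcal{A}_n)\max\{\mathcal{A}_n,\mathcal{U}_n^2\}$ by a factor of order $\log n/\log\log n$.

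Neither of your proposed remedies closes this.
\begin{itemize}
\item Working at resolution $\delta\mathcal{A}_n$ means covering the unnormalized truncated class, which gives $\eta_n(\delta)\asymp V\log(\mathcal{A}_n^2/\delta)$. That \emph{adds} $\log\mathcal{A}_n$ to $\log(1/\delta_n)\asymp\log n$ rather than replacing it, so you land on $\log^3(n)\log(n\mathcal{A}_n)$. This is exactly what the paper's own computation produces: the factors $\log(\mathcal{A}_n^2 n\overline{\pi}_\ell)$ and $\log^4(\mathcal{B}_n^2 n\overline{\pi}_\ell)$ in Theorem~\ref{theo3}.
\item Reallocating a $\log n$ between $V$ and $\log(1/\delta_n)$ changes nothing, because the rate depends only on their product. $V$ carries $\log^3 n$ under either architecture (two factors from $M^2$, one from $\log(M\nu)$), and $\log(1/\delta_n)$ carries one more.
\end{itemize}

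Along the pseudo-dimension route, then, the literal $\log(\mathcal{A}_n)$ is out of reach. You should state and prove \eqref{eqn:den_nn1} with $\log(\mathcal{A}_n)$ in \eqref{eqn:r_n_nn} replaced by $\log(n\mathcal{A}_n)$. That is the form the paper's parallel derivation supports, and presumably what the imported statement intends; flag the discrepancy explicitly. The alternative is to supply a genuinely sharper entropy estimate, which neither your plan nor the paper provides.
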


\begin{theorem}
    \label{theo3}
    \textbf{[Detailed Statement of Theorem~\ref{main:theo3}.]} Suppose that the conditions of Theorem \ref{thm5_v2} hold. Also, 
  for  $\ell\in \{1,\ldots,L\}$, let
    \[
G_{\ell}(x)\,:=\,    f_{0,\ell}(x) \,-\, \sum_{k=1}^L f_{0,k}(x)\,\mathbb{P}(Z=k|X=x).
    \]
     Suppose that $G_{\ell}  \in  \mathcal{H}(R_{\ell}, \mathcal{P}_{\ell}) $ for some $ R_{\ell} \in \mathbb{N}$ and $\mathcal{P}_{\ell}\subset [1,\infty ) \times \mathbb{N}$. Furthermore, assume that each function $m$ in the definition of $G_{\ell}$ can have different smoothness $p_m =  q_m +s_m$, for $q_m \in \mathbb{N} $, $s_m \in (0,1]$,  and of potentially different input dimension $K_m $, so that $(p_m, K_m ) \in \mathcal{P}_{\ell}$. Let $K_{\ell,\max}$ be the largest input dimension and $p_{\ell,\max}$ the largest smoothness of any of the functions $m$. Suppose that all the partial derivatives of order less than or equal to $q_m$ are uniformly bounded by constant $c_2$, and each function $m$ is Lipschitz continuous
with Lipschitz constant $C_{\mathrm{Lip} } \geq1 $. Also, assume that $\max\{p_{\ell,\max},K_{\ell,\max} \} =O(1)$.      Let  
    \begin{equation}
        \label{eqn:phi_ln_rate}
        \phi_{\ell,n} = \max_{(p, K) \in \mathcal P_{\ell} } (n\underline{\pi}_{\ell} )^{\frac{-2p}{ (2p+K)}}. 
    \end{equation}
Then there exists sufficiently large positive constants $c_3$ and $c_4$ such that if 
\begin{equation}
    \label{eqn:choice1}
       M_{\ell}  =\lceil c_3 \log (n\underline{\pi}_{\ell}) \rceil\,\,\,\,\,\,\text{and}\,\,\,\,\,\, \nu_{\ell}   = \left\lceil c_4  \max_{(p, K) \in \mathcal P_{\ell}  } (n\underline{\pi}_{\ell})^{\frac{K}{ 2(2p+K)}}\right\rceil
\end{equation}
or 
\begin{equation}
    \label{eqn:choice2}
    M_{\ell}  =\left\lceil c_3  \max_{(p, K) \in \mathcal P_{\ell}  } (n\underline{\pi}_{\ell})^{\frac{K}{ 2(2p+K)}} \log (n\underline{\pi}_{\ell})\right\rceil\,\,\,\,\,\,\text{and}\,\,\,\,\,\, \nu_{\ell} = \left\lceil c_4  \right\rceil,
\end{equation}
then     $\hat{g}_{\ell} $   as defined in (\ref{eqn:fnestimator}) with $\mathcal{F}_{\ell} := \mathcal{F}(M_{\ell},\nu_{\ell})$, satisfies, 
 \begin{equation}
     \label{eqn:rate_pre_training}
     	%	\| G_{\ell} -  \hat{f}_{\ell,\mathcal{B}_n  }\|_{\lt}^2   
 \|     g_{0,\ell} - \hat{g}_{\ell} \|_{\lt}^2    \,=\, O_{\mathbb{P}}\left(  \frac{\phi_{n}  \max\{\mathcal{A}_n,\mathcal{U}_n^2 \} \log^3(n  )  \log(\mathcal{A}_n^2 n\overline{\pi}_{\ell}  )}{\underline{\pi}_{\ell}  }\,+\,    \phi_{\ell,n} \max\{\mathcal{U}_n^2,\mathcal{B}_n^2 \}   \log^4(\mathcal{B}_n^2 n\overline{\pi}_{\ell}  )   \right) 
 \end{equation}
provided that  $  n \underline{\pi}_{\ell}^3 /   \overline{\pi}_{\ell}^2 \rightarrow \infty$, $\mathcal{B}_n$ is chosen to satisfy (\ref{eqn:bn_condition}).% and where .
\end{theorem}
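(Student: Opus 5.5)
The plan is to derive Theorem~\ref{theo3} as an instantiation of the general bound in Theorem~\ref{theo2}. The first-stage rate $r_n$ comes from Theorem~\ref{thm5_v2}. All that remains is to check the complexity conditions for the ReLU classes $\mathcal{F}(M_\ell,\nu_\ell)$ and $\mathcal{F}(M,\nu)$ and to choose the radius $\delta$ appropriately.

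\textbf{Step 1: approximation error for the offset.} Since $G_\ell\in\mathcal{H}(R_\ell,\mathcal{P}_\ell)$ with bounded smoothness and dimension, I would apply the approximation result of \cite{kohler2019rate}, in the same form used in the proof of Theorem~\ref{thm5_v2}, with $n$ replaced by $n\underline{\pi}_\ell$. Under either architecture choice \eqref{eqn:choice1} or \eqref{eqn:choice2}, this gives $\widetilde G_\ell\in\mathcal{F}(M_\ell,\nu_\ell)$ with $\|G_\ell-\widetilde G_\ell\|_\infty^2\lesssim\phi_{\ell,n}$.

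\textbf{Step 2: entropy bounds.} For the truncated normalized class $\mathcal{F}_{\ell,\mathcal{B}_n}$, I would use the pseudo-dimension bound for ReLU networks (Bartlett et al.), $\mathrm{Pdim}\lesssim W M_\ell\log W$ with $W\asymp M_\ell\nu_\ell^2$. Combined with Haussler's bound, this gives
\[
\eta_{\ell,n}(\delta)\lesssim M_\ell^2\nu_\ell^2\log(M_\ell\nu_\ell)\log(1/\delta)\lesssim (n\underline{\pi}_\ell)\,\phi_{\ell,n}\log^2(n\underline{\pi}_\ell)\log(1/\delta).
\]
This holds uniformly over $n_\ell\in[n\underline{\pi}_\ell/2,2n\overline{\pi}_\ell]$, since the bound does not depend on the design. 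The same argument gives an analogous bound for $\mathcal{F}_{\mathcal{A}_n}$, with $n\phi_n\log^3 n$ in place of the $\ell$-specific quantity. Conditions \eqref{eqn:part3} and \eqref{eqn:cond_sum_v4} then follow as in the proof of Theorem~\ref{thm5_v2}: $\eta_{\ell,n}(2^{-k})\asymp k\,\eta_{\ell,n}(1/2)$ grows linearly in $k$, so the exponential sums vanish once $\eta_{\ell,n}(1/2)\to\infty$, and the ratio sum is bounded by $\sum_{k'} (k+k')/(4^{k'}k)\le 1$ up to constants.

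\textbf{Step 3: choosing $\delta$ and verifying \eqref{eqn:part1}--\eqref{eqn:part2}.} This is the main obstacle, because one $\delta$ must satisfy both localized Dudley-type conditions. With $\eta(t)\asymp D\log(1/t)$, the Dudley integral is of order $\delta\sqrt{D\log(1/\delta)}$. Dividing by $\sqrt m$ with $m\ge n\underline{\pi}_\ell/2$ shows the condition holds whenever $\delta^2\gtrsim D\log(1/\delta)/(n\underline{\pi}_\ell)$.

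For \eqref{eqn:part1}, $D$ is the $\ell$-class dimension. For \eqref{eqn:part2}, $D\asymp n\phi_n\,\mathrm{polylog}$, which forces
\[
\delta^2\gtrsim \frac{\phi_n\log^3 n\,\log(\mathcal{A}_n^2n\overline{\pi}_\ell)}{\underline{\pi}_\ell}.
\]
This is exactly where the $\underline{\pi}_\ell^{-1}$ factor in \eqref{eqn:rate_pre_training} enters. I would therefore take $\delta_n^2$ equal to the sum of the two requirements (rescaled by the normalization by $2\mathcal{B}_n$ and $2\mathcal{A}_n$). I would then check that $\delta_n^2 n\underline{\pi}_\ell\to\infty$, which holds since $n\underline{\pi}_\ell\phi_{\ell,n}\to\infty$.

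\textbf{Step 4: assembling the bound.} Plugging into \eqref{eqn:claim2}:
\begin{itemize}
\item $\mathcal{B}_n^2\delta_n^2$ yields $\phi_{\ell,n}\mathcal{B}_n^2\log^4(\mathcal{B}_n^2n\overline{\pi}_\ell)$ together with the $\phi_n/\underline{\pi}_\ell$ term.
\item $\mathcal{U}_n^2\eta_{\ell,n}(\delta_n)/(n\underline{\pi}_\ell)$ yields $\phi_{\ell,n}\mathcal{U}_n^2$ times polylog.
\item $r_n$ from \eqref{eqn:r_n_nn} is dominated by the $\phi_n/\underline{\pi}_\ell$ term, since $\underline{\pi}_\ell\le 1$ and $1/n\lesssim\phi_n$.
\end{itemize}
I would absorb constants and use $\mathcal{A}_n\le\mathcal{B}_n$ from \eqref{eqn:bn_condition}. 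The extra logarithmic powers need careful bookkeeping, but that is routine.
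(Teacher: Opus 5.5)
Your plan is essentially the paper's proof: instantiate Theorem~\ref{theo2}, bound the entropy of the truncated ReLU classes through the Bartlett et al.\ dimension bound and a Haussler-type covering bound, verify \eqref{eqn:part3} and \eqref{eqn:cond_sum_v4} from the linear growth of $\eta_{\ell,n}(2^{-k})$ in $k$, and take $\delta$ as the sum of the requirements from \eqref{eqn:part1} and \eqref{eqn:part2}, the latter being exactly where the factor $\underline{\pi}_{\ell}^{-1}$ enters. What remains is bookkeeping. First, the entropy bound carries $\log^3(n\underline{\pi}_{\ell})$, not $\log^2$. Second, \eqref{eqn:cond_sum_v4} is a hard $\le 1$ condition, so take $\eta_{\ell,n}(\delta)\propto\log(C/\delta)$ with $C\ge 2$ (the paper uses $\log(\mathcal{B}_n^2/\delta)$) to keep $\eta_{\ell,n}(1)>0$. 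Third, the term $\mathcal{B}_n^2\delta_n^2$ puts $\mathcal{B}_n^2$, not $\max\{\mathcal{A}_n,\mathcal{U}_n^2\}$, in front of the $\phi_n/\underline{\pi}_{\ell}$ contribution, and the inequality $\mathcal{A}_n\le\mathcal{B}_n$ points the wrong way to remove it; this looseness is only polylogarithmic in the regime of Remark~\ref{rem2}, and the paper's own proof shares it.
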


\section{Proofs}

\subsection{Preliminaries}

\begin{definition}
    \label{def1}
    Define $
B_\infty(1)
:= \{ f : [0,1]^d \to \mathbb{R} : \|f\|_\infty \le 1 \}.$ Given a function class $\mathcal{F}$ with $\mathcal{F} \subset   B_{\infty}(1)$, we call $\delta_n >0$ a critical radius for $\mathcal{F}$ if 
    \[
      \mathbb{E}\left( \underset{f \in \mathrm{star}( \mathcal{F})\,:\, \|f\|_n \leq \delta_n   } {\sup } \,  \frac{1}{n}\sum_{i=1}^n \xi_i f(x_i) \,\bigg|\, \{ x_i\}_{i=1}^n  \right) \,\leq\, \delta_n^2,
    \]
    where $\xi_1,\ldots,\xi_n$ are independent Rademacher variables independent of $\{x_i\}_{i=1}^n$ and where 
    $\mathrm{star}( \mathcal{F})$  is  defined as 
    \[
      \mathrm{star}( \mathcal{F}) \,:=\, \{  \lambda f \,:\, \lambda \in [0,1], f\in \mathcal{F}\}.
    \]
\end{definition}

\subsection{Auxiliary Lemmas}
\begin{lemma}
    \label{lem1}
    Let $\tilde{y}_i = y_i-  \hat{f}_{ \mathcal{A}_n }(x_i) $ for $i=1,\ldots,n$. Then
    \[
     \tilde{y}_i   \,=\,f_{0,z_i}(x_i) \,-\, \sum_{\ell=1}^L f_{0,\ell}(x_i)\,\mathbb{P}(Z=\ell|X=x_i)\, +\, (\bar{f}(x_i) - \hat{f}_{ \mathcal{A}_n }(x_i)  ) \,+\, \epsilon_i,
    \]
    for $i=1,\ldots,n$.
\end{lemma}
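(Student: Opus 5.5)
This is a direct algebraic verification; no probabilistic argument is needed. The plan is to substitute the data-generating model \eqref{eq:main_model} into the definition of $\tilde y_i$, then add and subtract the overall mean $\bar f(x_i)$.

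First, by \eqref{eq:main_model}, $y_i = f_0(x_i) + f_{0,z_i}(x_i) + \epsilon_i$, so
\[
\tilde y_i = f_0(x_i) + f_{0,z_i}(x_i) + \epsilon_i - \hat f_{\mathcal A_n}(x_i).
\]
Second, I will use the representation of the overall mean derived in Section~\ref{sec:methodology}:
\[
\bar f(x) = \mathbb E(Y\mid X=x) = f_0(x) + \sum_{\ell=1}^L \mathbb P(Z=\ell\mid X=x)\, f_{0,\ell}(x).
\]
This follows from the tower property, since $\mathbb E(\epsilon\mid X,Z)=0$ implies $\mathbb E(Y\mid X=x,Z=\ell) = f_0(x)+f_{0,\ell}(x)$. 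Averaging over $Z\mid X=x$ and using $\sum_\ell \mathbb P(Z=\ell\mid X=x)=1$ gives the display. Solving for $f_0$ yields
\[
f_0(x_i) = \bar f(x_i) - \sum_{\ell=1}^L f_{0,\ell}(x_i)\,\mathbb P(Z=\ell\mid X=x_i).
\]

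Third, I substitute this expression for $f_0(x_i)$ into the formula for $\tilde y_i$. Regrouping the terms as $f_{0,z_i}(x_i) - \sum_\ell f_{0,\ell}(x_i)\mathbb P(Z=\ell\mid X=x_i)$, plus $(\bar f(x_i)-\hat f_{\mathcal A_n}(x_i))$, plus $\epsilon_i$, gives exactly the claimed identity for each $i$.

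There is no genuine obstacle here. The only point needing care is that the mixing weights $\mathbb P(Z=\ell\mid X=x)$ sum to one, which is what lets $f_0$ be absorbed into $\bar f$. I will also note that the first two terms of the identity equal $G_{z_i}(x_i)$ from \eqref{eqn:function_gl}, which is how the lemma is used afterwards.
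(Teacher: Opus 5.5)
Your proof is correct and is essentially the paper's argument: both substitute the model for $y_i$ and use $\bar f(x) = f_0(x) + \sum_{\ell=1}^L \mathbb{P}(Z=\ell\mid X=x)\, f_{0,\ell}(x)$ to eliminate $f_0(x_i)$. The paper adds and subtracts $\bar f(x_i)$, while you solve for $f_0$, which is the same computation. Your tower-property justification of that representation, using $\mathbb{E}(\epsilon\mid X,Z)=0$ and $\sum_\ell \mathbb{P}(Z=\ell\mid X=x)=1$, makes explicit a step the paper only states in Section~\ref{sec:methodology}.
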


\begin{proof}
    Notice that
    \[
     \begin{array}{lll}
        \tilde{y}_i &=& \displaystyle y_i-  \hat{f}_{ \mathcal{A}_n }(x_i) \\
          & =&\displaystyle y_i +   (\bar{f}(x_i) - \hat{f}_{ \mathcal{A}_n }(x_i)  ) -   \bar{f}(x_i)\\
                    & =& \displaystyle f_0(x_i) \,+\,   f_{0,z_i}(x_i) \,+\,\epsilon_i + (\bar{f}(x_i) - \hat{f}_{ \mathcal{A}_n }(x_i)  ) -  \bar{f}(x_i)\\
                                 & =& \displaystyle f_0(x_i) \,+\,   f_{0,z_i}(x_i) \,+\,\epsilon_i +  (\bar{f}(x_i) - \hat{f}_{ \mathcal{A}_n }(x_i)  ) -  [f_0(x_i) +   \sum_{\ell=1}^L f_{0,\ell}(x_i)\,\mathbb{P}(Z=\ell|X=x_i).]\\
                                  & =&\displaystyle f_{0,z_i}(x_i) \,-\, \sum_{\ell=1}^L f_{0,\ell}(x_i)\,\mathbb{P}(Z=\ell|X=x_i)\, +\, (\bar{f}(x_i) - \hat{f}_{ \mathcal{A}_n }(x_i)  ) \,+\, \epsilon_i.     \\
     \end{array}
    \]
\end{proof}

\begin{lemma}
\label{lem2}
    Suppose that the event $\Omega_1 =\{ \|\epsilon\|_{\infty}  \leq \mathcal{U}_n\}$ holds and $\mathcal{B}_n$ satisfies 
   \begin{equation}
       \label{eqn:bn}
       \mathcal{B}_n \geq \mathcal{U}_n + \|\bar{f}\|_{\infty} + \mathcal{A}_n +  2\underset{\ell=1,\ldots,L}{ \max} \|f_{0,\ell}\|_{\infty}. 
   \end{equation}
    Then $\| \tilde{y}\|_{\infty}\leq \mathcal{B}_n$.
\end{lemma}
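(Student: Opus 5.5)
The plan is to bound each of the four pieces in the expression for $\tilde y_i$ from Lemma~\ref{lem1} separately in absolute value, and then combine them with the triangle inequality. By Lemma~\ref{lem1}, for every $i$,
\[
\tilde y_i = f_{0,z_i}(x_i) - \sum_{\ell=1}^L f_{0,\ell}(x_i)\,\mathbb{P}(Z=\ell\mid X=x_i) + \big(\bar f(x_i)-\hat f_{\mathcal A_n}(x_i)\big) + \epsilon_i .
\]

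\textbf{Offset terms.} The first term satisfies $|f_{0,z_i}(x_i)|\le \max_\ell \|f_{0,\ell}\|_\infty$. For the second term, the weights $\mathbb{P}(Z=\ell\mid X=x_i)$ are nonnegative and sum to one over $\ell$. The weighted sum is therefore a convex combination, so its absolute value is also at most $\max_\ell\|f_{0,\ell}\|_\infty$. Together the offset part contributes at most $2\max_\ell\|f_{0,\ell}\|_\infty$.

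\textbf{First-stage residual.} Since $\hat f_{\mathcal A_n}$ is the truncation at level $\mathcal A_n$, we have $|\hat f_{\mathcal A_n}(x_i)|\le \mathcal A_n$. It follows that $|\bar f(x_i)-\hat f_{\mathcal A_n}(x_i)|\le \|\bar f\|_\infty+\mathcal A_n$.

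\textbf{Noise.} On the event $\Omega_1$ we have $|\epsilon_i|\le \mathcal U_n$.

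\textbf{Conclusion.} Adding the three bounds gives, for all $i$,
\[
|\tilde y_i|\le \mathcal U_n+\|\bar f\|_\infty+\mathcal A_n+2\max_\ell\|f_{0,\ell}\|_\infty \le \mathcal B_n ,
\]
where the last inequality is \eqref{eqn:bn}. Hence $\|\tilde y\|_\infty\le\mathcal B_n$. The only step needing any care is the convex-combination bound on the weighted sum; everything else is immediate.
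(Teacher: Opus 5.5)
Your proof is correct and follows the paper's argument: both start from the decomposition in Lemma~\ref{lem1} and use the triangle inequality to bound the offset terms, the truncated first-stage residual, and the noise separately, then invoke \eqref{eqn:bn}. The paper leaves the bound on $\sum_{\ell} f_{0,\ell}(x_i)\,\mathbb{P}(Z=\ell\mid X=x_i)$ implicit, and your convex-combination argument supplies it explicitly.
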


\begin{proof}
Notice that by Lemma \ref{lem1}, 
\[
\begin{array}{lll}
     \vert\tilde{y}_i\vert &  \leq  & \| f_{0,z_i}\|_{\infty} \,+\,\underset{\ell=1,\ldots,L}{\max} \| f_{0,\ell }\|_{\infty} \,+\, \|\bar{f}\|_{\infty} +  \mathcal{A}_n  \,+\,\mathcal{U}_n 
\end{array}
\]
and the claim follows.
    
\end{proof}

\begin{lemma}
    \label{lem3}
    Let
    \[
G_{\ell}(x)\,:=\,    f_{0,z_i}(x) \,-\, \sum_{k=1}^L f_{0,k}(x)\,\mathbb{P}(Z=k|X=x)
    \]
    and $\widetilde{G}_{\ell} \in \mathcal{F}_{\ell}$ be such that  $\| G_{\ell} - \widetilde{G}_{\ell}\|_{\infty} \leq \sqrt{\phi_{\ell,n}} $.  Such $\widetilde{G}_{\ell}$ exists by  Theorem 3 in \cite{kohler2019rate}. Then if $\mathcal{B}_n \geq  \|G_{\ell}\|_{\infty} + \sqrt{\phi_{\ell,n}}  $ we have that $\widetilde{G}_{\ell} \,=\, \widetilde{G}_{\ell,\mathcal{B}_n} := (\widetilde{G}_{\ell})_{\mathcal{B}_n}$.
\end{lemma}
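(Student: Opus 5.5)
The claim is that truncation at level $\mathcal{B}_n$ leaves $\widetilde{G}_{\ell}$ unchanged, i.e.\ $|\widetilde{G}_{\ell}(x)| \le \mathcal{B}_n$ for every $x \in [0,1]^d$. The existence of $\widetilde{G}_{\ell}$ is taken as given, either from Theorem 3 of \cite{kohler2019rate} or directly as the hypothesis $\|G_{\ell}-\widetilde G_{\ell}\|_\infty \le \sqrt{\phi_{\ell,n}}$. So the only work is a pointwise triangle-inequality bound followed by the definition of the clipping operator.

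\emph{Step 1: pointwise bound.} Fix $x\in[0,1]^d$. Then
\[
|\widetilde{G}_{\ell}(x)| \le |G_{\ell}(x)| + |\widetilde{G}_{\ell}(x)-G_{\ell}(x)| \le \|G_{\ell}\|_{\infty} + \|G_{\ell}-\widetilde{G}_{\ell}\|_{\infty} \le \|G_{\ell}\|_{\infty} + \sqrt{\phi_{\ell,n}} \le \mathcal{B}_n ,
\]
where the last inequality is the assumed lower bound on $\mathcal{B}_n$.

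\emph{Step 2: clipping is the identity.} Recall $f_{\mathcal{B}_n}(x)=\max\{-\mathcal{B}_n,\min\{f(x),\mathcal{B}_n\}\}$. If $-\mathcal{B}_n\le f(x)\le \mathcal{B}_n$, then $\min\{f(x),\mathcal{B}_n\}=f(x)$ and $\max\{-\mathcal{B}_n,f(x)\}=f(x)$. Applying this to $f=\widetilde{G}_{\ell}$ with Step 1 gives $(\widetilde{G}_{\ell})_{\mathcal{B}_n}(x)=\widetilde{G}_{\ell}(x)$ for every $x$, so $\widetilde{G}_{\ell}=\widetilde{G}_{\ell,\mathcal{B}_n}$ as functions on $[0,1]^d$.

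There is no real obstacle. The one point to check is that the sup-norm is taken over the domain on which the functions are compared (here $[0,1]^d$), so that the pointwise bound holds everywhere the identity is claimed. Note also that the lemma's display writes $f_{0,z_i}$ where (\ref{eqn:function_gl}) has $f_{0,\ell}$; this is presumably a typo and does not affect the argument, which uses only the sup-norm bounds.
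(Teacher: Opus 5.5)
Your proposal is correct and matches the paper's proof: the paper likewise bounds $\|\widetilde{G}_{\ell}\|_{\infty} \le \|\widetilde{G}_{\ell}-G_{\ell}\|_{\infty} + \|G_{\ell}\|_{\infty} \le \sqrt{\phi_{\ell,n}} + \|G_{\ell}\|_{\infty} \le \mathcal{B}_n$ and concludes that clipping at level $\mathcal{B}_n$ leaves $\widetilde{G}_{\ell}$ unchanged. Your remark that $f_{0,z_i}$ in the display should read $f_{0,\ell}$, consistent with the definition of $G_{\ell}$ in the main text, is also right.
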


\begin{proof}
Simply observe that
\[
\|\widetilde{G}_{\ell} \|_{\infty} \,\leq\, \|\widetilde{G}_{\ell} - G_{\ell}\|_{\infty} \,+\,\|G_{\ell} \|_{\infty} \,\leq\,\sqrt{\phi_{\ell,n}} \,+\,\|G_{\ell}\|_{\infty} \,\leq\, \mathcal{B}_n 
\]
and the claim follows. 
\end{proof}

\begin{lemma}
	\label{lem4}
	Suppose that Assumption \ref{as1} holds. Then letting $n_{\ell} = \vert \{ i\in [n]\,:\,  z_i = \ell  \}\vert$, for $\ell \in  \{1,\ldots,L\} $  the event 
	\[
	   \Omega_2(\ell)\,:=\,  \left\{    \frac{n \underline{\pi}_{\ell}}{2}  \,\leq\,   n_{\ell} \,\leq \,  2 n \overline{\pi}_{\ell}  \right \}
	\]
	happens with probability at least
	\[
	 1 - \exp\left(  -\frac{    n \underline{\pi}_{\ell}^3  }{27  \overline{\pi}_{\ell}^2 } \right).
	\]
	
\end{lemma}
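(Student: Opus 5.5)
The plan is to reduce $n_{\ell}$ to a binomial count and apply multiplicative Chernoff bounds to each tail. Since the triples $(x_i,y_i,z_i)$ are i.i.d., the indicators $\mathbf 1\{z_i=\ell\}$ are i.i.d.\ Bernoulli$(p_\ell)$, so
\[
n_\ell\sim\mathrm{Bin}(n,p_\ell),\qquad p_\ell:=\mathbb{P}(Z=\ell)=\mathbb{E}\big[\mathbb{P}(Z=\ell\mid X)\big].
\]
Integrating Assumption~\ref{as1} over $\mathbb{P}_X$ gives $\underline{\pi}_{\ell}\le p_\ell\le\overline{\pi}_{\ell}$. This is the only place the assumption is used.

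Next I bound the two tails separately. For the lower tail, since $n\underline{\pi}_{\ell}/2\le np_\ell/2$, the Chernoff bound with relative deviation $1/2$ gives
\[
\mathbb{P}(n_\ell<n\underline{\pi}_{\ell}/2)\le\mathbb{P}(n_\ell<np_\ell/2)\le\exp(-np_\ell/8)\le\exp(-n\underline{\pi}_{\ell}/8).
\]
For the upper tail, since $2n\overline{\pi}_{\ell}\ge 2np_\ell$, the Chernoff bound with relative deviation $1$ gives
\[
\mathbb{P}(n_\ell>2n\overline{\pi}_{\ell})\le\exp(-np_\ell/3)\le\exp(-n\underline{\pi}_{\ell}/3).
\]
A union bound then yields
\[
\mathbb{P}(\Omega_2(\ell)^c)\le 2\exp(-n\underline{\pi}_{\ell}/8).
\]

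It remains to compare this with $\exp\!\big(-n\underline{\pi}_{\ell}^3/(27\overline{\pi}_{\ell}^2)\big)$. Because $\underline{\pi}_{\ell}\le\overline{\pi}_{\ell}$, we have $x:=n\underline{\pi}_{\ell}^3/(27\overline{\pi}_{\ell}^2)\le n\underline{\pi}_{\ell}/27$. Hence $2e^{-n\underline{\pi}_{\ell}/8}\le e^{-x}$ as soon as
\[
n\underline{\pi}_{\ell}\Big(\tfrac18-\tfrac1{27}\Big)\ge\log 2,
\]
that is, for $n\underline{\pi}_{\ell}\ge 8$. The cubic-over-square form of the exponent suggests the authors instead took a single relative deviation of order $\underline{\pi}_{\ell}/(3\overline{\pi}_{\ell})$ around $np_\ell$. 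That version gives the stated constant directly but yields the same conclusion.

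The main obstacle is the factor $2$ from the union bound, together with the small-$n$ regime. The inequality cannot hold literally for every $n$. For example, with $n=1$ and $\overline{\pi}_{\ell}<1/2$, the event $\Omega_2(\ell)$ has probability $0$, while $1-e^{-x}>0$. I would therefore state and prove the lemma for $n\underline{\pi}_{\ell}$ exceeding an absolute constant. This is harmless for the paper, since every later use assumes $n\underline{\pi}_{\ell}^3/\overline{\pi}_{\ell}^2\to\infty$ and only needs $\mathbb{P}(\Omega_2(\ell))\to1$. That conclusion is immediate from the displayed Chernoff bound.
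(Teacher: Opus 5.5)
Your proof is correct, and it splits the deviation differently from the paper. The paper uses a single two-sided multiplicative bound with relative deviation $\kappa_\ell=\underline{\pi}_{\ell}/(3\overline{\pi}_{\ell})$, exactly as you guessed. It bounds $\mathbb{P}(|np_\ell-n_\ell|\ge\kappa_\ell n\overline{\pi}_{\ell})\le\mathbb{P}(|np_\ell-n_\ell|\ge\kappa_\ell np_\ell)\le\exp(-\kappa_\ell^2np_\ell/3)$, then notes that on the complement $2n\underline{\pi}_{\ell}/3<n_\ell<4n\overline{\pi}_{\ell}/3$. So the factor $\underline{\pi}_{\ell}^2/\overline{\pi}_{\ell}^2$ in the exponent is just $9\kappa_\ell^2$.

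Your one-sided bounds use constant relative deviations ($1/2$ below, $1$ above) and give the better exponent $n\underline{\pi}_{\ell}/8$, which depends on $\underline{\pi}_{\ell}$ alone. For this lemma, that shows the hypothesis $n\underline{\pi}_{\ell}^3/\overline{\pi}_{\ell}^2\to\infty$ in Theorem~\ref{theo2} could be weakened to $n\underline{\pi}_{\ell}\to\infty$. The two are equivalent under $\underline{\pi}_{\ell}\asymp\overline{\pi}_{\ell}$, but yours is cleaner. The cost is the union-bound factor $2$, which you absorb once $n\underline{\pi}_{\ell}\ge 8$.

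Your $n=1$ counterexample is right, and it shows this cost cannot be avoided by any route. In that example the paper's displayed inequality also fails, since $\mathbb{P}(|np_\ell-n_\ell|\ge\kappa_\ell np_\ell)=1>\exp(-\kappa_\ell^2p_\ell/3)$. A two-sided multiplicative Chernoff bound carries a factor $2$, and the paper's proof drops it. The paper's argument therefore needs the same repair you propose: either a factor $2$ in front of the exponential, or a lower bound on $n\underline{\pi}_{\ell}$. As you say, this is harmless downstream, because only $\mathbb{P}(\Omega_2(\ell))\to 1$ is used.
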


\begin{proof}

First, notice that $n_{\ell}$ is $\mathrm{Binomial}(n,p_{\ell})$
where
\[
p_{\ell}\,=\, \int \mathbb{P}(z_i = \ell |  x_i = x) dF_{X}(dx) \in  [\underline{\pi}_{\ell},  \overline{\pi}_{\ell} ],
\]
by Assumption \ref{as1}.

Hence, by the binomial concentration inequality, for instance see Lemma 5 in \cite{madrid2020adaptive}, letting $\kappa_{\ell} = \underline{\pi}_{\ell}/(3\overline{\pi}_{\ell})  \in [0,1]$, 
\[
\begin{array}{lll}
\displaystyle 	\mathbb{P}( \vert  n p_{\ell} -   n_{\ell}  \vert \geq  \kappa_{\ell} n \overline{\pi}_{\ell})  & \leq& \displaystyle  	\mathbb{P}( \vert  n p_{\ell} -   n_{\ell}  \vert \geq  \kappa_{\ell} n p_{\ell})\\
 & \leq& \displaystyle   \exp\left(  -\frac{  \underline{\pi}_{\ell}^2   n p_{\ell}}{27  \overline{\pi}_{\ell}^2 } \right)\\
  & \leq& \displaystyle   \exp\left(  -\frac{    n \underline{\pi}_{\ell}^3  }{27  \overline{\pi}_{\ell}^2 } \right).\\
\end{array}
\]
However, $ \vert  n p_{\ell} -   n_{\ell}  \vert  < \kappa_{\ell} n \overline{\pi}_{\ell}$ holds if and only if 
\[
n p_{\ell}- \kappa_{\ell} n \overline{\pi}_{\ell} \,<\,       n_{\ell}  \,<\, n p_{\ell}+  \kappa_{\ell} n \overline{\pi}_{\ell}
\]
which implies 
\[
   \frac{n  \underline{\pi}_{\ell}}{2}   \,<\,   n_{\ell } \,< \, 2n \overline{\pi}_{\ell}
\]
given our choice of $\kappa_{\ell}$. The claim then follows. 
\end{proof}

\begin{lemma}
	\label{lem21_new}
	Let $\mathcal{F}$ be a function class such that $\vert f\vert \leq  M$ for all $f\in \mathcal{F}$. For a set $S = \{a_1,\ldots,a_W\} \subset [0,1]^d$, we write 
	\[
	  \| f\|_S \,:=\, \sqrt{\sum_{w=1}^{W}   f(a_w)^2 }. 
	\]
    We also write
    \[
\|f\|_{\ell,\lt} \,:=\,\sqrt{\int f(x)^2 dP_{X|Z=\ell} (x)}
\]
with $P_{X|Z=\ell}$ the distribution of $X$ conditioning on $Z= \ell$.  	Then for any $\delta >0$ we have that
\begin{equation}
	\label{eqn:delta_set}
   \underset{    S= \{a_w\}_{w=1}^W  \subset [0,1]^d \,:\,    \frac{n \underline{\pi}_{\ell}}{2}  \,\leq\,  W\,\leq \,  2 n \overline{\pi}_{\ell}     }{\sup}\, \left\{  \int_{ \delta^2/48 }^{\delta}    \sqrt{  \frac{\log N \left(t/4, \mathcal{F} ,\left\| \cdot\right\|_{S} \right) }{W}  }    dt
	\,+\,   \delta\sqrt{ \frac{ \log( 48/\delta^2 )}{W}}     \right \} \,\lesssim\, \delta^2,
\end{equation}
implies  that the event 
\[
  \mathcal{E}_{\ell}(\mathcal{F}) \,:=\,  \left\{    \sup_{ g \in \mathcal{F}}\left|  \frac{ \vert \|g  \|_{\ell, n }^2  - \|g\|_{\ell, \lt}^2 \vert  }{ \frac{1}{2} \|g\|_{\ell,\lt}^2 + \frac{\delta^2}{2}   } \right|   \leq 1    \right\}
 \]
 happens with probability at least 
 \[
  [1- c_1\exp(-c_3 n \underline{\pi}_{\ell} \delta^2)] \cdot\left[	 1 - \exp\left(  -\frac{    n \underline{\pi}_{\ell}^3  }{27  \overline{\pi}_{\ell}^2 } \right)\right].
 \]
	
\end{lemma}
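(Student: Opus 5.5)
The plan is to condition on the group labels and the number of group-$\ell$ points, and then apply a standard uniform ratio-type concentration result for the empirical norm over a bounded class. The norm $\|g\|_{\ell,n}^2$ is the empirical average over $\{x_i : z_i = \ell\}$. Conditionally on $\{z_i\}_{i=1}^n$, the points $\{x_i : z_i=\ell\}$ are i.i.d.\ draws from $P_{X|Z=\ell}$, because the pairs $(x_i,z_i)$ are i.i.d. Their number $n_\ell$ is then fixed.

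First, intersect with the event $\Omega_2(\ell)$ of Lemma~\ref{lem4}. This event has probability at least $1-\exp(-n\underline{\pi}_{\ell}^3/(27\overline{\pi}_{\ell}^2))$, and on it $W:=n_\ell\in[n\underline{\pi}_{\ell}/2,\,2n\overline{\pi}_{\ell}]$. This is exactly the range over which the hypothesis (\ref{eqn:delta_set}) is assumed to hold.

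Second, conditional on $n_\ell=W$ in that range, I would invoke the uniform law for the localized empirical process of a uniformly bounded class. This is the version in Wainwright's \emph{High-Dimensional Statistics}, Theorem 14.1 / Proposition 14.25 style, as used in \cite{padilla2024confidence}. Its statement is as follows: if $\delta$ satisfies a Dudley-entropy critical inequality
\[
\frac{1}{\sqrt W}\int_{\delta^2/48}^{\delta}\sqrt{\log N(t/4,\mathcal{F},\|\cdot\|_W)}\,dt+\frac{\delta}{\sqrt W}\sqrt{\log(48/\delta^2)}\lesssim \delta^2,
\]
then, with probability at least $1-c_1\exp(-c_2 W\delta^2)$,
\[
\big|\|g\|_{W}^2-\|g\|_{\lt}^2\big|\le \tfrac12\|g\|_{\lt}^2+\tfrac12\delta^2
\]
for all $g\in\mathcal{F}$. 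Here $\|\cdot\|_{\lt}$ is taken with respect to $P_{X|Z=\ell}$, and constants depending on the bound $M$ are absorbed. The proof of that result proceeds in three steps. One symmetrizes. One shows via Talagrand/Bousquet concentration and a peeling/star-shaped argument that the critical radius of the class $\{g^2\}$ is controlled by the Rademacher complexity of the localized class. One then bounds that complexity by chaining, which produces the integral above. I would cite it rather than reprove it.

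Two details need care. The hypothesis is stated with the unnormalized $\|\cdot\|_S$ divided by $W$ inside the square root, so I would check that this matches the normalized empirical norm; it is equivalent up to rescaling $t$ by $\sqrt W$. Also, the supremum over all point sets $S$ ensures the condition holds for the realized sample. Since $W\ge n\underline{\pi}_{\ell}/2$, the conditional failure probability is at most $c_1\exp(-c_3 n\underline{\pi}_{\ell}\delta^2)$ with $c_3=c_2/2$.

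Third, I would combine the two steps. Write
\[
\mathbb{P}(\mathcal{E}_\ell)\ge \mathbb{E}\big[\mathbb{P}(\mathcal{E}_\ell\mid z_{1:n})\mathbf 1_{\Omega_2(\ell)}\big]\ge[1-c_1e^{-c_3n\underline{\pi}_{\ell}\delta^2}]\,\mathbb{P}(\Omega_2(\ell)),
\]
which yields the product form of the stated bound.

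The main obstacle is the second step: verifying that the cited uniform-law result applies with exactly this entropy-integral form and the lower limit $\delta^2/48$, and handling the possibly non-star-shaped class. If needed, I would replace $\mathcal{F}$ by $\mathrm{star}(\mathcal{F})$. Its covering numbers are at most those of $\mathcal{F}$ times $O(\log(1/t))$, which is precisely why the extra $\delta\sqrt{\log(48/\delta^2)/W}$ term appears.
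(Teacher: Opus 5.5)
Your proposal follows the paper's proof. Both lower-bound $\mathbb{P}(\mathcal{E}_\ell(\mathcal{F}))$ by averaging the conditional probability over $n\underline{\pi}_{\ell}/2\le n_\ell\le 2n\overline{\pi}_{\ell}$, get $1-c_1\exp(-c_2W\delta^2)$ conditionally from the localized uniform law of Lemma 6 in \cite{padilla2024confidence} (applied to $\mathrm{star}(\mathcal{F})$), use $W\ge n\underline{\pi}_{\ell}/2$, and multiply by the bound of Lemma~\ref{lem4}; conditioning on the full label vector $z_{1:n}$ rather than only on $n_\ell$ is an inessential, slightly cleaner variant.

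Two slips do not affect the argument. First, the unnormalized norm $\|\cdot\|_S=\sqrt{W}\,\|\cdot\|_W$ is not equivalent to the normalized form, but $N(t/4,\mathcal{F},\|\cdot\|_S)=N(t/(4\sqrt{W}),\mathcal{F},\|\cdot\|_W)\ge N(t/4,\mathcal{F},\|\cdot\|_W)$, so the stated hypothesis implies the normalized one, which is the direction you need. Second, for the star hull it is the log-covering number, not the covering number, that increases by $O(\log(1/t))$.
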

\begin{proof}
We notice that 
\begin{equation}
	\label{eqn:lower1}
	   \begin{array}{lll}
		\displaystyle    \mathbb{P}(\mathcal{E}_{\ell}(\mathcal{F}) ) & \geq  & \displaystyle \sum_{   W\,:\,  \frac{n \underline{\pi}_{\ell}}{2}  \,\leq\,  W\,\leq \,  2 n \overline{\pi}_{\ell} } \mathbb{P}(   \mathcal{E}_{\ell}(\mathcal{F})   \,|\, n_{\ell }= W )\cdot \mathbb{P}( n_{\ell }= W).
	\end{array}
\end{equation}
Let $\{\xi_i\}_{i=1}^n$ be Radamacher variables, and observe that 
\[
 \begin{array}{l}
 \displaystyle 	\mathbb{E}\left( \underset{  f\in \text{star}(\mathcal{F})  \,:\, \|f\|_{\ell,n} \leq \delta   }{\sup}\, \frac{1}{n_{\ell} } \sum_{i\,:\, z_i=\ell }  \xi_i f(x_i) \,\bigg|\, n_{\ell} = W  \right)  \\
  = \displaystyle \mathbb{E}\left(	\mathbb{E}\left( \underset{  f\in \text{star}(\mathcal{F})  \,:\, \|f\|_{\ell,n} \leq \delta   }{\sup}\, \frac{1}{n_{\ell} } \sum_{i\,:\, z_i=\ell }  \xi_i f(x_i) \,\bigg| \,\{x_i,z_i\}_{i=1}^n ,\,\,n_{\ell}=W  \right)\right).
 \end{array}
\]
Hence, proceeding as in the proof Lemma 6 in \cite{padilla2024confidence}, based on our choice of $\delta$, we obtain that 
\[
\mathbb{P}(   \mathcal{E}_{\ell}(\mathcal{F})   \,|\, n_{\ell }= W ) \,\geq \,  1- c_1\exp(-c_2W \delta^2)
\]
for some constants $c_1, c_2>0$.

Therefore, from (\ref{eqn:lower1}),  for some constant $c_3>0$, we have that 
\[
\begin{array}{lll}
\displaystyle 	  \mathbb{P}(\mathcal{E}_{\ell}(\mathcal{F}) ) &\geq& \displaystyle  [1- c_1\exp(-c_3 n \underline{\pi}_{\ell} \delta^2)] \cdot \mathbb{P}(  \frac{n \underline{\pi}_{\ell}}{2}  \,\leq\,  W\,\leq \,  2 n \overline{\pi}_{\ell}) \\
&\geq&\displaystyle  [1- c_1\exp(-c_3 n \underline{\pi}_{\ell} \delta^2)] \cdot\left[	 1 - \exp\left(  -\frac{    n \underline{\pi}_{\ell}^3  }{27  \overline{\pi}_{\ell}^2 } \right)\right]
\end{array}
\] 
where the last inequality follows from Lemma \ref{lem4}. The claim then follows. 
  
\end{proof}

\begin{lemma}
	\label{lem22_new}
Suppose that Assumption \ref{as1} holds. 	There exists  constants $c_1, c_2>0$ such  that
\[
   c_1 \|f\|_{\ell,\lt}^2 \leq   \|f\|_{\lt}^2    \,\leq \,  c_2\|f\|_{\ell,\lt}^2 
\]
for any measurable function $f$ with $  \|f\|_{\lt}^2  < \infty$.
\end{lemma}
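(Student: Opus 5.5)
The plan is to compare the two measures $P_X$ and $P_{X\mid Z=\ell}$ through their Radon--Nikodym derivative and show that it is bounded above and below by constants; the norm equivalence then follows by integration. By Bayes' rule,
\[
dP_{X\mid Z=\ell}(x) \,=\, \frac{\mathbb{P}(Z=\ell\mid X=x)}{\mathbb{P}(Z=\ell)}\, dP_X(x), \qquad \mathbb{P}(Z=\ell) \,=\, \int \mathbb{P}(Z=\ell\mid X=x)\, dP_X(x) \,=\, p_\ell .
\]
This is exactly the quantity $p_\ell$ that already appears in the proof of Lemma~\ref{lem4}. Since $P_{X\mid Z=\ell}$ is absolutely continuous with respect to $P_X$, the density formula above holds for $P_X$-almost every $x$.

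The key step is a two-sided bound on the ratio. By Assumption~\ref{as1}, the numerator $\mathbb{P}(Z=\ell\mid X=x)$ lies in $(\underline{\pi}_\ell,\overline{\pi}_\ell)$ for every $x$. Integrating the same bounds shows that the denominator $p_\ell$ lies in $[\underline{\pi}_\ell,\overline{\pi}_\ell]$. Hence
\[
\frac{\underline{\pi}_\ell}{\overline{\pi}_\ell} \,\le\, \frac{dP_{X\mid Z=\ell}}{dP_X}(x) \,\le\, \frac{\overline{\pi}_\ell}{\underline{\pi}_\ell}.
\]

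Next, multiply this inequality by $f(x)^2 \geq 0$ and integrate against $P_X$. Because $\|f\|_{\lt}^2<\infty$, the resulting integrals are finite. This gives
\[
\frac{\underline{\pi}_\ell}{\overline{\pi}_\ell}\|f\|_{\lt}^2 \,\le\, \|f\|_{\ell,\lt}^2 \,\le\, \frac{\overline{\pi}_\ell}{\underline{\pi}_\ell}\|f\|_{\lt}^2 .
\]
Rearranging, the statement holds with $c_1=\underline{\pi}_\ell/\overline{\pi}_\ell$ and $c_2=\overline{\pi}_\ell/\underline{\pi}_\ell$.

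The only point that needs care is that these constants do not depend on $n$ even though $\underline{\pi}_\ell$ and $\overline{\pi}_\ell$ may shrink with $n$. This is guaranteed by the requirement $\underline{\pi}_\ell\asymp\overline{\pi}_\ell$ in Assumption~\ref{as1}: the ratio $\overline{\pi}_\ell/\underline{\pi}_\ell$ stays bounded above by a fixed constant and below by $1$. I expect no real obstacle beyond writing out the Bayes density step.
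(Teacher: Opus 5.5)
Your proposal is correct and follows the paper's proof essentially line for line. Like the paper, you bound the Radon--Nikodym derivative $dP_{X\mid Z=\ell}/dP_X=\mathbb{P}(Z=\ell\mid X=x)/\mathbb{P}(Z=\ell)$ between $\underline{\pi}_\ell/\overline{\pi}_\ell$ and $\overline{\pi}_\ell/\underline{\pi}_\ell$, integrate against $f^2$, and read off $c_1=\underline{\pi}_\ell/\overline{\pi}_\ell$ and $c_2=\overline{\pi}_\ell/\underline{\pi}_\ell$; your remark that $\underline{\pi}_\ell\asymp\overline{\pi}_\ell$ makes these constants uniform in $n$ is a point the paper leaves implicit.
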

\begin{proof}
By Assumption \ref{as1}, there exist
\(0<\underline{\pi}_{\ell}\leq \overline{\pi}_{\ell}<1\) such that
\[
\underline{\pi}_{\ell}
\leq
\mathbb{P}(Z=\ell\mid X=x)
\leq
\overline{\pi}_{\ell}
\]
for all \(x\). Hence
\[
\underline{\pi}_{\ell}
\leq
\mathbb{P}(Z=\ell)
=
\int \mathbb{P}(Z=\ell\mid X=x)\,dP_X(x)
\leq
\overline{\pi}_{\ell}.
\]
Moreover, consider the Radon–Nikodym derivative
\[
\frac{dP_{X\mid Z=\ell}}{dP_X}(x)
=
\frac{\mathbb{P}(Z=\ell\mid X=x)}
{\mathbb{P}(Z=\ell)}.
\]
Therefore,
\[
\frac{\underline{\pi}_{\ell}}{\overline{\pi}_{\ell}}
\leq
\frac{dP_{X\mid Z=\ell}}{dP_X}(x)
\leq
\frac{\overline{\pi}_{\ell}}{\underline{\pi}_{\ell}}.
\]
Thus, for any nonnegative measurable function \(g\),
\[
\frac{\underline{\pi}_{\ell}}{\overline{\pi}_{\ell}}
\int g(x)\,dP_X(x)
\leq
\int g(x)\,dP_{X\mid Z=\ell}(x)
\leq
\frac{\overline{\pi}_{\ell}}{\underline{\pi}_{\ell}}
\int g(x)\,dP_X(x).
\]
Equivalently,
\[
\frac{\underline{\pi}_{\ell}}{\overline{\pi}_{\ell}}
\int g(x)\,dP_{X\mid Z=\ell}(x)
\leq
\int g(x)\,dP_X(x)
\leq
\frac{\overline{\pi}_{\ell}}{\underline{\pi}_{\ell}}
\int g(x)\,dP_{X\mid Z=\ell}(x).
\]
Taking \(g(x)=f^2(x)\), we obtain
\[
\frac{\underline{\pi}_{\ell}}{\overline{\pi}_{\ell}}
\|f\|_{\ell, \lt }^2
\leq
\|f\|_{\lt }^2
\leq
\frac{\overline{\pi}_{\ell}}{\underline{\pi}_{\ell}}
\|f\|_{\ell,\lt }^2.
\]
Hence the result holds with
\[
c_1=\frac{\underline{\pi}_{\ell}}{\overline{\pi}_{\ell}},
\qquad
c_2=\frac{\overline{\pi}_{\ell}}{\underline{\pi}_{\ell}}.
\]

\end{proof}

\subsection{Proof of Theorem \ref{theo2}}

\begin{proof}
    For a fixed $ \ell \in \{1,\ldots,L\} $ 
let $\mathcal{I}_{\ell} :=\{ i\in [n]\,:\,  z_i = \ell  \}$ and $n_{\ell} :=  \vert \mathcal{I}_{\ell}\vert  $, and we condition on $z$ being in the set  
\begin{equation}
\label{eqn:condz}
  \{z \in \{ 1,\ldots,L \}^n  \,:\, \frac{n \underline{\pi}_{\ell} }{2} \,\leq  \vert \{ i \,:\, z_i \,=\, \ell  \}\vert  \leq 2n \overline{\pi}_{\ell}   \}.
\end{equation}
By Lemma \ref{lem4}  this happens with probability at least
\[
 1 -    \exp\left(  -\frac{    n \underline{\pi}_{\ell}^3  }{27  \overline{\pi}_{\ell}^2 } \right).
\]

Let $\delta >0$ to be defined later. Suppose that the event $\Omega_1 $ defined in Lemma \ref{lem2}   holds. Then, 
given our choice of $\mathcal{B}_n$, from Lemmas \ref{lem2} and \ref{lem3} we have $\widetilde{G}_{\ell} = \widetilde{G}_{\ell, \mathcal{B}_n} $ and $\|\tilde{y}\|_{\infty} \leq \mathcal{B}_n$. Hence,
\[
\frac{1}{n_{\ell}} \sum_{i\in\mathcal{I}_{\ell}  } (\tilde{y}_i  - \hat{f}_{\ell,\mathcal{B}_n}(x_i) )^2\,\leq\,     \frac{1}{n_{\ell}}\sum_{i\in \mathcal{I}_{\ell} } (\tilde{y}_i  - \hat{f}_{\ell}(x_i) )^2
\]
where $\hat{f}_{l,\mathcal{B}_n} =  (\hat{f}_{l})_{\mathcal{B}_n}$. Also, by the basic inequality,   
\[
\frac{1}{n_{\ell}} \sum_{i\in \mathcal{I}_{\ell} }  (\tilde{y}_i  - \hat{f}_{\ell}(x_i) )^2 \,\leq\,   \frac{1}{n_{\ell}}\sum_{i\in \mathcal{I}_{\ell}  }  (\tilde{y}_i  - \widetilde{G}_{\ell}(x_i) )^2 \,=\,  \frac{1}{n_{\ell}}\sum_{i\in \mathcal{I}_{\ell}  } (\tilde{y}_i  -  \widetilde{G}_{\ell,  \mathcal{B}_n }(x_i) )^2.
\]
As a result, 
\[
\frac{1}{n_{\ell}} \sum_{i\in \mathcal{I}_{\ell}}  (\tilde{y}_i  - \hat{f}_{\ell,\mathcal{B}_n}(x_i) )^2\,\leq\, \frac{1}{n_{\ell}} \sum_{\in \mathcal{I}_{\ell} } (\tilde{y}_i  -  \widetilde{G}_{\ell,  \mathcal{B}_n }(x_i))^2.
\]
However, by Lemma \ref{lem1}
and the notation of Lemma \ref{lem3}, $\tilde{y}_i\,=\,   G_{\ell}(x_i ) \,+\, (\bar{f}(x_i) -  \hat{f}_{\mathcal{A}_n}(x_i)  ) \,+\, \epsilon_i $ for $i$ with $z_i = \ell$, then 
\[
\frac{1}{n_{\ell}} \sum_{i\in \mathcal{I}_{\ell}}  [G_{\ell}(x_i )  - \hat{f}_{\ell,\mathcal{B}_n}(x_i)  +   (\bar{f}(x_i) -  \hat{f}_{\mathcal{A}_n}(x_i) ) + \epsilon_i  ]^2\,\leq\,  \frac{1}{n_{\ell}} \sum_{i\in \mathcal{I}_{\ell}}  [G_{\ell}(x_i )  -  \widetilde{G}_{\ell,  \mathcal{B}_n } +   (\bar{f}(x_i) -  \hat{f}_{\mathcal{A}_n}(x_i) ) + \epsilon_i  ]^2
\]
which implies
\[
\begin{array}{lll}
	\displaystyle \frac{1}{n_{\ell}} \sum_{i\in \mathcal{I}_{\ell}} ( G_{\ell}(x_i) -  \hat{f}_{\ell,\mathcal{B}_n  }(x_i)   )^2    &  \leq & \displaystyle\frac{1}{n_{\ell}} \sum_{i\in \mathcal{I}_{\ell}} ( G_{\ell}(x_i) -  \widetilde{G}_{\ell, \mathcal{B}_n }(x_i)  )^2 \,+\,   \frac{2}{n_{\ell}  } \sum_{ i \in \mathcal{I}_{\ell}  }  \epsilon_i ( \hat{f}_{\ell, \mathcal{B}_n }(x_i) -   \widetilde{G}_{\ell, \mathcal{B}_n }(x_i)   ) \,+\,     \\
	& &\displaystyle \frac{2}{n_{\ell}  } \sum_{ i \in \mathcal{I}_{\ell}  }  ( \hat{f}_{\ell, \mathcal{B}_n }(x_i) -   \widetilde{G}_{\ell, \mathcal{B}_n }(x_i)   )( \bar{f}(x_i) - \hat{f}_{ \mathcal{A}_n }(x_i)  )   \\
	&  \leq & \displaystyle\|\widetilde{G}_{\ell} - G_{\ell} \|_{\infty}^2   \,+\,   \frac{2}{n_{\ell}  } \sum_{ i \in \mathcal{I}_{\ell}  }  \epsilon_i ( \hat{f}_{\ell, \mathcal{B}_n }(x_i) -   \widetilde{G}_{\ell, \mathcal{B}_n }(x_i)   ) \,+\,     \\
	& &\displaystyle \frac{2}{n_{\ell}  } \sum_{ i \in \mathcal{I}_{\ell}  }  ( \hat{f}_{\ell, \mathcal{B}_n }(x_i) -   \widetilde{G}_{\ell, \mathcal{B}_n }(x_i)   )( \bar{f}(x_i) - \hat{f}_{ \mathcal{A}_n }(x_i)  )   \\
\end{array}
\]
\[
\begin{array}{lll}
	&  \leq & \displaystyle \phi_{\ell, n}   \,+\,   \frac{2}{n_{\ell}  } \sum_{ i \in \mathcal{I}_{\ell}  }  \epsilon_i ( \hat{f}_{\ell, \mathcal{B}_n }(x_i) -   \widetilde{G}_{\ell, \mathcal{B}_n }(x_i)   ) \,+\,     \\
	& &\displaystyle \frac{2}{n_{\ell}  }\left[  \frac{1}{32}\sum_{ i \in \mathcal{I}_{\ell}  }  ( \hat{f}_{\ell, \mathcal{B}_n }(x_i) -   \widetilde{G}_{\ell, \mathcal{B}_n }(x_i)   )^2 \,+\,  8\sum_{ i \in \mathcal{I}_{\ell}  }  ( \bar{f}(x_i) - \hat{f}_{ \mathcal{A}_n }(x_i)  )^2     \right]  \\
	&  \leq & \displaystyle \phi_{\ell, n}   \,+\,   \frac{2}{n_{\ell}  } \sum_{ i \in \mathcal{I}_{\ell}  }  \epsilon_i ( \hat{f}_{\ell, \mathcal{B}_n }(x_i) -   \widetilde{G}_{\ell, \mathcal{B}_n }(x_i)   ) \,+\,    \frac{1}{ 8n_{\ell}  }\sum_{ i \in \mathcal{I}_{\ell}  }  ( G_{\ell}(x_i) - \hat{f}_{\ell, \mathcal{B}_n }(x_i)     )^2 \\
	& &\displaystyle  \frac{1}{ 8n_{\ell}  }\sum_{ i \in \mathcal{I}_{\ell}  }  ( G_{\ell}(x_i) - \widetilde{G}_{\ell}(x_i)   )^2 \,+\,   \frac{16}{ n_{\ell}  } \sum_{ i \in \mathcal{I}_{\ell}  }  ( \bar{f}(x_i) - \hat{f}_{ \mathcal{A}_n }(x_i)  )^2    \\
	&  \leq & \displaystyle \frac{9\phi_{\ell, n}}{8}   \,+\,   \frac{2}{n_{\ell}  } \sum_{ i \in \mathcal{I}_{\ell}  }  \epsilon_i ( \hat{f}_{\ell, \mathcal{B}_n }(x_i) -   \widetilde{G}_{\ell, \mathcal{B}_n }(x_i)   ) \,+\,    \frac{1}{ 8n_{\ell}  }\sum_{ i \in \mathcal{I}_{\ell}  }  ( G_{\ell}(x_i) - \hat{f}_{\ell, \mathcal{B}_n }(x_i)     )^2 \\
	& &\displaystyle    \frac{16}{ n_{\ell}  } \sum_{ i \in \mathcal{I}_{\ell}  }  ( \bar{f}(x_i) - \hat{f}_{ \mathcal{A}_n }(x_i)  )^2.    \\
\end{array}
\]
Therefore, 
\begin{equation}
	\label{eqn:e1}
	\begin{array}{lll}
		\displaystyle \frac{7}{8n_{\ell}} \sum_{i\in \mathcal{I}_{\ell}} ( G_{\ell}(x_i) -  \hat{f}_{\ell,\mathcal{B}_n  }(x_i)   )^2  
		&  \leq & \displaystyle \frac{9\phi_{\ell, n}}{8}   \,+\,   \frac{2}{n_{\ell}  } \sum_{ i \in \mathcal{I}_{\ell}  }  \epsilon_i ( \hat{f}_{\ell, \mathcal{B}_n }(x_i) -   \widetilde{G}_{\ell, \mathcal{B}_n }(x_i)   ) \,+\,     \frac{16}{ n_{\ell}  } \sum_{ i \in \mathcal{I}_{\ell}  }  ( \bar{f}(x_i) - \hat{f}_{ \mathcal{A}_n }(x_i)  )^2.    \\
	\end{array}
\end{equation}

Next  let $\mathcal{H}_{\ell }\,:=\,    \{   (f - g)/%(
2%\mathcal{B }_n) 
\,:\,  f,g \in \mathcal{F}_{\ell, \mathcal{B}_n}  \}$. Then $f \in \mathcal{H}_{\ell}$
implies $\|f\|_{\infty} \leq 1$ and 
\begin{equation}
    \label{eqn:entropiess}
    \log N(\delta, \mathcal{H}_{\ell}, \| \cdot\|_{\mathcal{I}_{\ell}  } )\,\leq \,2 \log N(\delta/2,  \mathcal{F}_{\ell, \mathcal{B}_n} , \|\cdot\|_{ \mathcal{I}_\ell } ) \leq  2\eta_{\ell,n}(\delta/2),
\end{equation}
where the second inequality follows since we are conditioning on $z$ satisfiying (\ref{eqn:condz}) and due to our assumption that (\ref{eqn:entropy_cond2})
holds. 

Hence, by Lemma 4  from \cite{padilla2024confidence},  for some positive constant $C>0$,
\begin{equation}
	\label{eqn:e24}
	\begin{array}{lll}
		\displaystyle   \frac{ 2\mathcal{B}_n   }{n_{\ell} } \sum_{i \in \mathcal{I}_{\ell} }   \frac{(  \hat{f}_{\ell, \mathcal{B}_n }(x_i) -   \widetilde{G}_{\ell, \mathcal{B}_n }(x_i)   )}{2\mathcal{B}_n }\epsilon_i &\leq&  2 C \mathcal{B}_n \mathcal{U}_n  \| (\hat{f}_{\ell, \mathcal{B}_n } -   \widetilde{G}_{\ell, \mathcal{B}_n } )/(2\mathcal{B}_n) \|_{ \mathcal{I}_\ell } \cdot\\
		&&\sqrt{ \eta_{\ell,n}( \| (\hat{f}_{\ell, \mathcal{B}_n } -   \widetilde{G}_{\ell, \mathcal{B}_n }  )/(4\mathcal{B}_n)\|_{ \mathcal{I}_\ell }   ) /n_{\ell}   }\\
		& \leq&  C \mathcal{U}_n\| \hat{f}_{\ell, \mathcal{B}_n } -   \widetilde{G}_{\ell, \mathcal{B}_n } \|_{ \mathcal{I}_\ell }  \cdot\sqrt{ \eta_{\ell,n}( \| (\hat{f}_{\ell, \mathcal{B}_n } -   \widetilde{G}_{\ell, \mathcal{B}_n }  )/(4\mathcal{B}_n)\|_{ \mathcal{I}_\ell }   ) /n_{\ell}   }\\
	\end{array}
\end{equation}
with probability at least
\[
1\,-\, 4\sum_{k=0}^{\infty} \sum_{k^{\prime}=1 }^{\infty} \exp\left( -  C_1  \eta_{\ell,n}( 2^{-k-k^{\prime}-1  }  )\right) \,-\, 4\sum_{k=0}^{\infty}    \exp\left(- C_2 \eta_{\ell,n}(2^{-k-1}) \right) \,-\,\mathbb{P}(\|\epsilon\|_{\infty} > \mathcal{U}_n ).
\]

Define $\Omega_3$, the event such that (\ref{eqn:e24}) holds and let $\delta>0$. From now on suppose that $\Omega_1 \cap \Omega_3$ holds.  If 
\[
\| (\hat{f}_{\ell, \mathcal{B}_n } -   \widetilde{G}_{\ell, \mathcal{B}_n }   )/(4\mathcal{B}_n)\|_{ \mathcal{I}_\ell } \,\leq \, \delta
\]
we obtain 
\[
\| (\hat{f}_{\ell,\mathcal{B}_n} -  \widetilde{G}_{\ell, \mathcal{B}_n }     )\|_{ \mathcal{I}_\ell }^2 \,\leq \, 16\mathcal{B}_n^2 \delta^2,
\]
which implies
\[
\| \hat{f}_{\ell,\mathcal{B}_n} -  G_{\ell}     \|_{ \mathcal{I}_\ell }^2 \,\leq \, 32\mathcal{B}_n^2 \delta^2 \,+\, 2\phi_{\ell,n},
\]

Suppose now that 
\[
\| (\hat{f}_{\ell, \mathcal{B}_n } -   \widetilde{G}_{\ell, \mathcal{B}_n }   )/(4\mathcal{B}_n)\|_{ \mathcal{I}_\ell } \,> \, \delta.
\]
Then (\ref{eqn:e1}) and (\ref{eqn:e24}) imply 
\begin{equation}
	\label{eqn:e25}
	\begin{array}{lll}
		\displaystyle \frac{7}{8n_{\ell}} \sum_{i\in \mathcal{I}_{\ell}} ( G_{\ell}(x_i) -  \hat{f}_{\ell,\mathcal{B}_n  }(x_i)   )^2  
		&  \leq & \displaystyle \frac{9\phi_{\ell, n }}{8}   \,+\,     \frac{16}{ n_{\ell}  } \sum_{ i \in \mathcal{I}_{\ell}  }  ( \bar{f}(x_i) - \hat{f}_{ \mathcal{A}_n }(x_i)  )^2\\
		&& \displaystyle\,+\, 2C \mathcal{U}_n\| \hat{f}_{\ell, \mathcal{B}_n } -   \widetilde{G}_{\ell, \mathcal{B}_n } \|_{ \mathcal{I}_\ell }  \cdot\sqrt{ \eta_{\ell,n}( \delta   ) /n_{\ell}   }    \\
		&\leq& \displaystyle \frac{9\phi_{\ell, n}}{8}   \,+\,     \frac{16}{ n_{\ell}  } \sum_{ i \in \mathcal{I}_{\ell}  }  ( \bar{f}(x_i) - \hat{f}_{ \mathcal{A}_n }(x_i)  )^2\,+\,\\
		&&\displaystyle \frac{\| \hat{f}_{\ell, \mathcal{B}_n } -   \widetilde{G}_{\ell, \mathcal{B}_n } \|_{ \mathcal{I}_\ell }^2}{4}\,+\,\frac{4C^2 \mathcal{U}_n^2\eta_{\ell,n}( \delta   ) }{n_{\ell}}\\
		&\leq& \displaystyle \frac{9\phi_{\ell,n}}{8}   \,+\,     \frac{16}{ n_{\ell}  } \sum_{ i \in \mathcal{I}_{\ell}  }  ( \bar{f}(x_i) - \hat{f}_{ \mathcal{A}_n }(x_i)  )^2\,+\,\\
		&&\displaystyle \frac{\| \hat{f}_{\ell, \mathcal{B}_n } -  G_{\ell} \|_{ \mathcal{I}_\ell }^2}{2}\,+\,\frac{\| G_{\ell} -   \widetilde{G}_{\ell, \mathcal{B}_n } \|_{ \mathcal{I}_\ell }^2}{2}\,+\,\frac{4C^2 \mathcal{U}_n^2\eta_{\ell,n}( \delta   ) }{n_{\ell}}.
	\end{array}
\end{equation}

Hence,
\begin{equation}
	\label{eqn:e26}
	\begin{array}{lll}
		\displaystyle \frac{3}{8}\| G_{\ell} -  \hat{f}_{\ell,\mathcal{B}_n  }\|_{ \mathcal{I}_{\ell} }^2
		&\leq& \displaystyle \frac{17\phi_{\ell,n}}{8}   \,+\,    16 \| \bar{f} - \hat{f}_{ \mathcal{A}_n } \|^2_{  \mathcal{I}_{\ell} }\,+\,\frac{4C^2 \mathcal{U}_n^2\eta_{\ell,n}( \delta   ) }{n_{\ell}}\\
		& \leq&\displaystyle\frac{17\phi_{\ell,n}}{8}   \,+\,    32 \| \tilde{f}_{\mathcal{A}_n} - \hat{f}_{ \mathcal{A}_n } \|^2_{  \mathcal{I}_{\ell} }\,+\,  32 \| \bar{f} - \tilde{f} \|^2_{  \mathcal{I}_{\ell} }\,+\,\frac{4C^2 \mathcal{U}_n^2\eta_{\ell,n}( \delta   ) }{n_{\ell}}\\
		& \leq&\displaystyle\frac{17\phi_{\ell,n}}{8}   \,+\,    32 \| \tilde{f}_{\mathcal{A}_n} - \hat{f}_{ \mathcal{A}_n } \|^2_{  \mathcal{I}_{\ell} }\,+\,  32 \phi_n\,+\,\frac{4C^2 \mathcal{U}_n^2\eta_{\ell,n}( \delta   ) }{n_{\ell}}\\
	\end{array}
\end{equation}
%thm1_v2
with $\tilde{f}$ and $\phi_n$ as in the notation of Theorem \ref{thm1_v2}.

Moreover, by choosing $\delta$ to satisfy 
\begin{equation}
	\label{eqn:delta1}
\underset{m \,:\,  \frac{n \underline{\pi}_{\ell} }{2}\leq m \leq  2n\overline{\pi}_{\ell}   }{\max} \, \underset{ \{x_j\}_{j=1}^m  \subset  [0,1]^d   }{\sup}\, \left\{   \frac{1}{\sqrt{ m}}\int_{ \delta^2/48 }^{\delta}    \sqrt{ \log N \left(t/4, \mathcal{F}_{\ell, \mathcal{B}_n}  ,\left\| \cdot\right\|_{m} \right)   }    dt
\,+\,   \frac{ \delta }{\sqrt{m }}\sqrt{ c_1 \log( 48/\delta^2 )}     \right \} \,\lesssim\, \delta^2,
\end{equation}
Lemma \ref{lem21_new}  implies that the event 
\[
\Omega_4\,:=\, \left\{    \sup_{ h \in \mathcal{H}_{\ell} }\left|  \frac{ \vert \|h \|_{\mathcal{I}_{\ell} }^2  - \|h \|_{\ell,\lt} ^2 \vert  }{ \frac{1}{2} \|h\|_{\ell,\lt} ^2 + \frac{\delta^2}{2}   } \right|   \leq 1    \right\}
\]
holds   with probability at least
\[
[1  \,-\,   c_2 \exp( -c_1 \delta^2 n \underline{\pi}_{\ell}  )]\cdot\left[	 1 - \exp\left(  -\frac{    n \underline{\pi}_{\ell}^3  }{27  \overline{\pi}_{\ell}^2 } \right)\right]
 %[1- c_1\exp(-c_3 n \underline{\pi}_{\ell} \delta^2)] \cdot\left[	 1 - \exp\left(  -\frac{    n \underline{\pi}_{\ell}^3  }{27  \overline{\pi}_{\ell}^2 } \right)\right]
\]
for constants $c_1,c_2>0$.

Therefore, from  (\ref{eqn:e26}) and Lemma \ref{lem22_new}, in the event $\Omega_1\cap \Omega_3 \cap \Omega_4$,  we obtain that 
\begin{equation}
	\label{eqn:e100}
	\begin{array}{lll}
		\| G_{\ell} -  \hat{f}_{\ell,\mathcal{B}_n  }\|_{\lt}^2 &  \lesssim  & \displaystyle  2 \| G_{\ell} -   \widetilde{G}_{\ell}  \|_{\ell,\lt}^2  \,+\,  2\| \widetilde{G}_{\ell} -  \hat{f}_{\ell,\mathcal{B}_n  }\|_{\ell, \lt}^2  \\
		& \leq& \displaystyle   2\| G_{\ell} -   \widetilde{G}_{\ell}  \|_{\infty}^2  \,+\,  2[2\| \widetilde{G}_{\ell} -  \hat{f}_{\ell,\mathcal{B}_n  }\|_{\mathcal{I}_{\ell}}^2  \,+\, \delta^2 ]\\
		& \leq & \displaystyle 2\| G_{\ell} -   \widetilde{G}_{\ell}  \|_{\infty}^2  \,+\,    \frac{32}{3}\big[ \frac{17\phi_{\ell,n}}{8}   \,+\,    32 \| \tilde{f}_{\mathcal{A}_n} - \hat{f}_{ \mathcal{A}_n } \|^2_{  \mathcal{I}_{\ell} }\,+\,  32 \phi_n\,+\,\frac{4C^2 \mathcal{U}_n^2\eta_{\ell,n}( \delta   ) }{n_{\ell}}  \big]\,+\, 2\delta^2\\
		& \lesssim& \displaystyle \phi_{\ell,n} \,+\, \| \tilde{f}_{\mathcal{A}_n} - \hat{f}_{ \mathcal{A}_n } \|^2_{  \mathcal{I}_{\ell} } \,+\, \phi_n\,+\,  \frac{\mathcal{U}_n^2\eta_{\ell,n}( \delta   ) }{n_{\ell}}\,+\,\delta^2.\\
		& \lesssim& \displaystyle \phi_{\ell,n} \,+\, \| \tilde{f}_{\mathcal{A}_n} - \hat{f}_{ \mathcal{A}_n } \|^2_{  \mathcal{I}_{\ell} } \,+\, \phi_n\,+\,  \frac{\mathcal{U}_n^2\eta_{\ell,n}( \delta   ) }{n \underline{\pi}_{\ell}   }\,+\,\delta^2.\\
	\end{array}   
\end{equation}

Furthermore, 
let $\mathcal{F}_{\mathcal{A}_n}   \,:=\, \{ f_{\mathcal{A}_n}/(2\mathcal{A}_n) \,:\,  f \in \mathcal{F} \}$
and 
\[
\mathcal{H}\,:=\,   \{ f-g \,:\,  f,g \in \mathcal{F}_{\mathcal{A}_n} \}.
\]

Then,  by choosing $\delta$ to satisfy 
\begin{equation}
	\label{eqn:delta1}
	\underset{m \,:\,  \frac{n \underline{\pi}_{\ell} }{2}\leq m \leq  2n\overline{\pi}_{\ell}  }{\max} \, \underset{  \{x_j\}_{j=1}^m   \subset  [0,1]^d   }{\sup}\, \left\{   \frac{1}{\sqrt{ m}}\int_{ \delta^2/48 }^{\delta}    \sqrt{ \log N \left(t/4, \mathcal{F}_{\mathcal{A}_n}  ,\left\| \cdot\right\|_{m} \right)   }    dt
	\,+\,   \frac{ \delta }{\sqrt{m }}\sqrt{ c_1 \log( 48/\delta^2 )}     \right \} \,\lesssim\, \delta^2,
\end{equation}
Lemma  \ref{lem22_new} also implies that the event 
\[
\Omega_5\,:=\, \left\{    \sup_{ h \in \mathcal{H} }\left|  \frac{ \vert \|h \|_{\mathcal{I}_{\ell} }^2  - \|h \|_{\ell,\lt }^2 \vert  }{ \frac{1}{2} \|h\|_{\ell,\lt}^2 + \frac{\delta^2}{2}   } \right|   \leq 1    \right\}
\]
holds   with probability at least
\[
[1  \,-\,   c_2 \exp( -c_1 \delta^2 n \pi_{\ell} )]\cdot\left[	 1 - \exp\left(  -\frac{    n \underline{\pi}_{\ell}^3  }{27  \overline{\pi}_{\ell}^2 } \right)\right].
\]
As a result, from (\ref{eqn:e100}), in the event $\Omega_1\cap  \Omega_3 \cap \Omega_4 \cap \Omega_5$, we have that 
\begin{equation}
	\label{eqn:e101}
	\begin{array}{lll}
		\| G_{\ell} -  \hat{f}_{\ell,\mathcal{B}_n  }\|_{\lt}^2 
		& \lesssim& \displaystyle \phi_{\ell,n} \,+\, \| \tilde{f}_{\mathcal{A}_n} - \hat{f}_{ \mathcal{A}_n } \|^2_{  \mathcal{I}_{\ell} } \,+\, \phi_n\,+\,  \frac{\mathcal{U}_n^2\eta_{\ell,n}( \delta   ) }{n_{\ell}}\,+\,\delta^2\\
        		&\lesssim & \displaystyle \phi_{\ell,n} \,+\, \| \tilde{f}_{\mathcal{A}_n} - \hat{f}_{ \mathcal{A}_n } \|^2_{ \ell, \lt } \,+\, \phi_n\,+\,  \frac{\mathcal{U}_n^2\eta_{\ell,n}( \delta   ) }{n_{\ell}}\,+\,\delta^2\\
		&\lesssim & \displaystyle \phi_{\ell,n} \,+\, \| \tilde{f}_{\mathcal{A}_n} - \hat{f}_{ \mathcal{A}_n } \|^2_{ \lt } \,+\, \phi_n\,+\,  \frac{\mathcal{U}_n^2\eta_{\ell,n}( \delta   ) }{n_{\ell}}\,+\,\delta^2\\
		&\lesssim&		 \displaystyle \phi_{\ell,n} \,+\, r_n\,+\,  \frac{\mathcal{U}_n^2\eta_{\ell,n}( \delta   ) }{  n \underline{\pi}_{\ell}    }\,+\,\delta^2.\\
	\end{array}   
\end{equation}

Finally, the claim follows from the inequality
\[
 \|     g_{0,\ell} - \hat{g}_{\ell} \|_{\lt}^2  \,\lesssim\, 		\| G_{\ell} -  \hat{f}_{\ell,\mathcal{B}_n  }\|_{\lt}^2 \,+\, \| \bar{f} -  \hat{f}_{\mathcal{A}_n  }\|_{\lt}^2.
\]

\end{proof}

\subsection{Proof of Theorem \ref{theo3} }

We apply Theorem \ref{theo2} and proceed as in the proof of Theorem 2 in \cite{padilla2024confidence}. Specifically, 
let $\mathcal{F}_{\mathcal{B}_n}( M_{\ell},\nu_{\ell}) \,:= \,\{ f_{\mathcal{B}_n }\,:\,   f \in \mathcal{F}( M_{\ell},\nu_{\ell})  \}  $. Then, by Theorem 6 from \cite{bartlett2019nearly}, it holds that the VC dimension of $\mathcal{C}(M_{\ell},\nu_{\ell})$ satisfies
\[
  \mathrm{VC}( \mathcal{F}(M_{\ell},\nu_{\ell}))\,\lesssim\, M_{\ell}^2 \nu_{\ell}^2 \log(M_{\ell} \nu_{\ell} ). 
\]
As a result, by Lemma 9.2 and Theorem 9.4 from \cite{gyorfi2002distribution}, for any $\delta \in (0,1)$ and for any $\{x_j\}_{j=1}^m$ with induced norm $\|\cdot\|_m$, it holds that 
\begin{equation}
    \label{eqn:vc_dim}
       \begin{array}{lll}
      \log N( \delta, \mathcal{F}_{\mathcal{B}_n}(M_{\ell},\nu_{\ell} ),\|\cdot\|_m  )  & \lesssim &M^2 \nu^2 \log(M_{\ell} \nu_{\ell} ) \cdot \left[ \log(\mathcal{B}_n^2 \delta^{-2} ) \,+ \, \log \log(\mathcal{B}_n^2 \delta^{-2} ) \right]\\ 
       &\lesssim&  M_{\ell}^2 \nu_{\ell}^2 \log(M_{\ell} \nu_{\ell} ) \log(\mathcal{B}_n^2 \delta^{-1} ) \\
       &\leq & C_0 (n\underline{\pi}_{\ell}) \phi_{\ell,n} \log^3(n\underline{\pi}_{\ell})  \log(\mathcal{B}_n^2 \delta^{-1} )  \\
    %    & =:& \eta_n(\delta),
   \end{array}
\end{equation}
where $C_0>0$ is a constant, and the last inequality holds from our choice of $M_{\ell}$ and $\nu_{\ell}$. Here, $m \in \mathbb{N}$ is arbitray.

Therefore, with $\mathcal{F}_{\ell} := \mathcal{F}(M_{\ell},\nu_{\ell})$, we have that 
\[
 \underset{     \frac{n \underline{\pi}_{\ell}}{2}  \,\leq\,   n_{\ell} \,\leq \,  2 n \overline{\pi}_{\ell}  }{\max} \,\,\,\underset{\{x_i \}_{i\in  \mathcal{I}_{\ell} }  \subset [0,1]^d }{\sup}\,  \log N( \delta  , \mathcal{F}_{\ell, \mathcal{B}_n},  \|\cdot\|_{ \mathcal{I}_{\ell} }  ) \,  \leq \, \eta_{\ell,n}(\delta) \,\,\,\forall \,\delta \in (0,1),
\]
for 
\begin{equation}
    \label{eqn:eta_l}
    \eta_{\ell,n}(\delta)\,:=\,C_0 (n\underline{\pi}_{\ell}) \phi_{\ell,n} \log^3(n\underline{\pi}_{\ell})  \log(\mathcal{B}_n^2 \delta^{-1} ).
\end{equation}
Thus, (\ref{eqn:entropy_cond2}) holds  with $\eta_{\ell,n}(\delta)$ as in (\ref{eqn:eta_l}).

Furthermore, for any positive constants $C_1$ and $C_2$, we have that 
    \begin{equation}
        \label{eqn:ver5}
        \arraycolsep=1.4pt\def\arraystretch{1.6}
           \begin{array}{l}
  \displaystyle\sum_{l=0}^{\infty} \sum_{l^{\prime}=1 }^{\infty} \exp\left( -  C_1  \eta_{\ell, n}( 2^{-l-l^{\prime} -1}  )\right) \,+\, \sum_{l=0}^{\infty}    \exp\left(- C_2 \eta_{\ell,n}(2^{-l-1}) \right) \\
  =  \displaystyle\sum_{l=0}^{\infty} \sum_{l^{\prime}=1 }^{\infty} \exp\left( -  C_1  C_0  (l+l^{\prime}+1)n \underline{\pi}_{\ell} \phi_{\ell,n} \log^3(n \underline{\pi}_{\ell} ) \log(2\mathcal{B}_n^2 )  \right) \\
\displaystyle \,\,\,\,\,+\, \sum_{l=0}^{\infty}    \exp\left(- C_2 C_0    n \underline{\pi}_{\ell} \phi_{\ell,n} \log^3( n \underline{\pi}_{\ell} )  \log(2\mathcal{B}_n^2  )  \right) \\
=  \displaystyle  \left[ \exp\left( -  C_1  C_0  n \underline{\pi}_{\ell} \phi_{\ell,n} \log^3(n \underline{\pi}_{\ell}) \log(2\mathcal{B}_n^2 )    \right)/ \left[ 1 - \exp\left( -  C_1  C_0  n \underline{\pi}_{\ell}\phi_{\ell,n} \log^3(n \underline{\pi}_{\ell} )  \log(2\mathcal{B}_n^2 )   \right) \right]  \right]^2 \,+\,\\
\,\,\,\,\,\,\,\displaystyle \exp\left(- C_2  C_0    n \underline{\pi}_{\ell} \phi_{\ell,n} \log^3(n \underline{\pi}_{\ell} )  \log(2\mathcal{B}_n^2  )\right)/\left[ 1-  \exp\left(- C_2 C_0   n \underline{\pi}_{\ell} \phi_{\ell,n} \log^3(n \underline{\pi}_{\ell} )  \log(2\mathcal{B}_n^2  )\right) \right]\\
  \underset{n \rightarrow \infty }{\rightarrow}  0,
   \end{array}
    \end{equation}
   since
    \[
     \underset{n \rightarrow \infty}{\lim }\,  n \underline{\pi}_{\ell} \phi_{\ell,n} \log^3(n \underline{\pi}_{\ell} )  \log(2\mathcal{B}_n^2  )  \,=\, \infty
    \]
    holds provided that 
    \[
       \underset{n \rightarrow \infty}{\lim }\, n \underline{\pi}_{\ell}\,=\, \infty,
    \]
which holds by assumption. Thus, we have verified (\ref{eqn:part3}). 

Next, notice that 
\[
  \underset{k  \in \mathbb{N} }{\sup}  \sum_{k^{\prime}=1 }^{\infty} \frac{ \eta_{\ell,n}(2^{-k-k^{\prime}}   )  }{   2^{2 k^{\prime}   }  \eta_{\ell,n}( 2^{-k} )  } \,\leq\,   \underset{k  \in \mathbb{N} }{\sup} \sum_{k^{\prime}=1 }^{\infty} \frac{  (k+k^{\prime})   } { 2^{2 k^{\prime}   }  k     }    \,\leq\, 1,
\]
which verifies (\ref{eqn:cond_sum_v4}).

Now, it remains to verify (\ref{eqn:part1}) and (\ref{eqn:part2}) for an appropriate $\delta>0$. Towards that end, notice that by the proof of Lemma 6 in \cite{padilla2024confidence}, (\ref{eqn:part1}) holds if $\delta$ is chosen to satisfy 
\begin{equation}
    \label{eqn:delta_cond}
     \delta\,\geq\,  	\underset{m \,:\,  \frac{n \underline{\pi}_{\ell} }{2}\leq m \leq  2n\overline{\pi}_{\ell}  }{\max} \,  c_1 \left[ \sqrt{ \frac{ \log m}{m} }  \,+\,  \sqrt{   \frac{ \log  N \left(1/(48m),   \mathcal{F}_{\ell, \mathcal{B}_n } ,\left\| \cdot\right\|_m \right) }{m}} \right] 
\end{equation}
for a positive constant $c_1$.  However, by (\ref{eqn:vc_dim}), we can take 
\[
   \sqrt{ \frac{ \log(n\overline{\pi}_{\ell}  )}{n  \underline{\pi}_{\ell} } } \,+\, \sqrt{ \phi_{\ell,n} \log^3(n \underline{\pi}_{\ell} )  \log(\mathcal{B}_n^2 n\overline{\pi}_{\ell}  ) }  \lesssim \delta
\]
to ensure that (\ref{eqn:part1})  holds. Similarly, by the proof of Lemma 6 in \cite{padilla2024confidence}, (\ref{eqn:part2}) holds if $\delta$ is chosen to satisfy 
\begin{equation}
    \label{eqn:delta_cond}
     \delta\,\geq\,  	\underset{m \,:\,  \frac{n \underline{\pi}_{\ell} }{2}\leq m \leq  2n\overline{\pi}_{\ell}  }{\max} \,  c_1 \left[ \sqrt{ \frac{ \log m}{m} }  \,+\,  \sqrt{   \frac{ \log  N \left(1/(48m),   \mathcal{F}_{\mathcal{A}_n } ,\left\| \cdot\right\|_m \right) }{m}} \right] 
\end{equation}
which holds if we take 
\[
  \sqrt{ \frac{ \log(n\overline{\pi}_{\ell}  )}{n  \underline{\pi}_{\ell} } } \,+\, \sqrt{ \frac{\phi_{n} \log^3(n  )  \log(\mathcal{A}_n^2 n\overline{\pi}_{\ell}  )}{\underline{\pi}_{\ell}  } }  \lesssim \delta,
\]
by our choice of $\mathcal{F}$ in Theorem \ref{theo2}. Therefore, taking 
\[
\sqrt{ \frac{ \log(n\overline{\pi}_{\ell}  )}{n  \underline{\pi}_{\ell} } } \,+\, \sqrt{ \frac{\phi_{n} \log^3(n  )  \log(\mathcal{A}_n^2 n\overline{\pi}_{\ell}  )}{\underline{\pi}_{\ell}  } } \,+\,\sqrt{ \phi_{\ell,n} \log^3(n \underline{\pi}_{\ell} )  \log(\mathcal{B}_n^2 n\overline{\pi}_{\ell}  ) }  \lesssim \delta,
\]
it follows that  (\ref{eqn:part1}) and (\ref{eqn:part2})  both hold.

\subsection{Proof of Theorem \ref{thm4}}

\begin{proof}
    Proceeding as in the proof of Theorem \ref{theo2}, supposing that  
\[
\| (\hat{f}_{\ell, \mathcal{B}_n } -   \widetilde{G}_{\ell, \mathcal{B}_n }   )/(4\mathcal{B}_n)\|_{ \mathcal{I}_\ell } \,> \, \delta,
\]
we obtain that, in the event $\Omega_1\cap \Omega_3 \cap \Omega_4$,  
\begin{equation}
	\label{eqn:e300}
	\begin{array}{lll}
		\| G_{\ell} -  \hat{f}_{\ell,\mathcal{B}_n  }\|_{\lt}^2 
		& \lesssim& \displaystyle \phi_{\ell,n} \,+\, \| \tilde{f}_{\mathcal{A}_n} - \hat{f}_{ \mathcal{A}_n } \|^2_{  \mathcal{I}_{\ell} } \,+\, \phi_n\,+\,  \frac{\mathcal{U}_n^2\eta_{\ell,n}( \delta   ) }{n \underline{\pi}_{\ell}   }\,+\,\delta^2.\\
	\end{array}   
\end{equation}
Next, notice that since $\hat{f}_{\mathcal{A}_n}$ is independent of $\{(x_i)\}_{i \in \mathcal{I}_{\ell} }$, we obtain that 
\[
 \mathbb{E}( \| \tilde{f}_{\mathcal{A}_n} - \hat{f}_{ \mathcal{A}_n } \|^2_{  \mathcal{I}_{\ell} } \,\big|\, \hat{f}_{\mathcal{A}_n} ) \,=\,  \| \tilde{f}_{\mathcal{A}_n}-  \hat{f}_{ \mathcal{A}_n } \|_{  \ell, \mathcal L_2}^2. 
\]
Furthermore, for all $i \in \mathcal{I}_{\ell}$, we have 
\[
  0 \,\leq\,q_i := \frac{(\tilde{f}_{\mathcal{A}_n}(x_i)-  \hat{f}_{ \mathcal{A}_n }(x_i)  )^2}{ 4\mathcal{A}_n^2}\,\leq\, 1.
\] 
Also,
\[
  \begin{array}{lll}
 \displaystyle   \mathbb{E}(q_i^2   \,|\,z_i =\ell, \tilde{f}_{\mathcal{A}_n},\hat{f}_{ \mathcal{A}_n } ) &\leq& \displaystyle \mathbb{E}(q_i  \,|\,z_i =\ell, \tilde{f}_{\mathcal{A}_n},\hat{f}_{ \mathcal{A}_n } ) \\
  & = &\displaystyle \frac{\| \tilde{f}_{\mathcal{A}_n}-  \hat{f}_{ \mathcal{A}_n } \|_{  \ell, \mathcal L_2}^2}{4\mathcal{A}_n^2}
  \end{array}
\]

Hence, conditionining on $\Omega_2(\ell)$ by Bernstein's inequality and Lemma \ref{lem22_new},
\begin{equation}
    \label{eqn:event_new}
   \begin{array}{lll}
       \displaystyle    \| \tilde{f}_{\mathcal{A}_n} - \hat{f}_{ \mathcal{A}_n } \|^2_{  \mathcal{I}_{\ell} } &\leq & \displaystyle \| \tilde{f}_{\mathcal{A}_n}-  \hat{f}_{ \mathcal{A}_n } \|_{\ell, \mathcal L_2}^2 \,+\,  2\mathcal{A}_n \sqrt{\frac{ \| \tilde{f}_{\mathcal{A}_n}-  \hat{f}_{ \mathcal{A}_n } \|_{\ell, \mathcal L_2}^2  \log(2n)   }{ n_{\ell}  }}  \,+\,  \frac{ 4 \mathcal{A}_n^2    \log(2n)}{3n_{\ell}} \\
        &\leq  &\displaystyle  2\| \tilde{f}_{\mathcal{A}_n}-  \hat{f}_{ \mathcal{A}_n } \|_{\ell, \mathcal L_2}^2\,+\,\frac{ \mathcal{A}_n^2 \log(2n)}{n_{\ell}}\\
           &\lesssim &\displaystyle  \| \tilde{f}_{\mathcal{A}_n}-  \hat{f}_{ \mathcal{A}_n } \|_{\mathcal L_2}^2\,+\,\frac{ \mathcal{A}_n^2 \log(2n)}{ n \underline{\pi}_{\ell}  },
   \end{array}
\end{equation}
with probability at least 
\[
1\,-\,1/n.
\]
Let $\Omega_5$ be the event that (\ref{eqn:event_new}) holds. Then with $\Omega_1,\ldots, \Omega_4$ as in the proof of Theorem \ref{theo2}, we obtain that  in $\Omega_1\cap \Omega_2\cap\Omega_3\cap \Omega_4 \cap \Omega_5$,
\begin{equation}
	\label{eqn:e301}
	\begin{array}{lll}
		\| G_{\ell} -  \hat{f}_{\ell,\mathcal{B}_n  }\|_{\lt}^2 
		&\lesssim & \displaystyle \phi_{\ell,n} \,+\,\| \tilde{f}_{\mathcal{A}_n}-  \hat{f}_{ \mathcal{A}_n } \|_{\mathcal L_2}^2 \,+\, \mathcal{A}_n^2  \frac{\log n }{ n \underline{\pi}_{\ell} } \,+\, \phi_n\,+\,  \frac{\mathcal{U}_n^2\eta_{\ell,n}( \delta   ) }{n_{\ell}}\,+\,\delta^2\\
		&\lesssim&		 \displaystyle \phi_{\ell,n} \,+\, r_n   \,+\, \mathcal{A}_n^2 \frac{\log n }{ n \underline{\pi}_{\ell} } \,+\,  \frac{\mathcal{U}_n^2\eta_{\ell,n}( \delta   ) }{  n \underline{\pi}_{\ell}    }\,+\,\delta^2.\\
	\end{array}   
\end{equation}

Finally, the claim follows as in the proof of Theorem \ref{theo2} and the inequality 
\[
 \|     g_{0,\ell} - \hat{g}_{\ell} \|_{\lt}^2  \,\lesssim\, 		\| G_{\ell} -  \hat{f}_{\ell,\mathcal{B}_n  }\|_{\lt}^2 \,+\, \| \bar{f} -  \hat{f}_{\mathcal{A}_n  }\|_{\lt}^2.
\]

\end{proof}

\subsection{Proof of Corollary \ref{cor1} }

\begin{proof}
    The proof follows as that of Theorem \ref{theo3}. The only difference is that we do not need to enforce (\ref{eqn:part2}). Hence,  $\delta$ can be chosen simply to satisfy (\ref{eqn:delta_cond}) which as in the proof of Theorem \ref{theo3} can be done with $\delta$ satisfying 
\[
   \sqrt{ \frac{ \log(n\overline{\pi}_{\ell}  )}{n  \underline{\pi}_{\ell} } } \,+\, \sqrt{ \phi_{\ell,n} \log^3(n \underline{\pi}_{\ell} )  \log(\mathcal{B}_n^2 n\overline{\pi}_{\ell}  ) }  \lesssim \delta.
\]
Then the conclusion follows from Theorem \ref{theo2} and by noticing that 
\[
\frac {  \log n}{n \underline{\pi}_{\ell} }  <<\phi_{\ell,n}.
\]
\end{proof}

\section{Applications to Other Nonparametric Estimators}
\label{app:other-estimators}
The general theoretical framework developed in Section~\ref{sec:the-gen} applies to a broad class of nonparametric estimators beyond deep ReLU networks. In this section, we present two additional examples: orthogonal series regression and trend filtering. For trend filtering, we instantiate Theorem~\ref{theo2} (our general result under shared two-stage estimation), and to the best of our knowledge, the transfer learning setting has not been previously studied for this estimator, with our results yielding the first convergence guarantees in this regime. For orthogonal series regression, we instantiate Theorem~\ref{thm4} (our result under independent two-stage estimation) and recover, up to logarithmic factors, the existing convergence rates established in~\cite{wang2016nonparametric}, providing a unified theoretical perspective on offset-based transfer learning.

\subsection{Trend Filtering}
\label{app:tf}
We now apply Theorem~\ref{thm1_v2} to total variation–based estimators, commonly referred to as \emph{trend filtering}~\cite{mammen1997locally, tibshirani2014adaptive}. Trend filtering estimates a univariate regression function by penalizing the discrete total variation of its $r$th-order derivative, and is known to adapt to spatially varying smoothness. To the best of our knowledge, the pretraining setting has not been previously studied for this class of estimators.

Let $\mathcal{S}$ be a space of univariate functions invariant to scalar multiplication  and $R \,:\, \mathcal{S} \rightarrow \mathbb{R}$ a regularizer that acts in $\mathcal{S}$ and it is a seminorm. 

\begin{assumption}
    \label{as4}
    Suppose that there exists constants $K>0$ and $0<w\leq 1$ such that for all $m \in \mathbb{N}$ and $\delta \in (0,1)$, it holds that 
   \[
    \underset{x_1,...,x_m \subset [0,1]}{\sup}\, \log N(\delta,B_R(1)\cap B_{\infty}(1), \|\cdot\|_m  )\,\leq\, K \delta^{-w},
   \]
    where $\| \cdot\|_m$ is the empirical norm induced by $x_1,\ldots,x_m$, and $   B_R(1)$ is the unit ball defined by $R$: 
    \[
    B_R(1)\,:=\, \{f \in \mathcal{S}\,:\, R(f)\leq 1 \}
    \]
    and $B_{\infty}(1) \,=\, \{ f \in \mathcal{S}\,:\, \|f\|_{\infty}\leq 1 \}$. %We also write $V_$
\end{assumption}

\begin{assumption}
    \label{as5}
    Suppose that $\mathbb{P}(Z = \ell |X=x) = \pi_{\ell} \in (0,1)$ for all $x \in [0,1]$ and $\ell\in \{1,\ldots,L\}$.
\end{assumption}

Assumption~\ref{as5} restricts the probabilities to be constant for notational simplicity. Consider the model in (\ref{eq:main_model}) with $d= 1$ and, for simplicity, with $f_0 =0$. Our goal is  to estimate the functions $f_{0,1},\ldots,f_{0,L}$ which now coincide with $g_{0,1},\ldots, g_{0,L}$, respectively.  

As an estimator we consider $\hat{g}_{\ell}$ defined in (\ref{eqn:step1}) and (\ref{eqn:step2}), and 
with
\begin{equation}
    \label{eqn:e30df}
    \mathcal{F}\,:=\,\{f\in \mathcal{S} \,:\,  R(f)\leq V   \}\,\,\,\text{and}\,\,\, \mathcal{F}_{\ell}\,:=\, \{f\in \mathcal{S} \,:\,  R(f)\leq V_{\ell}   \}.
\end{equation}

\begin{corollary}
\label{app:tf-cor1}
Suppose that Assumptions~\ref{as4} and~\ref{as5} hold and $V \geq V_0$ and $V_{\ell} \geq V_{0,\ell}$ where
\[
 V_0\,:=\, R\left( \sum_{k=1}^L  \pi_k g_{0,k}   \right),\,\,\,\,\,\,\,\,  V_{0,\ell} \,:=\, R\left( g_{0,\ell} - \sum_{k=1}^L \pi_k g_{0,k},  \right),
\]
and $\min\{V_0,V_{0,\ell}\} \gtrsim  \log n/n^{1/2 -1/(2+w)}  $. 
%and $\min\{V^*,V_{\ell}^*\} \rightarrow \infty$. 
Suppose that $\mathcal{A}_n$ is chosen to satisfy $\mathcal{A}_n \rightarrow \infty$  and 
 \[
 \mathcal{A}_n \geq 8\max\{  \| \bar{f} \|_{\infty} +  \mathcal{U}_n , 8\|\bar{f}\|_{\infty} ,8\sqrt{\phi_n},1\}.
 \]
 Moreover, assume that $\mathcal{B}_n$ is taken to satisfy $\mathcal{B}_n$ and (\ref{eqn:bn_condition}). Then the estimator $\hat{g}_{\ell}$ defined in (\ref{eqn:fnestimator}) satisfies 
\begin{equation}
\label{eqn:rate_penalized}
     \|     g_{0,\ell} - \hat{g}_{\ell} \|_{\lt}^2    \,=\, O_{\mathbb{P}}\left( \frac{\max\{\mathcal{U}_n^2,\mathcal{A}_n,\mathcal{B}_n^2  \}(V^{2w/(2+w)} \mathcal{A}_n^{2w/(2+w)}\,+\,  V_{\ell}^{2w/(2+w)} \mathcal{B}_n^{2w/(2+w)}   )  }{(n\pi_{\ell})^{  2/(2+w)}} \right)
\end{equation}
   provided that $    n \pi_{\ell} \rightarrow \infty$.
\end{corollary}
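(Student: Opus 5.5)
The approach is to instantiate Theorem~\ref{theo2} (and, for the first stage, Theorem~\ref{thm1_v2}) with entropy functions coming from Assumption~\ref{as4}, and then optimize over $\delta$. Since $f_0=0$ and Assumption~\ref{as5} makes $\mathbb{P}(Z=k\mid X=x)=\pi_k$ constant, we have $\bar f=\sum_k \pi_k g_{0,k}$ and $G_\ell=g_{0,\ell}-\sum_k\pi_k g_{0,k}$. Hence $R(\bar f)=V_0\le V$ and $R(G_\ell)=V_{0,\ell}\le V_\ell$, so $\bar f\in\mathcal F$ and $G_\ell\in\mathcal F_\ell$. We may therefore take $\tilde f=\bar f$ and $\widetilde G_\ell=G_\ell$, which gives $\phi_n=\phi_{\ell,n}=0$. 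Also $\underline\pi_\ell\asymp\overline\pi_\ell\asymp\pi_\ell$, and $n\pi_\ell^3/\overline\pi_\ell^2\asymp n\pi_\ell\to\infty$.

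\textbf{Entropy bounds.} Every element of $\mathcal F_{\mathcal A_n}$ has the form $f_{\mathcal A_n}/(2\mathcal A_n)$ with $R(f)\le V$. Truncation is problematic because $R$ need not be preserved under clipping. For total-variation type seminorms (trend filtering), clipping does not increase $R$, and I would assume this property, or verify it for the order $r$ used. Granting it, $\mathcal F_{\mathcal A_n}\subset (V/(2\mathcal A_n))B_R(1)\cap B_\infty(1)$. Rescaling by $\max\{1,V/\mathcal A_n\}$ inside Assumption~\ref{as4} then gives
\[
\log N(\delta,\mathcal F_{\mathcal A_n},\|\cdot\|_m)\lesssim (V\mathcal A_n)^{w}\delta^{-w}=:\eta_n(\delta),
\]
up to the constant normalisation, and the same argument gives $\eta_{\ell,n}(\delta)\asymp (V_\ell\mathcal B_n)^w\delta^{-w}$.

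\textbf{Checking the hypotheses.} For a polynomial entropy $\eta(\delta)=c\,\delta^{-w}$, the summability condition (\ref{eqn:cond_sum_v4}) reduces to $\sum_{k'}2^{(w-2)k'}\le 1$. This holds because $w\le1$, after adjusting constants. The divergence conditions (\ref{eqn:entropy_0_v2}) and (\ref{eqn:part3}) hold as soon as $\eta(1/2)\to\infty$, which follows from $\mathcal A_n\to\infty$ and the lower bounds on $V_0,V_{0,\ell}$.

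\textbf{Critical radius and localized conditions.} The Dudley integral is $\int_0^\delta \sqrt{\eta(t)}\,dt\asymp (V\mathcal A_n)^{w/2}\delta^{1-w/2}$. Requiring $m^{-1/2}(V\mathcal A_n)^{w/2}\delta^{1-w/2}\lesssim\delta^2$ with $m\asymp n\pi_\ell$ gives
\[
\delta^2\asymp (V\mathcal A_n)^{2w/(2+w)}(n\pi_\ell)^{-2/(2+w)}.
\]
With this choice, (\ref{eqn:part2}) holds, and the analogous choice with $V_\ell\mathcal B_n$ gives (\ref{eqn:part1}). The extra term $\delta\sqrt{\log(48/\delta^2)/m}$ is dominated precisely because of the assumed lower bound $\min\{V_0,V_{0,\ell}\}\gtrsim \log n/n^{1/2-1/(2+w)}$; this is where that hypothesis enters. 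Take $\delta$ to be the maximum of the two radii. The same computation with $m=n$ bounds the first-stage critical radius $\delta_n$, so
\[
r_n\lesssim \max\{\mathcal U_n^2,\mathcal A_n\}(V\mathcal A_n)^{2w/(2+w)}n^{-2/(2+w)},
\]
which is dominated by the $n\pi_\ell$ version.

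\textbf{Assembling the bound.} In (\ref{eqn:claim2}), the term $\mathcal B_n^2\delta^2$ and the term $\mathcal U_n^2\eta_{\ell,n}(\delta)/(n\pi_\ell)\asymp \mathcal U_n^2\delta^2$ are each bounded by $\max\{\mathcal U_n^2,\mathcal B_n^2\}\,\delta^2$. Adding $r_n$ then gives (\ref{eqn:rate_penalized}).

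The main obstacle is the first step: justifying that truncation keeps $\mathcal F_{\mathcal A_n}$ within a scaled $R$-ball intersected with $B_\infty(1)$. If clipping can increase $R$, I would instead bound the covering number of the clipped class by that of the unclipped class restricted to $\|f\|_\infty\lesssim \mathcal A_n+V$. This uses the fact that clipping is 1-Lipschitz in the empirical norm, and it would change the rate only by constants.
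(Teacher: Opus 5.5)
Your proposal follows the paper's proof essentially step for step: the same majorants $\eta_n(\delta)=K(V\mathcal{A}_n)^w\delta^{-w}$ and $\eta_{\ell,n}(\delta)=K(V_\ell\mathcal{B}_n)^w\delta^{-w}$, the same check of the summability conditions, the same Dudley-integral radius at sample size $\asymp n\pi_\ell$ for (\ref{eqn:part1}) and (\ref{eqn:part2}), and the same assembly through Theorem~\ref{theo2}, while being more explicit than the paper about $\phi_n=\phi_{\ell,n}=0$ and about where the lower bound on $V_0,V_{0,\ell}$ is used. The clipping issue you flag is genuine, and the paper passes over it (it bounds the entropy of $\mathcal{F}_{\mathcal{A}_n}/V$ by that of $B_R(1)\cap B_\infty(1)$ without comment); however, for trend filtering with $r\ge 2$ both your primary justification and your fallback fail, because clipping can increase $\mathrm{TV}(f^{(r-1)})$ and $\mathcal{F}$ contains arbitrarily steep polynomials of degree $<r$ (where $R=0$) whose clippings are not clippings of any $f\in\mathcal{F}$ with $\|f\|_\infty\lesssim \mathcal{A}_n+V$, so you should instead cover the clipped null-space part separately by a pseudo-dimension argument, which adds only an $O(r\log(\mathcal{A}_n/\delta))$ entropy term of the same form as the $\delta\sqrt{\log(48/\delta^2)/m}$ term you already have to dominate.
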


\begin{remark}
    \label{rem4}
In the natural setting where 
$\max\Bigl\{ \mathcal{U}_n, \mathcal{A}_n, \mathcal{B}_n, \|\bar{f}\|_{\infty}, \max_{\ell=1,\ldots,L} \|f_{0,\ell}\|_{\infty}, \|G_{\ell}\|_{\infty} \Bigr\} = O(\mathrm{poly}(\log n))$, 
and ignoring logarithmic factors, the convergence rate in~(\ref{eqn:rate_penalized}) simplifies to 
\[
\bigl(V_0^{2w/(2+w)} + V_{0,\ell}^{2w/(2+w)}\bigr)\,(n\pi_{\ell})^{-2/(2+w)},
\] 
provided that $V \asymp V_0$ and $V_{\ell} \asymp V_{0,\ell}$.  
In contrast, by Theorem~\ref{thm1_v2}, the naive estimator $\hat{h}_{\ell}$—which only uses data from the $\ell$th group and is defined as in~(\ref{eqn:separete_estimator}) with $\mathcal{F}_{\ell}$ from~(\ref{eqn:e30df})—satisfies
\[
\|     g_{0,\ell} - \hat{h}_{\ell} \|_{\lt}^2    \,=\, O_{\mathbb{P}}\left( \frac{ \widetilde{V}_{\ell}^{2w/(2+w)}  }{(n\pi_{\ell})^{2/(2+w)}} \right)
\]
ignoring logarithmic factors, and where $\widetilde{V}_{\ell}  \asymp R(g_{0,\ell} )$. Thus, the pretraining estimator $\hat{g}_{\ell}$ will achieve a faster rate of convergence than the naive estimator $\hat{h}_{\ell}$  provided that 
\[
 R\left( \sum_{k=1}^L  \pi_k g_{0,k}   \right) + R\left( g_{0,\ell} - \sum_{k=1}^L \pi_k g_{0,k}  \right)  <<R(g_{0,\ell} ),
\]
which can occur when the functions $g_{0,k}$ are relatively similar, and the averaging of these functions results in a representation with lower complexity, in terms of $R$, than that of $g_{0,\ell}$.
\end{remark}

We now apply Corollary~\ref{app:tf-cor1} to total variation–based estimators, commonly referred to as trend filtering (see \cite{mammen1997locally, tibshirani2014adaptive}). Towards that end, for  a  function  $f\,:\, [0,1] \,\rightarrow \, \mathbb{R}$,   we  define  its  total  variation as 
\begin{equation}
	\label{eqn:one_d_bv}
	%	\[
	\displaystyle  \text{TV}(f)  \,=\, 
		\underset{  M \in \mathbb{N}\,:\,M \geq 1  }{\sup} \,  \text{TV}(f,M) , 
\end{equation}
where
\[
\text{TV}(f,M)  =   \underset{  0 \,\leq\, a_1  \,\leq \,\ldots\,\leq  a_M  \leq  1 \,  }{\sup}\, \sum_{j=1}^{M-1} \vert  f(a_j) \,-\,f(a_{j+1}) \vert.
\]
Then, we let $\mathcal{S}$ be the class of functions
\[
\mathcal{S}\,:=\,\{   f\,:[0,1]\rightarrow\mathbb{R} \,\,:\,\,   \mathrm{TV}(f^{(r-1)}) < \infty  \}
\]
where $r \in \mathbb{N} \backslash\{0\} $ is fixed and $f^{(r-1)}$ denotes the $(r-1)$th weak derivative of $f$. With this notation, we are now ready to state our next result. 
%we consider the functional $R(f) =    \mathrm{TV}(f^{(r)})$. We are now ready to state our next result.  

\begin{corollary}
    \label{app:tf-cor2}
    Let the $(r-1)$th order total variation of function be defined as  $R(f) =    \mathrm{TV}(f^{(r-1)})$ and the corresponding function classes $\mathcal{F}$ and $\mathcal{F}_{\ell}$ as in (\ref{eqn:e30df}). Suppose that the notation and conditions of Corollary \ref{app:tf-cor1}
 hold. Then the estimator $\hat{g}_{\ell}$ defined in (\ref{eqn:fnestimator}) satisfies 
 \begin{equation}
\label{eqn:rate_tf}
     \|     g_{0,\ell} - \hat{g}_{\ell} \|_{\lt}^2    \,=\, O_{\mathbb{P}}\left( \frac{\max\{\mathcal{U}_n^2,\mathcal{A}_n,\mathcal{B}_n^2  \}(V^{2/(2r+1)} \mathcal{A}_n^{2/(2r+1)}\,+\,  V_{\ell}^{2/(2r+1)} \mathcal{B}_n^{2/(2r+1)}   )  }{(n\pi_{\ell})^{  2r/(2r+1)}} \right)
\end{equation}
 \end{corollary}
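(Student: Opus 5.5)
The plan is to obtain Corollary~\ref{app:tf-cor2} as a direct specialization of Corollary~\ref{app:tf-cor1}. Only Assumption~\ref{as4} needs checking, with the entropy exponent $w = 1/r$, for $R(f)=\mathrm{TV}(f^{(r-1)})$ on the space $\mathcal{S}$. With $w=1/r$ the exponents in (\ref{eqn:rate_penalized}) become
\[
\frac{2w}{2+w} = \frac{2/r}{2+1/r} = \frac{2}{2r+1}, \qquad \frac{2}{2+w} = \frac{2r}{2r+1}.
\]
This turns (\ref{eqn:rate_penalized}) verbatim into (\ref{eqn:rate_tf}). The remaining hypotheses of Corollary~\ref{app:tf-cor1} (choices of $V,V_\ell,\mathcal{A}_n,\mathcal{B}_n$, Assumption~\ref{as5}, $n\pi_\ell\to\infty$) are assumed in the statement. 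We should also note that $\mathcal{S}$ is invariant under scalar multiplication and that $R$ is a seminorm on it, since $\mathrm{TV}$ of the $(r-1)$th weak derivative is positively homogeneous and subadditive. Its null space is the polynomials of degree at most $r-1$.

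The key step is the uniform empirical entropy bound
\[
\sup_{x_1,\ldots,x_m\in[0,1]} \log N\bigl(\delta, B_R(1)\cap B_\infty(1), \|\cdot\|_m\bigr) \lesssim \delta^{-1/r},
\]
with constant independent of $m$. It suffices to bound the sup-norm covering number $N(\delta, B_R(1)\cap B_\infty(1), \|\cdot\|_\infty)$, because $\|f\|_m\le \|f\|_\infty$ for every design; this removes the dependence on $m$ and on the design points.

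For the sup-norm bound I would decompose each $f$ in the class as $f = p + h$. Here $p$ is a polynomial of degree $\le r-1$, for instance the Taylor part at $0$ chosen so that $h$ has vanishing derivatives up to order $r-1$ at $0$. The remainder $h$ is then an $(r-1)$-fold integral of a function of bounded variation at most $1$. The bound $\|f\|_\infty\le1$ together with $\mathrm{TV}(h^{(r-1)})\le 1$ gives $\|h\|_\infty\lesssim 1$ and hence $\|p\|_\infty\lesssim 1$. The polynomial part lies in a bounded subset of an $r$-dimensional space, with log-covering number $O(r\log(1/\delta))$, which is dominated by $\delta^{-1/r}$. For the $h$ part I would invoke the classical entropy bound for $r$-fold integrated BV balls (Birman--Solomyak type; see e.g.\ \citet{mammen1997locally}), which gives $\log N(\delta,\cdot,\|\cdot\|_\infty)\lesssim \delta^{-1/r}$. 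Combining the two coverings gives Assumption~\ref{as4} with some $K>0$ and $w=1/r\in(0,1]$.

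The main obstacle is this entropy step. The delicate points are the $r=1$ case, where functions are merely BV and not continuous, and controlling the polynomial component uniformly. For $r=1$ the sup-norm covering still works by approximating monotone pieces with step functions (the Jordan decomposition into two monotone functions), which gives $\log N\lesssim \delta^{-1}$. For $r\ge2$ I would cite the known bound rather than reprove it. Once Assumption~\ref{as4} is verified, the corollary follows immediately from Corollary~\ref{app:tf-cor1}.
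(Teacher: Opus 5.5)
Your overall reduction is the paper's. Its proof notes that Corollary~\ref{app:tf-cor2} is Corollary~\ref{app:tf-cor1} with $w=1/r$, and it cites Corollary~1 of \cite{sadhanala2019additive} for Assumption~\ref{as4}. That cited result is an empirical-norm entropy bound, uniform over designs. Your exponent arithmetic is correct.

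The gap is in your own verification of Assumption~\ref{as4} through sup-norm entropy. It breaks down at $r=1$, which the statement covers: $r\in\mathbb{N}\setminus\{0\}$, and for $r=1$ the space $\mathcal{S}$ is the class of BV functions. For $R=\mathrm{TV}$, the set $B_R(1)\cap B_\infty(1)$ contains every indicator $x\mapsto\mathbf{1}\{x\ge t\}$, $t\in[0,1]$. Two distinct indicators are at sup-distance $1$ (evaluate at $x=t$ when $t<t'$). A sup-norm ball of radius $\delta<1/2$, even with an external center, therefore contains at most one of them, so $N(\delta,B_R(1)\cap B_\infty(1),\|\cdot\|_\infty)=\infty$.

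Approximating the Jordan monotone pieces by step functions does not help. The jump locations range over a continuum and cannot be discretized in sup norm. This is exactly the borderline of the Birman--Solomyak embedding: $W^{r,1}$-type balls on $[0,1]$ are compact in $L_\infty$ only when $r>1$. So your argument (Taylor polynomial plus integrated-BV remainder, with sup-norm entropy $\delta^{-1/r}$) is fine for $r\ge 2$. For $r=1$, your claimed step-function bound $\log N\lesssim\delta^{-1}$ in sup norm is false.

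To repair $r=1$, bound the empirical-norm entropy directly rather than passing through $\|\cdot\|_\infty$.
\begin{itemize}
\item Write $f=f(0)+P-N$, where $P,N$ are the positive and negative variation functions of $f$ on $[0,x]$. Both are nondecreasing, and $f(0)+P$ and $N$ take values in $[-2,2]$.
\item Use the bound $\log N(\delta,\{g:[0,1]\to[-2,2]\text{ nondecreasing}\},L_2(Q))\lesssim\delta^{-1}$. This holds uniformly over all probability measures $Q$ (the bracketing bound for monotone functions, e.g.\ Theorem~2.7.5 of van der Vaart and Wellner).
\item Take $Q$ to be the empirical measure of $x_1,\ldots,x_m$.
\end{itemize}

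The uniformity in $Q$ is essential. Naively counting step functions whose jumps sit among the $m$ design points gives roughly $\delta^{-1}\log m$, which is not uniform in $m$ as Assumption~\ref{as4} requires. Alternatively, cite the empirical-norm result of \cite{sadhanala2019additive} for all $r$, as the paper does.
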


\begin{remark}
    \label{rem5}
Let
$\max\Bigl\{ \mathcal{U}_n, \mathcal{A}_n, \mathcal{B}_n, \|\bar{f}\|_{\infty}, \max_{\ell=1,\ldots,L} \|f_{0,\ell}\|_{\infty}, \|G_{\ell}\|_{\infty} \Bigr\} = O(\mathrm{poly}(\log n))$ just as in Remark~\ref{rem4},   
then the rate achieved by trend filtering with pretraining—namely, the estimator $\hat{g}_{\ell}$—is  
\[
\bigl(V_0^{2/(2r+1)} + V_{0,\ell}^{2/(2r+1)}\bigr)\,(n\pi_{\ell})^{-2r/(2r+1)},
\]  
while the rate attained by the usual trend filtering estimator is  
\[
  \bigl(\mathrm{TV}(g_{0,\ell}^{(r-1)})\bigr)^{2/(2r+1)} \,(n\pi_{\ell})^{-2r/(2r+1)}.
\]  
Therefore, the pretraining estimator achieves a faster convergence rate than the usual trend filtering estimator whenever the functions  
\[
\sum_{k=1}^L \pi_k g_{0,k} 
\quad \text{and} \quad 
g_{0,\ell} - \sum_{k=1}^L \pi_k g_{0,k}
\]  
are substantially smoother than $g_{0,\ell}$, in terms of their total variation.

\end{remark}

\paragraph{Proof of Corollary~\ref{app:tf-cor1}}

\begin{proof}
    With the notation from Theorem \ref{thm1_v2}, we let  $\mathcal{F}_{\mathcal{A}_n}   \,:=\, \{ f_{\mathcal{A}_n}/(2\mathcal{A}_n)\,:\,  f \in \mathcal{F}\}$. Then
  \begin{equation}
      \label{eqn:tf1}
        \begin{array}{lll}
            \log  N \left( \delta,  \mathcal{F}_{\mathcal{A}_n},  \left\| \cdot\right\|_n \right) & \leq & \log N \left( \delta/V,  \mathcal{F}_{\mathcal{A}_n}/V,  \left\| \cdot\right\|_n \right)\\
           &\leq  & \log N \left( \delta/V, B_{R}(1)\cap B_{\infty}(1),  \left\| \cdot\right\|_n \right)\\
            &\leq & K V^w \delta^{-w}\\
                  &\leq & K \mathcal{A}_n^w  V^w \delta^{-w}\\
      \end{array}
  \end{equation}
    Hence, we define
    \[
  \eta_n(\delta) \,:=\, K\mathcal{A}_n^w  V^w \delta^{-w},
  \]  
  and so by Assumption \ref{as5} we proceed to check the conditions of Theorem \ref{thm1_v2}.

  To verify (\ref{eqn:entropy_0_v2}), we observe that 
      \begin{equation}
        \label{eqn:ver1}
           \begin{array}{l}
  \displaystyle\sum_{l=0}^{\infty} \sum_{l^{\prime}=1 }^{\infty} \exp\left( -  C_1  \eta_n( 2^{-l-l^{\prime} -1}  )\right) \,+\, \sum_{l=0}^{\infty}    \exp\left(- C_2 \eta_n(2^{-l-1}) \right) \\
  =  \displaystyle\sum_{l=0}^{\infty} \sum_{l^{\prime}=1 }^{\infty} \exp\left( -  C_1  K^{w } \mathcal{A}_n^w  V^{w}  2^{ (l+l^{\prime} +1)w  }  \right) \,+\, \sum_{l=0}^{\infty}    \exp\left(- C_2  K^{w}  \mathcal{A}_n^w V^{w}2^{(l+1)w} \right) \\
  \leq  \displaystyle\sum_{l=0}^{\infty} \sum_{l^{\prime}=1 }^{\infty} \exp\left( -  C_1  K^{w } \mathcal{A}_n^w V^{w}(l+l^{\prime} +1)w    \right) \,+\, \sum_{l=0}^{\infty}    \exp\left(- C_2  K^{w} \mathcal{A}_n^w   V^{w}(l+1)w \right)\\
  = \displaystyle\sum_{l=0}^{\infty} \bigg[ \exp\left( -  C_1 K^{w}   \mathcal{A}_n^w V^{w}(l +1)w    \right) \cdot
\sum_{l^{\prime}=1 }^{\infty}  \left[ \exp\left( -  C_1  K^{w}  \mathcal{A}_n^w  V^{w}w    \right)  \right]^{l^{\prime}} \bigg] \,+\,\\
\,\,\,\,\,\,\,\displaystyle \sum_{l=0}^{\infty}  \left[\exp\left(- C_2  K^{w}  \mathcal{A}_n^w V^{w}w \right) \right]^{l+1} \\
=  \displaystyle  \left[ \exp\left( -  C_1 K^{w }  \mathcal{A}_n^w  V^{w}w   \right)/ \left[ 1 - \exp\left( -  C_1  K^{w}   \mathcal{A}_n^w V^{w}/w    \right) \right]  \right]^2 \,+\,\\
\,\,\,\,\,\,\,\displaystyle \exp\left(- C_2 K^{w} \mathcal{A}_n^w  V_1^{w}w\right)/\left[ 1-  \exp\left(- C_2 K^{w} \mathcal{A}_n^w  V^{w}w \right) \right]\\
  \underset{n \rightarrow \infty }{\rightarrow}  0,
   \end{array}
    \end{equation}
    since by construction $ \mathcal{A}_n \rightarrow \infty$. Thus, (\ref{eqn:entropy_0_v2}) holds.

 To verify (\ref{eqn:cond_sum_v2}), notice that
    \begin{equation}
        \label{eqn:tf3}
         \underset{k  \in \mathbb{N} }{\sup}  \sum_{k^{\prime}=1 }^{\infty} \frac{ \eta_n(2^{-k-k^{\prime}}   )  }{   2^{2 k^{\prime}   }  \eta_n( 2^{-k} )  } \,=\, \underset{k \in \mathbb{N}}{\sup}\,     \sum_{k^{\prime}=1 }^{\infty}    \frac{1}{ 2^{  (2-w)k^{\prime} } }  \,\leq\, 1.
    \end{equation}
Next, we construct a $\delta$ that is a critical radius. Towards that end, we notice that as in Lemma 6 of \cite{padilla2024confidence}, it is enough to have $\delta$ satisfy
\begin{equation}
    \label{eqn:e30}
    \frac{1}{\sqrt{n}}\int_{ \delta^2/4 }^{\delta}    \sqrt{ \log  N \left(t/2,  \mathcal{F}_{\mathcal{A}_n} ,\left\| \cdot\right\|_n \right)   }    dt
 \,+\,   \frac{ \delta }{\sqrt{n}}\sqrt{  \log( 48/\delta^2 )}  \,\lesssim \, \delta^2.
\end{equation}
However, 
\begin{equation}
    \label{eqn:tf_entropy}
    \begin{array}{l}
    \displaystyle     \frac{1}{\sqrt{n}}\int_{ \delta^2/4 }^{\delta}    \sqrt{ \log  N \left(t/2,  \mathcal{F}_{\mathcal{A}_n} ,\left\| \cdot\right\|_n \right)   }    dt
 \,+\,   \frac{ \delta }{\sqrt{n}}\sqrt{  \log( 48/\delta^2 )} \\
    \displaystyle \leq  \frac{1}{\sqrt{n}}\int_{ 0}^{\delta}    \sqrt{ \log  N \left(t/2,  \mathcal{F}_{\mathcal{A}_n} ,\left\| \cdot\right\|_n \right)   }    dt
 \,+\,   \frac{ \delta_n }{\sqrt{n}}\sqrt{  \log( 48/\delta^2 )} \\
     \displaystyle \lesssim  \frac{1}{\sqrt{n}}\int_{ 0}^{\delta}    \sqrt{  \mathcal{A}_n^w  V^w t^{-w}  }    dt
 \,+\,   \frac{ \delta }{\sqrt{n}}\sqrt{  \log( 48/\delta^2 )} \\
      \displaystyle \lesssim  \frac{V^{w/2}  \mathcal{A}_n^{w/2}  \delta^{1-w/2} }{\sqrt{n}}  
 \,+\,   \frac{ \delta }{\sqrt{n}}\sqrt{  \log( 48/\delta^2 )}. 
\end{array}
\end{equation}
Hence, (\ref{eqn:e30}) holds for $\delta>0$ satisfying 
\begin{equation}
    \label{eqn:tf-cr}
    \delta^2 \,\asymp\,  n^{  - 2/(2+w)}  V^{2w/(2+w)} \mathcal{A}_n^{2w/(2+w)}. 
\end{equation}
 Thus, $\delta$ is a critical radius for $\mathcal{F}_{ \mathcal{A}_n}$.  Moreover, in this case, we also have that 
\[
 \frac{\eta_n( \delta)    }{n} \,\lesssim\, \delta^2.
\]
 Therefore, from Theorem \ref{thm1_v2}, we obtain that 
 \begin{equation}
    \label{eqn:tf_1}
    \max\{\| \bar{f}-  \hat{f}_{ \mathcal{A}_n } \|_n^2  ,   \| \bar{f}-  \hat{f}_{ \mathcal{A}_n } \|_{\mathcal L_2}^2  \}  \,=\, O_{ \mathbb{P} }\left(  r_n\right),
    \end{equation}
    where
    \[
   r_n\,:=\, \frac{ \max\{\mathcal{U}_n^2,\mathcal{A}_n\}  \mathcal{A}_n^{2w/(2+w)}  V^{2w/(2+w)  }}{ n^{  2/(2+w)}}. 
    \]
Next, we analyze the second stage estimator using Theorem \ref{theo2}. Let 
$\mathcal{F}_{\ell,\mathcal{B}_n}   \,:=\, \{ f_{\mathcal{B}_n}/(2\mathcal{B}_n)\,:\,  f \in \mathcal{F}_{\ell}\}$. Then, taking 
\[
\eta_{\ell,n}(\delta)\,:=\, K\mathcal{B}_n^w V_{\ell}^w \delta^{-w},
\]
proceeding as in (\ref{eqn:tf1}),  we  obtain that
\[
    	 \underset{     \frac{n \pi_{\ell}}{2}  \,\leq\,   n_{\ell} \,\leq \,  2 n \pi_{\ell}  }{\max} \,\,\,\underset{\{x_i \}_{i\in  I_{\ell} }  \subset [0,1]^d }{\sup}\,  \log N( \delta  , \mathcal{F}_{\ell, \mathcal{B}_n},  \|\cdot\|_{ I_{\ell} }  )\, \leq \, \eta_{\ell,n}(\delta).\\
\]
which shows (\ref{eqn:entropy_cond2}). Moreover,  we can verify (\ref{eqn:part3}) proceeding as in (\ref{eqn:ver1}), and (\ref{eqn:cond_sum_v4}) as in (\ref{eqn:tf3}).

Next, we proceed to find $\delta$ for which (\ref{eqn:part1}) and (\ref{eqn:part2}) hold. However, based on our previous calculations (\ref{eqn:part1}) and (\ref{eqn:part2})  are equivalent to 
\[
   \frac{1}{\sqrt{ n \pi_{\ell} }}\int_{ \delta^2/48 }^{\delta}    \sqrt{  \mathcal{B}_n^w V_{\ell}^w  t^{-w}   }    dt
    \,+\,   \frac{ \delta }{\sqrt{n \pi_{\ell} }}\sqrt{  \log( 48/\delta^2 )}      \,\lesssim\, \delta^2,
\]
and
\[
\frac{1}{\sqrt{ n \pi_{\ell} }}\int_{ \delta^2/48 }^{\delta}    \sqrt{\mathcal{A}_n^w V^w t^{-w}  }    dt
   \,+\,   \frac{ \delta }{\sqrt{n \pi_{\ell} }}\sqrt{  \log( 48/\delta^2 )}  \,\lesssim\, \delta^2, 
\]
respectively. Hence,  (\ref{eqn:part1}) and (\ref{eqn:part2}) hold for a choice of $\delta$ satisfying 
\[
\delta^2 \,\asymp \, (n\pi_{\ell})^{  - 2/(2+w)} \left[  V^{2w/(2+w)} \mathcal{A}_n^{2w/(2+w)}\,+\,  V_{\ell}^{2w/(2+w)} \mathcal{B}_n^{2w/(2+w)}   \right].
\]
Therefore, the claim follows from Theorem \ref{theo2}. 

\end{proof}

\paragraph{Proof of Corollary~\ref{app:tf-cor2}}

\begin{proof}
{The statement of Corollary~\ref{app:tf-cor2} follows from
Corollary~\ref{app:tf-cor1}, since Assumption~\ref{as4} is satisfied for the trend filtering
regularizer. In particular, by Corollary~1 of~\cite{sadhanala2019additive},
Assumption~\ref{as4} holds with \(w=1/r\) for
\(R(f)=\mathrm{TV}(f^{(r-1)})\), corresponding to \(k=r-1\) in their
notation.}
\end{proof}

\subsection{Orthogonal Series Regression}
\label{app:krr}

We now apply Theorem~\ref{thm4} to orthogonal series estimator, instantiating the function classes $\mathcal{F}$ and $\mathcal{F}_\ell$ as Sobolev ellipsoids on $[0,1]^d$. This setting recovers, up to logarithmic factors, the two-task transfer learning rate established in~\cite{wang2016nonparametric}, whose offset-based decomposition (Model~4.1 of~\cite{wang2016nonparametric}) parallels our two-stage procedure.

\paragraph{Setup.}
Following the set up in~\cite{wang2016nonparametric}, we specialize the model in~(\ref{eq:main_model}) to $L = 2$ groups, with group $\ell = 1$ playing the role of the target. The data consist of $n$ i.i.d.\ copies of $(X, Y, Z)$, partitioned into the target group $\mathcal{I}_1 := \{i : z_i = 1\}$ of size $n_1 := |\mathcal{I}_1|$ and the source group $\mathcal{I}_2 := \{i : z_i = 2\}$ of size $n - n_1$. Following~(\ref{eq:g_l}), the target-group conditional mean function decomposes as
\[
g_{0,1}(x) := \mathbb{E}[Y \mid X = x, Z = 1] = \bar{f}(x) + G_1(x),
\]
where 
\[
\bar{f}(x) := \mathbb{E}[Y \mid X = x] \quad \text{and} \quad G_1(x) := g_{0,1}(x) - \bar{f}(x)
\]
denote the overall mean and the target-specific offset, respectively, as introduced in Section~\ref{sec:methodology}. Throughout this subsection, we work under the independent two-stage estimation setting of Theorem~\ref{thm4}: the Stage-1 estimator $\hat{f}$ is constructed using a sample independent of the one used in Stage~2.

According to~\cite{wang2016nonparametric}, let ${L}_2([0,1]^d)$ denote the space of square-integrable functions on $[0,1]^d$ equipped with the inner product $\langle f, g \rangle = \int_{[0,1]^d} f(x) g(x)\, dx$. Let $\{\varphi_j\}_{j \in \mathbb{Z}}$ be an orthonormal basis of ${L}_2([0,1])$ with $\sup_j \|\varphi_j\|_\infty \leq C_\varphi$ for some constant $C_\varphi > 0$. For example, we can consider the cosine basis in the following:
\[
\varphi_j(x) = 
\begin{cases} 
1, & j = 0, \\[2pt]
\sqrt{2}\,\cos(j\pi x), & j \geq 1, 
\end{cases}
\]
where the index $j$ orders the basis functions by frequency and 
$\sup_j\|\varphi_j\|_\infty \leq C_\varphi$ with $C_\varphi = \sqrt{2}$. The tensor product basis is defined as $\{\varphi_\alpha\}_{\alpha \in \mathbb{Z}^d}$, where
\[
\varphi_\alpha(x) = \prod_{w=1}^d \varphi_{\alpha_w}(x_w), \qquad \alpha = (\alpha_1, \ldots, \alpha_d) \in \mathbb{Z}^d,
\]
forms an orthonormal basis of ${L}_2([0,1]^d)$, so every $f \in {L}_2([0,1]^d)$ admits the expansion $f = \sum_{\alpha} a_\alpha(f)\, \varphi_\alpha$ with $a_\alpha(f) := \langle \varphi_\alpha, f \rangle$.

\begin{definition}[Sobolev Ellipsoid]
\label{def:sobolev}
Fix smoothness $s > 0$, radius $A > 0$, and uniform bound $f_{\max} > 0$, where $A$ and $f_{\max}$ are taken to be common across all smoothness levels $s$ for simplicity. The (isotropic) Sobolev ellipsoid on $[0,1]^d$ is
\[
\mathcal{W}^{s}
:= \left\{
f \in {L}_2([0,1]^d) :
\sum_{\alpha \in \mathbb{Z}^d} a_\alpha(f)^2\, \kappa_s^2(\alpha) \leq A^2,\ 
\|f\|_\infty \leq f_{\max}
\right\},
\]
where
\[
\kappa_s^2(\alpha) := \sum_{w=1}^d |\alpha_w|^{2s}.
\]
\end{definition}
The quantity $\kappa_s^2(\alpha)$ assigns larger weights to higher-frequency basis functions. Hence the constraint $
\sum_{\alpha \in \mathbb{Z}^d} a_\alpha(f)^2 \kappa_s^2(\alpha) \leq A^2 $ forces the coefficients $a_\alpha(f)$ corresponding to high-frequency basis functions to decay sufficiently fast. In this sense, larger values of $s$ correspond to smoother functions.

For each smoothness level $s>0$ and threshold $t>0$, define the set
\[
M_s(t) := \Bigl\{\alpha \in \mathbb{Z}^d : \kappa_s^2(\alpha) \leq t \Bigr\}, 
\]
which collects all multi-indices whose Sobolev weight does not exceed $t$. 
We then write $J := |M_s(t)|$ for the cardinality of $M_s(t)$, which stands for the number of basis 
functions retained at threshold $t$.

\textbf{Function Classes.} We assume that both the overall mean function and the target offset belong to Sobolev ellipsoids:
\[
\bar{f} \in \mathcal{W}^{\tau}, \qquad G_1 \in \mathcal{W}^{\nu}.
\]
The smoothness levels $\tau,\nu>0$ are allowed to differ. In particular, $\nu>\tau$ corresponds to the regime in which the offset $G_1$ is smoother than the overall mean $\bar f$, which is the setting where transfer learning is expected to be most beneficial.

Now consider the Stage-1 estimator. Define the Stage-1 threshold $t_0>0$ and let 
$J_0 := |M_\tau(t_0)|$ denote the number of basis functions 
retained. For each 
$\alpha\in M_\tau(t_0)$, let $\theta_\alpha\in\mathbb{R}$ be the scalar 
coefficient associated with the basis function $\varphi_\alpha$; collecting 
these $J_0$ coefficients into the vector 
$\theta=(\theta_\alpha)_{\alpha\in M_\tau(t_0)}\in\mathbb{R}^{J_0}$, 
we estimate the Stage-1 target function $\bar f$ using $\hat{f}\in \mathcal{F}_{t_0}$, where \[
\mathcal{F}_{t_0}
:=
\Bigl\{
\sum_{\alpha\in M_\tau(t_0)} \theta_\alpha \varphi_\alpha :
\ \|\theta\|_2\le R_0
\Bigr\},
\]
with a fixed $R_0 \in \mathbb{R}_+$, and 
\[
\hat f := \arg\min_{f\in\mathcal{F}_{t_0}} \sum_{i=1}^n \bigl(y_i - f(x_i)\bigr)^2.
\]
Here $\mathcal{F}_{t_0}$ consists of 
all linear combinations of the $J_0$ lowest-frequency tensor-product basis 
functions selected under the Sobolev weight corresponding to 
the smoothness $\tau$. 
Analogously, for the Stage-2 estimator, let $t_1>0$ and $J_1 := |M_\nu(t_1)|$. For each 
$\alpha\in M_\tau(t_1)$, let $\gamma_\alpha\in\mathbb{R}$ be the scalar 
coefficient associated with the basis function $\varphi_\alpha$; collecting 
these $J_1$ coefficients into the vector 
$\gamma=(\gamma_\alpha)_{\alpha\in M_\tau(t_1)}\in\mathbb{R}^{J_1}$, 
we estimate the Stage-2 target function $G_1$ using $\hat{f}_1\in \mathcal{G}_{t_1}$, where \[
\mathcal{G}_{t_1}
:=
\Bigl\{
\sum_{\alpha\in M_\nu(t_1)} \gamma_\alpha \varphi_\alpha :
\ \|\gamma\|_2\le R_1
\Bigr\},
\]
with a fixed $R_1 \in \mathbb{R}_+$, and 
\[
\hat f_1 := \arg\min_{f\in\mathcal{G}_{t_1}} 
\sum_{i:\,z_i=1} \bigl(y_i - \hat f_{\mathcal{A}_n}(x_i) - f(x_i)\bigr)^2,
\]
where we focus on group $1$ for simplicity. The final estimator would be
\[
	   \hat{g}_{1}(x) \coloneqq  \hat{f}_{\mathcal{A}_n}(x)   +  \hat{f}_{1,\mathcal{B}_n}(x).
\]
Now we are ready to state our results.

\begin{corollary}
\label{cor:krr}
{Suppose that $\mathbb{P}(Z = \ell \mid X = x) = \pi_\ell \in (0,1)$ for all $x \in [0,1]$ and $\ell \in \{1,2\}$}, that $\bar{f} \in \mathcal{W}^\tau$ and $G_1 \in \mathcal{W}^\nu$. Assume further that the two stages are estimated on independent samples as in Theorem~\ref{thm4}, {with truncation levels $\mathcal{A}_n = \mathcal{B}_n = f_{\max}$. Suppose that $\mathcal{U}_n$
grows at most in $O(\mathrm{poly}(\log n))$.} With
\[
J_0 \asymp \left(\frac{n}{\log n}\right)^{d/(2\tau + d)},
\qquad
J_1 \asymp \left(\frac{n_1}{\log n}\right)^{d/(2\nu + d)},
\]
{and $n_1 \asymp n\pi_1$}, the transfer learning estimator $\hat{g}_1$ defined in~(\ref{eqn:fnestimator}) satisfies
\[
\|g_{0,1} - \hat{g}_1\|_{\mathcal{L}_2}^2 
= O_{\mathbb{P}}\!\left(
{n}^{-\frac{2\tau}{2\tau + d}}
+ {n_1}^{-\frac{2\nu}{2\nu + d}}
\right),
\]
{up to logarithmic factors.}
\end{corollary}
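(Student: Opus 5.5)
The plan is to apply Theorem~\ref{thm4} with $L=2$, $\ell=1$, $\underline{\pi}_1=\overline{\pi}_1=\pi_1$. This requires three ingredients, each specialized to the Sobolev sieves $\mathcal{F}_{t_0}$ and $\mathcal{G}_{t_1}$:
\begin{itemize}
\item a first-stage rate $r_n$ obtained from Theorem~\ref{thm1_v2};
\item a second-stage approximation error $\phi_{1,n}$;
\item an entropy function $\eta_{1,n}$ together with a radius $\delta$ for the second stage.
\end{itemize}
Since $\mathcal{A}_n=\mathcal{B}_n=f_{\max}$ are constants and $\mathcal{U}_n=O(\mathrm{poly}(\log n))$, every factor involving $\mathcal{A}_n,\mathcal{B}_n,\mathcal{U}_n$ is logarithmic and can be absorbed. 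In particular the extra term $\mathcal{A}_n^2\log n/(n\pi_1)$ is $O(\log n/n_1)$, which is dominated by $n_1^{-2\nu/(2\nu+d)}$.

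\emph{Approximation.} For $\bar f\in\mathcal{W}^\tau$, take $\tilde f$ to be the truncated expansion $\sum_{\alpha\in M_\tau(t_0)}a_\alpha(\bar f)\varphi_\alpha$. Its coefficient vector has $\ell_2$ norm at most $\|\bar f\|_{L_2}\le f_{\max}$, so it lies in $\mathcal{F}_{t_0}$ once $R_0\ge f_{\max}$. The theorems need a sup-norm bound, not only an $L_2$ bound. I would obtain it from $\|\varphi_\alpha\|_\infty\le C_\varphi^d$ and Cauchy--Schwarz on the tail:
\[
\|\bar f-\tilde f\|_\infty\le C_\varphi^d\sum_{\kappa_\tau^2(\alpha)>t_0}|a_\alpha|\le C_\varphi^d A\Bigl(\sum_{\kappa_\tau^2(\alpha)>t_0}\kappa_\tau^{-2}(\alpha)\Bigr)^{1/2}.
\]
Here I expect the main obstacle. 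Using $|M_\tau(t)|\asymp t^{d/(2\tau)}$, the tail sum is finite only when $2\tau>d$. Even then it gives $\sqrt{\phi_n}\asymp J_0^{-\tau/d+1/2}$, which is worse than the $L_2$ rate $J_0^{-\tau/d}$.

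My first attempt would avoid the sup-norm condition altogether. The only places the proofs of Theorems~\ref{thm1_v2}, \ref{theo2} and \ref{thm4} use $\|\cdot\|_\infty$ for the approximant are the basic-inequality terms $\|G_1-\widetilde G_1\|_{\mathcal{I}_1}^2$ and $\|\bar f-\tilde f\|_{\mathcal{I}_1}^2$. For a \emph{fixed} function these empirical norms concentrate around the $\mathcal{L}_2$ norm with a bounded summand, by Bernstein's inequality exactly as in (\ref{eqn:event_new}). So one may take $\phi_n\asymp J_0^{-2\tau/d}$ and $\phi_{1,n}\asymp J_1^{-2\nu/d}$, up to $\log n/n_1$ additive terms. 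The truncation lemma (Lemma~\ref{lem3}) still holds because $\mathcal{B}_n=f_{\max}$ and $\|G_1\|_\infty\le f_{\max}$, after a routine adjustment of the constant.

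\emph{Entropy.} $\mathcal{F}_{t_0}$ is a ball of radius $R_0$ in a $J_0$-dimensional linear space whose functions are bounded by $C_\varphi^d\sqrt{J_0}R_0$. Truncation does not increase covering numbers, so a standard volumetric argument gives
\[
\eta_n(\delta)\asymp J_0\log(\mathrm{poly}(n)/\delta),
\]
and likewise $\eta_{1,n}(\delta)\asymp J_1\log(\mathrm{poly}(n)/\delta)$. Conditions (\ref{eqn:cond_sum_v2}) and (\ref{eqn:cond_sum_v4}) then hold exactly as for the VC bound in the proof of Theorem~\ref{theo3}, since $\eta$ is linear in $\log(1/\delta)$. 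The exponential-sum conditions hold because $J_0,J_1\to\infty$. The Dudley integral in (\ref{eqn:part1}) yields critical radii
\[
\delta_n^2\asymp J_0\log n/n,\qquad \delta^2\asymp J_1\log n/n_1.
\]
Condition (\ref{eqn:part2}) is not needed, by independence. This gives
\[
r_n\lesssim J_0^{-2\tau/d}+J_0\log n/n,
\]
and the second-stage error
\[
J_1^{-2\nu/d}+J_1\log n/n_1.
\]

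\emph{Balancing.} The stated choices of $J_0$ and $J_1$ equate the two terms in each bound. They give $(n/\log n)^{-2\tau/(2\tau+d)}$ and $(n_1/\log n)^{-2\nu/(2\nu+d)}$ respectively, and since $n_1\asymp n\pi_1$ holds on $\Omega_2(1)$, Theorem~\ref{thm4} yields the claim up to logarithms. If the $L_2$ workaround for the approximation step turns out to be too delicate, the fallback is to assume $\tau,\nu>d/2$ and accept the weaker sup-norm approximation rate in that step.
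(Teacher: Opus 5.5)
Your skeleton is the paper's: volumetric entropy $J\log(\sqrt{J}/\delta)$ up to logarithms, critical radii $J_0\log n/n$ and $J_1\log n/n_1$ from the Dudley integral, Theorem~\ref{thm1_v2} for stage one, Theorem~\ref{thm4} with \eqref{eqn:part2} dropped by independence, then balancing. You have also correctly spotted something the paper's proof glosses over. It sets $\phi_n,\phi_{1,n}$ equal to the $\mathcal{L}_2$ projection errors of Proposition~\ref{prop:d_dim}, although both theorems assume sup-norm approximation.

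Your repair, however, mislocates where the sup-norm enters. Besides the basic-inequality terms, it is what yields $\widetilde G_1=\widetilde G_{1,\mathcal{B}_n}$ in Lemma~\ref{lem3}. In Theorem~\ref{thm1_v2} it yields $\tilde f=\tilde f_{\mathcal{A}_n}$, which is what the condition $\mathcal{A}_n\ge 64\sqrt{\phi_n}$ is for. The proof of Lemma~\ref{lem3} is $\|\widetilde G_1\|_\infty\le\|G_1\|_\infty+\|G_1-\widetilde G_1\|_\infty$. So $\|G_1\|_\infty\le f_{\max}$ plus a change of constants gives nothing when only $\|G_1-\widetilde G_1\|_{\mathcal{L}_2}$ is small.

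This step cannot be skipped. Since $|\tilde y_i|\le\mathcal{B}_n$, truncating the comparison function lowers the right-hand side of the basic inequality, so you cannot pass to $\widetilde G_{1,\mathcal{B}_n}$ afterwards. Without that identity, $\hat f_{1,\mathcal{B}_n}-\widetilde G_1$ is not a difference of two elements of the truncated class, so neither \eqref{eqn:e24} nor the event $\Omega_4$ applies. For the raw partial sum with $\nu\le d/2$, nothing controls $\|\widetilde G_1\|_\infty$ beyond the generic $C_\varphi^d\sqrt{J_1}\|\gamma\|_2$. Raising $\mathcal{B}_n$ to that level turns $\mathcal{B}_n^2\delta^2$ into $J_1^2\log n/n_1$ and destroys the rate. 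Your Bernstein step has the same circularity: the summand $(G_1-\widetilde G_1)^2(x_i)$ is bounded only by the missing sup-norm. For a fixed function, Markov's inequality already suffices there.

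The missing ingredient is a uniformly bounded element of the sieve with the same $\mathcal{L}_2$ rate. One choice is the tensor-product de la Vall\'ee-Poussin mean of $G_1$, supported on a cube of side $\Lambda\asymp J_1^{1/d}$ inside $M_\nu(t_1)$ (Remark~\ref{rem:J_t_equiv}). Its coefficient multipliers lie in $[0,1]$ and equal $1$ for $\|\alpha\|_\infty\le\Lambda/2$. Hence:
\begin{itemize}
\item $\|\gamma\|_2\le f_{\max}$, so it lies in $\mathcal{G}_{t_1}$ once $R_1\ge f_{\max}$;
\item $\|G_1-\widetilde G_1\|_{\mathcal{L}_2}^2\lesssim\Lambda^{-2\nu}\asymp J_1^{-2\nu/d}$;
\item $\|\widetilde G_1\|_\infty\le 3^d f_{\max}$, so truncation invariance holds with a constant-order $\mathcal{B}_n$.
\end{itemize}
The same construction works for $\bar f$. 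Alternatively, localize around the bounded functions $G_1$ and $\bar f$ instead of the approximants, and control $\frac{1}{n_1}\sum_{i\in\mathcal{I}_1}\epsilon_i(G_1-\widetilde G_1)(x_i)$ by a second-moment bound. Either way, Theorem~\ref{thm1_v2} is imported from \cite{padilla2024confidence}, so its proof must be re-checked under the modified hypothesis rather than just cited.

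Your fallback does not prove the corollary: using the sup-norm rate as $\phi_n$ gives only $n^{-(2\tau-d)/(2\tau)}$. Under $\tau,\nu>d/2$ you need not accept that loss. The sup tail $J^{1/2-s/d}\to0$ already gives truncation invariance, and you can keep the $\mathcal{L}_2$ rate in the basic-inequality terms.

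Finally, both you and the paper take $\mathcal{A}_n=\mathcal{B}_n=f_{\max}$. This is incompatible with Lemma~\ref{lem2} and \eqref{eqn:bn_condition} once $\mathcal{U}_n\to\infty$. Poly-logarithmic truncation levels are needed, which costs only logarithmic factors.
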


\begin{remark}
\label{rem:krr}
The two terms in the rate correspond, respectively, to the Stage-1 error of estimating $\bar{f}$ from the full sample of size $n$, and the Stage-2 error of estimating the offset $G_1$ from the target-group sample of size $n_1$. To compare with the simplified rate of~\cite{wang2016nonparametric}, parameterize $n = n_1^{\lambda}$ for some $\lambda \geq 1$ — in their two-task setting, $\lambda$ captures the relative abundance of source data, with $\lambda = 1$ corresponding to balanced source and target sample sizes and $\lambda \to \infty$ corresponding to a much larger source pool. Substituting yields
\[
\|g_{0,1} - \hat{g}_1\|_{\mathcal{L}_2}^2 
= O_{\mathbb{P}}\!\left(
n_1^{-\frac{2\lambda \tau}{2\tau + d}}
+ n_1^{-\frac{2\nu}{2\nu + d}}
\right)
\]
up to logarithmic factors, which matches the simplified rate stated immediately after Theorem~4.3 of~\cite{wang2016nonparametric}. Consistent with Remark~\ref{rem2}, the pretraining estimator achieves a faster rate than the naive estimator using only target-group data — whose rate is $(\log n_1 / n_1)^{2\sigma/(2\sigma + d)}$ when $g_{0,1} \in \mathcal{W}^\sigma$ — whenever $\bar{f}$ and $G_1$ are smoother than $g_{0,1}$ itself (i.e., $\tau, \nu > \sigma$, with $\lambda$ large enough).
\end{remark}

\subsection{Proof of Corollary~\ref{cor:krr}}
We first establish the approximation rate for $d$-dimensional Sobolev ellipsoids by sieve classes. At the beginning, we state a standard result for approximating a generic function
$f$ in a one-dimensional Sobolev ellipsoid. The purpose of the lemma is to bound the
projection error incurred by keeping only the first $J$ low-frequency basis functions.

\begin{lemma}[One-Dimensional Approximation by Sobolev Sieves (Lemma~8.4 in~\cite{wasserman2006all})]
\label{lem:series-approx-1d}
Consider $f\in\mathcal{W}^s$. Let $d=1$, so that the multi-index $\alpha$ reduces to a scalar index
$\alpha\in\mathbb Z$, and $\kappa_s^2(\alpha)=|\alpha|^{2s}.$ Let $M_s(t)\subset\mathbb Z$ denote the indices of the $J$ one-dimensional basis functions having $J := |M_\tau(t)|$, and define the projection
\[
f_J:=\sum_{\alpha\in M_s(t)} a_\alpha(f)\varphi_\alpha.
\]
If $\sum_{\alpha\in\mathbb Z}|\alpha|^{2s}a_\alpha(f)^2\le C,$ then
\[
\|f-f_J\|_{\mathcal{L}_2}^2
=
\sum_{\alpha\notin M_s(t)}a_\alpha(f)^2
\le CJ^{-2s}.
\]
\end{lemma}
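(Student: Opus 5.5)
The plan is to exploit Parseval's identity for the orthonormal basis $\{\varphi_\alpha\}$ of $L_2([0,1])$. Since $M_s(t)$ is a set of indices and $f_J$ is the orthogonal projection of $f$ onto $\mathrm{span}\{\varphi_\alpha:\alpha\in M_s(t)\}$, we get $f-f_J=\sum_{\alpha\notin M_s(t)}a_\alpha(f)\varphi_\alpha$. By orthonormality this gives the equality $\|f-f_J\|_{\mathcal L_2}^2=\sum_{\alpha\notin M_s(t)}a_\alpha(f)^2$, with $\mathcal L_2$ taken with respect to Lebesgue measure on $[0,1]$, as in the inner product of the setup. Convergence of the series in $L_2$ is guaranteed because $f\in L_2$ has square-summable coefficients.

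For the inequality, I use the defining property of $M_s(t)$: every $\alpha\notin M_s(t)$ satisfies $|\alpha|^{2s}>t$. Hence
\[
\sum_{\alpha\notin M_s(t)}a_\alpha(f)^2\le t^{-1}\sum_{\alpha\notin M_s(t)}|\alpha|^{2s}a_\alpha(f)^2\le C/t .
\]
It then remains to relate $t$ to $J=|M_s(t)|=|\{\alpha:|\alpha|\le t^{1/(2s)}\}|$. For the cosine basis (indices $\alpha\ge 0$), $J=\lfloor t^{1/(2s)}\rfloor+1$. For a two-sided index set, $J\le 2t^{1/(2s)}+1$. In either case $J\lesssim t^{1/(2s)}$ once $t\ge1$, that is, $t\gtrsim J^{2s}$, which yields $\|f-f_J\|^2\lesssim CJ^{-2s}$.

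The main obstacle is bookkeeping rather than analysis, and there are two points. First, recovering the constant exactly $C$ rather than $C$ times a constant depending on $s$. I would instead bound directly with the largest omitted-index threshold. Choosing $J$ so that $M_s(t)=\{\alpha:|\alpha|<J\}$ (one-sided indexing), every omitted $\alpha$ has $|\alpha|\ge J$, so $|\alpha|^{2s}\ge J^{2s}$ and the sum is at most $CJ^{-2s}$ exactly. For two-sided indexing, absorb the factor $2^{2s}$ into the constant, as is standard, citing Lemma 8.4 of \cite{wasserman2006all}. Second, the statement's notation $J:=|M_\tau(t)|$ should be read as $|M_s(t)|$.
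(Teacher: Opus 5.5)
Your proposal is correct and follows essentially the same route as the paper. The paper only cites \cite{wasserman2006all} for this lemma, but its proof of Proposition~\ref{prop:d_dim} is exactly your argument: bound the tail by $C/t$ because $\kappa_s^2(\alpha)>t$ off $M_s(t)$, then count indices to get $t\gtrsim J^{2s}$. Your direct bound via $|\alpha|\ge J$ is what recovers the exact constant $C$ under one-sided (cosine) indexing. You are also right to flag the two-sided case, where the literal constant $C$ genuinely fails and a factor such as $2^{2s}$ is needed: for example, $f=\varphi_2$, $t=1$, $J=3$ gives tail $1>(2/3)^{2s}=CJ^{-2s}$.
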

Now we extend this result to $d$-dimension.
\begin{proposition}
\label{prop:d_dim}
Let $f:[0,1]^d\to\mathbb R$ be a function in $\mathcal W^s$, and let
$M_s(t)$ be the set of indices defined earlier, which contains the $J$
multi-indices with the Sobolev weight
$\kappa_s^2(\alpha)$ truncated by $t$. For all $J$ sufficiently large, it holds that
\[
\inf_{\tilde{f} \in \mathcal{F}_t} \|f - \tilde{f}\|_{\mathcal{L}_2}^2 = \sum_{\alpha \notin M_s(t)} a_\alpha(f)^2 \leq C \cdot J^{-2s/d},
\]
with $\alpha \in \mathbb{Z}^d$ for some constant $C$ not depending on $J$.
\end{proposition}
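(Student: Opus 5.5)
The plan is to reduce the $d$-dimensional bound to a tail-sum estimate over the complement of $M_s(t)$, using the Sobolev constraint directly. First, the infimum over $\mathcal{F}_t$ is attained by the orthogonal projection $f_J=\sum_{\alpha\in M_s(t)}a_\alpha(f)\varphi_\alpha$. The $\varphi_\alpha$ are orthonormal in $L_2([0,1]^d)$, so Parseval gives
\[
\|f-\tilde f\|_{\mathcal L_2}^2=\sum_{\alpha\in M_s(t)}(a_\alpha(f)-\theta_\alpha)^2+\sum_{\alpha\notin M_s(t)}a_\alpha(f)^2 ,
\]
and this is minimized at $\theta_\alpha=a_\alpha(f)$. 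Admissibility of this $\theta$ requires $\|\theta\|_2\le\|f\|_{\mathcal L_2}\le f_{\max}$, so I would assume the radius satisfies $R\ge f_{\max}$. (The $\mathcal L_2$ norm here is taken with respect to Lebesgue measure, as in the series setup.) This yields the stated equality.

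Second, I would bound the tail using the defining property of $M_s(t)$: every $\alpha\notin M_s(t)$ has $\kappa_s^2(\alpha)>t$. Hence
\[
\sum_{\alpha\notin M_s(t)}a_\alpha(f)^2\le t^{-1}\sum_{\alpha\notin M_s(t)}\kappa_s^2(\alpha)a_\alpha(f)^2\le A^2/t .
\]
This step mirrors the one-dimensional argument in Lemma~\ref{lem:series-approx-1d}.

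Third, I would relate $t$ to $J=|M_s(t)|$ with a lattice-point count. If $\kappa_s^2(\alpha)\le t$, then each $|\alpha_w|\le t^{1/(2s)}$. Conversely, if every $|\alpha_w|\le (t/d)^{1/(2s)}$, then $\kappa_s^2(\alpha)\le t$. So $M_s(t)$ is sandwiched between two cubes, which gives
\[
c\,t^{d/(2s)}\le J\le C\,t^{d/(2s)} .
\]
The lower bound needs $t$ large, which is where "$J$ sufficiently large" enters; the constants absorb the $\pm1$ boundary effects and the index range (over $\mathbb Z$ or $\mathbb N$). Inverting gives $t\gtrsim J^{2s/d}$, and hence the tail is at most $A^2 t^{-1}\le C J^{-2s/d}$, with $C$ depending only on $A,d,s$.

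The main obstacle is the counting step. The delicate direction is the upper bound $J\lesssim t^{d/(2s)}$, since it is the one that converts the tail bound $A^2/t$ into a power of $J$. I would handle it with the outer cube $\{|\alpha_w|\le t^{1/(2s)}\}$, whose cardinality is $(2t^{1/(2s)}+1)^d\le 3^d t^{d/(2s)}$ for $t\ge1$. Everything else is Parseval and a one-line weighted bound.
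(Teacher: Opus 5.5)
Your proposal is correct and follows the paper's proof: the weighted tail bound $\sum_{\alpha\notin M_s(t)}a_\alpha(f)^2\le A^2/t$, then the outer-cube count $J\le 3^d t^{d/(2s)}$, inverted to give $t\gtrsim J^{2s/d}$. The hypothesis $t\ge 1$ is secured by $J\ge 2$, because every nonzero integer $\alpha$ has $\kappa_s^2(\alpha)\ge 1$, so "$J$ sufficiently large" is really used for the upper bound rather than the lower one. Your Parseval justification of the equality, including the need for $R\ge f_{\max}$ and Lebesgue measure, supplies a step the paper leaves implicit, and your inner-cube lower bound matches Remark~\ref{rem:J_t_equiv} but is not needed for the proposition.
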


\begin{proof}
For $\alpha \notin M_s(t)$, $\kappa_s^2(\alpha) > t$, so
\begin{equation}
\label{eq:tail}
\sum_{\alpha \notin M_s(t)} a_\alpha(f)^2
\leq t^{-1} \sum_{\alpha \in \mathbb{Z}^d} a_\alpha(f)^2\, \kappa_s^2(\alpha)
\leq \frac{A^2}{t}.
\end{equation}
where $M_s(t) = \{\alpha : \sum_{w=1}^d |\alpha_w|^{2s} \leq t\}$, which forces $|\alpha_w| \leq t^{1/(2s)}$ for each dimension $w$. Hence $J \leq \left(2\lfloor t^{1/(2s)}\rfloor+1\right)^d \leq \left(3t^{1/(2s)}\right)^d$ for $t^{1/(2s)} \geq 1$, giving
\begin{equation}
\label{eq:S_lower}
t \geq \frac{J^{2s/d}}{3^{2s}}.
\end{equation}
The case $t^{1/(2s)} < 1$ is degenerate: since $\kappa_s^2(\alpha) \geq 1$ for every 
nonzero integer multi-index $\alpha$, it forces $M_s(t) = \{(0,\ldots,0)\}$ and $J = 1$, 
which is excluded in the regime of interest where $J \to \infty$. Combining \eqref{eq:tail} and \eqref{eq:S_lower} yields the claim.
\end{proof}
\begin{remark}
\label{rem:J_t_equiv}
The proof of Proposition~\ref{prop:d_dim} establishes the upper bound 
$J  \lesssim t^{d/(2s)}$. A matching lower bound $J \gtrsim t^{d/(2s)}$ follows analogously by counting 
the integer points in a box contained in $M_s(t)$. Let $\Lambda := \left\lfloor (t/d)^{1/(2s)} \right\rfloor.$ If $\alpha \in \mathbb Z^d$ satisfies $|\alpha_w| \leq \Lambda$ for all $w=1,\ldots,d$, then
\[
\sum_{w=1}^d |\alpha_w|^{2s}
\leq d \Lambda^{2s}
\leq d \cdot \frac{t}{d}
= t.
\]
Hence $\{-\Lambda,\ldots,\Lambda\}^d \subseteq M_s(t),$ and therefore
\[
J = |M_s(t)|
\geq (2\Lambda+1)^d
=
\left(2\left\lfloor (t/d)^{1/(2s)} \right\rfloor+1\right)^d.
\]
For $(t/d)^{1/(2s)} \geq 1$, this implies $J \gtrsim t^{d/(2s)}.$ Combining the two bounds gives $J \asymp t^{d/(2s)}.$ For simplicity we track the dimension $J$ rather than the truncation $t$.
\end{remark}
\paragraph{Approximation Error.}
By Proposition~\ref{prop:d_dim},
\[
\phi_n := \inf_{f \in \mathcal{F}_{t_0}} \|\bar{f} - f\|_{\mathcal{L}_2}^2 \lesssim J_0^{-2\tau/d},
\qquad
\phi_{1,n} := \inf_{g \in \mathcal{G}_{t_1}} \|G_1 - g\|_{\mathcal{L}_2}^2 \lesssim J_1^{-2\nu/d}.
\]

\paragraph{Empirical Covering Entropy.} 
Since $\mathcal{F}_{t_0}$ and $\mathcal{G}_{t_1}$ are finite-dimensional linear spans of bounded basis functions, each function is uniquely parameterized by its coefficient vector. Moreover, for any sample $\{x_i\}_{i=1}^n$ and any coefficient vectors $\theta, \theta'$,
\begin{align*}
\|f_\theta - f_{\theta'}\|_n^2 
= \frac{1}{n}\sum_{i=1}^n \Bigl(\sum_{\alpha\in M_\tau(t_0)}(\theta_\alpha-\theta'_\alpha)\varphi_\alpha(x_i)\Bigr)^2 
\leq \frac{1}{n}\sum_{i=1}^n \Bigl(\sum_{\alpha\in M_\tau(t_0)}(\theta_\alpha-\theta'_\alpha)^2\Bigr)\Bigl(\sum_{\alpha\in M_\tau(t_0)}\varphi_\alpha(x_i)^2\Bigr) 
\leq J_0\,C_\varphi^{2d}\,\|\theta-\theta'\|_2^2,
\end{align*}
where the first inequality follows from the Cauchy--Schwarz inequality and the second uses the uniform bound $\sup_j \|\varphi_j\|_\infty \leq C_\varphi$. Equivalently, 
$\|f_\theta - f_{\theta'}\|_m \leq C_\varphi^{2d}\sqrt{J_0}\,\|\theta - \theta'\|_2$. 
The same bound holds for $\mathcal{G}_{t_1}$ with $J_1$ in place of $J_0$. 
Consequently, any $(\delta/(C_\varphi^{2d}\sqrt{J_0}))$-cover of the Euclidean ball 
$B_2^{J_0}(R_0) := \{\theta \in \mathbb{R}^{J_0}: \|\theta\|_2 \leq R_0\}$ 
in $\mathbb{R}^{J_0}$ induces a $\delta$-cover of $\mathcal{F}_{t_0}$ under $\|\cdot\|_n$:
\[
N(\delta, \mathcal{F}_{t_0}, \|\cdot\|_n) 
\leq N\!\left(\delta/(C_\varphi^{2d}\sqrt{J_0}),\; B_2^{J_0}(R_0),\; \|\cdot\|_2\right).
\]
Combined with the standard covering bound 
$N(\eta, B_2^J(R), \|\cdot\|_2) \leq (3R/\eta)^J$ (Corollary~4.2.11 of \cite{Vershynin_2026}), we obtain
\begin{equation}\label{eq:entropy-bound}
\log N(\delta, \mathcal{F}_{t_0}, \|\cdot\|_m) \lesssim J_0 \log(C\sqrt{J_0}/\delta), 
\quad 
\log N(\delta, \mathcal{G}_{t_1}, \|\cdot\|_m) \lesssim J_1 \log(C\sqrt{J_1}/\delta).
\end{equation}
% ^^^ MODIFIED: J_0 (resp. J_1) now appears inside the log argument
% (via the \sqrt{J_0} factor) rather than only as the leading coefficient.

\paragraph{Verification of Conditions~(\ref{eqn:entropy_0_v2}) and~(\ref{eqn:cond_sum_v2}) for Stage~1.}

By the entropy bound established above, we may take the majorant
\[
\eta_n(\delta) :=\ c_0\, J_0 \log(C \sqrt{J_0}  / \delta),
\]
{under our assumption $\mathcal{A}_n = O(1)$}, which satisfies $\log N(\delta, \mathcal{F}_{t_0, \mathcal{A}_n}, \|\cdot\|_n) \leq \eta_n(\delta)$ as required by Theorem~\ref{thm1_v2}.

\smallskip
\noindent\textit{Step 1: Verification of~(\ref{eqn:entropy_0_v2}).} Under increasing $\sqrt{J_0}$ with $n$, $\log(C\sqrt{J_0}) > 0$, so for any $k \geq 0$ and $k' \geq 1$,
\[
\eta_n(2^{-k-k'-1}) = c_0 J_0\bigl[(k+k'+1)\log 2 + \log(C\sqrt{J_0})\bigr] \geq c_0 J_0 (k+k'+1) \log 2.
\]
Hence for any $C_1 > 0$,
\[
\sum_{k=0}^\infty \sum_{k'=1}^\infty \exp\bigl(-C_1\, \eta_n(2^{-k-k'-1})\bigr)
\leq \sum_{k=0}^\infty \sum_{k'=1}^\infty \rho^{\,k+k'+1}
= \rho \cdot \frac{1}{1-\rho} \cdot \frac{\rho}{1-\rho}
= \frac{\rho^2}{(1-\rho)^2} \to 0
\]
as $J_0 \to \infty$, where $\rho := \exp(-C_1 c_0 J_0 \log 2) \in (0,1)$. The second summation in~(\ref{eqn:entropy_0_v2}) is bounded analogously:
\[
\sum_{k=0}^\infty \exp\bigl(-C_2\, \eta_n(2^{-k-1})\bigr)
\leq \sum_{k=0}^\infty \rho'^{\,k+1}
= \frac{\rho'}{1 - \rho'} \to 0,
\]
where $\rho' := \exp(-C_2 c_0 J_0 \log 2) \in (0,1)$. Together with the noise tail assumption $\mathbb{P}(\|\epsilon\|_\infty > \mathcal{U}_n) \to 0$, condition~(\ref{eqn:entropy_0_v2}) is verified.

\noindent\textit{Step 2: Verification of~(\ref{eqn:cond_sum_v2}).} For $k \geq 1$ and $k' \geq 1$, since $\log(C\sqrt{J_0}) > 0$,
\[
\frac{\eta_n(2^{-k-k'})}{\eta_n(2^{-k})} = \frac{(k+k')\log 2 + \log(C\sqrt{J_0})}{k\log 2 + \log(C\sqrt{J_0})} \leq \frac{k+k'}{k}.
\]
Therefore
\[
\sup_{k \geq 1} \sum_{k'=1}^\infty \frac{\eta_n(2^{-k-k'})}{2^{2k'}\, \eta_n(2^{-k})}
\leq \sup_{k \geq 1} \sum_{k'=1}^\infty \frac{k+k'}{2^{2k'}\, k}
= \sup_{k \geq 1}\!\left[\sum_{k'=1}^\infty \frac{1}{4^{k'}} + \frac{1}{k}\sum_{k'=1}^\infty \frac{k'}{4^{k'}}\right]
= \sup_{k \geq 1}\!\left[\frac{1}{3} + \frac{4}{9k}\right]
= \frac{7}{9} < 1,
\]
where we used the geometric-series identities $\sum_{k'\geq 1} 4^{-k'} = 1/3$ and $\sum_{k'\geq 1} k' \cdot 4^{-k'} = 4/9$. The case $k = 0$ is handled identically. This verifies~(\ref{eqn:cond_sum_v2}).

\paragraph{Verification of Condition~(\ref{eqn:part1}) for Stage~2.}

Under the independent two-stage estimation setting of Theorem~\ref{thm4}, condition~(\ref{eqn:part2}) is replaced by the independence assumption between the data used in Stages~1 and~2, so only condition~(\ref{eqn:part1}) on the Stage-2 requires verification.

{Consider $m\in[n\pi_\ell/2, 2n\pi_\ell]$}. By the entropy bound for $\mathcal{G}_{t_1}$ established above,
\[
\sqrt{\log N(t/4, \mathcal{G}_{t_1, \mathcal{B}_n}, \|\cdot\|_m)} \lesssim \sqrt{J_1 \log(C\sqrt{J_1} / t)}.
\]
Therefore, the Stage-2 complexity integral satisfies
\[
\frac{1}{\sqrt{m}} \int_{\delta^2/48}^{\delta} \sqrt{\log N(t/4, \mathcal{G}_{t_1, \mathcal{B}_n}, \|\cdot\|_m)}\, dt
\lesssim \frac{\sqrt{J_1}}{\sqrt{m}} \int_{\delta^2/48}^{\delta} \sqrt{\log(C\sqrt{J_1} / t)}\, dt.
\]
Since $t \mapsto \sqrt{\log(C\sqrt{J_1} / t)}$ is decreasing on $(0, C\sqrt{J_1})$,
\[
\int_{\delta^2/48}^{\delta} \sqrt{\log(C\sqrt{J_1} / t)}\, dt \lesssim \delta\, \sqrt{\log(C\sqrt{J_1} / \delta^2)}.
\]
Hence
\[
\frac{1}{\sqrt{m}} \int_{\delta^2/48}^{\delta} \sqrt{\log N(t/4, \mathcal{G}_{t_1, \mathcal{B}_n}, \|\cdot\|_m)}\, dt + \frac{\delta}{\sqrt{m}} \sqrt{c_1 \log(48/\delta^2)} \lesssim \frac{\delta}{\sqrt{m}} \left(\sqrt{J_1 \log(C\sqrt{J_1} / \delta^2)} + \sqrt{\log(1/\delta^2)}\right).
\]
Thus condition~(\ref{eqn:part1}) holds whenever $\delta^2 \gtrsim J_1 \log(C\sqrt{J_1} / \delta^2)/(n\pi_{\ell})$. {Under $n_1 \asymp n \pi_1$}, the choice
\[
\delta^2 \asymp \frac{J_1 \log (J_1 \cdot n)}{n_1}
\]
suffices.

\paragraph{Combining the Two Stages.}
By Theorem~\ref{thm1_v2}, applied to the Stage-1 sieve class $\mathcal{F}_{t_0}$, the Stage-1 error satisfies
\[
r_n \lesssim J_0^{-2\tau/d} + \frac{J_0 \log (J_0 n)}{n}
\]
up to polylogarithmic factors. Applying Theorem~\ref{thm4} with 
$\phi_{1,n}\lesssim J_1^{-2\nu/d}$ and effective target sample size 
$n_1\asymp n\pi_1$,
\[
\|g_{0,1}-\hat g_1\|_{\mathcal L_2}^2
=
O_{\mathbb P}\!\left(
J_0^{-2\tau/d}
+
\frac{J_0\log (J_0 n)}{n}
+
J_1^{-2\nu/d}
+
\frac{J_1\log (J_1 n)}{n_1}
\right)
\]
up to polylogarithmic factors. Balancing bias and variance in each stage 
yields the choices of $J_0$ and $J_1$ stated in 
Corollary~\ref{cor:krr}, and substituting back gives the stated rate.

\section{Illustrative Examples to Interpret the Theoretical Results}
\label{app:illustrative_examples}
We provide three illustrative examples to complement the theoretical results in the main text. The first example shows that the two-stage transfer learning procedure can yield estimation targets that are considerably simpler than direct estimation of the group-specific functions. The second example instantiates the convergence rate $\phi_n + \phi_{\ell,n}$ under a concrete compositional structure with 10-dimensional input. The third example compares convergence rates across four estimators under different regimes of the group proportion $\pi_\ell$, illustrating the advantage of NN with TL over classical methods.
\subsection{Two-Stage Targets Can Be Simpler Than Direct Estimation}
We provide a concrete example illustrating that, under the hierarchical compositional structure,
both the shared and group-specific functions can individually be simple, yet their sum is complex,
thus yielding simpler estimation targets in the two-stage transfer learning procedure. Consider $L = 2$ groups with proportions $\pi_1 + \pi_2 = 1$ and group-specific regression functions
\[
g_{0,1}(x_1, \ldots, x_5) = f_0(x_1, x_2) + f_{0,1}(x_3, x_4, x_5), \quad
g_{0,2}(x_1, \ldots, x_5) = f_0(x_1, x_2) + f_{0,2}(x_3, x_4, x_5).
\]
Each $g_{0,\ell}$ depends on all 5 variables, so direct estimation incurs rates governed by dimension 5. Now suppose the group deviation functions satisfy
\[
\pi_1 f_{0,1}(x_3, x_4, x_5) + \pi_2 f_{0,2}(x_3, x_4, x_5) \approx c
\]
for some constant $c$. Then the first-stage target
\[
\bar{f} = f_0(x_1, x_2) + \pi_1 f_{0,1} + \pi_2 f_{0,2} \approx f_0(x_1, x_2) + c
\]
effectively depends only on 2 variables, yielding faster first-stage convergence rates. The second-stage offset for group 2,
\[
G_2 = \pi_2(f_{0,2} - f_{0,1}),
\]
depends on 3 variables and becomes nearly zero as $\pi_2 \to 0$, making second-stage estimation easy. 

\subsection{Concrete Instantiation of Convergence Rates}
We provide a concrete example to illustrate the final convergence rate $\phi_n + \phi_{\ell,n}$
under specific settings. Consider $L = 2$ groups with proportions $\pi_1 + \pi_2 = 1$ and
10-dimensional input $x \in \mathbb{R}^{10}$. Both groups share the same compositional
structure as illustrated in Figures~\ref{fig:plot1} and~\ref{fig:plot2}, where the shared
components $h_1^{(1)}, h_2^{(1)}$ have input dimension $K = 2, 3, 3$ and smoothness
$p = 1.5, 1.5, 2.5$, respectively. The two groups differ only in the group-specific component
$h_{3,\mathrm{group}_\ell}^{(1)}$ (in red color), with $K = 2, p = 1.5$ for both groups.

\begin{figure}[H]
    \centering
    \includegraphics[width=0.8\textwidth]{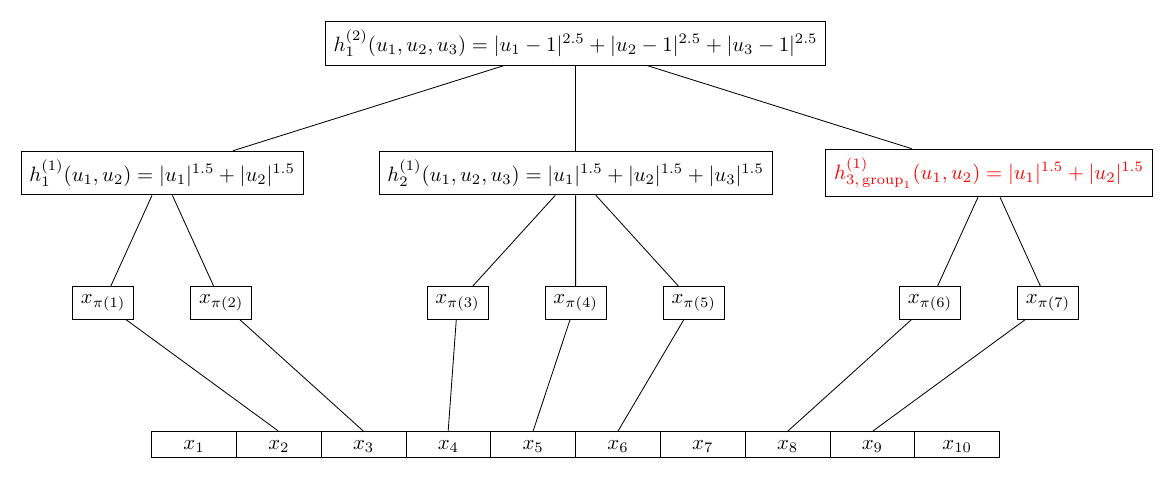}
    \caption{Illustration of the estimated group mean function $g_{0,1}(x)$ 
    for Group 1, where $x \in \mathbb{R}^{10}$.}
    \label{fig:plot1}
\end{figure}
\begin{figure}[H]
    \centering
    \includegraphics[width=0.8\textwidth]{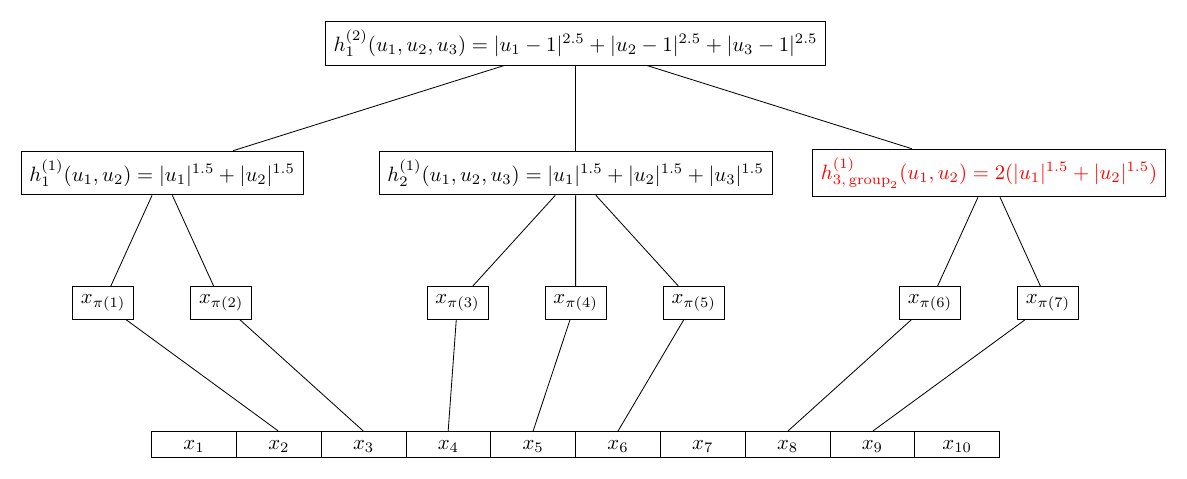}
    \caption{Illustration of the estimated group mean function $g_{0,2}(x)$ 
    for Group 2, where $x \in \mathbb{R}^{10}$.}
    \label{fig:plot2}
\end{figure}

Estimating $\bar{f} = \pi_1 g_{0,1} + \pi_2 g_{0,2}$ in the first stage preserves the same compositional
structure as $g_{0,1}$ and $g_{0,2}$, giving an upper bound
\[
\phi_n = \max_{(p,K) \in \{(1.5,\,2),\,(1.5,\,3),\,(2.5,\,3)\}} n^{-2p/(2p+K)} = n^{-1/2}.
\]
In the second stage, the offset $G_1 = \pi_2(g_{0,1} - g_{0,2})$ cancels all shared components,
reducing to a 2-layer model with a single branch, giving an upper bound
\[
\phi_{1,n} = \max_{(p,K) \in \{(1.5,\,2),\,(2.5,\,1)\}} (n\pi_1)^{-2p/(2p+K)} = (n\pi_1)^{-3/5}.
\]
The overall rate $n^{-1/2} + (n\pi_1)^{-3/5}$ depends only on the intrinsic dimensions of the
compositional structure, not on the ambient dimension $d = 10$. In contrast, standard estimators
that treat all 10 dimensions yield the slower rate $n^{-3/13} + (n\pi_1)^{-3/13}$, demonstrating
the advantage of our method in overcoming the curse of dimensionality.

\subsection{Benefit of Transfer Learning with DNNs over Classical Estimators}
\label{rem:tl_benefit_example}
We illustrate the benefit of transfer learning (TL) with DNNs in comparison with classical estimators, 
such as transfer learning with kernel smoothing~\cite{du2017hypothesis}, through a simple example applying Corollary~\ref{cor1}. Let $X\in[0,1]^{10}$ and $\ell\in\{1,2,3\}$ denote the group index. For simplicity, suppose that \(\bar{\pi}_\ell = \underline{\pi}_\ell = \pi_\ell\) for all \(\ell\). We observe 
\[Y_{i}=g_{0,\ell}(X_i)+\varepsilon_{\ell,i},\] and consider the additive decomposition
\[
g_{0,\ell}(x)=f_0(x)+f_{0,\ell}(x), \quad \ell=1,2,3,
\]
with shared component $f_0(x)=|x_1+\cdots+x_{10}-1/2|$ and group-specific components
\[
f_{0,1}(x)=\vert x_1+x_2+x_3+x_4 - 1/2 \vert^{3/2},\quad f_{0,2}(x)=2\vert x_1+x_2+x_3+x_4 - 1/2 \vert^{3/2} ,\quad f_{0,3}(x)= 3\vert x_1+x_2+x_3+x_4 - 1/2 \vert^{3/2}.
\]
Each group may further have its own noise distribution; for example, $\varepsilon_{\ell,i}\sim\mathcal{N}(0,\ell\cdot\sqrt{X^{(\ell)}_{i}})$, where $X^{(\ell)}_i$ denotes the $\ell$-th coordinate of $X_i$. Since each $g_{0,\ell}$ and the distribution of $\varepsilon_\ell$ can differ across groups. We now use the example above to compare convergence rates under different $\pi_\ell$ with directly comparable classical estimators under the similar offset TL framework, and illustrate the novelty of our results. 

\textbf{(a) Without Transfer Learning.} In this example, the smoothness of $g_{0,\ell}$ is $p=1$ with intrinsic dimensionalities $K=10$. Training each group separately on $n\pi_\ell$ samples, classical estimators must use all $d=10$ dimensions, yielding rate
\[
(n\pi_\ell)^{-\frac{2p}{2p+d}}=(n\pi_\ell)^{-1/6}.
\]
Direct NN estimation without transfer learning exploits intrinsic dimensionality $K=10$, also giving
\[
(n\pi_\ell)^{-\frac{2p}{2p+K}}=(n\pi_\ell)^{-1/6}.
\]
This is a deliberately chosen example where direct NN estimation without TL cannot improve upon the classical rate, specifically constructed to motivate the benefit of TL with NN in the subsequent comparison. In most settings, NN estimation does outperform classical methods by exploiting intrinsic dimensionality; in particular, when $d > 10$ but $K=10$, the NN achieves a strictly faster rate by overcoming the curse of dimensionality. (We note that the same function may admit different hierarchical compositions with different intrinsic dimensionalities, since our upper bounds apply at the function class level rather than to a specific function instance. The composition with intrinsic dimension $K=10$ chosen here is just for illustrative purposes.)

\textbf{(b) With Transfer Learning.} The first-stage target is $$\bar{f}(x)=|x_1+\cdots+x_{10}-1/2| + (\pi_1 + 2\pi_2 + 2\pi_3) \cdot \vert x_1+x_2+x_3+x_4-1/2\vert^{3/2},$$ which has smoothness $p=1$ and intrinsic dimension $K=10$. The second-stage offset (taking group 1 as an example) is $$G_1(x)=(1-\pi_1-2{\pi_2-3\pi_3})\cdot\vert x_1+x_2+x_3+x_4-1/2\vert^{3/2},$$ which has smoothness $p=3/2$ and intrinsic dimension $K=4$, hence simpler than $g_{0,\ell}$ and $\bar{f}$. Under this setting, classical estimators, which require all 10 dimensions for both $\bar{f}$ and $G_1$, yield rate $$n^{-1/6}+(n\pi_1)^{-3/13}.$$ Our Theorem~\ref{thm4}, by exploiting the intrinsic dimensionality of $G_1$, yields the faster rate $$n^{-1/6}+(n\pi_1)^{-3/7}.$$ The four regimes of convergence rates for all four estimators as $\pi_1$ varies are summarized in the table below. Specifically:

1. When $\pi_1 \asymp 1$, all methods achieve $n^{-1/6}$ and TL provides no additional gain.

2. When $n^{-5/18}\ll\pi_1 \ll1$, both classical and NN methods with TL achieve the faster rate $n^{-1/6}$, while classical and NN without TL are slower at $(n\pi_1)^{-1/6}$.

3. When $n^{-11/18} \lesssim \pi_1 \lesssim n^{-5/18}$, NN with TL still achieves $n^{-1/6}$ while the classical TL rate degrades to $(n\pi_1)^{-3/13}$; for instance, at $\pi_1 \asymp n^{-11/18}$, our rate is $n^{-1/6}$ versus $n^{-7/78}$ for classical TL.

4. When $n^{-1}\ll \pi_1 \ll n^{-11/18}$, NN with TL at $(n\pi_1)^{-3/7}$ remains strictly faster than classical TL at $(n\pi_1)^{-3/13}$. The condition \(n^{-1}\ll \pi_1\) follows from the group probability requirement in Theorem~\ref{theo2}.

This example clearly illustrates how a small $\pi_1$ can drastically worsen the classical rate while NN with TL remains robust.

\begin{table}[h]
\centering
\begin{tabular}{lcccc}
\toprule
& Classical w/o TL & NN w/o TL & Classical w/ TL & NN w/ TL \\
\midrule
$\pi_1 \asymp 1$                              & $n^{-\frac{1}{6}}$        & $n^{-\frac{1}{6}}$        & $n^{-\frac{1}{6}}$        & $n^{-\frac{1}{6}}$ \\
$n^{-\frac{5}{18}} \ll \pi_1 \ll 1$                    & $(n\pi_1)^{-\frac{1}{6}}$ & $(n\pi_1)^{-\frac{1}{6}}$ & $n^{-\frac{1}{6}}$        & $n^{-\frac{1}{6}}$ \\
$n^{-\frac{11}{18}} \lesssim \pi_1 \lesssim n^{-\frac{5}{18}}$  & $(n\pi_1)^{-\frac{1}{6}}$ & $(n\pi_1)^{-\frac{1}{6}}$ & $(n\pi_1)^{-\frac{3}{13}}$ & $n^{-\frac{1}{6}}$ \\
$n^{-1}\ll\pi_1 \ll n^{-\frac{11}{18}}$                         & $(n\pi_1)^{-\frac{1}{6}}$ & $(n\pi_1)^{-\frac{1}{6}}$ & $(n\pi_1)^{-\frac{3}{13}}$ & $(n\pi_1)^{-\frac{3}{7}}$ \\
\bottomrule
\end{tabular}
\caption{Convergence rates under different regimes of $\pi_1$.}
\end{table}

%\begin{remark}
%\label{rem:tl_benefit_example}
%We illustrate the benefit of transfer learning with DNNs in comparison with classical estimators, such as KRR, through a simple example applying Corollary~\ref{cor1}. Let $X \in [0,1]^4$ and $\ell \in \{1, 2, 3\}$ index groups with proportions $\pi_\ell$. Consider $g_{0,\ell}(x) = f_0(x) + f_{0,\ell}(x)$ with $f_0(x) = \sqrt{x_1 + x_2 + x_3}$ and $f_{0,\ell}(x) = \ell \cdot |x_4 - 1/2|$. Then $g_{0,\ell}$ has smoothness $p = 1/2$ on the full $K = 4$ ambient space, while the Stage-2 offset $G_1(x) = (1 - \pi_1 - 2\pi_2 - 3\pi_3)\,|x_4 - 1/2|$ for group~$1$ is much simpler, with smoothness $p = 1$ and intrinsic dimension $K = 1$. Table~\ref{tab:rate} provides the convergence rates of four estimators across regimes. As $\pi_1$ shrinks, classical TL (e.g., \cite{wang2016nonparametric}) degrades to $(n\pi_1)^{-1/3}$, while NN with TL remains at $n^{-1/5}$ until $\pi_1 \ll n^{-7/10}$ and then transitions to $(n\pi_1)^{-2/3}$, strictly faster than classical TL. This example clearly illustrates how a small $\pi_\ell$ can drastically worsen the classical rate while NN with TL remains robust. \footnote{We note that the same function may admit different hierarchical compositions, and the values $(p,K)$ we chosen here are for illustrative purposes. In fact, our upper bounds apply at the function-class level rather than to a specific function instance.}

%\end{remark}

\section{Details of Numerical Experiments}
\label{app:num_exp}
\subsection{Complete Numerical Results}
In Section~\ref{sec:experiment} of the main text, we selectively present four scenarios from our simulation experiments. Scenarios 1 and 2 correspond to low-dimensional settings, while Scenarios 3 and 4 are high-dimensional. In this supplementary section, we report results for four additional low-dimensional scenarios (Extra Scenarios 1--4) to provide a more comprehensive evaluation of method performance in low-dimensional settings.
\paragraph{Extra Scenario 1.} This scenario features a shared additive model with simple group-specific shifts. Let the input vector be $q = (q_1, \dots, q_{10}) \stackrel{\text{i.i.d.}}{\sim} \text{Unif}([0,1]^{10})$, holding for Extra Scenario 1--4. With a group-varying scale $w_z = \frac{L - z + 1}{L(L+1)/2}$ for group $z$, the shifts are applied to the last five dimensions as follows:
\[
\tilde{q}_j^{(z)} = \begin{cases}
q_j & \text{if } j = 5 +(z \bmod 5) \text{ or } j = 6 +(z \bmod 5) \\
q_j + w_z \cdot \sin(z \cdot q_j) & \text{if } j \in \{6,7,8,9,10\} \setminus \{5 + (z \bmod 5), 6 + (z \bmod 5)\}.
\end{cases}
\]
Define $y=g_z(q)+\epsilon$, where $g_z(q)=f_0(q)+f_{0,z}(q) = \sum_{j=1}^{5}\theta_{j}(q_j^{(z)})+\sum_{j=6}^{10}\theta_{j}(\tilde{q}_j^{(z)})$ as the group-specific conditional mean function. The component functions $\{\theta_j:\mathbb{R} \to \mathbb{R}\}_{j=1}^{10}$ are defined as follows:
\[
\begin{aligned}
\theta_1(q_1) &= \sin(\tanh(q_1^2)), & \theta_6(q_6) &= \cos(\sqrt{1 + q_6^2}), \\
\theta_2(q_2) &= \log(1 + |q_2|), & \theta_7(q_7) &= -\log(1 + \tanh(q_7^2)), \\
\theta_3(q_3) &= \exp(-\sqrt{1 + q_3^2}), & \theta_8(q_8) &= -\sin(\log(1 + q_8^2)), \\
\theta_4(q_4) &= \cos(\log(1 + q_4^2)), & \theta_9(q_9) &= \cos(\tanh(q_9^2)), \\
\theta_5(q_5) &= -\sin(\tanh(|q_5|)), & \theta_{10}(q_{10}) &= -\exp(-\sqrt{1 + q_{10}^2}).
\end{aligned}
\]

\paragraph{Extra Scenario 2.} This scenario combines covariate shifts with mild concept shifts. The shared function remains identical to that in Extra Scenario 1, given by  $f_0(q) = \sum_{j=1}^{5} \theta_j(q_j)$. The group-specific effects introduce covariate shifts through the group scale $w_z$, defined as
\[
\tilde{q}_j^{(z)} = \begin{cases}
q_j & \text{if } j = 5 + z \\
q_j + w_z \cdot \sin(z \cdot q_j) & \text{if } j \in \{6,7,8,9,10\} \setminus \{5+z\}.
\end{cases}
\]
The deviation function $f_{0,z}$ further incorporates concept shift by applying a group-specific scaling $
f_{0,z}(q) = (1+w_z) \cdot \sum_{j=6}^{10} \theta_j(\tilde{q}_j^{(z)}),$
where $\theta_6, \dots,\theta_{10}$ are defined as in Extra Scenario 1. 

\paragraph{Extra Scenario 3.} This scenario introduces a two-layer hierarchical structure where inputs are first mapped to intermediate representations before final transformation. With the scaling factor $w_z$, let
\[
\tilde{q}_j^{(z)} = \begin{cases}
q_j & \text{if } j = 5 +(z \bmod 5) \text{ or } j = 6 +(z \bmod 5) \\
q_j + w_z \cdot \sin(z \cdot q_j) & \text{if } j \in \{6,7,8,9,10\} \setminus \{5 + (z \bmod 5), 6 + (z \bmod 5)\}
\end{cases}
\]
The intermediate transformations are:
\[
\begin{aligned}
&h_1(q) =  \sin(q_1) \cdot q_2^2 + \exp(q_3) - q_4 \cdot q_5, \\
&h_2(q) =  \cos(q_2) + q_3 \cdot \tanh(q_4) + q_5^3, \\
&h_3(q; z) =  \log\left(1 + \tilde{q}_6 \cdot \tilde{q}_7 - \sin(\tilde{q}_8)\right) - \tanh(\tilde{q}_9 \cdot \tilde{q}_{10}), \\
&h_4(q; z) = \exp(-|\tilde{q}_8|) + \tilde{q}_9 \cdot ({1 + \tilde{q}_{10}^2})^{-1}.
\end{aligned}
\]
The shared and group-specific functions are constructed as:
\[
f_0(q) = \sqrt{1+h_1(q)^2} +\sqrt{1+h_2(q)^2} \quad \text{and} \quad f_{0,z}(q) = \left| h_3(q;z) - h_4(q;z) \right|,\]
and the final response is generated according to
\[
y = f_0(q) + f_{0,z}(q) + \epsilon.
\]
\paragraph{Extra Scenario 4.} This scenario extends Extra Scenario~3 beyond the additive model by introducing more complex group effects through mixed dimensional shifts. Specifically, for even indices $j \in \{2,4,\dots,10\}$, we define
\[
\tilde{q}_j^{(z)} = q_j + w_z \cdot \sin(z\, q_j).
\]
Group effects are then incorporated by modifying the even-indexed dimensions in the following components:
\[
\begin{aligned}
h_1(q; z) &=  \sin(q_1) \cdot \tilde{q}_2^2 + \exp(q_3) - \tilde{q}_4 \cdot q_5, \\
h_2(q; z) &=  \cos(\tilde{q}_2) + q_3 \cdot \tanh(\tilde{q}_4) + q_5^3, \\
h_3(q; z) &=  \log\left(1 + \tilde{q}_6 \cdot q_7 - \sin(\tilde{q}_8)\right) - \tanh(q_9 \cdot \tilde{q}_{10}), \\
h_4(q; z) &= \exp(-|\tilde{q}_8|) + {q_9} \cdot({1 + \tilde{q}_{10}^2})^{-1}.
\end{aligned}
\]
The final conditional mean for group $z$ is defined as
\[f_z(q) = \sqrt{\left| h_{1}(q;z) \cdot h_{3}(q;z) \right| + \left( h_{2}(q;z) + h_{4}(q;z) \right)^{2}},\]
and the response is generated according to $y=f_z(q)+\epsilon$.

In the following, we present the complete set of experimental results,
including all signal-to-noise ratio (SNR) settings, as well as group-level
results for $\text{SNR} = 5$ in the low-dimensional settings. The results are organized from low-dimensional settings (Scenarios 1 and 2, and Extra Scenarios 1--4) to high-dimensional settings (Scenarios 3 and 4). In most cases, the proposed two-stage transfer learning method achieves the best performance as the sample size increases.

\newpage

\begin{table}[H]
\centering
\caption{Average MSE for Scenario 1 with different SNR levels over 50 independent trials. The best performance in terms of lower MSE within each SNR group is bolded.}
\scriptsize
\renewcommand{\arraystretch}{1.2}
\begin{tabular}{c l ccccc}
\toprule
SNR & Model / Sample size & 1000 & 5000 & 10000 & 30000 & 50000 \\
\midrule
\multirow{9}{*}{10} & Pooled (NN) & 0.0352 (0.0356) & 0.0214 (0.0016) & 0.0197 (0.0011) & 0.0209 (0.0056) & 0.0184 (0.0016) \\
 & 2-Stage (NN) & \textbf{0.0175 (0.0064)} & \textbf{0.0067 (0.0012)} & \textbf{0.0045 (0.0007)} & \textbf{0.0025 (0.0003)} & \textbf{0.0020 (0.0002)} \\
 & Separate (NN) & 0.8039 (0.0735) & 0.0673 (0.0130) & 0.0079 (0.0031) & 0.0041 (0.0005) & 0.0034 (0.0005) \\
 & Top-FT (NN) & 0.0315 (0.0249) & 0.0188 (0.0022) & 0.0154 (0.0017) & 0.0093 (0.0020) & 0.0074 (0.0015) \\
 & Pool-w-L (NN) & 0.0177 (0.0037) & 0.0069 (0.0011) & 0.0055 (0.0016) & 0.0069 (0.0061) & 0.0036 (0.0016) \\
\cdashline{2-7}
 & Pooled (RF) & 0.0957 (0.0109) & 0.0721 (0.0037) & 0.0690 (0.0028) & 0.0675 (0.0019) & 0.0691 (0.0012) \\
 & 2-Stage (RF) & 0.0675 (0.0085) & 0.0410 (0.0026) & 0.0338 (0.0016) & 0.0255 (0.0011) & 0.0242 (0.0008) \\
 & Separate (RF) & 0.1300 (0.0198) & 0.0785 (0.0046) & 0.0661 (0.0026) & 0.0548 (0.0015) & 0.0518 (0.0011) \\
 & Pool-w-L (RF) & 0.0957 (0.0109) & 0.0707 (0.0035) & 0.0674 (0.0028) & 0.0658 (0.0019) & 0.0675 (0.0012) \\
\midrule
\multirow{9}{*}{5} & Pooled (NN) & 0.0364 (0.0355) & 0.0228 (0.0017) & 0.0213 (0.0022) & 0.0234 (0.0068) & 0.0193 (0.0029) \\
 & 2-Stage (NN) & 0.0222 (0.0078) & \textbf{0.0093 (0.0016)} & \textbf{0.0064 (0.0009)} & \textbf{0.0034 (0.0005)} & \textbf{0.0027 (0.0004)} \\
 & Separate (NN) & 0.8183 (0.0788) & 0.0664 (0.0116) & 0.0097 (0.0035) & 0.0060 (0.0007) & 0.0049 (0.0008) \\
 & Top-FT (NN) & 0.0331 (0.0249) & 0.0212 (0.0018) & 0.0177 (0.0018) & 0.0114 (0.0019) & 0.0092 (0.0017) \\
 & Pool-w-L (NN) & \textbf{0.0219 (0.0044)} & 0.0103 (0.0021) & 0.0074 (0.0017) & 0.0103 (0.0097) & 0.0048 (0.0023) \\
\cdashline{2-7}
 & Pooled (RF) & 0.0952 (0.0105) & 0.0708 (0.0039) & 0.0673 (0.0028) & 0.0655 (0.0018) & 0.0669 (0.0012) \\
 & 2-Stage (RF) & 0.0680 (0.0085) & 0.0410 (0.0028) & 0.0337 (0.0016) & 0.0252 (0.0011) & 0.0236 (0.0008) \\
 & Separate (RF) & 0.1318 (0.0195) & 0.0775 (0.0047) & 0.0650 (0.0028) & 0.0532 (0.0015) & 0.0500 (0.0011) \\
 & Pool-w-L (RF) & 0.0945 (0.0106) & 0.0692 (0.0038) & 0.0656 (0.0028) & 0.0637 (0.0018) & 0.0652 (0.0012) \\
\midrule
\multirow{9}{*}{2} & Pooled (NN) & 0.0395 (0.0350) & 0.0248 (0.0018) & 0.0243 (0.0027) & 0.0271 (0.0094) & 0.0212 (0.0046) \\
 & 2-Stage (NN) & 0.0346 (0.0100) & \textbf{0.0151 (0.0032)} & \textbf{0.0110 (0.0019)} & \textbf{0.0059 (0.0010)} & \textbf{0.0045 (0.0008)} \\
 & Separate (NN) & 0.8097 (0.0875) & 0.0685 (0.0112) & 0.0115 (0.0011) & 0.0087 (0.0008) & 0.0078 (0.0015) \\
 & Top-FT (NN) & 0.0370 (0.0244) & 0.0239 (0.0017) & 0.0212 (0.0014) & 0.0155 (0.0018) & 0.0129 (0.0019) \\
 & Pool-w-L (NN) & \textbf{0.0306 (0.0081)} & \textbf{0.0151 (0.0024)} & 0.0120 (0.0026) & 0.0146 (0.0105) & 0.0071 (0.0030) \\
\cdashline{2-7}
 & Pooled (RF) & 0.0970 (0.0106) & 0.0703 (0.0041) & 0.0658 (0.0032) & 0.0625 (0.0019) & 0.0631 (0.0012) \\
 & 2-Stage (RF) & 0.0738 (0.0089) & 0.0446 (0.0028) & 0.0366 (0.0021) & 0.0267 (0.0013) & 0.0243 (0.0010) \\
 & Separate (RF) & 0.1401 (0.0216) & 0.0793 (0.0050) & 0.0655 (0.0033) & 0.0520 (0.0015) & 0.0480 (0.0011) \\
 & Pool-w-L (RF) & 0.0960 (0.0105) & 0.0683 (0.0038) & 0.0636 (0.0031) & 0.0603 (0.0018) & 0.0611 (0.0012) \\
\bottomrule
\end{tabular}
\end{table}

\begin{table}[H]
\centering
\scriptsize
\caption{Average per-group MSE for Scenario 1 over 50 independent trials at an SNR of 5 using neural networks. The best performance in terms of lower MSE is bolded.}
\renewcommand{\arraystretch}{1.2}
\begin{tabular}{|c|c|c|c|c|c|c|}
\hline
$n$ & Group & Pooled & 2-Stage & Separate & Top-FT & Pool-w-L \\
\hline
\multirow{5}{*}{1000} & 1 & 0.0367 (0.0296) & 0.0435 (0.0341) & 0.8045 (0.3100) & 0.0426 (0.0378) & \textbf{0.0353 (0.0154)} \\
& 2 & 0.0441 (0.0444) & 0.0302 (0.0302) & 0.8241 (0.2425) & 0.0417 (0.0380) & \textbf{0.0283 (0.0100)} \\
& 3 & 0.0359 (0.0325) & 0.0222 (0.0117) & 0.8402 (0.2157) & 0.0342 (0.0303) & \textbf{0.0206 (0.0077)} \\
& 4 & 0.0373 (0.0426) & \textbf{0.0205 (0.0151)} & 0.8012 (0.1283) & 0.0328 (0.0272) & 0.0208 (0.0075) \\
& 5 & 0.0326 (0.0351) & \textbf{0.0157 (0.0051)} & 0.8198 (0.1723) & 0.0272 (0.0198) & 0.0180 (0.0059) \\
\cdashline{1-7}
\multirow{5}{*}{5000} & 1 & 0.0302 (0.0058) & \textbf{0.0164 (0.0078)} & 0.8092 (0.1464) & 0.0294 (0.0064) & 0.0171 (0.0059) \\
& 2 & 0.0266 (0.0039) & \textbf{0.0107 (0.0031)} & 0.0209 (0.0421) & 0.0259 (0.0043) & 0.0118 (0.0033) \\
& 3 & 0.0248 (0.0036) & \textbf{0.0088 (0.0029)} & 0.0151 (0.0212) & 0.0232 (0.0043) & 0.0105 (0.0029) \\
& 4 & 0.0219 (0.0027) & \textbf{0.0087 (0.0024)} & 0.0104 (0.0016) & 0.0205 (0.0030) & 0.0094 (0.0023) \\
& 5 & 0.0191 (0.0026) & \textbf{0.0082 (0.0020)} & 0.0094 (0.0016) & 0.0170 (0.0027) & 0.0090 (0.0027) \\
\cdashline{1-7}
\multirow{5}{*}{10000} & 1 & 0.0285 (0.0039) & \textbf{0.0090 (0.0028)} & 0.0224 (0.0510) & 0.0261 (0.0041) & 0.0103 (0.0032) \\
& 2 & 0.0254 (0.0036) & \textbf{0.0069 (0.0012)} & 0.0104 (0.0018) & 0.0214 (0.0038) & 0.0083 (0.0022) \\
& 3 & 0.0231 (0.0028) & \textbf{0.0067 (0.0019)} & 0.0089 (0.0015) & 0.0184 (0.0036) & 0.0076 (0.0023) \\
& 4 & 0.0206 (0.0031) & \textbf{0.0061 (0.0016)} & 0.0083 (0.0012) & 0.0167 (0.0030) & 0.0069 (0.0019) \\
& 5 & 0.0176 (0.0026) & \textbf{0.0057 (0.0010)} & 0.0085 (0.0014) & 0.0150 (0.0027) & 0.0066 (0.0023) \\
\cdashline{1-7}
\multirow{5}{*}{30000} & 1 & 0.0308 (0.0070) & \textbf{0.0045 (0.0013)} & 0.0091 (0.0014) & 0.0206 (0.0053) & 0.0122 (0.0094) \\
& 2 & 0.0280 (0.0069) & \textbf{0.0040 (0.0012)} & 0.0075 (0.0012) & 0.0154 (0.0037) & 0.0109 (0.0107) \\
& 3 & 0.0251 (0.0071) & \textbf{0.0036 (0.0008)} & 0.0062 (0.0016) & 0.0124 (0.0033) & 0.0107 (0.0104) \\
& 4 & 0.0224 (0.0073) & \textbf{0.0031 (0.0007)} & 0.0053 (0.0013) & 0.0105 (0.0031) & 0.0099 (0.0110) \\
& 5 & 0.0197 (0.0068) & \textbf{0.0032 (0.0007)} & 0.0053 (0.0014) & 0.0079 (0.0023) & 0.0097 (0.0088) \\
\cdashline{1-7}
\multirow{5}{*}{50000} & 1 & 0.0263 (0.0039) & \textbf{0.0040 (0.0012)} & 0.0089 (0.0015) & 0.0187 (0.0044) & 0.0060 (0.0026) \\
& 2 & 0.0235 (0.0036) & \textbf{0.0030 (0.0006)} & 0.0061 (0.0014) & 0.0124 (0.0037) & 0.0055 (0.0032) \\
& 3 & 0.0209 (0.0033) & \textbf{0.0030 (0.0008)} & 0.0053 (0.0020) & 0.0096 (0.0029) & 0.0048 (0.0029) \\
& 4 & 0.0189 (0.0029) & \textbf{0.0026 (0.0006)} & 0.0042 (0.0017) & 0.0084 (0.0025) & 0.0045 (0.0023) \\
& 5 & 0.0156 (0.0032) & \textbf{0.0024 (0.0005)} & 0.0039 (0.0013) & 0.0063 (0.0020) & 0.0044 (0.0025) \\
\hline
\end{tabular}
\end{table}

\begin{table}[H]
\centering
\caption{Average MSE for Scenario 2 with different SNR levels over 50 independent trials. The best performance in terms of lower MSE within each SNR group is bolded.}
\scriptsize
\renewcommand{\arraystretch}{1.2}
\begin{tabular}{c l ccccc}
\toprule
SNR & Model / Sample size & 1000 & 5000 & 10000 & 30000 & 50000 \\
\midrule
\multirow{9}{*}{10} & Pooled (NN) & 0.0592 (0.0098) & 0.0358 (0.0049) & 0.0331 (0.0039) & 0.0355 (0.0132) & 0.0263 (0.0026) \\
 & 2-Stage (NN) & \textbf{0.0537 (0.0097)} & \textbf{0.0220 (0.0020)} & \textbf{0.0169 (0.0011)} & \textbf{0.0109 (0.0010)} & \textbf{0.0083 (0.0005)} \\
 & Separate (NN) & 0.6172 (0.0661) & 0.0759 (0.0129) & 0.0273 (0.0053) & 0.0160 (0.0014) & 0.0140 (0.0015) \\
 & Top-FT (NN) & 0.0579 (0.0103) & 0.0303 (0.0027) & 0.0252 (0.0027) & 0.0166 (0.0018) & 0.0135 (0.0011) \\
 & Pool-w-L (NN) & 0.0817 (0.0191) & 0.0271 (0.0026) & 0.0218 (0.0039) & 0.0190 (0.0068) & 0.0112 (0.0020) \\
\cdashline{2-7}
 & Pooled (RF) & 0.1841 (0.0239) & 0.1249 (0.0103) & 0.1161 (0.0066) & 0.1112 (0.0031) & 0.1134 (0.0033) \\
 & 2-Stage (RF) & 0.1339 (0.0198) & 0.0770 (0.0077) & 0.0642 (0.0047) & 0.0504 (0.0020) & 0.0475 (0.0019) \\
 & Separate (RF) & 0.2683 (0.0426) & 0.1566 (0.0124) & 0.1315 (0.0067) & 0.1045 (0.0026) & 0.0978 (0.0024) \\
 & Pool-w-L (RF) & 0.1851 (0.0240) & 0.1241 (0.0105) & 0.1152 (0.0066) & 0.1106 (0.0031) & 0.1129 (0.0033) \\
\midrule
\multirow{9}{*}{5} & Pooled (NN) & 0.0779 (0.0146) & 0.0397 (0.0051) & 0.0351 (0.0033) & 0.0367 (0.0105) & 0.0277 (0.0021) \\
 & 2-Stage (NN) & \textbf{0.0774 (0.0162)} & \textbf{0.0275 (0.0025)} & \textbf{0.0209 (0.0015)} & \textbf{0.0144 (0.0011)} & \textbf{0.0109 (0.0008)} \\
 & Separate (NN) & 0.6209 (0.0756) & 0.0884 (0.0138) & 0.0375 (0.0076) & 0.0200 (0.0011) & 0.0175 (0.0019) \\
 & Top-FT (NN) & 0.0782 (0.0155) & 0.0352 (0.0026) & 0.0292 (0.0026) & 0.0198 (0.0021) & 0.0164 (0.0015) \\
 & Pool-w-L (NN) & 0.1094 (0.0195) & 0.0352 (0.0071) & 0.0273 (0.0054) & 0.0253 (0.0121) & 0.0151 (0.0021) \\
\cdashline{2-7}
 & Pooled (RF) & 0.1828 (0.0245) & 0.1237 (0.0100) & 0.1143 (0.0062) & 0.1087 (0.0032) & 0.1103 (0.0032) \\
 & 2-Stage (RF) & 0.1366 (0.0205) & 0.0796 (0.0078) & 0.0658 (0.0045) & 0.0512 (0.0021) & 0.0476 (0.0019) \\
 & Separate (RF) & 0.2722 (0.0436) & 0.1570 (0.0129) & 0.1312 (0.0067) & 0.1031 (0.0027) & 0.0957 (0.0024) \\
 & Pool-w-L (RF) & 0.1835 (0.0245) & 0.1226 (0.0100) & 0.1133 (0.0061) & 0.1078 (0.0032) & 0.1097 (0.0032) \\
\midrule
\multirow{9}{*}{2} & Pooled (NN) & \textbf{0.1080 (0.0168)} & 0.0513 (0.0075) & 0.0420 (0.0056) & 0.0460 (0.0172) & 0.0342 (0.0076) \\
 & 2-Stage (NN) & 0.1187 (0.0214) & \textbf{0.0433 (0.0059)} & \textbf{0.0298 (0.0030)} & \textbf{0.0202 (0.0018)} & \textbf{0.0165 (0.0014)} \\
 & Separate (NN) & 0.6185 (0.0803) & 0.1192 (0.0148) & 0.0615 (0.0108) & 0.0307 (0.0038) & 0.0249 (0.0028) \\
 & Top-FT (NN) & 0.1101 (0.0169) & 0.0463 (0.0045) & 0.0369 (0.0025) & 0.0254 (0.0021) & 0.0217 (0.0017) \\
 & Pool-w-L (NN) & 0.1423 (0.0272) & 0.0565 (0.0100) & 0.0395 (0.0082) & 0.0291 (0.0111) & 0.0209 (0.0040) \\
\cdashline{2-7}
 & Pooled (RF) & 0.1878 (0.0260) & 0.1244 (0.0107) & 0.1131 (0.0061) & 0.1046 (0.0030) & 0.1047 (0.0029) \\
 & 2-Stage (RF) & 0.1508 (0.0225) & 0.0892 (0.0085) & 0.0729 (0.0046) & 0.0548 (0.0021) & 0.0496 (0.0019) \\
 & Separate (RF) & 0.2912 (0.0448) & 0.1625 (0.0138) & 0.1346 (0.0071) & 0.1033 (0.0031) & 0.0940 (0.0022) \\
 & Pool-w-L (RF) & 0.1883 (0.0261) & 0.1228 (0.0104) & 0.1116 (0.0061) & 0.1034 (0.0030) & 0.1037 (0.0029) \\
\bottomrule
\end{tabular}
\end{table}

\begin{table}[H]
\centering
\scriptsize
\caption{Average per-group MSE for Scenario 2 over 50 independent trials at an SNR of 5 using neural networks. The best performance in terms of lower MSE is bolded.}
\renewcommand{\arraystretch}{1.2}
\begin{tabular}{|c|c|c|c|c|c|c|}
\hline
$n$ & Group & Pooled & 2-Stage & Separate & Top-FT & Pool-w-L \\
\hline
\multirow{5}{*}{1000} & 1 & \textbf{0.0768 (0.0401)} & 0.0852 (0.0420) & 0.6384 (0.2559) & 0.0820 (0.0420) & 0.1501 (0.0883) \\
& 2 & \textbf{0.0804 (0.0265)} & 0.0843 (0.0322) & 0.6267 (0.1901) & 0.0810 (0.0262) & 0.1269 (0.0490) \\
& 3 & \textbf{0.0861 (0.0384)} & 0.0862 (0.0416) & 0.6510 (0.1942) & 0.0871 (0.0407) & 0.1133 (0.0402) \\
& 4 & 0.0811 (0.0211) & \textbf{0.0770 (0.0212)} & 0.6245 (0.1700) & 0.0807 (0.0224) & 0.1038 (0.0327) \\
& 5 & 0.0698 (0.0201) & \textbf{0.0680 (0.0194)} & 0.5940 (0.1198) & 0.0691 (0.0197) & 0.0957 (0.0258) \\
\cdashline{1-7}
\multirow{5}{*}{5000} & 1 & 0.0467 (0.0128) & \textbf{0.0398 (0.0123)} & 0.6086 (0.1238) & 0.0435 (0.0103) & 0.0507 (0.0131) \\
& 2 & 0.0450 (0.0087) & \textbf{0.0302 (0.0061)} & 0.0865 (0.0717) & 0.0408 (0.0069) & 0.0418 (0.0102) \\
& 3 & 0.0421 (0.0083) & \textbf{0.0281 (0.0054)} & 0.0523 (0.0174) & 0.0379 (0.0075) & 0.0368 (0.0089) \\
& 4 & 0.0393 (0.0070) & \textbf{0.0261 (0.0051)} & 0.0461 (0.0100) & 0.0344 (0.0058) & 0.0336 (0.0103) \\
& 5 & 0.0350 (0.0054) & \textbf{0.0246 (0.0041)} & 0.0389 (0.0077) & 0.0304 (0.0036) & 0.0298 (0.0072) \\
\cdashline{1-7}
\multirow{5}{*}{10000} & 1 & 0.0424 (0.0089) & \textbf{0.0275 (0.0068)} & 0.1050 (0.1076) & 0.0373 (0.0079) & 0.0382 (0.0101) \\
& 2 & 0.0406 (0.0064) & \textbf{0.0235 (0.0049)} & 0.0456 (0.0092) & 0.0349 (0.0064) & 0.0311 (0.0068) \\
& 3 & 0.0374 (0.0061) & \textbf{0.0212 (0.0031)} & 0.0346 (0.0054) & 0.0308 (0.0060) & 0.0273 (0.0057) \\
& 4 & 0.0340 (0.0038) & \textbf{0.0199 (0.0031)} & 0.0298 (0.0059) & 0.0277 (0.0044) & 0.0251 (0.0066) \\
& 5 & 0.0309 (0.0048) & \textbf{0.0192 (0.0027)} & 0.0285 (0.0062) & 0.0255 (0.0038) & 0.0253 (0.0063) \\
\cdashline{1-7}
\multirow{5}{*}{30000} & 1 & 0.0441 (0.0133) & \textbf{0.0182 (0.0032)} & 0.0353 (0.0068) & 0.0301 (0.0068) & 0.0322 (0.0170) \\
& 2 & 0.0416 (0.0119) & \textbf{0.0163 (0.0027)} & 0.0231 (0.0035) & 0.0219 (0.0040) & 0.0280 (0.0132) \\
& 3 & 0.0387 (0.0107) & \textbf{0.0145 (0.0022)} & 0.0198 (0.0031) & 0.0208 (0.0035) & 0.0247 (0.0114) \\
& 4 & 0.0352 (0.0108) & \textbf{0.0139 (0.0019)} & 0.0182 (0.0034) & 0.0185 (0.0033) & 0.0236 (0.0141) \\
& 5 & 0.0331 (0.0105) & \textbf{0.0131 (0.0016)} & 0.0172 (0.0024) & 0.0174 (0.0023) & 0.0245 (0.0128) \\
\cdashline{1-7}
\multirow{5}{*}{50000} & 1 & 0.0359 (0.0047) & \textbf{0.0147 (0.0028)} & 0.0284 (0.0057) & 0.0256 (0.0048) & 0.0218 (0.0043) \\
& 2 & 0.0328 (0.0044) & \textbf{0.0126 (0.0016)} & 0.0206 (0.0045) & 0.0193 (0.0037) & 0.0171 (0.0036) \\
& 3 & 0.0293 (0.0030) & \textbf{0.0110 (0.0012)} & 0.0182 (0.0049) & 0.0170 (0.0024) & 0.0152 (0.0030) \\
& 4 & 0.0265 (0.0035) & \textbf{0.0104 (0.0012)} & 0.0156 (0.0027) & 0.0153 (0.0020) & 0.0142 (0.0032) \\
& 5 & 0.0242 (0.0033) & \textbf{0.0097 (0.0011)} & 0.0152 (0.0034) & 0.0139 (0.0019) & 0.0136 (0.0029) \\
\hline
\end{tabular}
\end{table}

\label{sec:sim_result}
\begin{table}[H]
\centering
\caption{Average MSE for Extra Scenario 1 with different SNR levels over 50 independent trials. The best performance in terms of lower MSE within each SNR group is bolded.}
\scriptsize
\renewcommand{\arraystretch}{1.2}
\begin{tabular}{c l ccccc}
\toprule
SNR & Model / Sample size & 1000 & 5000 & 10000 & 30000 & 50000 \\
\midrule
%\hline
\multirow{9}{*}{10} & Pooled (NN) & 0.0146 (0.0021) & 0.0106 (0.0009) & 0.0096 (0.0009) & 0.0100 (0.0024) & 0.0085 (0.0007) \\
 & 2-Stage (NN) & \textbf{0.0109 (0.0016)} & \textbf{0.0051 (0.0007)} & \textbf{0.0035 (0.0003)} & \textbf{0.0021 (0.0002)} & \textbf{0.0016 (0.0001)} \\
 & Separate (NN) & 0.4149 (0.0643) & 0.0367 (0.0080) & 0.0059 (0.0005) & 0.0032 (0.0003) & 0.0026 (0.0003) \\
 & Top-FT (NN) & 0.0114 (0.0020) & 0.0060 (0.0006) & 0.0044 (0.0003) & 0.0030 (0.0002) & 0.0026 (0.0002) \\
 & Pool-w-L (NN) & 0.0115 (0.0019) & 0.0059 (0.0008) & 0.0042 (0.0008) & 0.0043 (0.0026) & 0.0019 (0.0006) \\
\cdashline{2-7}
 & Pooled (RF) & 0.0405 (0.0056) & 0.0278 (0.0017) & 0.0255 (0.0013) & 0.0247 (0.0006) & 0.0254 (0.0005) \\
 & 2-Stage (RF) & 0.0279 (0.0039) & 0.0157 (0.0010) & 0.0124 (0.0007) & 0.0093 (0.0004) & 0.0086 (0.0003) \\
 & Separate (RF) & 0.0640 (0.0092) & 0.0326 (0.0019) & 0.0261 (0.0012) & 0.0201 (0.0006) & 0.0188 (0.0004) \\
 & Pool-w-L (RF) & 0.0402 (0.0055) & 0.0271 (0.0017) & 0.0247 (0.0012) & 0.0240 (0.0006) & 0.0247 (0.0005) \\
\midrule
\multirow{9}{*}{5} & Pooled (NN) & 0.0170 (0.0027) & 0.0121 (0.0011) & 0.0106 (0.0011) & 0.0119 (0.0039) & 0.0089 (0.0011) \\
 & 2-Stage (NN) & \textbf{0.0140 (0.0028)} & \textbf{0.0068 (0.0008)} & \textbf{0.0047 (0.0005)} & \textbf{0.0028 (0.0003)} & \textbf{0.0022 (0.0002)} \\
 & Separate (NN) & 0.4180 (0.0605) & 0.0365 (0.0099) & 0.0079 (0.0020) & 0.0045 (0.0006) & 0.0037 (0.0005) \\
 & Top-FT (NN) & \textbf{0.0140 (0.0026)} & 0.0074 (0.0007) & 0.0054 (0.0005) & 0.0035 (0.0003) & 0.0030 (0.0002) \\
 & Pool-w-L (NN) & 0.0148 (0.0028) & 0.0078 (0.0011) & 0.0059 (0.0012) & 0.0050 (0.0023) & 0.0026 (0.0007) \\
\cdashline{2-7}
 & Pooled (RF) & 0.0412 (0.0060) & 0.0277 (0.0017) & 0.0252 (0.0013) & 0.0241 (0.0006) & 0.0246 (0.0005) \\
 & 2-Stage (RF) & 0.0294 (0.0043) & 0.0164 (0.0011) & 0.0130 (0.0008) & 0.0097 (0.0004) & 0.0088 (0.0004) \\
 & Separate (RF) & 0.0653 (0.0096) & 0.0330 (0.0021) & 0.0263 (0.0013) & 0.0200 (0.0006) & 0.0185 (0.0004) \\
 & Pool-w-L (RF) & 0.0408 (0.0059) & 0.0269 (0.0016) & 0.0243 (0.0013) & 0.0233 (0.0006) & 0.0239 (0.0005) \\
\midrule
\multirow{9}{*}{2} & Pooled (NN) & 0.0216 (0.0044) & 0.0143 (0.0017) & 0.0128 (0.0013) & 0.0122 (0.0030) & 0.0098 (0.0010) \\
 & 2-Stage (NN) & 0.0220 (0.0060) & 0.0101 (0.0017) & \textbf{0.0074 (0.0011)} & \textbf{0.0045 (0.0006)} & \textbf{0.0037 (0.0004)} \\
 & Separate (NN) & 0.4198 (0.0632) & 0.0411 (0.0119) & 0.0113 (0.0019) & 0.0070 (0.0010) & 0.0057 (0.0007) \\
 & Top-FT (NN) & \textbf{0.0197 (0.0042)} & \textbf{0.0097 (0.0009)} & \textbf{0.0074 (0.0007)} & 0.0047 (0.0004) & 0.0040 (0.0003) \\
 & Pool-w-L (NN) & 0.0207 (0.0036) & 0.0108 (0.0017) & 0.0084 (0.0015) & 0.0069 (0.0028) & 0.0040 (0.0009) \\
\cdashline{2-7}
 & Pooled (RF) & 0.0444 (0.0071) & 0.0290 (0.0018) & 0.0259 (0.0013) & 0.0235 (0.0007) & 0.0236 (0.0005) \\
 & 2-Stage (RF) & 0.0350 (0.0055) & 0.0196 (0.0014) & 0.0156 (0.0009) & 0.0112 (0.0005) & 0.0099 (0.0004) \\
 & Separate (RF) & 0.0696 (0.0104) & 0.0353 (0.0020) & 0.0280 (0.0014) & 0.0210 (0.0007) & 0.0189 (0.0005) \\
 & Pool-w-L (RF) & 0.0440 (0.0068) & 0.0279 (0.0017) & 0.0248 (0.0013) & 0.0226 (0.0007) & 0.0226 (0.0005) \\
\bottomrule
\end{tabular}
\end{table}

\begin{table}[H]
\centering
\scriptsize
\caption{Average per-group MSE for Extra Scenario 1 over 50 independent trials at an SNR of 5 using neural networks. The best performance in terms of lower MSE is bolded.}
\renewcommand{\arraystretch}{1.2}
\begin{tabular}{|c|c|c|c|c|c|c|}
\hline
$n$ & Group & Pooled & 2-Stage & Separate & Top-FT & Pool-w-L \\
\hline
\multirow{5}{*}{1000} & 1 & 0.0254 (0.0121) & \textbf{0.0224 (0.0112)} & 0.4225 (0.2332) & 0.0225 (0.0106) & 0.0253 (0.0109) \\
& 2 & 0.0271 (0.0116) & \textbf{0.0199 (0.0085)} & 0.5008 (0.1793) & 0.0227 (0.0096) & 0.0208 (0.0088) \\
& 3 & 0.0152 (0.0051) & 0.0136 (0.0050) & 0.4046 (0.1851) & \textbf{0.0128 (0.0044)} & 0.0151 (0.0056) \\
& 4 & 0.0110 (0.0035) & 0.0118 (0.0036) & 0.3859 (0.0916) & \textbf{0.0108 (0.0036)} & 0.0123 (0.0037) \\
& 5 & 0.0170 (0.0035) & \textbf{0.0117 (0.0030)} & 0.4167 (0.0992) & 0.0119 (0.0025) & 0.0120 (0.0035) \\
\cdashline{1-7}
\multirow{5}{*}{5000} & 1 & 0.0213 (0.0064) & \textbf{0.0112 (0.0034)} & 0.4110 (0.1464) & 0.0157 (0.0042) & 0.0131 (0.0032) \\
& 2 & 0.0180 (0.0046) & \textbf{0.0085 (0.0024)} & 0.0135 (0.0027) & 0.0114 (0.0026) & 0.0100 (0.0022) \\
& 3 & 0.0101 (0.0026) & 0.0075 (0.0017) & 0.0107 (0.0018) & \textbf{0.0073 (0.0011)} & 0.0088 (0.0019) \\
& 4 & 0.0069 (0.0010) & 0.0064 (0.0012) & 0.0085 (0.0011) & \textbf{0.0060 (0.0008)} & 0.0070 (0.0014) \\
& 5 & 0.0131 (0.0049) & \textbf{0.0053 (0.0014)} & 0.0074 (0.0010) & 0.0053 (0.0010) & 0.0060 (0.0011) \\
\cdashline{1-7}
\multirow{5}{*}{10000} & 1 & 0.0207 (0.0053) & \textbf{0.0067 (0.0019)} & 0.0178 (0.0281) & 0.0121 (0.0029) & 0.0094 (0.0028) \\
& 2 & 0.0166 (0.0040) & \textbf{0.0056 (0.0011)} & 0.0093 (0.0014) & 0.0088 (0.0020) & 0.0074 (0.0019) \\
& 3 & 0.0088 (0.0027) & \textbf{0.0052 (0.0011)} & 0.0083 (0.0013) & 0.0053 (0.0007) & 0.0065 (0.0015) \\
& 4 & 0.0057 (0.0009) & 0.0045 (0.0007) & 0.0066 (0.0011) & \textbf{0.0044 (0.0006)} & 0.0052 (0.0012) \\
& 5 & 0.0112 (0.0049) & 0.0038 (0.0009) & 0.0060 (0.0012) & \textbf{0.0036 (0.0005)} & 0.0047 (0.0016) \\
\cdashline{1-7}
\multirow{5}{*}{30000} & 1 & 0.0221 (0.0093) & \textbf{0.0039 (0.0007)} & 0.0085 (0.0014) & 0.0076 (0.0017) & 0.0073 (0.0032) \\
& 2 & 0.0181 (0.0079) & \textbf{0.0038 (0.0006)} & 0.0058 (0.0009) & 0.0055 (0.0009) & 0.0061 (0.0028) \\
& 3 & 0.0101 (0.0059) & \textbf{0.0034 (0.0007)} & 0.0050 (0.0010) & 0.0038 (0.0004) & 0.0052 (0.0024) \\
& 4 & 0.0071 (0.0036) & \textbf{0.0028 (0.0005)} & 0.0038 (0.0010) & 0.0030 (0.0003) & 0.0047 (0.0025) \\
& 5 & 0.0124 (0.0115) & \textbf{0.0020 (0.0005)} & 0.0034 (0.0013) & 0.0021 (0.0003) & 0.0043 (0.0029) \\
\cdashline{1-7}
\multirow{5}{*}{50000} & 1 & 0.0184 (0.0050) & \textbf{0.0035 (0.0006)} & 0.0078 (0.0012) & 0.0064 (0.0012) & 0.0040 (0.0010) \\
& 2 & 0.0146 (0.0046) & \textbf{0.0030 (0.0005)} & 0.0048 (0.0010) & 0.0050 (0.0009) & 0.0033 (0.0011) \\
& 3 & 0.0067 (0.0031) & \textbf{0.0025 (0.0004)} & 0.0039 (0.0009) & 0.0033 (0.0003) & 0.0029 (0.0010) \\
& 4 & 0.0042 (0.0014) & \textbf{0.0021 (0.0003)} & 0.0033 (0.0011) & 0.0027 (0.0003) & 0.0023 (0.0008) \\
& 5 & 0.0098 (0.0040) & \textbf{0.0016 (0.0003)} & 0.0025 (0.0008) & 0.0017 (0.0002) & 0.0021 (0.0008) \\
\hline
\end{tabular}
\end{table}

\begin{table}[H]
\centering
\caption{Average MSE for Extra Scenario 2 with different SNR levels over 50 independent trials. The best performance in terms of lower MSE within each SNR group is bolded.}
\scriptsize
\renewcommand{\arraystretch}{1.2}
\begin{tabular}{c l ccccc}
\toprule
SNR & Model / Sample size & 1000 & 5000 & 10000 & 30000 & 50000 \\
\midrule
\multirow{9}{*}{10} & Pooled (NN) & 0.0223 (0.0040) & 0.0163 (0.0014) & 0.0153 (0.0016) & 0.0147 (0.0021) & 0.0133 (0.0011) \\
 & 2-Stage (NN) & 0.0163 (0.0036) & \textbf{0.0065 (0.0008)} & \textbf{0.0045 (0.0004)} & \textbf{0.0028 (0.0003)} & \textbf{0.0020 (0.0002)} \\
 & Separate (NN) & 0.4959 (0.0843) & 0.0482 (0.0115) & 0.0079 (0.0007) & 0.0043 (0.0004) & 0.0037 (0.0004) \\
 & Top-FT (NN) & 0.0181 (0.0038) & 0.0082 (0.0011) & 0.0065 (0.0008) & 0.0039 (0.0004) & 0.0031 (0.0005) \\
 & Pool-w-L (NN) & \textbf{0.0157 (0.0026)} & 0.0076 (0.0010) & 0.0054 (0.0009) & 0.0059 (0.0046) & 0.0024 (0.0006) \\
\cdashline{2-7}
 & Pooled (RF) & 0.0518 (0.0078) & 0.0355 (0.0023) & 0.0331 (0.0015) & 0.0318 (0.0010) & 0.0325 (0.0007) \\
 & 2-Stage (RF) & 0.0358 (0.0055) & 0.0199 (0.0014) & 0.0159 (0.0008) & 0.0117 (0.0006) & 0.0108 (0.0004) \\
 & Separate (RF) & 0.0773 (0.0103) & 0.0398 (0.0023) & 0.0316 (0.0014) & 0.0240 (0.0007) & 0.0222 (0.0005) \\
 & Pool-w-L (RF) & 0.0510 (0.0076) & 0.0338 (0.0022) & 0.0308 (0.0014) & 0.0291 (0.0009) & 0.0295 (0.0006) \\
\midrule
\multirow{9}{*}{5} & Pooled (NN) & 0.0248 (0.0049) & 0.0180 (0.0018) & 0.0159 (0.0013) & 0.0157 (0.0027) & 0.0141 (0.0012) \\
 & 2-Stage (NN) & \textbf{0.0200 (0.0044)} & \textbf{0.0088 (0.0010)} & \textbf{0.0060 (0.0006)} & \textbf{0.0037 (0.0003)} & \textbf{0.0029 (0.0003)} \\
 & Separate (NN) & 0.4885 (0.0793) & 0.0504 (0.0136) & 0.0099 (0.0007) & 0.0059 (0.0006) & 0.0049 (0.0006) \\
 & Top-FT (NN) & 0.0212 (0.0046) & 0.0103 (0.0013) & 0.0077 (0.0008) & 0.0048 (0.0005) & 0.0040 (0.0004) \\
 & Pool-w-L (NN) & \textbf{0.0200 (0.0038)} & 0.0098 (0.0014) & 0.0078 (0.0019) & 0.0065 (0.0029) & 0.0039 (0.0013) \\
\cdashline{2-7}
 & Pooled (RF) & 0.0524 (0.0076) & 0.0357 (0.0021) & 0.0329 (0.0016) & 0.0312 (0.0009) & 0.0317 (0.0007) \\
 & 2-Stage (RF) & 0.0376 (0.0058) & 0.0209 (0.0014) & 0.0167 (0.0009) & 0.0122 (0.0006) & 0.0110 (0.0005) \\
 & Separate (RF) & 0.0793 (0.0115) & 0.0404 (0.0024) & 0.0320 (0.0014) & 0.0239 (0.0007) & 0.0219 (0.0005) \\
 & Pool-w-L (RF) & 0.0513 (0.0076) & 0.0338 (0.0020) & 0.0306 (0.0015) & 0.0284 (0.0009) & 0.0287 (0.0007) \\
\midrule
\multirow{9}{*}{2} & Pooled (NN) & 0.0306 (0.0067) & 0.0208 (0.0023) & 0.0188 (0.0020) & 0.0191 (0.0054) & 0.0152 (0.0017) \\
 & 2-Stage (NN) & 0.0296 (0.0067) & \textbf{0.0129 (0.0016)} & \textbf{0.0096 (0.0013)} & \textbf{0.0055 (0.0006)} & \textbf{0.0045 (0.0005)} \\
 & Separate (NN) & 0.5007 (0.0691) & 0.0556 (0.0130) & 0.0147 (0.0031) & 0.0091 (0.0008) & 0.0076 (0.0009) \\
 & Top-FT (NN) & 0.0285 (0.0065) & 0.0138 (0.0015) & 0.0106 (0.0013) & 0.0065 (0.0005) & 0.0055 (0.0007) \\
 & Pool-w-L (NN) & \textbf{0.0281 (0.0058)} & 0.0140 (0.0017) & 0.0111 (0.0016) & 0.0095 (0.0042) & 0.0060 (0.0024) \\
\cdashline{2-7}
 & Pooled (RF) & 0.0563 (0.0088) & 0.0374 (0.0022) & 0.0338 (0.0017) & 0.0307 (0.0009) & 0.0306 (0.0008) \\
 & 2-Stage (RF) & 0.0439 (0.0075) & 0.0249 (0.0017) & 0.0199 (0.0012) & 0.0141 (0.0006) & 0.0123 (0.0005) \\
 & Separate (RF) & 0.0851 (0.0128) & 0.0434 (0.0029) & 0.0344 (0.0017) & 0.0252 (0.0009) & 0.0226 (0.0006) \\
 & Pool-w-L (RF) & 0.0554 (0.0088) & 0.0353 (0.0022) & 0.0314 (0.0016) & 0.0279 (0.0009) & 0.0277 (0.0007) \\
\bottomrule
\end{tabular}
\end{table}

\begin{table}[H]
\centering
\scriptsize
\caption{Average per-group MSE for Extra Scenario 2 over 50 independent trials at an SNR of 5 using neural networks. The best performance in terms of lower MSE is bolded.}
\renewcommand{\arraystretch}{1.2}
\begin{tabular}{|c|c|c|c|c|c|c|}
\hline
$n$ & Group & Pooled & 2-Stage & Separate & Top-FT & Pool-w-L \\
\hline
\multirow{5}{*}{1000} & 1 & 0.0463 (0.0228) & \textbf{0.0426 (0.0209)} & 0.5862 (0.3274) & 0.0436 (0.0225) & 0.0459 (0.0173) \\
& 2 & 0.0631 (0.0245) & \textbf{0.0372 (0.0186)} & 0.6185 (0.2614) & 0.0505 (0.0221) & 0.0373 (0.0180) \\
& 3 & 0.0168 (0.0058) & 0.0183 (0.0089) & 0.4944 (0.1956) & \textbf{0.0162 (0.0059)} & 0.0175 (0.0059) \\
& 4 & 0.0141 (0.0044) & 0.0144 (0.0044) & 0.4476 (0.1183) & \textbf{0.0134 (0.0037)} & 0.0154 (0.0050) \\
& 5 & 0.0176 (0.0040) & 0.0137 (0.0037) & 0.4433 (0.1004) & 0.0135 (0.0039) & \textbf{0.0124 (0.0034)} \\
\cdashline{1-7}
\multirow{5}{*}{5000} & 1 & 0.0393 (0.0091) & \textbf{0.0198 (0.0088)} & 0.5857 (0.2020) & 0.0259 (0.0095) & 0.0205 (0.0049) \\
& 2 & 0.0465 (0.0112) & \textbf{0.0127 (0.0034)} & 0.0201 (0.0038) & 0.0216 (0.0057) & 0.0160 (0.0032) \\
& 3 & 0.0100 (0.0022) & 0.0086 (0.0022) & 0.0131 (0.0023) & \textbf{0.0082 (0.0012)} & 0.0099 (0.0023) \\
& 4 & 0.0097 (0.0022) & 0.0077 (0.0018) & 0.0101 (0.0017) & \textbf{0.0069 (0.0010)} & 0.0081 (0.0016) \\
& 5 & 0.0136 (0.0037) & \textbf{0.0059 (0.0011)} & 0.0084 (0.0013) & 0.0066 (0.0012) & 0.0064 (0.0013) \\
\cdashline{1-7}
\multirow{5}{*}{10000} & 1 & 0.0396 (0.0075) & \textbf{0.0111 (0.0028)} & 0.0217 (0.0049) & 0.0203 (0.0054) & 0.0146 (0.0050) \\
& 2 & 0.0428 (0.0068) & \textbf{0.0085 (0.0021)} & 0.0146 (0.0023) & 0.0159 (0.0029) & 0.0121 (0.0031) \\
& 3 & 0.0084 (0.0020) & \textbf{0.0063 (0.0012)} & 0.0102 (0.0015) & 0.0065 (0.0010) & 0.0082 (0.0021) \\
& 4 & 0.0077 (0.0020) & 0.0053 (0.0009) & 0.0081 (0.0012) & \textbf{0.0051 (0.0006)} & 0.0067 (0.0023) \\
& 5 & 0.0114 (0.0032) & \textbf{0.0043 (0.0008)} & 0.0069 (0.0011) & 0.0047 (0.0008) & 0.0054 (0.0019) \\
\cdashline{1-7}
\multirow{5}{*}{30000} & 1 & 0.0380 (0.0077) & \textbf{0.0067 (0.0010)} & 0.0127 (0.0022) & 0.0134 (0.0031) & 0.0105 (0.0041) \\
& 2 & 0.0418 (0.0122) & \textbf{0.0055 (0.0008)} & 0.0084 (0.0012) & 0.0094 (0.0019) & 0.0084 (0.0038) \\
& 3 & 0.0082 (0.0040) & \textbf{0.0038 (0.0005)} & 0.0069 (0.0017) & 0.0044 (0.0006) & 0.0065 (0.0031) \\
& 4 & 0.0079 (0.0038) & 0.0035 (0.0006) & 0.0049 (0.0009) & \textbf{0.0034 (0.0004)} & 0.0056 (0.0028) \\
& 5 & 0.0115 (0.0052) & \textbf{0.0025 (0.0004)} & 0.0037 (0.0007) & 0.0026 (0.0004) & 0.0056 (0.0035) \\
\cdashline{1-7}
\multirow{5}{*}{50000} & 1 & 0.0366 (0.0051) & \textbf{0.0053 (0.0008)} & 0.0118 (0.0029) & 0.0108 (0.0022) & 0.0067 (0.0024) \\
& 2 & 0.0398 (0.0075) & \textbf{0.0041 (0.0008)} & 0.0069 (0.0019) & 0.0083 (0.0020) & 0.0054 (0.0021) \\
& 3 & 0.0062 (0.0021) & \textbf{0.0031 (0.0006)} & 0.0052 (0.0011) & 0.0035 (0.0005) & 0.0043 (0.0014) \\
& 4 & 0.0064 (0.0020) & \textbf{0.0028 (0.0006)} & 0.0040 (0.0010) & 0.0029 (0.0003) & 0.0035 (0.0013) \\
& 5 & 0.0101 (0.0029) & \textbf{0.0019 (0.0003)} & 0.0034 (0.0014) & 0.0021 (0.0004) & 0.0027 (0.0013) \\
\hline
\end{tabular}
\end{table}

\begin{table}[H]
\centering
\caption{Average MSE for Extra Scenario 3 with different SNR levels over 50 independent trials. The best performance in terms of lower MSE within each SNR group is bolded.}
\scriptsize
\renewcommand{\arraystretch}{1.2}
\begin{tabular}{c l ccccc}
\toprule
SNR & Model / Sample size & 1000 & 5000 & 10000 & 30000 & 50000 \\
\midrule
\multirow{9}{*}{10} & Pooled (NN) & 0.0536 (0.0095) & 0.0375 (0.0031) & 0.0345 (0.0026) & 0.0374 (0.0114) & 0.0298 (0.0025) \\
 & 2-Stage (NN) & \textbf{0.0393 (0.0090)} & \textbf{0.0164 (0.0020)} & \textbf{0.0112 (0.0011)} & \textbf{0.0068 (0.0007)} & \textbf{0.0050 (0.0004)} \\
 & Separate (NN) & 1.4760 (0.1806) & 0.1328 (0.0264) & 0.0200 (0.0018) & 0.0112 (0.0012) & 0.0091 (0.0010) \\
 & Top-FT (NN) & \textbf{0.0393 (0.0082)} & 0.0191 (0.0018) & 0.0150 (0.0013) & 0.0106 (0.0010) & 0.0086 (0.0008) \\
 & Pool-w-L (NN) & 0.0480 (0.0100) & 0.0181 (0.0021) & 0.0142 (0.0025) & 0.0143 (0.0062) & 0.0064 (0.0014) \\
\cdashline{2-7}
 & Pooled (RF) & 0.0875 (0.0139) & 0.0624 (0.0044) & 0.0578 (0.0035) & 0.0543 (0.0016) & 0.0535 (0.0013) \\
 & 2-Stage (RF) & 0.0631 (0.0107) & 0.0346 (0.0029) & 0.0276 (0.0019) & 0.0195 (0.0007) & 0.0172 (0.0006) \\
 & Separate (RF) & 0.1239 (0.0140) & 0.0652 (0.0049) & 0.0503 (0.0026) & 0.0366 (0.0009) & 0.0331 (0.0008) \\
 & Pool-w-L (RF) & 0.0843 (0.0138) & 0.0567 (0.0042) & 0.0510 (0.0029) & 0.0460 (0.0013) & 0.0450 (0.0011) \\
\midrule
\multirow{9}{*}{5} & Pooled (NN) & 0.0610 (0.0100) & 0.0407 (0.0032) & 0.0376 (0.0042) & 0.0406 (0.0103) & 0.0312 (0.0038) \\
 & 2-Stage (NN) & 0.0500 (0.0117) & \textbf{0.0210 (0.0027)} & \textbf{0.0144 (0.0015)} & \textbf{0.0088 (0.0006)} & \textbf{0.0068 (0.0006)} \\
 & Separate (NN) & 1.4668 (0.1573) & 0.1385 (0.0261) & 0.0262 (0.0019) & 0.0148 (0.0014) & 0.0119 (0.0012) \\
 & Top-FT (NN) & \textbf{0.0472 (0.0091)} & 0.0223 (0.0019) & 0.0175 (0.0016) & 0.0121 (0.0011) & 0.0099 (0.0008) \\
 & Pool-w-L (NN) & 0.0546 (0.0099) & 0.0240 (0.0037) & 0.0181 (0.0042) & 0.0174 (0.0066) & 0.0098 (0.0036) \\
\cdashline{2-7}
 & Pooled (RF) & 0.0910 (0.0142) & 0.0641 (0.0044) & 0.0588 (0.0037) & 0.0542 (0.0016) & 0.0530 (0.0014) \\
 & 2-Stage (RF) & 0.0688 (0.0118) & 0.0376 (0.0030) & 0.0301 (0.0020) & 0.0209 (0.0008) & 0.0182 (0.0007) \\
 & Separate (RF) & 0.1296 (0.0150) & 0.0684 (0.0051) & 0.0530 (0.0028) & 0.0380 (0.0010) & 0.0339 (0.0009) \\
 & Pool-w-L (RF) & 0.0873 (0.0138) & 0.0584 (0.0043) & 0.0519 (0.0031) & 0.0458 (0.0014) & 0.0444 (0.0012) \\
\midrule
\multirow{9}{*}{2} & Pooled (NN) & 0.0720 (0.0121) & 0.0483 (0.0051) & 0.0422 (0.0035) & 0.0449 (0.0160) & 0.0344 (0.0043) \\
 & 2-Stage (NN) & 0.0649 (0.0166) & 0.0302 (0.0037) & \textbf{0.0214 (0.0023)} & \textbf{0.0131 (0.0012)} & \textbf{0.0105 (0.0011)} \\
 & Separate (NN) & 1.4359 (0.1835) & 0.1539 (0.0312) & 0.0370 (0.0031) & 0.0222 (0.0025) & 0.0185 (0.0024) \\
 & Top-FT (NN) & \textbf{0.0596 (0.0119)} & \textbf{0.0294 (0.0035)} & 0.0220 (0.0018) & 0.0155 (0.0014) & 0.0128 (0.0010) \\
 & Pool-w-L (NN) & 0.0654 (0.0114) & 0.0354 (0.0054) & 0.0255 (0.0043) & 0.0260 (0.0153) & 0.0145 (0.0043) \\
\cdashline{2-7}
 & Pooled (RF) & 0.1018 (0.0160) & 0.0701 (0.0051) & 0.0624 (0.0040) & 0.0556 (0.0019) & 0.0532 (0.0015) \\
 & 2-Stage (RF) & 0.0853 (0.0142) & 0.0473 (0.0041) & 0.0372 (0.0023) & 0.0254 (0.0010) & 0.0215 (0.0009) \\
 & Separate (RF) & 0.1460 (0.0177) & 0.0796 (0.0057) & 0.0615 (0.0031) & 0.0433 (0.0011) & 0.0377 (0.0011) \\
 & Pool-w-L (RF) & 0.0982 (0.0151) & 0.0642 (0.0049) & 0.0555 (0.0035) & 0.0471 (0.0017) & 0.0445 (0.0013) \\
\bottomrule
\end{tabular}
\end{table}

\begin{table}[H]
\centering
\scriptsize
\caption{Average per-group MSE for Extra Scenario 3 over 50 independent trials at an SNR of 5 using neural networks. The best performance in terms of lower MSE is bolded.}
\renewcommand{\arraystretch}{1.2}
\begin{tabular}{|c|c|c|c|c|c|c|}
\hline
$n$ & Group & Pooled & 2-Stage & Separate & Top-FT & Pool-w-L \\
\hline
\multirow{5}{*}{1000} & 1 & 0.2043 (0.0972) & \textbf{0.0975 (0.0573)} & 1.8615 (0.9071) & 0.1011 (0.0584) & 0.1085 (0.0573) \\
& 2 & 0.0747 (0.0239) & 0.0681 (0.0414) & 1.6906 (0.5349) & \textbf{0.0609 (0.0212)} & 0.0675 (0.0227) \\
& 3 & 0.0646 (0.0187) & 0.0468 (0.0169) & 1.3591 (0.4425) & \textbf{0.0452 (0.0140)} & 0.0503 (0.0186) \\
& 4 & 0.0401 (0.0114) & 0.0395 (0.0105) & 1.3825 (0.3397) & \textbf{0.0373 (0.0107)} & 0.0473 (0.0115) \\
& 5 & 0.0392 (0.0078) & 0.0426 (0.0124) & 1.4208 (0.3058) & \textbf{0.0390 (0.0084)} & 0.0459 (0.0113) \\
\cdashline{1-7}
\multirow{5}{*}{5000} & 1 & 0.2000 (0.0467) & 0.0556 (0.0215) & 1.6041 (0.3878) & 0.0699 (0.0161) & \textbf{0.0528 (0.0175)} \\
& 2 & 0.0496 (0.0129) & \textbf{0.0249 (0.0051)} & 0.0507 (0.0119) & 0.0272 (0.0050) & 0.0295 (0.0061) \\
& 3 & 0.0421 (0.0116) & 0.0199 (0.0054) & 0.0368 (0.0076) & \textbf{0.0191 (0.0037)} & 0.0235 (0.0046) \\
& 4 & 0.0203 (0.0049) & 0.0168 (0.0032) & 0.0272 (0.0043) & \textbf{0.0157 (0.0020)} & 0.0201 (0.0045) \\
& 5 & 0.0203 (0.0029) & \textbf{0.0165 (0.0033)} & 0.0260 (0.0047) & 0.0179 (0.0027) & 0.0193 (0.0038) \\
\cdashline{1-7}
\multirow{5}{*}{10000} & 1 & 0.1944 (0.0449) & \textbf{0.0283 (0.0075)} & 0.0599 (0.0170) & 0.0569 (0.0129) & 0.0375 (0.0106) \\
& 2 & 0.0456 (0.0153) & \textbf{0.0181 (0.0039)} & 0.0378 (0.0082) & 0.0210 (0.0033) & 0.0237 (0.0075) \\
& 3 & 0.0386 (0.0161) & \textbf{0.0140 (0.0027)} & 0.0247 (0.0043) & 0.0143 (0.0023) & 0.0170 (0.0046) \\
& 4 & 0.0178 (0.0073) & \textbf{0.0119 (0.0016)} & 0.0204 (0.0041) & 0.0120 (0.0013) & 0.0152 (0.0037) \\
& 5 & 0.0179 (0.0041) & \textbf{0.0124 (0.0020)} & 0.0204 (0.0030) & 0.0145 (0.0018) & 0.0150 (0.0038) \\
\cdashline{1-7}
\multirow{5}{*}{30000} & 1 & 0.2025 (0.0765) & \textbf{0.0161 (0.0030)} & 0.0342 (0.0048) & 0.0455 (0.0090) & 0.0304 (0.0089) \\
& 2 & 0.0503 (0.0297) & \textbf{0.0108 (0.0017)} & 0.0205 (0.0036) & 0.0152 (0.0021) & 0.0207 (0.0086) \\
& 3 & 0.0395 (0.0333) & \textbf{0.0088 (0.0015)} & 0.0140 (0.0026) & 0.0094 (0.0014) & 0.0162 (0.0075) \\
& 4 & 0.0210 (0.0169) & \textbf{0.0078 (0.0012)} & 0.0119 (0.0026) & 0.0083 (0.0010) & 0.0154 (0.0072) \\
& 5 & 0.0207 (0.0101) & \textbf{0.0073 (0.0008)} & 0.0116 (0.0025) & 0.0088 (0.0011) & 0.0157 (0.0067) \\
\cdashline{1-7}
\multirow{5}{*}{50000} & 1 & 0.1874 (0.0393) & \textbf{0.0128 (0.0020)} & 0.0284 (0.0049) & 0.0394 (0.0076) & 0.0189 (0.0049) \\
& 2 & 0.0393 (0.0140) & \textbf{0.0082 (0.0014)} & 0.0156 (0.0025) & 0.0123 (0.0016) & 0.0123 (0.0052) \\
& 3 & 0.0316 (0.0165) & \textbf{0.0068 (0.0011)} & 0.0123 (0.0030) & 0.0076 (0.0010) & 0.0094 (0.0042) \\
& 4 & 0.0123 (0.0079) & \textbf{0.0063 (0.0010)} & 0.0096 (0.0020) & 0.0069 (0.0008) & 0.0090 (0.0040) \\
& 5 & 0.0116 (0.0039) & \textbf{0.0053 (0.0007)} & 0.0088 (0.0023) & 0.0069 (0.0007) & 0.0079 (0.0035) \\
\hline
\end{tabular}
\end{table}

\begin{table}[H]
\centering
\caption{Average MSE for Extra Scenario 4 with different SNR levels over 50 independent trials. The best performance in terms of lower MSE within each SNR group is bolded.}
\scriptsize
\renewcommand{\arraystretch}{1.2}
\begin{tabular}{c l ccccc}
\toprule
SNR & Model / Sample size & 1000 & 5000 & 10000 & 30000 & 50000 \\
\midrule
\multirow{9}{*}{10} & Pooled (NN) & 0.0159 (0.0026) & 0.0089 (0.0009) & 0.0076 (0.0010) & 0.0075 (0.0017) & 0.0060 (0.0008) \\
 & 2-Stage (NN) & \textbf{0.0144 (0.0029)} & \textbf{0.0058 (0.0007)} & \textbf{0.0042 (0.0005)} & \textbf{0.0025 (0.0002)} & \textbf{0.0019 (0.0001)} \\
 & Separate (NN) & 0.3787 (0.0438) & 0.0354 (0.0068) & 0.0077 (0.0006) & 0.0043 (0.0004) & 0.0037 (0.0004) \\
 & Top-FT (NN) & 0.0145 (0.0025) & 0.0066 (0.0007) & 0.0052 (0.0006) & 0.0035 (0.0004) & 0.0028 (0.0003) \\
 & Pool-w-L (NN) & 0.0192 (0.0049) & 0.0068 (0.0008) & 0.0054 (0.0015) & 0.0052 (0.0032) & 0.0027 (0.0009) \\
\cdashline{2-7}
 & Pooled (RF) & 0.0323 (0.0039) & 0.0217 (0.0013) & 0.0193 (0.0009) & 0.0179 (0.0006) & 0.0179 (0.0005) \\
 & 2-Stage (RF) & 0.0253 (0.0034) & 0.0135 (0.0009) & 0.0105 (0.0006) & 0.0079 (0.0003) & 0.0073 (0.0003) \\
 & Separate (RF) & 0.0501 (0.0064) & 0.0285 (0.0019) & 0.0229 (0.0011) & 0.0174 (0.0005) & 0.0159 (0.0004) \\
 & Pool-w-L (RF) & 0.0322 (0.0041) & 0.0214 (0.0013) & 0.0190 (0.0009) & 0.0176 (0.0006) & 0.0176 (0.0005) \\
\midrule
\multirow{9}{*}{5} & Pooled (NN) & 0.0202 (0.0047) & 0.0101 (0.0011) & 0.0087 (0.0010) & 0.0089 (0.0029) & 0.0066 (0.0009) \\
 & 2-Stage (NN) & 0.0194 (0.0049) & \textbf{0.0076 (0.0008)} & \textbf{0.0056 (0.0006)} & \textbf{0.0034 (0.0003)} & \textbf{0.0026 (0.0002)} \\
 & Separate (NN) & 0.3739 (0.0474) & 0.0381 (0.0071) & 0.0101 (0.0009) & 0.0057 (0.0006) & 0.0048 (0.0006) \\
 & Top-FT (NN) & \textbf{0.0191 (0.0048)} & 0.0080 (0.0008) & 0.0063 (0.0007) & 0.0041 (0.0004) & 0.0034 (0.0003) \\
 & Pool-w-L (NN) & 0.0244 (0.0047) & 0.0092 (0.0014) & 0.0067 (0.0009) & 0.0061 (0.0029) & 0.0032 (0.0006) \\
\cdashline{2-7}
 & Pooled (RF) & 0.0332 (0.0042) & 0.0219 (0.0012) & 0.0195 (0.0009) & 0.0177 (0.0006) & 0.0176 (0.0005) \\
 & 2-Stage (RF) & 0.0270 (0.0037) & 0.0145 (0.0010) & 0.0114 (0.0007) & 0.0083 (0.0003) & 0.0075 (0.0003) \\
 & Separate (RF) & 0.0514 (0.0066) & 0.0292 (0.0021) & 0.0235 (0.0011) & 0.0176 (0.0005) & 0.0159 (0.0004) \\
 & Pool-w-L (RF) & 0.0331 (0.0043) & 0.0216 (0.0012) & 0.0191 (0.0009) & 0.0173 (0.0006) & 0.0173 (0.0005) \\
\midrule
\multirow{9}{*}{2} & Pooled (NN) & 0.0266 (0.0051) & 0.0132 (0.0016) & 0.0110 (0.0016) & 0.0103 (0.0033) & 0.0079 (0.0016) \\
 & 2-Stage (NN) & 0.0280 (0.0053) & 0.0114 (0.0015) & \textbf{0.0081 (0.0010)} & \textbf{0.0050 (0.0004)} & \textbf{0.0040 (0.0003)} \\
 & Separate (NN) & 0.3740 (0.0515) & 0.0448 (0.0090) & 0.0153 (0.0016) & 0.0087 (0.0009) & 0.0070 (0.0007) \\
 & Top-FT (NN) & \textbf{0.0259 (0.0052)} & \textbf{0.0112 (0.0012)} & 0.0083 (0.0009) & 0.0055 (0.0005) & 0.0046 (0.0004) \\
 & Pool-w-L (NN) & 0.0302 (0.0045) & 0.0140 (0.0025) & 0.0099 (0.0016) & 0.0093 (0.0056) & 0.0052 (0.0014) \\
\cdashline{2-7}
 & Pooled (RF) & 0.0362 (0.0053) & 0.0234 (0.0014) & 0.0206 (0.0010) & 0.0178 (0.0006) & 0.0173 (0.0005) \\
 & 2-Stage (RF) & 0.0321 (0.0049) & 0.0178 (0.0012) & 0.0142 (0.0008) & 0.0098 (0.0003) & 0.0086 (0.0003) \\
 & Separate (RF) & 0.0571 (0.0074) & 0.0324 (0.0022) & 0.0256 (0.0012) & 0.0189 (0.0006) & 0.0169 (0.0004) \\
 & Pool-w-L (RF) & 0.0359 (0.0055) & 0.0230 (0.0014) & 0.0201 (0.0011) & 0.0174 (0.0006) & 0.0169 (0.0005) \\
\bottomrule
\end{tabular}
\end{table}

\begin{table}[H]
\centering
\scriptsize
\caption{Average per-group MSE for Extra Scenario 4 over 50 independent trials at an SNR of 5 using neural networks. The best performance in terms of lower MSE is bolded.}
\renewcommand{\arraystretch}{1.2}
\begin{tabular}{|c|c|c|c|c|c|c|}
\hline
$n$ & Group & Pooled & 2-Stage & Separate & Top-FT & Pool-w-L \\
\hline
\multirow{5}{*}{1000} & 1 & 0.0294 (0.0171) & 0.0317 (0.0186) & 0.4555 (0.2110) & \textbf{0.0294 (0.0169)} & 0.0454 (0.0236) \\
& 2 & 0.0337 (0.0126) & \textbf{0.0298 (0.0134)} & 0.4091 (0.1315) & 0.0306 (0.0126) & 0.0342 (0.0103) \\
& 3 & 0.0190 (0.0071) & 0.0190 (0.0074) & 0.3641 (0.1010) & \textbf{0.0183 (0.0065)} & 0.0255 (0.0080) \\
& 4 & 0.0151 (0.0046) & 0.0154 (0.0049) & 0.3518 (0.0806) & \textbf{0.0149 (0.0046)} & 0.0205 (0.0065) \\
& 5 & 0.0173 (0.0048) & 0.0161 (0.0043) & 0.3653 (0.0886) & \textbf{0.0158 (0.0044)} & 0.0183 (0.0051) \\
\cdashline{1-7}
\multirow{5}{*}{5000} & 1 & 0.0160 (0.0045) & 0.0148 (0.0046) & 0.3776 (0.1010) & \textbf{0.0147 (0.0046)} & 0.0178 (0.0062) \\
& 2 & 0.0193 (0.0046) & \textbf{0.0120 (0.0024)} & 0.0251 (0.0064) & 0.0150 (0.0032) & 0.0142 (0.0030) \\
& 3 & 0.0083 (0.0019) & 0.0074 (0.0016) & 0.0152 (0.0033) & \textbf{0.0071 (0.0014)} & 0.0096 (0.0019) \\
& 4 & 0.0065 (0.0014) & 0.0060 (0.0010) & 0.0116 (0.0027) & \textbf{0.0058 (0.0009)} & 0.0074 (0.0013) \\
& 5 & 0.0090 (0.0022) & \textbf{0.0058 (0.0011)} & 0.0094 (0.0019) & 0.0062 (0.0010) & 0.0066 (0.0015) \\
\cdashline{1-7}
\multirow{5}{*}{10000} & 1 & 0.0146 (0.0039) & \textbf{0.0102 (0.0028)} & 0.0240 (0.0064) & 0.0124 (0.0032) & 0.0122 (0.0028) \\
& 2 & 0.0178 (0.0045) & \textbf{0.0090 (0.0016)} & 0.0163 (0.0033) & 0.0121 (0.0025) & 0.0106 (0.0024) \\
& 3 & 0.0070 (0.0021) & \textbf{0.0053 (0.0009)} & 0.0100 (0.0013) & 0.0053 (0.0009) & 0.0067 (0.0013) \\
& 4 & 0.0053 (0.0010) & 0.0046 (0.0008) & 0.0076 (0.0011) & \textbf{0.0043 (0.0005)} & 0.0055 (0.0012) \\
& 5 & 0.0076 (0.0022) & \textbf{0.0044 (0.0010)} & 0.0068 (0.0012) & 0.0050 (0.0007) & 0.0050 (0.0010) \\
\cdashline{1-7}
\multirow{5}{*}{30000} & 1 & 0.0141 (0.0042) & \textbf{0.0068 (0.0011)} & 0.0129 (0.0025) & 0.0093 (0.0016) & 0.0095 (0.0036) \\
& 2 & 0.0164 (0.0061) & \textbf{0.0053 (0.0008)} & 0.0088 (0.0011) & 0.0082 (0.0014) & 0.0091 (0.0043) \\
& 3 & 0.0066 (0.0039) & \textbf{0.0033 (0.0004)} & 0.0061 (0.0009) & 0.0035 (0.0005) & 0.0060 (0.0026) \\
& 4 & 0.0058 (0.0033) & \textbf{0.0027 (0.0003)} & 0.0043 (0.0011) & 0.0028 (0.0003) & 0.0052 (0.0030) \\
& 5 & 0.0086 (0.0054) & \textbf{0.0025 (0.0003)} & 0.0038 (0.0009) & 0.0030 (0.0005) & 0.0049 (0.0028) \\
\cdashline{1-7}
\multirow{5}{*}{50000} & 1 & 0.0122 (0.0025) & \textbf{0.0050 (0.0008)} & 0.0110 (0.0021) & 0.0084 (0.0014) & 0.0062 (0.0015) \\
& 2 & 0.0149 (0.0034) & \textbf{0.0040 (0.0006)} & 0.0071 (0.0009) & 0.0070 (0.0013) & 0.0051 (0.0009) \\
& 3 & 0.0049 (0.0020) & \textbf{0.0026 (0.0003)} & 0.0050 (0.0012) & 0.0027 (0.0003) & 0.0033 (0.0007) \\
& 4 & 0.0034 (0.0011) & \textbf{0.0021 (0.0004)} & 0.0036 (0.0009) & 0.0022 (0.0003) & 0.0026 (0.0007) \\
& 5 & 0.0057 (0.0025) & \textbf{0.0020 (0.0003)} & 0.0036 (0.0013) & 0.0024 (0.0004) & 0.0024 (0.0006) \\
\hline
\end{tabular}
\end{table}

\begin{table}[H]
\centering
\caption{Average MSE for Scenario 3 (high-dimensional) with different SNR levels over 50 independent trials. The best performance in terms of lower MSE within each SNR group is bolded.}
\scriptsize
\renewcommand{\arraystretch}{1.2}
\begin{tabular}{c l ccccc}
\toprule
Std & Model / Sample size & 5000 & 10000 & 30000 & 50000 \\
\midrule
\multirow{6}{*}{0.1} 
& Pooled (NN) & 3.5772 (0.4119) & 2.8049 (0.3413) & 2.4122 (0.3941) & 1.8712 (0.2405) \\
& 2-Stage (NN) & \textbf{3.4045 (0.3687)} & \textbf{2.5317 (0.2908)} & \textbf{1.7826 (0.2254)} & \textbf{1.4377 (0.1909)} \\
& Separate (NN) & 18.4700 (1.3653) & 16.7562 (1.0588) & 7.3540 (0.6464) & 5.1880 (0.3722) \\
& Top-FT (NN) & 3.4864 (0.3970) & 2.6559 (0.3170) & 2.0301 (0.2903) & 1.6499 (0.2149) \\
& Pool-w-L (NN) & 3.7586 (0.4649) & 2.6799 (0.3126) & 2.0207 (0.3539) & 1.5020 (0.2305) \\
\cdashline{2-7}
& ptLasso & 19.2066 (1.1992) & 18.1132 (1.1836) & 17.7140 (0.8321) & 17.5210 (0.6882) \\
\midrule
\multirow{6}{*}{1}  
& Pooled (NN) & 4.0795 (0.4407) & 3.2150 (0.3589) & 2.6265 (0.3937) & 2.1231 (0.2551) \\
& 2-Stage (NN) & \textbf{3.9457 (0.4051)} & \textbf{2.9710 (0.3132)} & \textbf{2.1172 (0.2096)} & \textbf{1.7750 (0.1908)} \\
& Separate (NN) & 18.5199 (1.4433) & 16.8376 (1.1317) & 8.0144 (0.6275) & 5.7036 (0.4192) \\
& Top-FT (NN) & 3.9666 (0.4319) & 3.0476 (0.3328) & 2.2643 (0.2707) & 1.8705 (0.2096) \\
& Pool-w-L (NN) & 4.3708 (0.4846) & 3.1855 (0.3481) & 2.3404 (0.2989) & 1.8470 (0.2286) \\
 \cdashline{2-7}
& ptLasso & 20.1427 (1.3865) & 19.1391 (1.2099) & 18.7576 (0.8002) & 18.5122 (0.7150) \\
\bottomrule
\end{tabular}
\end{table}

\begin{table}[H]
\centering
\caption{Average MSE for Scenario 4 (high-dimensional) with different SNR levels over 50 independent trials. The best performance in terms of lower MSE within each SNR group is bolded.}
\scriptsize
\renewcommand{\arraystretch}{1.2}
\begin{tabular}{c l ccccc}
\toprule
Std & Model / Sample size & 5000 & 10000 & 30000 & 50000 \\
\midrule
\multirow{6}{*}{0.1} 
& Pooled (NN) & 8.5086 (0.8878) & 5.8025 (0.5703) & 3.8477 (0.6878) & 2.9454 (0.3160) \\
& 2-Stage (NN) & \textbf{8.4782 (0.8924)} & \textbf{5.6875 (0.5791)} & \textbf{3.2175 (0.3005)} & \textbf{2.5256 (0.2120)} \\
& Separate (NN) & 22.0682 (1.5491) & 21.4115 (1.0980) & 14.8534 (0.6357) & 13.0925 (0.7355) \\
& Top-FT (NN) & 8.5038 (0.8945) & 5.7637 (0.5768) & 3.5023 (0.3075) & 2.8225 (0.2286) \\
& Pool-w-L (NN) & 8.5776 (0.8611) & 5.8783 (0.5824) & 3.8614 (0.7056) & 2.9559 (0.2845) \\
\cdashline{2-7}
& ptLasso & 21.7040 (1.6185) & 19.7228 (0.9760) & 19.2816 (0.8314) & 19.2234 (0.7995) \\
\midrule
\multirow{6}{*}{1}  
& Pooled (NN) & 9.1348 (0.9145) & 6.4555 (0.6181) & 4.3988 (0.5348) & 3.5638 (0.2752) \\
& 2-Stage (NN) & \textbf{9.0841 (0.9062)} & \textbf{6.3287 (0.5981)} & \textbf{3.8919 (0.3088)} & \textbf{3.1943 (0.2400)} \\
& Separate (NN) & 22.3329 (1.5473) & 21.5863 (0.9121) & 15.1587 (0.5992) & 13.5028 (0.7384) \\
& Top-FT (NN) & 9.1156 (0.9069) & 6.4022 (0.5908) & 4.1114 (0.3084) & 3.4415 (0.2636) \\
& Pool-w-L (NN) & 9.2112 (0.9200) & 6.4724 (0.6281) & 4.5905 (0.6670) & 3.5074 (0.3140) \\
 \cdashline{2-7}
& ptLasso & 21.7648 (1.6522) & 19.7493 (0.9892) & 19.3060 (0.8403) & 19.2003 (0.8071) \\
\bottomrule
\end{tabular}
\end{table}

\subsection{Detailed Configurations for Numerical Simulation}
\label{sec:sim_config}

This section presents the detailed configurations used in our numerical simulations. Regarding the neural network setups, the Pooled, Pool-w-L, and Top-FT strategies each employ a single neural network instance, whereas the 2-Stage strategy involves two networks: one trained on the full dataset and another trained separately for each group. Within each scenario, all NN instances share the same network architecture. 

For Scenarios 1,2 and Extra Scenarios 1--4, which correspond to the low-dimensional settings, each NN is implemented as a four-layer MLP with 64 hidden units per layer, using ReLU activations after the first three layers. For the high-dimensional settings, we adopt a five-layer MLP with 128 hidden units per layer for Scenario 3, and an eight-layer MLP with 256 hidden units per layer for Scenario 4. All neural networks are trained using the Adam optimizer with a learning rate of $10^{-3}$ and a batch size of 256. For each setting, data are randomly partitioned using a group-wise stratified split into training (70\%), validation (15\%), and testing (15\%) subsets. Early stopping is applied with a patience of 10 epochs, based on validation performance.

For the competing methods, random forests are implemented with 100 tree estimators and a maximum tree depth of 10, while other hyperparameters follow the default settings from the \texttt{sklearn.ensemble} package. For \texttt{ptLasso}, we perform 10-fold cross-validation over $\alpha \in \{0, 0.25, 0.5, 0.75, 1\}$, keeping other parameters at their default values.

\subsection{Experiments with Highly Unbalanced Groups}
To further investigate the robustness of our proposed framework under highly unbalanced groups, we examine the six low-dimensional scenarios previously introduced with unbalanced group assignment proportions of [0.02, 0.25, 0.25, 0.24, 0.24], where the first group represents only 2\% of the total sample. Here we focus on the DNN realization and present our experimental results at SNR=5 across total sample sizes $n \in \{5000, 10000, 30000, 50000\}$. Table~\ref{tab:unbalanced_group} summarizes the MSE performance of the unbalanced group (Group~1) over 50 independent Monte Carlo trials across the six simulation scenarios. The results demonstrate that as the sample size increases, the proposed two-stage transfer learning method yields increasingly pronounced performance improvements for the extremely low-proportion group, achieving the best performance in most cases.
 
\begin{table}[H]
\centering
\caption{Average MSE for the unbalanced group (Group~1) across six low-dimensional scenarios at $\mathrm{SNR}=5$ over 50 independent trials. The best performance for each sample size is highlighted in bold.}
\label{tab:unbalanced_group}
\scriptsize
\renewcommand{\arraystretch}{1.2}
\begin{tabular}{c l cccc}
\toprule
 & Model / Sample size  & 5000 & 10000 & 30000 & 50000 \\
\midrule
\multirow{5}{*}{Scenario 1} 
 & Pooled (NN) & 0.0339 (0.0164) & 0.0333 (0.0113) & 0.0297 (0.0047) & 0.0308 (0.0065) \\
 & 2-Stage (NN) & 0.0327 (0.0179) & \textbf{0.0184 (0.0061)} & \textbf{0.0088 (0.0018)} & \textbf{0.0060 (0.0009)} \\
 & Separate (NN) & 0.6937 (0.4308) & 0.7003 (0.3028) & 0.0236 (0.0047) & 0.0176 (0.0020) \\
 & Top-FT (NN) & \textbf{0.0236 (0.0099)} & 0.0232 (0.0099) & 0.0132 (0.0039) & 0.0131 (0.0041) \\
 & Pool-w-L (NN) & 0.0334 (0.0149) & \textbf{0.0184 (0.0050)} & 0.0155 (0.0076) & 0.0108 (0.0038) \\
\midrule
\multirow{5}{*}{Scenario 2} 
 & Pooled (NN) & \textbf{0.0488 (0.0211)} & 0.0456 (0.0144) & 0.0437 (0.0124) & 0.0375 (0.0086) \\
 & 2-Stage (NN) & 0.0498 (0.0210) & 0.0415 (0.0199) & \textbf{0.0250 (0.0074)} & \textbf{0.0185 (0.0050)} \\
 & Separate (NN) & 0.6019 (0.2203) & 0.5966 (0.1470) & 0.1108 (0.0858) & 0.0571 (0.0147) \\
 & Top-FT (NN) & 0.0514 (0.0257) & \textbf{0.0410 (0.0121)} & 0.0358 (0.0073) & 0.0312 (0.0067) \\
 & Pool-w-L (NN) & 0.0918 (0.0427) & 0.0594 (0.0255) & 0.0413 (0.0223) & 0.0286 (0.0069) \\
 \midrule
\multirow{5}{*}{Extra Scenario 1} 
 & Pooled (NN) & 0.0206 (0.0103) & 0.0186 (0.0061) & 0.0169 (0.0061) & 0.0163 (0.0049) \\
 & 2-Stage (NN) & 0.0160 (0.0078) & \textbf{0.0112 (0.0038)} & \textbf{0.0053 (0.0020)} & \textbf{0.0042 (0.0011)} \\
 & Separate (NN) & 0.4917 (0.2299) & 0.4940 (0.1773) & 0.0145 (0.0032) & 0.0113 (0.0016) \\
 & Top-FT (NN) & \textbf{0.0156 (0.0069)} & 0.0146 (0.0051) & 0.0099 (0.0021) & 0.0085 (0.0020) \\
 & Pool-w-L (NN) & 0.0200 (0.0096) & 0.0128 (0.0047) & 0.0095 (0.0050) & 0.0060 (0.0018) \\
 \midrule
\multirow{5}{*}{Extra Scenario 2} 
 & Pooled (NN) & 0.0344 (0.0184) & 0.0337 (0.0119) & 0.0297 (0.0079) & 0.0291 (0.0043) \\
 & 2-Stage (NN) & 0.0355 (0.0156) & \textbf{0.0167 (0.0064)} & \textbf{0.0089 (0.0015)} & \textbf{0.0064 (0.0016)} \\
 & Separate (NN) & 0.6453 (0.3275) & 0.6625 (0.3514) & 0.0228 (0.0029) & 0.0177 (0.0032) \\
 & Top-FT (NN) & \textbf{0.0255 (0.0147)} & 0.0187 (0.0086) & 0.0133 (0.0028) & 0.0120 (0.0029) \\
 & Pool-w-L (NN) & 0.0321 (0.0112) & 0.0182 (0.0035) & 0.0116 (0.0044) & 0.0098 (0.0038) \\
 \midrule
\multirow{5}{*}{Extra Scenario 3} 
 & Pooled (NN) & 0.1963 (0.0784) & 0.1868 (0.0610) & 0.2073 (0.0686) & 0.2012 (0.0375) \\
 & 2-Stage (NN) & \textbf{0.0696 (0.0393)} & \textbf{0.0545 (0.0231)} & \textbf{0.0231 (0.0061)} & \textbf{0.0182 (0.0049)} \\
 & Separate (NN) & 1.9146 (0.7185) & 1.8240 (0.4825) & 0.0644 (0.0147) & 0.0505 (0.0119) \\
 & Top-FT (NN) & 0.0747 (0.0376) & 0.0703 (0.0223) & 0.0534 (0.0134) & 0.0494 (0.0102) \\
 & Pool-w-L (NN) & 0.0710 (0.0376) & 0.0562 (0.0214) & 0.0407 (0.0142) & 0.0267 (0.0074) \\
 \midrule
\multirow{5}{*}{Extra Scenario 4} 
 & Pooled (NN) & \textbf{0.0149 (0.0070)} & 0.0143 (0.0062) & 0.0121 (0.0036) & 0.0100 (0.0017) \\
 & 2-Stage (NN) & 0.0174 (0.0098) & 0.0124 (0.0060) & \textbf{0.0081 (0.0020)} & \textbf{0.0065 (0.0013)} \\
 & Separate (NN) & 0.4438 (0.2062) & 0.4219 (0.1147) & 0.0263 (0.0059) & 0.0197 (0.0063) \\
 & Top-FT (NN) & 0.0155 (0.0076) & \textbf{0.0123 (0.0050)} & 0.0084 (0.0017) & 0.0078 (0.0015) \\
 & Pool-w-L (NN) & 0.0283 (0.0128) & 0.0180 (0.0068) & 0.0122 (0.0047) & 0.0083 (0.0023) \\
\bottomrule
\end{tabular}
\end{table}

\section{Details of Real Data Experiments}
\label{app:full_real_data}
\subsection{Beijing PM2.5 Dataset}
\label{app:pm25}
In this section, we evaluate our framework on an air quality prediction task using an open-source dataset provided by \cite{beijing_pm2.5_381} and following the study of \cite{liang2015assessing}. This dataset records hourly meteorological data from Beijing, China over several years, with particulate matter with a diameter of 2.5 micrometers (PM2.5), a central index for air pollution levels, as the predicted response variable. Specifically, the dataset spans January 1, 2010 to December 31, 2014, containing 43824 hourly records after removing missing values. Features include six numerical meteorological covariates (dew point, temperature, pressure, cumulated wind speed, and cumulated hours of snow and rain) and one categorical variable for dominant wind direction with four categories: northeast (NE), northwest (NW), south (S), and calm (CV). Due to Beijing's geography with mountains to the north and west and industrial zones to the south and east, wind direction significantly influences air quality by determining whether clean air flows in or pollutants become trapped \cite{liang2015assessing}. This makes wind direction a natural group label, as the PM2.5-meteorological relationship varies substantially across wind directions. We conduct transfer learning based on these groups, with 4997, 14150, 9387, and 15290 samples for NE, NW, CV, and S winds, respectively.

We specify a similar autoregressive model to that stated in \cite{liang2015assessing} equation (4.3) for predicting PM2.5 as follows:
\[
y_t = f_{z_t}(x_t, y_{t-1}) + \epsilon_t,
\]
where $y_t$ is the PM2.5 concentration at hour $t$, and $y_{t-1}$ is the lag-1 PM2.5 value to capture temporal dependencies. The covariate vector $x_t$ includes the six numerical meteorological variables, along with four additional variables from sine and cosine transformations of the hour-of-day and month-of-year to help capture diurnal and seasonal patterns. The group label $z_t$ corresponds to one of the four dominant wind directions.

We follow a similar setting to the low-dimensional numerical experiments, where we compare DNN and RF as estimators across various learning strategies. To prevent potential data leakage risks in time series prediction tasks, we use data from 2014 as the hold-out test set. We repeat the entire procedure for 20 independent trials using different random seeds and report the average test MSE across trials. For the neural network implementations, each model instance is a five-layer MLP with 64 neurons per layer, using ReLU activation functions after the first four layers. All networks are trained with the Adam optimizer at a learning rate of $10^{-3}$, a batch size of 256, and early stopping with a patience of 10 epochs based on validation performance. For the random forest models, we use 100 decision trees with a maximum depth of 10, while other hyperparameters remain at their default settings. All input features are standardized to have zero mean and unit variance.

\begin{table}[H]
\centering
\caption{Overall and per-group average MSE for PM2.5 models over 20 independent trials. The best performance in terms of lower MSE within each group is bolded.}
\label{tab:real-group-mse-transposed}
\scriptsize
\renewcommand{\arraystretch}{1.2}
\begin{tabular}{l ccccc}
\toprule
Model & Overall & NE & NW & S & CV \\
\midrule
Pooled (NN) 
& 0.0555 (0.0023) 
& 0.0509 (0.0024) 
& 0.0566 (0.0014) 
& 0.0617 (0.0030) 
& 0.0469 (0.0046) \\

2-Stage (NN) 
& \textbf{0.0519 (0.0007)} 
& 0.0486 (0.0016) 
& \textbf{0.0548 (0.0024)} 
& \textbf{0.0573 (0.0009)} 
& \textbf{0.0418 (0.0007)} \\

Separate (NN) 
& 0.0541 (0.0009) 
& 0.0555 (0.0039) 
& 0.0561 (0.0027) 
& 0.0591 (0.0010) 
& 0.0434 (0.0011) \\

Top-FT (NN) 
& 0.0529 (0.0011) 
& 0.0481 (0.0017) 
& 0.0561 (0.0036) 
& 0.0581 (0.0010) 
& 0.0435 (0.0019) \\

Pool-w-L (NN) 
& 0.0538 (0.0023) 
& 0.0521 (0.0020) 
& 0.0557 (0.0031) 
& 0.0585 (0.0022) 
& 0.0452 (0.0039) \\

\cdashline{1-6}

Pooled (RF) 
& 0.0556 (0.0002) 
& 0.0514 (0.0004) 
& 0.0581 (0.0005) 
& 0.0626 (0.0004) 
& 0.0437 (0.0004) \\

2-Stage (RF) 
& 0.0534 (0.0003)
& \textbf{0.0456 (0.0004)}
& 0.0555 (0.0005)
& 0.0606 (0.0006) 
& 0.0438 (0.0005) \\

Separate (RF) 
& 0.0580 (0.0002) 
& 0.0502 (0.0005) 
& 0.0602 (0.0005) 
& 0.0643 (0.0004) 
& 0.0494 (0.0010) \\

Pool-w-L (RF) 
& 0.0542 (0.0002) 
& 0.0503 (0.0004) 
& 0.0558 (0.0005) 
& 0.0610 (0.0003) 
& 0.0435 (0.0005) \\

\bottomrule
\end{tabular}
\end{table}

From Table~\ref{tab:real-group-mse-transposed}, DNN trained with two-stage transfer learning achieves the lowest overall prediction error and wins in most subgroup metrics, highlighting the effectiveness of our proposed training strategy in low-dimensional settings. Notably, the two-stage training strategy also performs best when applied to RF, demonstrating the potential applicability of our framework and theorems to other nonparametric estimators.

\subsection{Detailed Configurations for UTKFace Experiment Using Pretrained Model}
\label{sec:cofig_face}
As required by the pretrained face feature extraction model, all images were resized to $160 \times 160 \times 3$ resolution using OpenCV \cite{bradski2000opencv}, with pixel values normalized to the range $[0, 1]$ prior to feature extraction. For the neural networks used for age regression based on the extracted features, in addition to the MLP architecture described in the main text, each MLP instance was trained using the Adam optimizer with an initial learning rate of $10^{-4}$. The model was trained under a cosine annealing learning rate schedule \cite{loshchilov2016sgdr}, where the rate smoothly decreased to $10^{-6}$ over the course of training. Early stopping was applied with a patience of 5 epochs based on the validation loss.

\subsection{Complementary Experiments on the UTKFace Dataset}
\label{sec:comp_face}
\textbf{Joint Grouping by Ethnicity and Gender.} In addition to ethnicity, gender is another potential factor that may influence facial characteristics and consequently affect age estimation. To further investigate this, we conducted complementary experiments where both gender and ethnicity were considered simultaneously, resulting in 10 subgroups. The following table presents the results, with all other configurations kept consistent with those in the experiments that considered only ethnicity. As shown in the table, the proposed two-stage transfer learning method improves models' performance on each subgroup in comparison with the pooled training, achieving the lowest average MSE both overall and across most subgroups.

\begin{table}[H]
\centering
\scriptsize
\caption{Overall and per-group average MSE for age regression models over 20 independent trials, where both ethnicity and gender are used as grouping factors. Standard deviation is computed across seeds.}
\renewcommand{\arraystretch}{1.2}
\begin{tabular}{lccccc}
\toprule
Group & Pooled & 2-Stage & Separate & Top-FT & Pool-w-L \\
\midrule
Overall & 59.1 (2.6) & \textbf{56.4 (2.1)} & 63.4 (3.2) & 58.8 (2.5) & 57.0 (1.8) \\
\cdashline{1-6}
White-M & 63.8 (4.8) & \textbf{61.2 (4.2)} & 65.0 (6.6) & 63.6 (4.7) & 62.0 (4.3) \\
White-F & 77.2 (5.0) & \textbf{71.7 (3.3)} & 78.5 (8.2) & 76.7 (4.8) & 73.7 (4.3) \\
Black-M & 58.0 (5.8) & \textbf{56.4 (6.0)} & 62.3 (7.0) & 58.0 (5.8) & 56.5 (6.0) \\
Black-F & 57.9 (6.8) & 57.1 (6.6) & 68.0 (5.5) & 57.7 (6.8) & \textbf{56.3 (5.5)} \\
Asian-M & 49.1 (12.6) & \textbf{47.4 (12.7)} & 60.0 (12.1) & 48.8 (12.5) & 48.3 (12.2) \\
Asian-F & 34.7 (8.4) & \textbf{32.0 (8.2)} & 39.4 (9.8) & 34.4 (8.2) & 33.7 (8.4) \\
Indian-M & 52.5 (3.7) & 51.0 (4.1) & 58.0 (6.3) & 52.2 (3.9) & \textbf{50.2 (4.0)} \\
Indian-F & 43.1 (6.4) & 41.0 (5.3) & 50.0 (9.9) & 43.1 (6.2) & \textbf{40.8 (5.8)} \\
\bottomrule
\end{tabular}
\label{tab:group-mse-results}
\end{table}

\textbf{Direct Modeling with MLP.} As discussed in the main text, we can also directly employ MLPs to perform age estimation from raw image inputs. As a toy experiment, all images are resized to a resolution of $32 \times 32 \times 3$ using OpenCV, with pixel values normalized to the range $[0, 1]$. Each image is then flattened into a 3{,}072-dimensional feature vector, which serves as the input $x_i$.

We compare five learning strategies using MLPs as nonparametric estimator, following the same data-splitting setting as before. For each neural network instance, we employ an MLP with six layers and ReLU activation functions. Specifically, the input vector has 3,072 dimensions for all strategies except Pooled-with-Label, where we concatenate the one-hot encoded ethnicity feature with the image vector, resulting in a 3,077-dimensional input. This input passes through layers with 2048, 1024, 512, 256, 128, and 64 units, respectively, and the network concludes with a scalar output for age prediction. We adopt the AdamW optimizer (\( \text{lr} = 10^{-3}, \ \text{weight decay} = 10^{-4} \)) and train the model under a cosine annealing learning rate schedule. Early stopping is applied with a patience of 10 epochs based on validation loss. In Table~\ref{tab:nn-age-real-group-mse-transposed}, we report the average MSE over 20 random trials, where the two-stage transfer learning achieves the lowest overall MSE and wins in the most subgroups.

\begin{table}[H]
\centering
\scriptsize
\caption{Overall and per-group average MSE for age regression models over 20 independent trials.}
\label{tab:nn-age-real-group-mse-transposed}
\begin{tabular}{l c cccc}
\toprule
Model & Overall & White & Black & Asian & Indian \\
\midrule
Pooled (NN)         & 140.8 (13.7) & 165.9 (13.0)  & 135.2 (14.3) & 112.6 (16.4) & 107.8 (12.0)  \\
2-Stage (NN)      & \textbf{136.7 (12.1)}  & \textbf{161.2 (11.3)} & \textbf{129.9 (12.3)} & \textbf{110.9 (15.7)} & \textbf{104.6 (9.8)} \\
Pool-w-L (NN)  & 139.3 (12.5) & 164.4 (10.6)  & 134.0 (12.2) & 111.8 (18.8) & 105.4 (10.3)  \\
Separate (NN)       & 161.9 (14.8) & 180.0 (15.2)  & 157.3 (16.1) & 147.5 (17.0) & 133.9 (14.3)  \\
Top-FT (NN)   & 140.3 (13.5) & 165.2 (13.0) & 135.1 (13.5)  & 112.8 (16.8) & 107.0 (11.6) \\
\bottomrule
\end{tabular}
\end{table}

\end{document}